%% file: main.tex
\documentclass[twoside,11pt]{article}
\input{preamble}

\title{Online Resource Allocation with an Endogenous Markov State: \\ Fewer LP Solves Earn More}
\author{Zhaohua Chen}
\affil{Taobao \& Tmall Group of Alibaba, Renmin University of China \\ \texttt{chenzhaohua.czh@alibaba-inc.com}}

\date{}

\begin{document}
\maketitle

\begin{abstract}
We study finite-horizon online resource allocation with i.i.d.\ requests and an endogenous Markov state on a finite state space: each action affects the transition of the state that governs future rewards and resource consumption.
In this problem, a transient fluid LP benchmark upper bounds the expected reward of every nonanticipating policy, while a stationary LP supplies randomized state-dependent controls.
We assume that the stationary LP has a unique optimum and identify primal nondegeneracy and irreducibility of the optimal induced kernel as important regularity conditions in this framework. 
With a known request prior, we show that, under nondegeneracy and irreducibility, both frequent and infrequent re-solving attain $\bigO(1)$ regret. 
However, under a degenerate optimum, irreducibility yields the sharp worst-case $\bigTheta(\sqrt{\horizon})$ rate for infrequent re-solving, while frequent re-solving can incur $\bigOmega(\horizon)$ regret. 
Thus, more frequent optimization can perform asymptotically worse.
With an unknown request prior, we develop a three-phase U-shaped infrequent re-solving policy that coordinates learning and inventory correction with $\bigO(\log\log\horizon)$ LP solves. 
When the optimal induced kernel is irreducible and the algorithm is given the optimal target state class and a constant-cost entrance policy, it attains $\bigO(1)$ regret under nondegeneracy and $\bigO(\sqrt{\horizon})$ regret under degeneracy. Without the target-class information, linear minimax regret is unavoidable.
Numerical experiments further illustrate the instability of round-by-round re-solving relative to epoch-wise infrequent re-solving, show that thresholding greatly mitigates its loss, and find that infrequent schemes remain dominant under both known and estimated priors.
\end{abstract}

\newpage

\input{sections/01_introduction}
\input{sections/02_model}
\input{sections/03_regularity_assumptions}
\input{sections/04_resolving_algorithms_and_guarantees}
\input{sections/05_boundary_cases}
\input{sections/06_unknown_request_prior}
\input{sections/07_numerical_experiments}
\input{sections/08_conclusion}

\section*{Acknowledgement}
The analysis of the algorithms and the numerical experiments were developed through extensive interactions with ChatGPT. 
The author has verified the correctness of the proofs and codes and made a thorough re-organization of the results, and the author assumes full responsibility for the contents.

\bibliographystyle{plainnat}
\bibliography{references}

\clearpage
\appendix
\addcontentsline{toc}{part}{Appendix}
\numberwithin{equation}{section}

\input{appendices/01_model_proofs}
\clearpage
\input{appendices/02_regularity_proofs}
\clearpage

\section{Proof of Theorem~\ref{thm:constant-regret}}
\input{appendices/03_algorithm_proofs}
\input{appendices/04_common_ingredients_proofs}
\input{appendices/05_frequent_resolving_proofs}
\input{appendices/06_infrequent_resolving_proofs}

\clearpage

\input{appendices/07_1_boundary_cases_proofs}
\input{appendices/07_2_boundary_cases_proofs}

\clearpage

\input{appendices/08_1_unknown_request_prior_proofs}
\input{appendices/08_2_unknown_request_prior_proofs}

\end{document}

%% file: preamble.tex
\usepackage{authblk}
\usepackage[T1]{fontenc}
\usepackage{lmodern}
\usepackage[margin=1in]{geometry}
\usepackage{setspace}
\usepackage{amsmath,amssymb,amsthm,mathtools}
\usepackage{bm}
\usepackage{array,booktabs,tabularx,multirow}
\usepackage{enumitem}
\usepackage{algorithm}
\usepackage[noend]{algpseudocode}
\usepackage{microtype}
\usepackage[table]{xcolor}
\usepackage{tcolorbox}
\tcbuselibrary{breakable,skins}
\usepackage{graphicx}
\usepackage{subcaption}
\usepackage{xurl}
\usepackage[authoryear,round]{natbib}
\usepackage{hyperref}
\usepackage[nameinlink,capitalise,noabbrev]{cleveref}

\makeatletter
\renewcommand{\theHALG@line}{\thealgorithm.\arabic{ALG@line}}
\makeatother

\graphicspath{{figures/}}

\hypersetup{
  colorlinks=true,
  linkcolor=blue!55!black,
  citecolor=blue!55!black,
  urlcolor=blue!55!black
}

\definecolor{authorcommentframe}{HTML}{C2410C}
\definecolor{authorcommentback}{HTML}{FFF7D6}
\newtcolorbox{authorcomment}[1][Comment]{
  enhanced,
  breakable,
  colback=authorcommentback,
  colframe=authorcommentframe,
  colbacktitle=authorcommentframe,
  coltitle=white,
  fonttitle=\bfseries,
  title={#1},
  boxrule=0.8pt,
  borderline west={3pt}{0pt}{authorcommentframe},
  arc=1.5mm,
  left=6pt,
  right=6pt,
  top=5pt,
  bottom=5pt,
  before skip=8pt,
  after skip=8pt
}

\newtheorem{theorem}{Theorem}[section]
\newtheorem{lemma}[theorem]{Lemma}
\newtheorem{proposition}[theorem]{Proposition}

\newtheorem{fact}[theorem]{Fact}
\newtheorem{assumption}[theorem]{Assumption}
\theoremstyle{definition}

\crefname{theorem}{Theorem}{Theorems}
\Crefname{theorem}{Theorem}{Theorems}
\crefname{lemma}{Lemma}{Lemmas}
\Crefname{lemma}{Lemma}{Lemmas}
\crefname{proposition}{Proposition}{Propositions}
\Crefname{proposition}{Proposition}{Propositions}
\crefname{assumption}{Assumption}{Assumptions}
\Crefname{assumption}{Assumption}{Assumptions}
\crefname{remark}{Remark}{Remarks}
\Crefname{remark}{Remark}{Remarks}
\crefname{fact}{Fact}{Facts}
\Crefname{fact}{Fact}{Facts}
\crefname{algorithm}{Algorithm}{Algorithms}
\Crefname{algorithm}{Algorithm}{Algorithms}

\ExplSyntaxOn
\NewDocumentCommand{\agentvar}{m m O{} O{}}{%
  \NewDocumentCommand{#1}{O{} O{}}{%
    {#2%
      \tl_if_blank:nTF{##1}%
        {\tl_if_blank:nF{#3}{\c_math_subscript_token{#3}}}%
        {\c_math_subscript_token{##1}}%
      \tl_if_blank:nTF{##2}%
        {\tl_if_blank:nF{#4}{\c_math_superscript_token{#4}}}%
        {\c_math_superscript_token{##2}}}}%
}

\ExplSyntaxOff

\agentvar{\reals}{\mathbb R}
\newcommand{\expect}{\mathbb E}
\agentvar{\prob}{\mathbb P}
\agentvar{\indicator}{\mathbf 1}
\agentvar{\onesvector}{\mathbf 1}
\agentvar{\bigO}{O}
\agentvar{\bigOmega}{\Omega}
\agentvar{\bigTheta}{\Theta}
\agentvar{\genericconstant}{C}
\agentvar{\littleo}{o}

\agentvar{\frequentalg}{\mathrm{FR}}
\agentvar{\infrequentalg}{\mathrm{IR}}
\agentvar{\gfrequentalg}{\mathrm{FR}}[][\mathrm{g}]
\agentvar{\ginfrequentalg}{\mathrm{IR}}[][\mathrm{g}]
\agentvar{\unknowninfrequentalg}{\mathrm{UIR\text{-}UP}}

\agentvar{\genericvalue}{x}
\agentvar{\deviationlevel}{x}
\agentvar{\flowmass}{x}
\agentvar{\excursionmass}{x}
\agentvar{\seriesratio}{x}
\agentvar{\numres}{m}
\agentvar{\res}{i}
\agentvar{\state}{s}
\agentvar{\req}{j}
\agentvar{\act}{a}
\agentvar{\period}{t}
\agentvar{\horizon}{T}
\agentvar{\states}{\mathcal S}
\agentvar{\requests}{\mathcal J}
\agentvar{\actions}{\mathcal A}
\agentvar{\prior}{p}
\agentvar{\transition}{P}
\agentvar{\nullstationarylaw}{\mu}
\agentvar{\reward}{r}
\agentvar{\consumption}{c}
\agentvar{\maxreward}{\overline r}
\agentvar{\maxconsumption}{\overline c}
\agentvar{\nominalratio}{\rho}
\agentvar{\stateproc}{S}
\agentvar{\reqproc}{J}
\agentvar{\actproc}{A}
\agentvar{\rewardproc}{R}
\agentvar{\consumptionproc}{C}
\agentvar{\ratioremainder}{\mathcal R}
\agentvar{\inventory}{B}
\agentvar{\stoptime}{\tau}
\agentvar{\policy}{\pi}
\agentvar{\totalreward}{\mathcal R}
\agentvar{\valuefn}{V}
\agentvar{\filtration}{\mathcal F}

\agentvar{\subhorizon}{H}
\agentvar{\nonterminalocc}{X}
\agentvar{\terminalocc}{Y}
\agentvar{\transientstateocc}{Q}
\agentvar{\transientfluid}{\Phi}
\agentvar{\capacityratio}{\beta}
\agentvar{\occupation}{z}
\agentvar{\stateoccupation}{q}
\agentvar{\stationaryfluid}{G}
\agentvar{\inducedkernel}{K}
\agentvar{\resourceprice}{\lambda}
\agentvar{\averagegain}{\gamma}
\agentvar{\statebias}{h}
\agentvar{\statedual}{u}
\agentvar{\boundaryconstant}{\kappa}
\agentvar{\concentrationexponent}{\zeta}

\agentvar{\neighborhoodradius}{r}
\agentvar{\bindingset}{I}[][*]
\agentvar{\localradius}{\delta}
\agentvar{\positivitymargin}{\eta}
\agentvar{\capacityneighborhood}{\mathcal N}
\agentvar{\resourceslack}{\sigma}
\agentvar{\recstates}{\mathcal R}[][*]
\agentvar{\actsupport}{\mathcal E}[][*]
\agentvar{\capacitydistance}{\Delta}
\agentvar{\inventorydeviationnorm}{\mathcal D}
\agentvar{\inventoryimbalance}{x}
\agentvar{\inventorydeviation}{X}
\agentvar{\basicsolution}{\widetilde x}
\agentvar{\flowprojection}{P}[zq][]
\agentvar{\comparatorsolution}{y}
\agentvar{\priortransport}{x}[][0]
\agentvar{\nullsolution}{x}[][\mathrm{null}]
\agentvar{\perturbedcomparator}{\widetilde x}
\agentvar{\basis}{A}[][*]
\agentvar{\basisinverse}{(\basis)}[][-1]
\agentvar{\nonbasiccols}{N}[][*]
\agentvar{\constraintmatrix}{A}
\agentvar{\objectivevector}{d}
\agentvar{\rhsvector}{b}
\agentvar{\reducedcost}{\overline d}
\agentvar{\unitvector}{e}
\agentvar{\perturbation}{\varepsilon}

\agentvar{\entrance}{\mathrm{ent}}
\agentvar{\entranceduration}{N}[][\entrance]
\agentvar{\entrancelast}{\stoptime}[][\entrance]
\agentvar{\entrancerest}{\subhorizon}[][\entrance]
\agentvar{\entrancepolicy}{\policy}[][\entrance]
\agentvar{\entranceinventory}{\inventory}[][\entrance]
\agentvar{\entranceconsumption}{\consumptionproc}[][\entrance]
\agentvar{\entrancereward}{\rewardproc}[][\entrance]
\agentvar{\entrancestate}{\stateproc}[][\entrance]
\agentvar{\entrancecapacityratio}{\capacityratio}[][\entrance]
\agentvar{\entranceinventorydeviation}{\inventorydeviation}[][\entrance]

\agentvar{\firstbadepoch}{N}
\agentvar{\batchsize}{N}
\agentvar{\binomialcount}{N}
\agentvar{\event}{\mathcal E}
\agentvar{\schedulepower}{\alpha}
\agentvar{\epoch}{k}
\agentvar{\epochhorizon}{h}
\agentvar{\numepochs}{K}
\agentvar{\blocklength}{\Lambda}

\agentvar{\cutrest}{\subhorizon}[\perp]

\agentvar{\potential}{F}
\agentvar{\stateconsumption}{g}
\agentvar{\stationaryconsumption}{\overline c}
\agentvar{\stationaryreward}{\overline r}
\agentvar{\trueresourcemean}{\overline c}
\agentvar{\initiallaw}{\mu}
\agentvar{\poissoncorrector}{v}
\agentvar{\vertex}{v}
\agentvar{\poissonbound}{V}
\agentvar{\poissonlip}{C}[\mathrm{Pois}][]
\agentvar{\bindingparameter}{\theta}
\agentvar{\mixtureweight}{\theta}
\agentvar{\bindingbox}{\mathcal B}
\agentvar{\stationaryprojector}{\Pi}
\agentvar{\fundamentalmatrix}{Z}
\agentvar{\maxmartingale}{Z}
\agentvar{\analyticalstop}{\sigma}
\agentvar{\potentialincrement}{\Delta}

\agentvar{\hypotheticalcost}{\widetilde C}
\agentvar{\hypotheticalratio}{\widetilde\beta}
\agentvar{\martdiff}{\xi}
\agentvar{\correcteddeviation}{W}
\agentvar{\martingale}{M}
\agentvar{\quadvariation}{V}
\agentvar{\martindex}{n}
\agentvar{\poststep}{n}
\agentvar{\nullmass}{n}
\agentvar{\prefixidx}{\ell}
\agentvar{\prefixstep}{r}
\agentvar{\variancebudget}{v}
\agentvar{\incrementbound}{b}
\agentvar{\goodevent}{\mathcal G}
\agentvar{\concentrationpower}{\kappa}
\agentvar{\tailpower}{Q}
\agentvar{\epochbuffer}{d}
\agentvar{\capacityshortfall}{d}
\agentvar{\reachedevent}{\mathcal R}
\agentvar{\highprobevent}{\mathcal H}
\agentvar{\highprobparameter}{A}

\agentvar{\optimizerface}{\mathfrak F}[\nominalratio][]
\agentvar{\outsidestates}{\mathcal O}
\agentvar{\selectedsolution}{x}
\agentvar{\basisregion}{\mathcal P}
\agentvar{\bindingpolytope}{\mathcal P}[][+]
\agentvar{\candidatebasis}{B}
\agentvar{\identitymatrix}{I}
\agentvar{\onesideddeviation}{D}
\agentvar{\oneperiodquantity}{Y}
\agentvar{\conditionalmean}{f}
\agentvar{\nominalmean}{\overline f}[][*]
\agentvar{\dualloss}{\ell}

\agentvar{\empiricalprior}{\widehat p}
\agentvar{\requestcount}{N}
\agentvar{\completedrounds}{n}
\agentvar{\coreprojection}{\Pi}[\recstates][]
\agentvar{\jointdistance}{\Delta}
\agentvar{\protectlength}{\stoptime}[][\mathrm{lrn}]
\agentvar{\resolveindex}{\epoch}
\agentvar{\learningresolve}{r}
\agentvar{\earlySchedule}{E}
\agentvar{\midSchedule}{K}
\agentvar{\lateSchedule}{L}
\agentvar{\sqrtepoch}{J}
\agentvar{\scheduleendpoint}{\stoptime}[][\mathrm{hal}]
\agentvar{\schedulecoef}{\omega}
\agentvar{\horizonratio}{u}
\agentvar{\sharpnessconstant}{\chi}
\agentvar{\priortolerance}{a}
\agentvar{\priorerror}{e}
\agentvar{\approxerror}{e}
\agentvar{\excursionact}{e}
\agentvar{\globalremainder}{U}
\agentvar{\requestrowcoefficient}{u}
\agentvar{\stateflowrowcoefficient}{v}
\agentvar{\frozenresourcerate}{\mu}

\agentvar{\empiricalthreshold}{\epsilon}
\agentvar{\regret}{\operatorname{Reg}}
\agentvar{\facecapacityratio}{\bar\beta}
\agentvar{\minimumcapacityratio}{\underline\beta}

\agentvar{\hardstateA}{A}
\agentvar{\hardstateB}{B}
\agentvar{\hardstateC}{C}
\agentvar{\requestbonus}{b}
\agentvar{\requestdiscount}{d}

\agentvar{\highreq}{{\mathrm{hi}}}
\agentvar{\lowreq}{{\mathrm{lo}}}
\agentvar{\classreward}{g}
\agentvar{\classAprob}{x}
\agentvar{\wrongclassprob}{e}
\agentvar{\entranceimbalanceamount}{A}
\agentvar{\thresholdfraction}{c}
\agentvar{\firstnonzeroact}{u}
\agentvar{\secondnonzeroact}{v}

\agentvar{\lowerreward}{\varepsilon}
\agentvar{\recoveryact}{r}
\agentvar{\frequentcomparisonpolicy}{\widehat\pi}[][\mathrm F]

%% file: sections/01_introduction.tex
\section{Introduction}

Consider the following scenario.
A service provider that uses large models repeatedly receives requests from users throughout the day.
Because its computing resources are limited, it has a daily token budget and a daily running-time budget for each cluster.
Upon receiving each request, it decides which model should handle the request and on which cluster the task should run.
Different models require different numbers of tokens.
The reward that the service provider receives depends on the quality of the final output and the running time of the task.

This problem is, in general, an online resource allocation problem.
However, it involves additional factors.
The running time of a task depends strongly on the \emph{state} of the cluster on which it is deployed.
A lightly loaded cluster runs at its normal speed but operates significantly more slowly under a heavy load.
Therefore, the cluster states affect both resource consumption and reward. In turn, the service provider's action and the current cluster states determine the next cluster states, either deterministically or randomly.
This state factor is not captured by the classical model of online resource allocation.

More generally, rewards and resource consumption are often affected by an endogenous environmental ``state'' that evolves according to the chosen action.
Suppose an algorithm engineer maintains a retrieval-augmented generation (RAG) knowledge base subject to a fixed token budget.
For each query, she must decide whether to update the knowledge base using additional tokens before answering or to answer directly.
Without an update, the token cost is lower, but the response quality may be worse when the state indicates that the knowledge base is stale.
As another example, a retailer may be building a long-term relationship with a customer.
Its reward from serving the customer's request depends on how much the customer trusts the retailer.
This trust evolves with the quality of the service that the customer receives.

To capture these real-world scenarios, we augment the online resource allocation model with \emph{an endogenous Markov state}.
To explain our model, we first recall the online resource allocation problem.

In the online resource allocation framework, there are $\horizon$ rounds and $\numres$ resources.
The initial total resource inventory is $\inventory[][1] = \nominalratio \cdot \horizon$, where $\nominalratio \in \reals[++][\numres]$ is a constant vector independent of $\horizon$.
In each round $\period$, the decision maker observes an i.i.d.\ request $\reqproc[][\period]$ and chooses an action $\actproc[][\period]$.
This action produces a reward and consumes resources, as captured by
\[
  \rewardproc[][\period] = \reward(\reqproc[][\period], \actproc[][\period]) \geq 0, \qquad
  \consumptionproc[][\period] = \consumption(\reqproc[][\period], \actproc[][\period]) \geq 0.
\]
The decision maker's goal is to maximize the expected total reward subject to the initial resource constraint, that is,
\[
  \max \expect\left[{\sum_{\period = 1}^{\horizon} \rewardproc[][\period]} \right], \qquad \mathrm{s.t.}\, \sum_{\period = 1}^{\horizon} \consumptionproc[][\period] \leq \nominalratio \cdot \horizon.
\]
To ensure feasibility, we assume that there is a null action that always produces zero reward and consumes no resources, regardless of the request.

We extend this classical model as follows: in each round $\period$, the decision maker observes the current state $\stateproc[][\period]$ before observing the request.
The reward and resource consumption under action $\actproc[][\period]$ are determined by $\stateproc[][\period]$ and $\reqproc[][\period]$, as follows:
\[
  \rewardproc[][\period] = \reward(\stateproc[][\period], \reqproc[][\period], \actproc[][\period]), \qquad \consumptionproc[][\period] = \consumption(\stateproc[][\period], \reqproc[][\period], \actproc[][\period]).
\]
Meanwhile, action $\actproc[][\period]$ causes a Markov state transition from $\stateproc[][\period]$ to $\stateproc[][\period + 1]$ under the transition kernel
\[
  \stateproc[][\period + 1] \sim \transition(\cdot \mid \stateproc[][\period], \actproc[][\period]).
\]
We note that the null action produces zero reward and consumption for every state and request but can still induce a state transition.
All other components of the problem are identical to those of the classical online resource allocation problem.

Our goal is to develop algorithms that perform well in this online decision-making problem.

\subsection{Main contributions}
\label{subsec:main-results}

\paragraph{Benchmarks and regularity.}
To evaluate policies for our online resource allocation problem with a Markov state, we consider two closely related benchmarks.
The first is a transient fluid LP, which provides an upper bound on the total expected reward of any nonanticipating policy.
However, this benchmark is ``asymmetric'' because the state in the initial round is fixed, which makes the benchmark difficult to use directly to guide an online policy.
Therefore, we also consider a stationary fluid LP benchmark whose value is smaller than that of the transient fluid LP by at most a constant.
Like its counterpart in the original online resource allocation problem, this LP directly induces a randomized policy in each round by using normalized variables from its optimal solution as action probabilities.

Using the fluid LP benchmark means that regret on the order of $\sqrt{\horizon}$ is unavoidable in the worst case even for state-free online resource allocation. Nevertheless, this choice is necessary: the hindsight benchmark is strictly stronger in the state-free case but is unsuitable in the Markov-state setting.
Without a state, the number of arrivals of each request type determines the hindsight benchmark. With a state, the order in which requests arrive also matters because it affects how the state evolves. Incorporating this order, however, would make the benchmark unattainably large.

With a fluid benchmark, regularity conditions are crucial to algorithmic performance.
We consider three regularity conditions.
First, we impose the following condition throughout the paper.
\begin{itemize}
  \item The stationary LP has a unique optimum.
\end{itemize}
On the algorithmic side, assuming a unique optimum avoids technical complications. Without uniqueness, algorithms based on the stationary LP must specify which optimizer to use, especially to ensure consistency across solves.
To avoid these complications, we focus on the unique-optimum case.

The other two conditions are as follows:
\begin{itemize}
  \item The stationary LP has a nondegenerate optimum.
  \item The optimal induced kernel---the state transition probability matrix on the optimal target state class---is irreducible, and there is an entrance policy that guarantees entry into that class in a constant expected number of rounds and with constant expected resource consumption.
\end{itemize}

The first of these two conditions has already proved important in online resource allocation, separating $\bigO(1)$ regret from $\bigTheta(\sqrt{\horizon})$ regret (Proposition~\ref{prop:unique-degenerate-fluid-lower}).
The second condition is specific to the Markov-state setting.
It states that under the optimal fluid solution, all states that \emph{should be} reached \emph{can} be reached.
As we will show, this condition is both necessary and sufficient for the gap between the LP benchmark and the optimal policy to be sublinear (Proposition~\ref{prop:linear-gap-no-irreducibility}).

\paragraph{Frequent versus infrequent re-solving under a known request prior.}

In the known-request-prior case, we examine two algorithms: frequent re-solving and infrequent re-solving.
Variants of these algorithms have already been studied in the state-free setting, and we extend them to the Markov-state environment.
After entering the optimal target state class, frequent re-solving (FR) generally re-solves the stationary LP using the current average remaining inventory \emph{in each round} and uses the resulting randomized control in that round. By contrast, infrequent re-solving (IR) solves this stationary LP only at carefully scheduled rounds, for a total of $\bigO(\log \log \horizon)$ re-solves.
Previous work on these two methods~\citep{BW20,LWZ25,CLY24} suggests the following observations:
\begin{enumerate}
  \item[\textbf{O1.}] When the stationary LP has a unique nondegenerate optimum, FR has been shown to have $\bigO(1)$ regret. IR also appears to have $\bigO(1)$ regret, but FR performs better than IR\@.
  \item[\textbf{O2.}] When the stationary LP has a unique degenerate optimum, FR has been shown to have $\bigTheta(\sqrt{\horizon})$ regret, while IR also appears to have $\bigTheta(\sqrt{\horizon})$ regret.
  \item[\textbf{O3.}] When the same thresholding/argmax operation is applied to both heuristics, the variant of FR appears to perform better than the corresponding variant of IR\@.
\end{enumerate}

In this work, we revisit these observations for the more general problem with a Markov state. Table~\ref{tab:known-prior-landscape} presents our theoretical results:
\begin{enumerate}
  \item[\textbf{R1.}] When the stationary LP has a unique nondegenerate optimum, both FR and IR have $\bigO(1)$ regret, while IR can perform better than FR\@.
  \item[\textbf{R2.}] When the stationary LP has a unique degenerate optimum, the worst-case regret of IR is $\bigTheta(\sqrt{\horizon})$, while that of FR is $\bigTheta(\horizon)$.
  \item[\textbf{R3.}] Although the thresholded variant FRT can perform much better than FR, our numerical experiments show that it can still perform worse than IRT\@.
\end{enumerate}
Thus, the comparison between frequent and infrequent re-solving can reverse in the presence of a Markov state, as a concrete support to the conceptual result of \citet{Secomandi08}, which indicates that re-solving may not always lead to an improvement.

\begin{table}[t]
  \centering
  \caption{Known-prior regret relative to the transient fluid benchmark.}
  \label{tab:known-prior-landscape}
  \footnotesize
  \setlength{\tabcolsep}{3.2pt}
  \renewcommand{\arraystretch}{1.28}
  \begin{tabularx}{\textwidth}{
    >{\raggedright\arraybackslash}p{0.1\textwidth}
    *{3}{>{\centering\arraybackslash}X}
    }
    \toprule
    \multirow{2}{*}{\textbf{Policy}} & \multicolumn{2}{c}{\textbf{Optimal induced kernel irreducible}} & \textbf{Optimal induced kernel reducible} \\
    \cmidrule(lr){2-3} \cmidrule(lr){4-4}
    & \textbf{unique, non-degen.} & \textbf{unique, degen.} & \textbf{unique, non-degen.} \\
    \midrule
    FR  & $\bigO(1)$ & $\bigTheta(\horizon)$ & $\bigTheta(\horizon)$ \\
    IR  & $\bigO(1)$ & $\bigTheta(\sqrt{\horizon})$ & $\bigTheta(\horizon)$ \\
    Best policy & $\bigO(1)$ & $\bigTheta(\sqrt{\horizon})$ & $\bigTheta(\horizon)$ \\
    \bottomrule
  \end{tabularx}

  \vspace{0.35em}

  \parbox{\textwidth}{\emph{Note.} The lower bounds, and hence the $\bigTheta(\cdot)$ entries, are worst-case orders over the corresponding classes of instances.}
\end{table}

Why can IR perform better than FR in the presence of a Markov state, contrary to the pattern observed in the state-free setting?
Our counterexamples suggest the following mechanism.
A policy may move the system to a temporary state because it also prescribes a recovery action there. After the system has moved, however, frequent re-solving can produce a new fluid optimizer that assigns zero stationary probability to that state and \emph{omits the recovery action}.
Under infrequent re-solving, holding the policy fixed within an epoch preserves the sequence of decisions that makes the initial action worthwhile.
Thus, with a dynamically evolving state, retaining an epoch-wise stationary policy for some time can allow the fluid-LP objective to be achieved asymptotically, whereas changing the policy too frequently can be harmful.

\paragraph{Three-phase infrequent re-solving under an unknown request prior.}

\begin{table}[t]
  \centering
  \caption{Unknown-prior regret relative to the transient fluid benchmark.}
  \label{tab:unknown-prior-landscape}
  \footnotesize
  \setlength{\tabcolsep}{4pt}
  \renewcommand{\arraystretch}{1.3}
  \begin{tabularx}{\textwidth}{>{\raggedright\arraybackslash}p{0.13\textwidth}*{3}{>{\centering\arraybackslash}X}}
    \toprule
    \multirow{2}{*}{\textbf{Policy}} & \multicolumn{2}{c}{\textbf{Known optimal target state class}} & \textbf{Unknown optimal target state class} \\
    \cmidrule(lr){2-3} \cmidrule(lr){4-4}
    & \textbf{unique, non-degen.} & \textbf{unique, degen.} & \textbf{unique, degen.} \\
    \midrule
    UIR-UP & $\bigO(1)$ & $\bigTheta(\sqrt{\horizon})$ & --- \\
    Best policy & $\bigO(1)$ & $\bigTheta(\sqrt{\horizon})$ & $\bigTheta(\horizon)$ \\
    \bottomrule
  \end{tabularx}

  \vspace{0.35em}
  
  \parbox{\textwidth}{\emph{Note.} The lower bounds, and hence the $\bigTheta(\cdot)$ entries, are worst-case orders over the corresponding classes of instances.}
\end{table}

In the unknown-request-prior case, under the assumption that the optimal target state class is known, we propose a three-phase infrequent re-solving policy.
The three phases serve different purposes.
The first phase, with $\bigO(\log \log \log \horizon)$ re-solves, aims to correct inaccurate early prior estimates quickly when few request samples are available.
This phase lasts only $\bigO(\log^2 \horizon)$ rounds.
The second phase, which starts at the end of the first phase and ends at the midpoint of the horizon, balances learning error and inventory consumption.
For this phase, we design a new schedule that differs from those in previous work and guarantees $\bigO(\sqrt{\horizon})$ regret in the degenerate case relative to the fluid benchmark.
This schedule also has $\bigO(\log \log \horizon)$ re-solves.
The construction of this schedule may be of independent interest.
Informally, it satisfies
\[
  \sum_{\epoch = 0}^{\midSchedule - 1}
  \frac{\period[\epoch + 1] - \period[\epoch]}{\sqrt{\period[\epoch]}} = \bigO(\sqrt{\horizon}),
\]
where $\midSchedule = \bigO(\log \log \horizon)$ is the number of re-solves, $\period[0] = \bigTheta(\log^2 \horizon)$, and $\period[\midSchedule] = \bigTheta(\horizon)$.
The last phase mirrors the designs of \citet{BW20,LWZ25} by re-solving increasingly frequently near the end, when inventory errors become costly.

Our policy, which we call U-shaped IR under unknown priors (UIR-UP), has the following \emph{optimal} regret guarantees:
\begin{enumerate}
  \item When the stationary LP has a unique and nondegenerate optimum, it has $\bigO(1)$ regret.
  \item When the stationary LP has a unique and degenerate optimum, it has $\bigO(\sqrt{\horizon})$ regret.
\end{enumerate}

Knowledge of the optimal target state class is required by our UIR-UP policy and is necessary for achieving sublinear regret.
To show this, we consider two request distributions with different optimal target state classes that cannot be distinguished from observations sufficiently early.
Therefore, without knowing either the optimal target state class or the request prior, any policy incurs linear regret on at least one of these two instances.
All theoretical results for the unknown-request-prior case are presented in Table~\ref{tab:unknown-prior-landscape}.

In our numerical experiments, we compare four policies in the unknown-request-prior case: UIR-UP; FR-UP, a direct extension of FR to the unknown-prior setting; UIRT-UP; and FRT-UP\@.
The last two are the thresholded variants of UIR-UP and FR-UP, respectively.
Our empirical results are similar to those in the known-prior setting: FR-UP can still incur linear regret, and although thresholding prevents this trend, FRT-UP still performs worse than UIR-UP and UIRT-UP in most cases.

To further demonstrate the importance of re-solving frequency, we modify FR and FR-UP so that, after the first round, each policy independently chooses whether to re-solve the stationary LP in every round, with a constant probability.
Our results show that, when the optimum is nondegenerate, algorithmic performance improves as this re-solving probability decreases, regardless of whether the request prior is known.
However, this modification does not affect the regret order of FR or FR-UP, demonstrating the importance of the re-solving schedules used by IR and UIR-UP\@.

\subsection{Related work}
\label{subsec:related-works}

The extensive literature on online resource allocation under the small-item and long-horizon assumptions has developed along several major methodological lines.

One important line consists of primal-dual methods, motivated in part by the success of gradient-based optimization in machine learning.
These methods learn or adjust shadow prices and then use them to make online allocation decisions~\citep{AgrawalWangYe14,LiSunYe20,SunWangZhou20}.
These methods are often modular and can accommodate more general action spaces, including continuous ones~\citep{JiangLiZhang20,BalseiroLuMirrokni23,CastiglioniCelliKroer22}, as well as more restrictive feedback models, such as bandit feedback~\citep{HanZengWangXiangZhang22,SlivkinsZhouSankararamanFoster24}.
Many early and broadly applicable methods, however, incur an $\bigO(\sqrt{\horizon})$ gap from optimal performance.
More recent work obtains logarithmic or constant regret under different regularity assumptions~\citep{LiYe22,Bray24,MaCaoTsangXia25,GaoGeSunXueYe26,Gupta24,HeWeiXuYu25}.

Our work belongs to the line of re-solving methods.
These methods periodically recompute a deterministic fluid problem and implement the resulting control until the next update.
For example, \citet{ReimanWang08} show that a single re-solve can already yield $o(\sqrt{\horizon})$ regret.
A central question in this line of work concerns \emph{regularity conditions}.
For example, \citet{JK12,JK13} establish that frequent re-solving achieves constant regret under nondegeneracy assumptions.
Later work studies how re-solving algorithms behave without these assumptions and how they can be enhanced to perform well in their absence.
Regarding the first issue, \citet{CLY24} show that frequent re-solving still achieves $\tilde{\bigO}(\sqrt{\horizon})$ regret when nondegeneracy does not hold, and \citet{BalseiroBesbesPizarro24} study a wide range of regularity conditions that yield constant, logarithmic, or square-root regret.
Regarding the second issue, which is most closely related to our work, \citet{BW20,LWZ25} develop infrequent re-solving, which solves the LP according to a carefully designed schedule rather than in every round. They also study thresholding and argmax operations that post-process the LP solutions and substantially improve the performance of frequent and infrequent re-solving.
However, their results and observations suggest that, under the same post-processing rule (none, thresholding, or argmax), frequent re-solving still performs better than infrequent re-solving.
Our results show that this comparison can reverse in the presence of a Markov state, which also support the result of \citet{Secomandi08} -- this work shows abstractly that re-solving a mathematical-programming-based control algorithm need not improve performance when sequential consistency fails.
We mention that re-solving methods are also used for continuous request and action spaces~\citep{MaglarasMeissner06,Jasin14,Jasin15,ArlottoXie20,JiangZhang20,MiaoWang21,WangWang22,BesbesKanoriaKumar22} and for contextual bandits and decision-making with knapsack constraints~\citep{WuSrikantLiuJiang15,CDMK24} in machine learning.

A closely related recent line uses compensated coupling.
These methods compare an online trajectory with an offline or fluid benchmark by assigning a marginal value to the loss from each online decision~\citep{VB21,VBG21,JiangMaZhang25,BanerjeeFreund20,BanerjeeFreund25}.
Other relevant approaches include budget-ratio policies~\citep{ArlottoGurvich19,VeraArlottoGurvichLevin25}.
Compensated-coupling analyses yield $\bigO(1)$ or $\bigO(\log \horizon)$ regret, particularly for discrete request and action spaces. These settings substantially overlap with those studied using re-solving methods.

\paragraph{Markov dependence.}
Recent work incorporates Markov dependence into online resource allocation models in several ways.
Most extensions use an exogenous chain to model demand.
In \citet{FV07}, an observed demand mode evolves as a Markov chain and determines fare-product arrival probabilities; before each arrival, the firm chooses which feasible products to offer.
In \citet{Jiang26}, the observed state follows a known time-inhomogeneous Markov chain and determines the arriving customer's type.
\citet{LRTZ25} use a two-timescale model: an observed Markov chain moves once per stage, and its current state determines the distribution of the product requests arriving within that stage.
In all three models, allocation decisions deplete cumulative inventory but do not affect the demand-modulating chain.
Our requests are i.i.d.\ and observed before the decision; instead, the chosen action controls the state transition, and the resulting state affects later rewards and resource consumption.

The models of \citet{BrownZhang22} and \citet{BrownZhang25} are closer to ours because their actions also affect future states.
\citet{BrownZhang22} consider many project-specific controlled Markov states together with a common exogenous Markov signal that can affect rewards, transitions, resource use, and available capacity.
\citet{BrownZhang25} consider many independent, possibly heterogeneous, controlled Markov subproblems without that common signal.
In both models, the decision maker acts on all subproblems in parallel, and shared resource constraints couple those decisions within each period.
Our model instead has one persistent controlled state, one i.i.d.\ request per period, and cumulative inventory that can be allocated over the horizon.
Their large-system scaling increases the number of subproblems and the available per-period capacity, whereas our asymptotic parameter is the horizon.
Accordingly, their fluid state describes a cross-section of many simultaneous subproblems, while our stationary occupation-measure LP describes time allocation along one controlled trajectory.

%% file: sections/02_model.tex
\section{Model and benchmarks}
\label{sec:model}

We study an online decision-making problem.
There are $\horizon$ rounds and $\numres$ resources.
The initial resource inventory is $\inventory[][1] = \nominalratio \cdot \horizon$, where $\nominalratio \in \reals[++][\numres]$ is a constant vector independent of $\horizon$.
In each round $\period$, the decision maker first observes the current state $\stateproc[][\period] \in \states$ and a request $\reqproc[][\period] \in \requests$.
We assume that the state space $\states$ is finite and that requests $\reqproc[][1], \reqproc[][2], \ldots$ are i.i.d.\ with prior distribution
\[
  \prior = (\prior[\req])_{\req \in \requests},
  \qquad \prior[\req] > 0,
  \qquad \sum_{\req \in \requests}\prior[\req] = 1.
\]
For the first part of our results, we assume that this prior distribution is known before any decision is made; we then extend the analysis to the unknown-prior case in Section~\ref{sec:unknown-prior}.

Given these observations, the decision maker chooses an action $\actproc[][\period]$ from a finite action space $\actions$, producing reward
\[
  \rewardproc[][\period] = \reward(\stateproc[][\period], \reqproc[][\period], \actproc[][\period]) \in [0, \maxreward],
\]
and a resource-consumption vector
\[
  \consumptionproc[][\period] = \consumption(\stateproc[][\period], \reqproc[][\period], \actproc[][\period]) \in [0, \maxconsumption]^{\numres}.
\]
The deterministic reward and resource-consumption functions
\[
  \reward:\states \times \requests \times \actions\to\reals[+],
  \qquad
  \consumption:\states \times \requests \times \actions\to\reals[+][\numres]
\]
are known.
To ensure that the decision-making problem is feasible, there is a null action $\act[][0] \in \actions$ such that
\[
  \reward(\state, \req, \act[][0]) = 0,
  \quad
  \consumption(\state, \req, \act[][0]) = 0, \quad \forall \state \in \states, \req \in \requests,
\]
while every non-null action has a nonzero resource-consumption vector:
\[
  \act\ne \act[][0] \Longrightarrow \consumption(\state, \req, \act) \ne 0
  \quad\ \forall (\state, \req) \in \states \times \requests.
\]

The chosen action $\actproc[][\period]$ also induces a Markov state transition from $\stateproc[][\period]$ to $\stateproc[][\period + 1]$ according to the \emph{known} transition kernel
\[
  \stateproc[][\period + 1] \sim \transition(\cdot \mid \stateproc[][\period], \actproc[][\period]).
\]
In fact, we will show in Section~\ref{subsec:unknown-R-lower-bound} that, when the request prior is unknown, knowledge of the optimal target state class is necessary to achieve good performance relative to the benchmark introduced below.
The state transition is independent of the request. We assume that the initial state $\stateproc[][1]$ is fixed.

After the action in round $\period$, the remaining resource inventory is updated to
\[
  \inventory[][\period + 1] = \inventory[][\period] - \consumptionproc[][\period].
\]
The procedure stops after the first round in which any resource is depleted or after round $\horizon$, whichever comes first. Thus, the stopping time is
\begin{equation}
  \stoptime
  \coloneqq
  \min\left\{
  \inf\left\{\period \in [\horizon]:\min_{\res \in [\numres]} \inventory[\res][\period + 1] \le 0\right\},
  \horizon
  \right\}.
  \label{eq:stopping}
\end{equation}
Equivalently, the decision maker always chooses the null action after $\stoptime$.

Let $\filtration[][\period]$ be the history generated by all observations and randomizations strictly before the arrival of request $\reqproc[][\period]$, together with the current state $\stateproc[][\period]$ and inventory $\inventory[][\period]$. A \emph{nonanticipating} policy $\policy$ chooses action $\actproc[][\period]$ based only on $\filtration[][\period]$ and $\reqproc[][\period]$. Its value is the expected total reward, including the reward in the final round:
\[
  \valuefn[\horizon][\policy](\horizon\nominalratio, \state[][1])
  \coloneqq
  \expect^{\policy}[\totalreward[\horizon][\policy]],
  \qquad
  \totalreward[\horizon][\policy] \coloneqq \sum_{\period = 1}^{\stoptime}\rewardproc[][\period].
\]
The optimal nonanticipating value, that is, the optimal total expected reward among all nonanticipating policies, is
\begin{equation*}
  \valuefn[\horizon][*](\horizon\nominalratio, \state[][1])
  \coloneqq
  \sup_{\text{nonanticipating }\policy}\valuefn[\horizon][\policy](\horizon\nominalratio, \state[][1]).
\end{equation*}

\subsection{The transient fluid LP upper bound}

We formulate an upper-bound benchmark for a generic subproblem with $\subhorizon \ge 1$ remaining rounds, current inventory $\inventory \in \reals[+][\numres]$, and current state $\state[][0] \in \states$.

For $\state \in \states$, $\req \in \requests$, and $\act \in \actions$, let
\begin{itemize}[leftmargin=2.2em]
  \item $\nonterminalocc[\state\req\act] \ge 0$ be the expected number of \emph{nonterminal} rounds with state $\state$, request $\req$, and action $\act$;
  \item $\terminalocc[\state\req\act] \ge 0$ be the probability that the terminal round has state $\state$, request $\req$, and action $\act$;
  \item $\transientstateocc[\state] \ge 0$ be the expected number of active rounds spent in state $\state$, including the terminal round.
\end{itemize}

Define $\transientfluid[\subhorizon](\inventory, \state[][0])$ as the optimal value of the following LP:
\begin{equation}
  \boxed{
  \begin{aligned}
    \max_{\nonterminalocc, \terminalocc, \transientstateocc \ge 0}\quad
    &
    \sum_{\state \in \states}\sum_{\req \in \requests}\sum_{\act \in \actions}
    \reward(\state, \req, \act)\bigl(\nonterminalocc[\state\req\act] + \terminalocc[\state\req\act]\bigr)
    \\[0.3em]
    \text{subject to}\quad
    &
    \sum_{\act \in \actions}\bigl(\nonterminalocc[\state\req\act] + \terminalocc[\state\req\act]\bigr)
    = \prior[\req]\transientstateocc[\state],
    & & \state \in \states, \ \req \in \requests,
    \qquad\text{\rm (TF1)}
    \\
    &
    \transientstateocc[\state[][\prime]] = \indicator\{\state[][\prime] = \state[][0]\}
    +\sum_{\state \in \states}\sum_{\req \in \requests}\sum_{\act \in \actions}
    \transition(\state[][\prime] \mid \state, \act)\nonterminalocc[\state\req\act],
    & & \state[][\prime] \in \states,
    \qquad\text{\rm (TF2)}
    \\
    &
    \sum_{\state \in \states}\transientstateocc[\state] \le \subhorizon,
    \qquad\text{\rm (TF3)}
    \\
    &
    \sum_{\state \in \states}\sum_{\req \in \requests}\sum_{\act \in \actions}
    \consumption[\res](\state, \req, \act)\nonterminalocc[\state\req\act] \le \inventory[\res],
    & & \res \in [\numres].
    \qquad\text{\rm (TF4)}
  \end{aligned}
  \label{eq:transient-fluid}
  }
\end{equation}

The terminal variables $\terminalocc$ contribute reward without consuming fluid capacity. This relaxation accounts for the convention that only the resource consumption incurred strictly before the terminal action must fit within the initial inventory.

The following lemma shows that this LP gives an upper bound for every subproblem.

\begin{lemma}
  \label{lem:fluid-upper-bound}
  For every $\subhorizon \ge 1$, $\inventory \in \reals[+][\numres]$, and $\state[][0] \in \states$,
  \begin{equation*}
    \valuefn[\subhorizon][*](\inventory, \state[][0]) \le \transientfluid[\subhorizon](\inventory, \state[][0]).
  \end{equation*}
  In particular, we have $\valuefn[\horizon][*](\horizon\nominalratio, \state[][1]) \le \transientfluid[\horizon](\horizon\nominalratio, \state[][1])$.
\end{lemma}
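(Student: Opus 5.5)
Fix an arbitrary nonanticipating policy $\policy$ for the subproblem with $\subhorizon$ remaining rounds, inventory $\inventory$, and initial state $\state[][0]$. We will build a feasible point $(\nonterminalocc, \terminalocc, \transientstateocc)$ of \eqref{eq:transient-fluid} whose objective equals $\valuefn[\subhorizon][\policy](\inventory, \state[][0])$. Taking the supremum over $\policy$ then gives the claim. Let $\stoptime$ be the stopping time \eqref{eq:stopping} for this subproblem. Define
\[
  \nonterminalocc[\state\req\act] \coloneqq \expect^{\policy}\Bigl[\sum_{\period=1}^{\subhorizon}\indicator\{\period<\stoptime,\ \stateproc[][\period]=\state,\ \reqproc[][\period]=\req,\ \actproc[][\period]=\act\}\Bigr],
  \qquad
  \terminalocc[\state\req\act] \coloneqq \prob^{\policy}\bigl(\stateproc[][\stoptime]=\state,\ \reqproc[][\stoptime]=\req,\ \actproc[][\stoptime]=\act\bigr),
\]
and set $\transientstateocc[\state] \coloneqq \expect^{\policy}[\sum_{\period=1}^{\subhorizon}\indicator\{\period\le\stoptime,\ \stateproc[][\period]=\state\}]$. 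The reward collected up to and including $\stoptime$ is then exactly the LP objective.

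We now verify the four constraints in turn.

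\textbf{(TF3)} holds because $\sum_{\state}\transientstateocc[\state]=\expect[\stoptime]\le\subhorizon$.

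\textbf{(TF1)} The event $\{\period\le\stoptime\}$ is determined by the history before round $\period$, because $\stoptime$ stops after the round in which depletion occurs, so $\{\stoptime\ge\period\}=\{\min_{\res}\inventory[\res][\period]>0\}$. This event is therefore $\filtration[][\period]$-measurable. Since $\reqproc[][\period]$ is independent of $\filtration[][\period]$ with law $\prior$, summing the indicators of $\{\period<\stoptime\}$ and $\{\period=\stoptime\}$ over $\act$ gives
\[
  \sum_{\act}\bigl(\nonterminalocc[\state\req\act]+\terminalocc[\state\req\act]\bigr)
  =\sum_{\period}\prob\bigl(\period\le\stoptime,\ \stateproc[][\period]=\state\bigr)\,\prior[\req]
  =\prior[\req]\transientstateocc[\state].
\]

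\textbf{(TF2)} A state $\state[][\prime]$ is visited in an active round $\period$ either when $\period=1$ with $\state[][\prime]=\state[][0]$, or when $\period-1<\stoptime$ and the transition out of round $\period-1$ lands in $\state[][\prime]$. The subtle point is that $\{\period-1<\stoptime\}$ depends on $\consumptionproc[][\period-1]$, which is a function of $(\stateproc[][\period-1],\reqproc[][\period-1],\actproc[][\period-1])$ and the inventory. Conditional on all of these, the next state has law $\transition(\cdot\mid\stateproc[][\period-1],\actproc[][\period-1])$, independent of the stopping decision. Applying the tower property yields (TF2). When $\stoptime=\subhorizon$ by exhaustion of rounds, the terminal round produces no further visit, consistent with the fact that $\terminalocc$ does not appear in (TF2).

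\textbf{(TF4)} For every nonterminal round $\period<\stoptime$, inventory is still positive after the round, so $\sum_{\period<\stoptime}\consumptionproc[\res][\period] <\inventory[\res]$ pathwise. Taking expectations gives (TF4).

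I expect the main obstacle to be the bookkeeping at the boundary: making sure that $\{\period<\stoptime\}$ is determined by information available after round $\period$, and not by the next state. This is what makes the transition step in (TF2) valid, and it is what makes the pathwise strict inequality in (TF4) exclude only the terminal consumption. The "in particular" statement is the special case $\subhorizon=\horizon$, $\inventory=\horizon\nominalratio$, $\state[][0]=\state[][1]$.
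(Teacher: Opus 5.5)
Your proposal is correct and matches the paper's proof step for step. Both use the same expected nonterminal and terminal state--request--action counts and state-visit counts, and verify (TF1)--(TF4) by the same measurability, conditioning, and pathwise-inventory arguments.

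There is one harmless slip. Since $\inventory \in \reals[+][\numres]$ may have zero coordinates, the identity $\{\stoptime \ge \period\} = \{\min_{\res}\inventory[\res][\period] > 0\}$ fails at $\period = 1$. For the same reason, the strict inequality you claim in (TF4) fails when $\stoptime = 1$ and some $\inventory[\res] = 0$. The argument only needs measurability and the weak inequality, and the paper treats the $\stoptime = 1$ case exactly that way.
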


Accordingly, we define the \emph{regret} of a nonanticipating policy $\policy$ as the gap between the value of the transient fluid LP and the policy's value:
\[\transientfluid[\horizon](\horizon\nominalratio, \state[][1]) - \valuefn[\horizon][\policy](\horizon\nominalratio, \state[][1]).\]

\subsection{The stationary LP}

Although the transient fluid LP gives an upper bound, it is difficult to use because of the asymmetry between the variables $\{\nonterminalocc[\state\req\act]\}$ and $\{\terminalocc[\state\req\act]\}$.
We next define the stationary LP for any per-period capacity vector $\capacityratio \in \reals[+][\numres]$.
This LP extends the fluid LP widely studied in the online resource allocation literature (e.g., \cite{JK12,Jasin15,BW20}) to the Markov-state setting.
In this LP, $\stateoccupation[\state]$ is the stationary probability of state $\state$, while $\occupation[\state\req\act]$ is the joint stationary probability of state $\state$, request $\req$, and action $\act$, or equivalently, the per-round stationary flow through that state-request-action triple.
\begin{equation}
  \boxed{
  \begin{aligned}
    \stationaryfluid(\capacityratio) \coloneqq
    \max_{\occupation, \stateoccupation \ge 0}\quad
    &
    \sum_{\state \in \states}\sum_{\req \in \requests}\sum_{\act \in \actions}
    \reward(\state, \req, \act)\occupation[\state\req\act]\\[0.3em]
    \text{subject to}\quad
    &
    \sum_{\act \in \actions}\occupation[\state\req\act] = \prior[\req]\stateoccupation[\state],
    & & \state \in \states, \ \req \in \requests,
    \qquad\text{\rm (SF1)}
    \\
    &
    \stateoccupation[\state[][\prime]] = \sum_{\state \in \states}\sum_{\req \in \requests}\sum_{\act \in \actions}
    \transition(\state[][\prime] \mid \state, \act)\occupation[\state\req\act],
    & & \state[][\prime] \in \states,
    \qquad\text{\rm (SF2)}
    \\
    &
    \sum_{\state \in \states}\stateoccupation[\state] = 1,
    \qquad\text{\rm (SF3)}
    \\
    &
    \sum_{\state \in \states}\sum_{\req \in \requests}\sum_{\act \in \actions}
    \consumption[\res](\state, \req, \act)\occupation[\state\req\act] \le \capacityratio[\res],
    & & \res \in [\numres].
    \qquad\text{\rm (SF4)}
  \end{aligned}
  }
  \label{eq:stationary-fluid}
\end{equation}

First, the LP above is always feasible: choose the null action and any stationary distribution of the transition matrix induced by that action.
Second, an optimal solution $(\occupation(\capacityratio), \stateoccupation(\capacityratio))$ naturally induces the following randomized policy at every state $\state$ for which $\stateoccupation[\state](\capacityratio) > 0$:
\begin{equation}
  \policy[\capacityratio](\act \mid \state, \req)
  \coloneqq
  \frac{\occupation[\state\req\act](\capacityratio)}{\prior[\req]\stateoccupation[\state](\capacityratio)}.
  \label{eq:lp-induced-policy}
\end{equation}
Constraint \textup{(SF1)} ensures that these probabilities sum to one. Furthermore, the induced Markov state transition matrix is
\begin{equation}
  \inducedkernel[\capacityratio](\state, \state[][\prime])
  \coloneqq
  \sum_{\req \in \requests}\prior[\req]\sum_{\act \in \actions}\policy[\capacityratio](\act \mid \state, \req)\transition(\state[][\prime] \mid \state, \act).
  \label{eq:induced-kernel}
\end{equation}
Constraint \textup{(SF2)} implies that $\stateoccupation(\capacityratio)$ is stationary for $\inducedkernel[\capacityratio]$.

However, this solution does not directly determine an action rule at a state with $\stateoccupation[\state](\capacityratio) = 0$. We therefore consider the natural extension
\begin{equation}
  \policy[\capacityratio](\act[][0] \mid \state, \req) = 1,
  \qquad \stateoccupation[\state](\capacityratio) = 0.
  \label{eq:null-completion}
\end{equation}
This extension always chooses the null action in this case. We call it the \emph{null-completion convention}. 
This is actually a very natural convention---when the process is in a state that should not be entered, the most natural choice is to remain inactive, consuming no resources and gaining no reward until the process is driven out of the state. Our algorithms below use this convention at states assigned zero stationary probability.

Finally, we show that $\transientfluid[\horizon](\horizon\nominalratio, \state[][1])$ exceeds $\horizon \stationaryfluid(\nominalratio)$ by at most a constant.
Therefore, up to an additive constant, we can use the more convenient $\horizon \stationaryfluid(\nominalratio)$ as a proxy for the transient fluid LP upper bound in the subsequent analysis, since our algorithms also rely on this homogeneous core LP\@.
\begin{lemma}
  \label{lem:boundary-comparison}
  For every $\horizon$,
  \begin{equation}
    \transientfluid[\horizon](\horizon\nominalratio, \state[][1]) \le \horizon\stationaryfluid(\nominalratio) + O(1).
    \label{eq:boundary-comparison}
  \end{equation}
\end{lemma}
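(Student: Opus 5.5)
The plan is to take an optimal solution of the transient LP \eqref{eq:transient-fluid}, rescale it into an approximately stationary solution of \eqref{eq:stationary-fluid}, and control the approximation error through LP duality. I would write the argument with a general subproblem horizon $\subhorizon$, inventory $\inventory$ and initial state $\state[][0]$, and specialize at the end to $\subhorizon=\horizon$, $\inventory=\horizon\nominalratio$, $\state[][0]=\state[][1]$.

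\textbf{Step 1: normalize the transient solution.} Let $(\nonterminalocc,\terminalocc,\transientstateocc)$ be optimal for $\transientfluid[\subhorizon](\inventory,\state[][0])$. Summing (TF1) over $(\state,\req)$ gives $\sum(\nonterminalocc+\terminalocc)=\sum_\state\transientstateocc[\state]$. Summing (TF2) over $\state'$ gives $\sum_\state \transientstateocc[\state] = 1+\sum\nonterminalocc$. Together these yield $\sum\terminalocc=1$. Put $\subhorizon'\coloneqq\sum_\state\transientstateocc[\state]$, so that $1\le \subhorizon'\le\subhorizon$, and define
\[
\occupation\coloneqq(\nonterminalocc+\terminalocc)/\subhorizon', \qquad \stateoccupation\coloneqq\transientstateocc/\subhorizon'.
\]
Then (SF1) and (SF3) hold exactly. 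Constraint (SF2) holds up to the defect
\[
\perturbation_{\state'}=\Bigl(\indicator\{\state'=\state[][0]\}-\sum_{\state,\req,\act}\transition(\state'\mid\state,\act)\terminalocc[\state\req\act]\Bigr)/\subhorizon',
\]
which satisfies $\|\perturbation\|_1\le 2/\subhorizon'$. Since $\sum\terminalocc=1$, the consumption of $\occupation$ is at most $\capacityratio'\coloneqq(\inventory+\maxconsumption\onesvector)/\subhorizon'$. The transient objective equals $\subhorizon'$ times the stationary objective evaluated at $\occupation$.

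\textbf{Step 2: duality for the perturbed stationary LP.} Let $\stationaryfluid(\capacityratio,\perturbation)$ denote the stationary LP with right-hand sides $\capacityratio$ in (SF4) and $\perturbation$ added to (SF2); the pair $(\occupation,\stateoccupation)$ is feasible for it. Its dual feasible polyhedron has variables $(\statedual,\resourceprice\ge 0,\averagegain)$ for (SF2), (SF4), (SF3), together with the multipliers of (SF1). This polyhedron does not depend on $(\capacityratio,\perturbation)$.

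Weak duality then gives, for any dual feasible point,
\[
\stationaryfluid(\capacityratio,\perturbation)\le\averagegain+\resourceprice^\top\capacityratio+\statedual^\top\perturbation .
\]
I choose $(\statedual,\resourceprice,\averagegain)$ as an optimal dual vertex for the unperturbed problem at the capacity $\capacityratio''\coloneqq\inventory/\subhorizon'$. Such a vertex exists because that primal is feasible and bounded by $\maxreward$. The polyhedron has finitely many vertices, so $\|\statedual\|_\infty$ and $\|\resourceprice\|_\infty$ are bounded by a constant $\genericconstant$ independent of $\subhorizon,\inventory$. This gives
\[
\transientfluid[\subhorizon](\inventory,\state[][0])\le \subhorizon'\,\stationaryfluid(\capacityratio'')+\resourceprice^\top\maxconsumption\onesvector+2\|\statedual\|_\infty=\subhorizon'\stationaryfluid(\inventory/\subhorizon')+\bigO(1).
\]
A subtle point is that the dual has free variables and a possibly unbounded feasible region. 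I would handle this by quotienting out the lineality space: shifting $\statedual$ by a constant does not change $\statedual^\top\perturbation$, because $\sum\perturbation=0$. After this reduction the vertex argument applies, and this is the step I expect to require the most care.

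\textbf{Step 3: remove $\subhorizon'<\subhorizon$ by concavity.} The value $\stationaryfluid$ is concave in $\capacityratio$, since the feasible set is convex jointly in $(\occupation,\capacityratio)$. Moreover $\stationaryfluid(0)\ge 0$, because the null action is feasible. For $x=\subhorizon/\subhorizon'\ge1$, concavity gives
\[
\stationaryfluid(\capacityratio)\ge \tfrac1x\stationaryfluid(x\capacityratio)+(1-\tfrac1x)\stationaryfluid(0)\ge\tfrac1x\stationaryfluid(x\capacityratio).
\]
Applying this with $\capacityratio=\inventory/\subhorizon$ yields $\subhorizon'\stationaryfluid(\inventory/\subhorizon')\le\subhorizon\,\stationaryfluid(\inventory/\subhorizon)$. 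Specializing to $\subhorizon=\horizon$, $\inventory=\horizon\nominalratio$ and $\state[][0]=\state[][1]$ gives \eqref{eq:boundary-comparison}. The constant depends only on the primitives: $\maxconsumption$, $\numres$, and the vertex bound $\genericconstant$.
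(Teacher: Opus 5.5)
Your proof is correct in outline and takes a different route from the paper, but the lineality step in Step 2 needs repair.

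\textbf{Comparison with the paper.} The paper never touches a primal transient solution. It takes an optimal dual $(\resourceprice[][*],\averagegain[][*],\statebias[][*],\statedual[][*])$ of $\stationaryfluid(\nominalratio)$, normalized by $\statebias[\state[][1]][*]=0$. It shows $\averagegain[][*]\ge 0$ by averaging the null-action constraint against a stationary law of the null-action kernel. It then adds $\boundaryconstant=\max_{\state,\req,\act}(\reward(\state,\req,\act)-\statedual[\state\req][*])_+$ to every coordinate of $\statedual[][*]$ and $\statebias[][*]$. This gives a feasible point of the transient dual \eqref{eq:transient-dual} with objective $\horizon\stationaryfluid(\nominalratio)+\boundaryconstant$, and weak duality finishes.

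Your steps have exact counterparts there. Your concavity step, using $\stationaryfluid(0)\ge 0$, is the primal shadow of $\averagegain[][*]\ge 0$; both encode that (TF3) is an inequality. Your $\bigO(1/\subhorizon')$ flow defect (the source at $\state[][0]$ plus one unit of terminal mass) is what the terms $\statebias[\state[][0]]$ and $\boundaryconstant$ pay for. The paper's certificate is fixed at $\nominalratio$, so it needs no uniform bound on dual solutions. Your route gives the more general bound $\transientfluid[\subhorizon](\inventory,\state[][0])\le\subhorizon\stationaryfluid(\inventory/\subhorizon)+\genericconstant$, uniformly in $(\subhorizon,\inventory,\state[][0])$. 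That uniformity comes entirely from the vertex-finiteness argument in Step 2.

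\textbf{The gap in Step 2.} The lineality space of the dual is not just the constant shifts. Write $\statedual$ for the (SF1) multiplier and $\statebias$ for the (SF2) multiplier (the paper's labels; you swapped them). A direction lies in the lineality space iff $\resourceprice=0$, $\averagegain=0$, $\statedual[\state\req]=\statebias[\state]$ for all $\req$, and $\sum_{\state'}\transition(\state'\mid\state,\act)\statebias[\state']=\statebias[\state]$ for every $(\state,\act)$. In words, $\statebias$ must be harmonic for every action kernel simultaneously.

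This set can be strictly larger than $\operatorname{span}\{\onesvector\}$. In the instance of Proposition~\ref{prop:linear-gap-no-irreducibility}, both states are absorbing under every action, so every $\statebias\in\reals[][2]$ qualifies. After quotienting only by constants, the dual polyhedron there still contains lines and has no vertices, so the argument as you wrote it fails.

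\textbf{The fix.} Quotient by the full lineality space $L$, i.e., intersect the dual feasible set with $L^\perp$ to get a pointed polyhedron.
\begin{itemize}
\item Directions in $L$ have zero $\resourceprice$- and $\averagegain$-components, so the unperturbed dual objective $\averagegain+\resourceprice^\top\capacityratio''$ is constant along $L$.
\item Hence the dual optimum at $\capacityratio''$ is attained at one of the finitely many vertices of that pointed polyhedron. These vertices do not depend on $(\subhorizon,\inventory)$.
\item You do not need invariance of $\statebias^\top\perturbation$ at all, because weak duality for the perturbed LP holds at every dual-feasible point.
\end{itemize}
Invariance does in fact hold. The perturbed equality system is feasible, so its right-hand side $(0,\perturbation,1)$ is orthogonal to the left null space of the equality matrix. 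That gives $\statebias^\top\perturbation=0$ for every such harmonic $\statebias$; your identity $\sum\perturbation=0$ covers only the constant direction. With this change, your Steps 1--3 give a complete proof.
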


\subsection{Why not the hindsight optimum?}

Previous work on online resource allocation has obtained universal constant-regret results by using the \emph{hindsight optimum} as the benchmark \citep{BW20,LWZ25}.
In the state-free setting, the hindsight optimum is the expected optimal reward attainable by a policy, possibly an anticipatory one, that knows the realization of all future requests.
This benchmark is naturally an upper bound on the value of any nonanticipating policy. In the state-free setting, it is also tighter than the stationary LP: the gap between $\horizon\stationaryfluid(\nominalratio)$ and the hindsight benchmark can be as large as $\bigTheta(\sqrt{\horizon})$ when $\stationaryfluid(\nominalratio)$ has a degenerate optimum, as discussed in Section~\ref{sec:regularity}.
As a direct consequence, adopting $\horizon\stationaryfluid(\nominalratio)$ as the benchmark inevitably produces an $\bigOmega(\sqrt{\horizon})$ regret lower bound in certain settings.

So why not use the hindsight optimum in our problem?
The main reason is that, in the state-free case, the order in which requests arrive does not affect the hindsight optimum as long as the number of arrivals of each request type is fixed. Thus, for each realization, the only decision variables are the numbers of times that each action is taken for each request type.
This property does not hold in our Markov-state setting: the order of requests and its interaction with the chosen actions affect the random state transitions and, consequently, resource consumption and reward collection.

Consequently, the randomness of the state transitions must be incorporated into the definition of the hindsight optimum, but the resulting benchmark is too strong.
\citet{CDMK24} show that such an ``optimum'' can exceed the value of the optimal nonanticipating policy by $\bigTheta(\horizon)$.
Therefore, we use the transient fluid LP as a conservative benchmark and the stationary LP as its analytically convenient proxy, and we regard finding a tighter benchmark as an important direction for future research.

%% file: sections/03_regularity_assumptions.tex
\section{Regularity assumptions}
\label{sec:regularity}

This section states the conditions that govern the benchmarks introduced above and the algorithms analyzed below.
These conditions concern the stationary LP at the nominal per-period capacity vector $\nominalratio$.

Throughout this work, we assume that the stationary LP has a unique optimum.

\begin{assumption}[Unique optimality]
  \label{ass:nominal-uniqueness}
  The stationary LP $\stationaryfluid(\nominalratio)$ has a unique optimal solution.
\end{assumption}

Our next condition concerns primal nondegeneracy in the fixed standard-form representation of the stationary LP\@.

\begin{assumption}[Nondegeneracy]
  \label{ass:nominal-nondegeneracy}
  The optimal solution of the stationary LP $\stationaryfluid(\nominalratio)$ is nondegenerate.
\end{assumption}

Together, Assumptions~\ref{ass:nominal-uniqueness} and~\ref{ass:nominal-nondegeneracy} imply local stability. Suppose they hold, and let the unique optimal solution be $(\occupation[][*], \stateoccupation[][*])$.
Define its resource slacks, which we call the nominal slacks, by
\[
  \resourceslack[\res][*] \coloneqq \nominalratio[\res] - \sum_{\state, \req, \act}\consumption[\res](\state, \req, \act)\occupation[\state\req\act][*],
  \qquad \res \in [\numres],
\]
and let
\begin{equation}
  \bindingset \coloneqq \{\res:\resourceslack[\res][*] = 0\},
  \qquad
  \recstates \coloneqq \{\state:\stateoccupation[\state][*] > 0\},
  \qquad
  \actsupport \coloneqq \{(\state, \req, \act):\occupation[\state\req\act][*] > 0\}.
  \label{eq:nominal-supports}
\end{equation}
Thus $\bindingset$ is the set of nominally binding resources, $\recstates$ is the \emph{optimal target state class} (equivalently, the active-state set of the nominal optimum), and $\actsupport$ is the set of active decisions.

To state the stability properties, we first define a one-sided neighborhood for any scalar $\neighborhoodradius > 0$:
\begin{equation*}
  \capacityneighborhood[\neighborhoodradius] \coloneqq
  \left\{
  \capacityratio \in \reals[+][\numres]:
  \begin{array}{ll}
    |\capacityratio[\res] - \nominalratio[\res]| < \neighborhoodradius, & \res \in \bindingset;\\
    \capacityratio[\res] > \nominalratio[\res] - \neighborhoodradius, & \res \notin \bindingset
  \end{array}
  \right\}.
\end{equation*}
The corresponding one-sided distance from a capacity ratio $\capacityratio$ to $\nominalratio$ is
\begin{equation}
  \capacitydistance(\capacityratio)
  \coloneqq \max\left\{
  \max_{\res \in \bindingset}|\capacityratio[\res] - \nominalratio[\res]|,
  \max_{\res \notin \bindingset}(\nominalratio[\res] - \capacityratio[\res])_+
  \right\}.
  \label{eq:one-sided-capacity-distance}
\end{equation}
Here and below, a maximum over an empty resource index set is interpreted as zero.
Equivalently, $\capacityratio \in \capacityneighborhood[\neighborhoodradius]$ if and only if $\capacitydistance(\capacityratio) < \neighborhoodradius$.
For the binding coordinates, define
\[
  \bindingbox[\neighborhoodradius] \coloneqq
  \left\{\bindingparameter \in \reals[][|\bindingset|]:
  |\bindingparameter[\res] - \nominalratio[\res]| \le \neighborhoodradius
  \text{ for every }\res \in \bindingset\right\}.
\]

We have the following result.
\begin{proposition}
  \label{prop:local-lp-stability}
  Under Assumptions~\ref{ass:nominal-uniqueness} and~\ref{ass:nominal-nondegeneracy}, there exist constants $\localradius, \positivitymargin > 0$, with $\localradius < \min_{\res \in [\numres]}\nominalratio[\res]$,
  such that, for every $\capacityratio \in \capacityneighborhood[3\localradius]$,
  \begin{enumerate}[label=(\roman*),leftmargin=2.5em]
    \item the unique optimal solution $(\occupation(\capacityratio), \stateoccupation(\capacityratio))$ of $\stationaryfluid(\capacityratio)$ has binding-resource set $\bindingset$, active-state set $\recstates$, and active-decision set $\actsupport$;
    \item every nonzero coordinate of $(\occupation(\capacityratio), \stateoccupation(\capacityratio))$ and every non-binding resource slack of $\stationaryfluid(\capacityratio)$ is at least $\positivitymargin$;
    \item every coordinate of $\occupation(\capacityratio)$ and $\stateoccupation(\capacityratio)$ is affine in $\capacityratio[\bindingset]$ and does not depend on $\capacityratio[\res]$ for $\res \notin \bindingset$.
  \end{enumerate}
\end{proposition}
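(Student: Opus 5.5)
We use the standard basis-stability argument for LPs, applied to the stationary LP written in standard form. Add a slack variable $\resourceslack[\res] \ge 0$ to each row of (SF4). Then $\stationaryfluid(\capacityratio)$ becomes $\max\{\objectivevector^\top x : \constraintmatrix x = \rhsvector(\capacityratio),\ x \ge 0\}$ with $x = (\occupation, \stateoccupation, \resourceslack)$. The matrix $\constraintmatrix$ and the cost $\objectivevector$ do not depend on $\capacityratio$. The right-hand side $\rhsvector(\capacityratio)$ is affine in $\capacityratio$: it equals zero on the (SF1) and (SF2) rows, one on (SF3), and $\capacityratio$ on the resource rows. If $\constraintmatrix$ does not have full row rank (the (SF2) rows are linearly dependent, since summing them over $\state'$ reproduces (SF3)), first delete redundant rows. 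This is done once and for all, and does not depend on $\capacityratio$.

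\textbf{Step 1 (basis and its properties).} Nondegeneracy (Assumption~\ref{ass:nominal-nondegeneracy}) means the unique optimum $x^*$ has exactly $\operatorname{rank}\constraintmatrix$ positive coordinates. Their columns form a basis $\basis$, and $x^*_{\basis} = \basisinverse\rhsvector(\nominalratio) > 0$ componentwise. By construction, the support of $x^*$ encodes $\actsupport$, $\recstates$ (the positive $\stateoccupation$ coordinates), and $[\numres]\setminus\bindingset$ (the positive slacks). Uniqueness (Assumption~\ref{ass:nominal-uniqueness}) combined with nondegeneracy makes the dual solution $y^\top = \objectivevector_{\basis}^\top\basisinverse$ unique. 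We then claim every nonbasic reduced cost is \emph{strictly} negative, $\reducedcost_j < 0$. Indeed, if some $\reducedcost_j = 0$, pivoting $j$ into the basis moves a strictly positive step, because $x^*_{\basis} > 0$, and produces a second optimum, which contradicts uniqueness. This is the step where both assumptions are really needed, and I expect it to be the main subtlety: the strict-complementarity statement must be checked carefully in the reduced-row form.

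\textbf{Step 2 (perturbation).} For general $\capacityratio$, set $x_{\basis}(\capacityratio) = \basisinverse\rhsvector(\capacityratio)$ and set the nonbasic coordinates to zero. This candidate is affine in $\capacityratio$. Its reduced costs do not depend on $\capacityratio$, so it stays dual feasible with strictly negative reduced costs. It is therefore the unique optimum whenever it is primal feasible, i.e.\ whenever $x_{\basis}(\capacityratio) \ge 0$. Continuity gives $\localradius' > 0$ such that $x_{\basis}(\capacityratio) \ge \tfrac12\min x^*_{\basis} > 0$ whenever $\|\capacityratio - \nominalratio\|_\infty \le \localradius'$. Within that ball the supports, hence $\bindingset$, $\recstates$, and $\actsupport$, remain unchanged, and all positive coordinates are bounded below by a constant $\positivitymargin$.

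\textbf{Step 3 (one-sidedness and independence from nonbinding coordinates).} The neighborhood $\capacityneighborhood[3\localradius]$ lets non-binding $\capacityratio[\res]$ be arbitrarily large, so we must show that the occupation coordinates ignore $\capacityratio[\res]$ for $\res \notin \bindingset$. For such $\res$ the slack $\resourceslack[\res]$ is basic, and $\unitvector_\res$ is a basic column. Write $\rhsvector(\capacityratio) = \rhsvector(\nominalratio) + \sum_\res(\capacityratio[\res]-\nominalratio[\res])\unitvector_\res$. For a nonbinding $\res$, $\basisinverse\unitvector_\res$ is the unit vector at the position of $\resourceslack[\res]$. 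Changing $\capacityratio[\res]$ therefore only shifts that slack, and leaves $(\occupation,\stateoccupation)$ and the other slacks untouched. Hence $(\occupation,\stateoccupation)$ is affine in $\capacityratio[\bindingset]$ alone, which is (iii). Feasibility then only requires two things: the binding-coordinate perturbation stays within the radius $\localradius'$, and each nonbinding slack $\resourceslack[\res][*] + (\capacityratio[\res] - \nominalratio[\res])$, further adjusted by an affine term in $\capacityratio[\bindingset]$, remains at least $\positivitymargin$. Both hold once $\capacityratio[\res] > \nominalratio[\res] - 3\localradius$ and $|\capacityratio[\bindingset] - \nominalratio[\bindingset]| < 3\localradius$, provided $\localradius$ is chosen small relative to $\localradius'$, $\min_\res \resourceslack[\res][*]$ over nonbinding $\res$, and $\|\basisinverse\|$. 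Finally, shrink $\localradius$ further so that $\localradius < \min_\res \nominalratio[\res]$, and take $\positivitymargin$ as the minimum of the resulting lower bounds. This gives (i)–(iii).
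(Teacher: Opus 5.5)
Your proposal is correct and follows essentially the paper's argument: a fixed full-row-rank standard form with resource slacks, a nominal basis whose reduced costs are strictly negative (you derive this with the zero-reduced-cost pivot argument, where the paper cites Bertsimas--Tsitsiklis), basis stability of $x_B(\beta)=(A^*)^{-1}b(\beta)$, and the fact that $(A^*)^{-1}e_i$ is the unit vector of the basic slack $\sigma_i$ for each $i\notin I^*$, which gives (iii) and handles the one-sided neighborhood. One harmless slip: the (SF2) rows alone are linearly independent, since each contains its own $q_{s'}$; the actual redundancy is joint, because with all variables on the left the (SF1) rows and (SF2) rows together sum to the zero row. This does not affect the argument, since you only need some fixed deletion among (SF1)--(SF3), and every resource row is kept automatically because its slack column is nonzero only in that row.
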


Another important condition specific to the Markov-state setting is the following.

\begin{assumption}[Irreducibility of the optimal induced kernel]
  \label{ass:support-irreducibility}
  The \emph{optimal induced kernel}, namely, the state transition probability matrix on $\recstates$ induced by the optimal policy, is defined by
  \begin{equation*}
    \inducedkernel[][*](\state, \state[][\prime])
    \coloneqq \sum_{\req}\prior[\req]\sum_{\act}\policy[\nominalratio](\act \mid \state, \req)\transition(\state[][\prime] \mid \state, \act),
    \qquad \state, \state[][\prime] \in \recstates.
  \end{equation*}
  This matrix is irreducible on $\recstates$.
\end{assumption}

This assumption concerns movement within the optimal target state class $\recstates$: once the process is in $\recstates$, every state in the class communicates with every other under $\policy[\nominalratio]$.
We also have the following local properties.
\begin{proposition}
  \label{prop:support-regularity}
  Under Assumptions~\ref{ass:nominal-uniqueness},~\ref{ass:nominal-nondegeneracy}, and~\ref{ass:support-irreducibility}, there exists a constant $\localradius > 0$, with $\localradius < \min_{\res \in [\numres]}\nominalratio[\res]$, such that the following hold for every $\capacityratio \in \capacityneighborhood[3\localradius]$.
  \begin{enumerate}[label=(\roman*),leftmargin=2.5em]
    \item $\recstates$ is closed under $\inducedkernel[\capacityratio]$.
    \item Restricted to $\recstates$, $\stateoccupation(\capacityratio)$, $\policy[\capacityratio]$, and $\inducedkernel[\capacityratio]$, viewed as functions of $\capacityratio[\bindingset]$, are uniformly bounded and Lipschitz on the closed box $\bindingbox[3\localradius]$.
    \item $\inducedkernel[\capacityratio]$ is irreducible on $\recstates$.
  \end{enumerate}
\end{proposition}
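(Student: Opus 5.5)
We build on Proposition~\ref{prop:local-lp-stability}, taking $\localradius$ no larger than the constant there, so that its conclusions (i)–(iii) hold on $\capacityneighborhood[3\localradius]$. By (i), for every such $\capacityratio$ the active-state set is exactly $\recstates$ and the active-decision set is exactly $\actsupport$. By (ii), on $\recstates$ we have $\stateoccupation[\state](\capacityratio) \ge \positivitymargin$, and every $\occupation[\state\req\act](\capacityratio)$ with $(\state,\req,\act)\in\actsupport$ is at least $\positivitymargin$.

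\emph{Part (i).} Summing (SF2) over $\state[][\prime]\notin\recstates$ gives
\[
  0 = \sum_{\state[][\prime]\notin\recstates}\stateoccupation[\state[][\prime]](\capacityratio) \ge \sum_{\state\in\recstates}\stateoccupation[\state](\capacityratio)\,\inducedkernel[\capacityratio](\state,\state[][\prime]),
\]
using $\occupation[\state\req\act] = \prior[\req]\stateoccupation[\state]\policy[\capacityratio](\act\mid\state,\req)$. Since every $\stateoccupation[\state](\capacityratio)>0$ on $\recstates$, we get $\inducedkernel[\capacityratio](\state,\state[][\prime])=0$ for all $\state\in\recstates$ and $\state[][\prime]\notin\recstates$. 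Hence $\recstates$ is closed.

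\emph{Part (ii).} By Proposition~\ref{prop:local-lp-stability}(iii), $\occupation(\capacityratio)$ and $\stateoccupation(\capacityratio)$ are affine in $\capacityratio[\bindingset]$ on $\capacityneighborhood[3\localradius]$. The affine formula extends to the closed box $\bindingbox[3\localradius]$. To get the margin on the closed box rather than only the open neighborhood, we shrink: choose $\localradius$ as one third of the constant from Proposition~\ref{prop:local-lp-stability}, so the closed box $\bindingbox[3\localradius]$ lies inside the open neighborhood where the margin $\positivitymargin$ holds. The coordinates are then bounded, since affine functions are bounded on a compact box, and they are Lipschitz. The policy is the ratio $\occupation[\state\req\act]/(\prior[\req]\stateoccupation[\state])$. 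Its denominator is bounded below by $\prior[\req]\positivitymargin$ and both numerator and denominator are Lipschitz, so the policy is Lipschitz. It is also bounded by $1$. $\inducedkernel[\capacityratio]$ is a finite sum of products of $\prior[\req]$, policy entries, and the fixed $\transition$, hence bounded and Lipschitz as well.

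\emph{Part (iii).} This is the main obstacle, since Assumption~\ref{ass:support-irreducibility} gives irreducibility only at $\nominalratio$. The key observation is that the support of $\inducedkernel[\capacityratio]$ restricted to $\recstates$ does not depend on $\capacityratio$. Indeed, $\policy[\capacityratio](\act\mid\state,\req)>0$ if and only if $(\state,\req,\act)\in\actsupport$, by part (i) of Proposition~\ref{prop:local-lp-stability}. Then $\inducedkernel[\capacityratio](\state,\state[][\prime])>0$ if and only if there exists $(\state,\req,\act)\in\actsupport$ with $\transition(\state[][\prime]\mid\state,\act)>0$. This is a sum of nonnegative terms, so no cancellation can occur. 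The directed graph of $\inducedkernel[\capacityratio]$ on $\recstates$ therefore coincides with that of $\inducedkernel[][*]=\inducedkernel[\nominalratio]$, which is strongly connected by Assumption~\ref{ass:support-irreducibility}. Irreducibility follows for every $\capacityratio\in\capacityneighborhood[3\localradius]$.
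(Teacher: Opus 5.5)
Your proof is correct and follows the paper's three steps. Closure comes from (SF2) together with positivity of $\stateoccupation(\capacityratio)$ on $\recstates$. Lipschitz dependence comes from the affine fixed-basis solution and a uniform lower bound on the denominator. Irreducibility comes from the positivity pattern of $\inducedkernel[\capacityratio]$ on $\recstates$ coinciding with that of $\inducedkernel[][*]$.

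Shrinking $\localradius$ by a factor of three, so that the closed box $\bindingbox[3\localradius]$ lies inside the open neighborhood where the margin holds, is a harmless refinement of a point the paper leaves implicit. The display in part (i) has a minor notational slip: either keep the sum over $\state[][\prime]$ on the right-hand side, or fix a single $\state[][\prime]\notin\recstates$.
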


The following result shows that, under nondegeneracy, the optimal target state class is the entire state space; hence, the optimal induced kernel is a transition matrix on the entire state space.
\begin{proposition}
  \label{prop:nondegenerate-full-state-support}
  Under Assumption~\ref{ass:nominal-nondegeneracy},
  \begin{equation*}
    \stateoccupation[\state][*] > 0, \,\forall\state \in \states;
    \quad \recstates = \states.
  \end{equation*}
\end{proposition}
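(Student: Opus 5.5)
The plan is to argue by contradiction through the basis structure of the fixed standard-form representation of $\stationaryfluid(\nominalratio)$. In standard form, the variables are $\occupation$, $\stateoccupation$, and one slack $\resourceslack[\res]\ge 0$ per resource row (SF4). The equality rows are (SF1) for each pair $(\state,\req)$, (SF2) for each $\state[][\prime]$, (SF3), and the slack-augmented (SF4).

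First I will note the one linear dependency among these rows. Summing (SF2) over $\state[][\prime]$ and using $\sum_{\state[][\prime]}\transition(\state[][\prime]\mid\state,\act)=1$ gives $\sum_{\state[][\prime]}\stateoccupation[\state[][\prime]]=\sum_{\state,\req,\act}\occupation[\state\req\act]$. By (SF1) and $\sum_\req\prior[\req]=1$, the right-hand side equals $\sum_\state\stateoccupation[\state]$. So the (SF2) rows sum to a combination of the (SF1) rows, and the fixed standard form drops one (SF2) row. The rows that remain, in particular \emph{all} (SF1) rows, are linearly independent. Nondegeneracy (Assumption~\ref{ass:nominal-nondegeneracy}) means the optimum $(\occupation[][*],\stateoccupation[][*])$ is a basic solution all of whose basic variables are strictly positive. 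Equivalently, there is a nonsingular basis matrix whose columns are exactly those of the positive variables.

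Suppose, for contradiction, that $\stateoccupation[\state][*]=0$ for some $\state$, and fix any $\req\in\requests$. By (SF1), $\sum_\act\occupation[\state\req\act][*]=\prior[\req]\stateoccupation[\state][*]=0$, so $\occupation[\state\req\act][*]=0$ for all $\act$ by nonnegativity. The (SF1) row indexed by $(\state,\req)$ has nonzero coefficients only in the columns $\occupation[\state\req\act]$ ($\act\in\actions$) and $\stateoccupation[\state]$. All of these variables vanish, so under nondegeneracy none of them is basic. The row $(\state,\req)$ is then identically zero when restricted to the basic columns, and the basis matrix is singular. This contradicts the existence of the basis, so $\stateoccupation[\state][*]>0$ for every $\state$, and hence $\recstates=\states$ by the definition in \eqref{eq:nominal-supports}.

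The main obstacle is the bookkeeping about which row is removed in the ``fixed standard-form representation''. The argument needs the (SF1) row $(\state,\req)$ to be present in the basis system. If the paper's convention instead keeps all rows and defines nondegeneracy via the number of positive variables equalling the rank, I will phrase the step as follows. The row $(\state,\req)$ is independent of the remaining rows, because it is the only row containing $\occupation[\state\req\act]$ together with $\stateoccupation[\state]$ in that pattern. Removing the zero coordinates then drops the rank of the positive-column submatrix below the full row rank, which again contradicts nondegeneracy.
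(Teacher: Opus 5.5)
Your core argument is the paper's: if $q^*_s=0$, every variable appearing in the \textup{(SF1)} row $(s,j)$ vanishes, hence is nonbasic under nondegeneracy, so that row is zero on the basic columns. The gap is in the row bookkeeping you use to turn this into a contradiction.

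Your claim that there is exactly one dependency is false in general. You conclude from it that deleting one \textup{(SF2)} row leaves an independent system containing every \textup{(SF1)} row. But the proof of Lemma~\ref{lem:unknown-joint-basis} shows that the row dependencies are parametrized by all $v\in\mathbb R^{\mathcal S}$ with $\sum_{s'}P(s'\mid s,a)v_{s'}=v_s$ for every $(s,a)$. This space can have dimension larger than one even under nondegeneracy. In the instance of Proposition~\ref{prop:linear-gap-no-irreducibility}, both states are absorbing under every action, and each \textup{(SF2)} row coincides, up to sign, with an \textup{(SF1)} row. After deleting one \textup{(SF2)} row you still have five rows of rank four against four positive columns. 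Your ``basis matrix'' is then not square, and a zero row in a tall matrix does not contradict full column rank.

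Your fallback justification is also wrong. The row $(s,j)$ is \emph{never} independent of the remaining rows, because the dependency you found in your first step has a nonzero coefficient on every \textup{(SF1)} row.

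The conclusion of your fallback is nevertheless correct, and it needs no such independence. Let $A$ be the full equality matrix and $\mathcal B$ the set of positive columns. Nondegeneracy gives $|\mathcal B|=\operatorname{rank}A$, with $A_{\mathcal B}$ of full column rank. The \textup{(SF1)} row $(s,j)$ is a nonzero element of the row space of $A$ whose restriction to $\mathcal B$ is zero. Restriction to $\mathcal B$ maps the row space of $A$ onto that of $A_{\mathcal B}$, and this map has a nontrivial kernel. Hence $\operatorname{rank}A_{\mathcal B}\le\operatorname{rank}A-1<|\mathcal B|$, a contradiction.

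Equivalently, work in any reduced full-row-rank form. If that row was deleted, write it as a combination of retained rows and restrict to $\mathcal B$. Independence of the restricted retained rows forces all coefficients, hence the row itself, to vanish. This also covers a point the paper's proof leaves implicit, namely that the chosen \textup{(SF1)} row survives the fixed row deletion.
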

Thus the initial state $\state[][1]$ belongs to $\recstates$ under nondegeneracy, so entrance is immediate.
However, under degeneracy, reachability of $\recstates$ is not automatic, so we impose it separately.

\begin{assumption}[Constant-cost entrance policy]
  \label{ass:constant-entrance}
  There exists a fixed nonanticipating \emph{entrance policy} $\entrancepolicy$, independent of $\horizon$, such that, for the corresponding infinite-resource process,
  \begin{equation*}
    \entrancelast \coloneqq \inf\{\period \ge 1:\stateproc[][\period] \in \recstates\}
  \end{equation*}
  is almost surely finite and
  \begin{equation}
    \expect[\entrancelast] = \bigO(1), \quad
    \expect\left[\sum_{\period = 1}^{\entrancelast - 1}\|\consumption(\stateproc[][\period], \reqproc[][\period], \actproc[][\period])\|_1\right]
    = \bigO(1).
    \label{eq:entrance-bounds}
  \end{equation}
  If $\state[][1] \in \recstates$, we take $\entrancelast = 1$ and the entrance phase is empty.
\end{assumption}

Assumption~\ref{ass:constant-entrance} is automatic under nondegeneracy by Proposition~\ref{prop:nondegenerate-full-state-support}: the entrance phase is empty.
In the degenerate case, we focus on \emph{entrance-augmented algorithms}.
They first execute the policy $\entrancepolicy$ specified by Assumption~\ref{ass:constant-entrance} until the beginning of the first round $\entrancelast$ for which $\stateproc[][\entrancelast] \in \recstates$. If the horizon ends or a resource is exhausted before such a round, the process terminates according to the original model.

%% file: sections/04_resolving_algorithms_and_guarantees.tex
\section{Frequent and infrequent re-solving under regularity}
\label{sec:algorithms}

In this section, we study two algorithms that re-solve the stationary LP at specified rounds using the average remaining inventory. As their names suggest, the main difference between them is their re-solving frequency.
Both algorithms have been studied extensively in the online resource allocation literature and can achieve constant-regret guarantees under \emph{regularity assumptions} (Assumptions~\ref{ass:nominal-uniqueness} and~\ref{ass:nominal-nondegeneracy})~\citep{BW20,CLY24}.
We extend these results to the Markov-state setting by further imposing the irreducibility assumption (Assumption~\ref{ass:support-irreducibility}).

\subsection{Algorithms}

We first specify how an LP optimizer is selected.  Although
$\stationaryfluid(\nominalratio)$ has a unique optimizer under
Assumption~\ref{ass:nominal-uniqueness}, the LPs evaluated at the perturbed
capacity ratios encountered by FR and IR may have several optimizers.  Fix
once and for all a deterministic measurable selector
\[
  \mathfrak S:\capacityratio\longmapsto
  \selectedsolution(\capacityratio)
  =\bigl(\occupation(\capacityratio),\stateoccupation(\capacityratio)\bigr)
\]
such that $\selectedsolution(\capacityratio)$ is an optimal solution of
$\stationaryfluid(\capacityratio)$ for every $\capacityratio$; for example,
one may use lexicographic tie-breaking with a fixed ordering of all LP
variables.  Both FR and IR use this same selector whenever they solve the
stationary LP\@. Thus the same LP input always yields the same selected optimizer. The word ``deterministic'' applies only to this tie-breaking
rule: after the optimizer is selected, the algorithms may still randomize
their actions according to the induced policy in
\eqref{eq:lp-induced-policy}.  
We impose no continuity requirement on
$\mathfrak S$.

The frequent re-solving (FR) algorithm is given in Algorithm~\ref{alg:frequent-resolving}. After entering the optimal target state class $\recstates$, it solves the stationary LP using the current average remaining inventory in every round.

\begin{algorithm}[t]
  \caption{Frequent re-solving ($\frequentalg$)}
  \label{alg:frequent-resolving}
  \begin{algorithmic}[1]
    \State Run $\entrancepolicy$ until successful entrance at the beginning of round $\entrancelast$, or until the process stops.
    \If{the process stops before successful entrance}
      \State \Return
    \EndIf
    \For{$\period = \entrancelast, \ldots, \horizon$}
      \State Set $\subhorizon[][\period] \gets \horizon - \period + 1$ and $\capacityratio[][\period] \gets \inventory[][\period] / \subhorizon[][\period]$.
      \State Solve $\stationaryfluid(\capacityratio[][\period])$ and select
      $(\occupation(\capacityratio[][\period]),\stateoccupation(\capacityratio[][\period]))
      \gets \mathfrak S(\capacityratio[][\period])$.
      \State Observe request $\reqproc[][\period]$ and the current state $\stateproc[][\period]$.
      \If{$\stateoccupation[\stateproc[][\period]](\capacityratio[][\period]) = 0$}
        \State Set $\actproc[][\period] \gets \act[][0]$.
      \Else
        \State Draw $\actproc[][\period] \sim \policy[\capacityratio[][\period]](\cdot \mid \stateproc[][\period], \reqproc[][\period])$ using \eqref{eq:lp-induced-policy}.
      \EndIf
      \State Execute $\actproc[][\period]$, earn reward $\rewardproc[][\period]$, and incur consumption $\consumptionproc[][\period]$.
      \State Set $\inventory[][\period + 1] \gets \inventory[][\period] - \consumptionproc[][\period]$ and generate $\stateproc[][\period + 1]$ according to $\transition(\cdot \mid \stateproc[][\period], \actproc[][\period])$.
      \If{$\min_{\res \in [\numres]}\inventory[\res][\period + 1] \le 0$}
        \State \Return
      \EndIf
    \EndFor
  \end{algorithmic}
\end{algorithm}

The infrequent re-solving algorithm (Algorithm~\ref{alg:infrequent-resolving}) reduces the number of LP solves by holding each selected policy fixed until the next scheduled update. The algorithm is parameterized by $\schedulepower \in (1 / 2, 1)$.
Upon successful entry at the beginning of round $\entrancelast$, let
\begin{equation}
  \epochhorizon[0] = \entrancerest \coloneqq \horizon - \entrancelast + 1.
  \label{eq:post-entry-h0}
\end{equation}
Define recursively
\begin{equation}
  \epochhorizon[\epoch + 1] \coloneqq \max\left\{1, \min\left\{\left\lceil \epochhorizon[\epoch][\schedulepower]\right\rceil,
  \epochhorizon[\epoch] - 1\right\}\right\}
  \quad\text{while }\epochhorizon[\epoch] > 1.
  \label{eq:horizon-schedule}
\end{equation}
Let $\numepochs$ be the first index with $\epochhorizon[\numepochs] = 1$, and set
\begin{equation}
  \period[\epoch] = \entrancelast + \epochhorizon[0] - \epochhorizon[\epoch],
  \qquad \epoch = 0, 1, \ldots, \numepochs.
  \label{eq:resolving-times}
\end{equation}
For $\epoch < \numepochs$, epoch $\epoch$ consists of rounds $\period[\epoch], \ldots, \period[\epoch + 1] - 1$ and has length
$\blocklength[\epoch] \coloneqq \epochhorizon[\epoch] - \epochhorizon[\epoch + 1]$.

The IR algorithm solves the stationary LP initially and re-solves it in rounds $\period[1], \ldots, \period[\numepochs]$; the solution associated with each epoch determines the policy used throughout that epoch.
Note that the algorithm \emph{does not re-solve} in round $\period[0] = \entrancelast$.
It is straightforward to verify that the IR algorithm performs $\bigO(\log\log \horizon)$ stationary-LP solves.

\begin{algorithm}[t]
  \caption{Infrequent re-solving ($\infrequentalg$)}
  \label{alg:infrequent-resolving}
  \begin{algorithmic}[1]
    \Require $\schedulepower \in (1 / 2, 1)$.
    \State Solve $\stationaryfluid(\nominalratio)$ once and select
    $(\occupation(\nominalratio),\stateoccupation(\nominalratio))
    \gets \mathfrak S(\nominalratio)$.
    \State Let $\policy[\nominalratio]$ be the induced policy, with the null-completion convention.
    \State Run $\entrancepolicy$ until successful entrance at the beginning of round $\entrancelast$, or until the process stops.
    \If{the process stops before successful entrance}
      \State \Return
    \EndIf
    \State Construct $\epochhorizon[0], \ldots, \epochhorizon[\numepochs]$ and $\period[0], \ldots, \period[\numepochs]$ using \eqref{eq:post-entry-h0}--\eqref{eq:resolving-times}. Set $\period[\numepochs + 1] \gets \horizon + 1$.
    \For{$\epoch = 0, \ldots, \numepochs$}
      \If{$\epoch = 0$}
        \State Set $\policy[][0] \gets \policy[\nominalratio]$. \Comment{No post-entry solve.}
      \Else
        \State Set $\capacityratio[][\epoch] \gets \inventory[][\period[\epoch]] / \epochhorizon[\epoch]$.
        \State Solve $\stationaryfluid(\capacityratio[][\epoch])$ and select
        $(\occupation(\capacityratio[][\epoch]),\stateoccupation(\capacityratio[][\epoch]))
        \gets \mathfrak S(\capacityratio[][\epoch])$.
        \State Set $\policy[][\epoch] \gets \policy[\capacityratio[][\epoch]]$ and apply the null-completion convention in \eqref{eq:null-completion}.
      \EndIf
      \For{$\period = \period[\epoch], \ldots, \period[\epoch + 1] - 1$}
        \State Observe request $\reqproc[][\period]$ and the current state $\stateproc[][\period]$.
        \State Draw $\actproc[][\period] \sim \policy[][\epoch](\cdot \mid \stateproc[][\period], \reqproc[][\period])$.
        \State Execute $\actproc[][\period]$, earn reward $\rewardproc[][\period]$, and incur consumption $\consumptionproc[][\period]$.
        \State Set $\inventory[][\period + 1] \gets \inventory[][\period] - \consumptionproc[][\period]$ and generate $\stateproc[][\period + 1]$ according to $\transition(\cdot \mid \stateproc[][\period], \actproc[][\period])$.
        \If{$\min_{\res \in [\numres]}\inventory[\res][\period + 1] \le 0$}
          \State \Return
        \EndIf
      \EndFor
    \EndFor
  \end{algorithmic}
\end{algorithm}

\subsection{Performance under regularity}

We now present constant-regret results for these two algorithms under regularity assumptions (Assumptions~\ref{ass:nominal-uniqueness},~\ref{ass:nominal-nondegeneracy}, and~\ref{ass:support-irreducibility}).
Note that under Assumption~\ref{ass:nominal-nondegeneracy}, entrance is immediate because $\recstates = \states$, so Assumption~\ref{ass:constant-entrance} is not required.

\begin{theorem}[Constant regret of frequent and infrequent re-solving under regularity]
  \label{thm:constant-regret}
  Suppose Assumptions~\ref{ass:nominal-uniqueness},~\ref{ass:nominal-nondegeneracy}, and~\ref{ass:support-irreducibility} hold.
  Then, for any fixed $\schedulepower \in (1 / 2, 1)$,
  \begin{equation*}
    \transientfluid[\horizon](\horizon\nominalratio, \state[][1]) - \valuefn[\horizon][\frequentalg](\horizon\nominalratio, \state[][1])
    = \bigO(1),
    \quad
    \transientfluid[\horizon](\horizon\nominalratio, \state[][1]) - \valuefn[\horizon][\infrequentalg](\horizon\nominalratio, \state[][1])
    = \bigO(1).
  \end{equation*}
  Consequently,
  \begin{equation*}
    \valuefn[\horizon][*](\horizon\nominalratio, \state[][1]) - \valuefn[\horizon][\frequentalg](\horizon\nominalratio, \state[][1])
    = \bigO(1),
    \quad
    \valuefn[\horizon][*](\horizon\nominalratio, \state[][1]) - \valuefn[\horizon][\infrequentalg](\horizon\nominalratio, \state[][1])
    = \bigO(1).
  \end{equation*}
\end{theorem}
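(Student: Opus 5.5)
We reduce the claim via Lemma~\ref{lem:boundary-comparison}: it suffices to show $\horizon\stationaryfluid(\nominalratio) - \valuefn[\horizon][\policy](\horizon\nominalratio,\state[][1]) = \bigO(1)$ for $\policy\in\{\frequentalg,\infrequentalg\}$. The second display then follows from Lemma~\ref{lem:fluid-upper-bound}. By Proposition~\ref{prop:nondegenerate-full-state-support}, $\recstates=\states$, so $\entrancelast=1$ and there is no entrance phase.

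For both algorithms we use the \emph{average-remaining-capacity} process $\capacityratio[][\period]=\inventory[][\period]/(\horizon-\period+1)$ and a \emph{good event} on which $\capacityratio$ stays in $\capacityneighborhood[\localradius]$ up to a time $\horizon-\bigO(1)$. On this event, Proposition~\ref{prop:local-lp-stability} makes every LP solved have the same basis, support $\actsupport$, binding set $\bindingset$, and slacks at least $\positivitymargin$. It also makes the solution affine in $\capacityratio[\bindingset]$. Consequently the per-round expected reward is affine in $\capacityratio[\bindingset]$, namely $\stationaryfluid(\capacityratio)=\stationaryfluid(\nominalratio)+\resourceprice^{*\top}(\capacityratio-\nominalratio)$ with dual prices $\resourceprice^*$.

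\textbf{Reward decomposition.} The realized reward equals $\sum_\period \stationaryfluid(\capacityratio[][\period])$ plus two errors:
\begin{itemize}
\item a \emph{martingale error} from request and action randomness, which has zero mean;
\item a \emph{Markov error}, because the state at round $\period$ is not distributed as $\stateoccupation(\capacityratio[][\period])$.
\end{itemize}
We handle the Markov error with a Poisson-equation corrector. By Proposition~\ref{prop:support-regularity}, each $\inducedkernel[\capacityratio]$ is irreducible on $\states$, with Lipschitz dependence on $\capacityratio[\bindingset]$ over the box $\bindingbox[3\localradius]$. Hence there is a bias function $\poissoncorrector_{\capacityratio}$ solving $(I-\inducedkernel[\capacityratio])\poissoncorrector_{\capacityratio}=g_{\capacityratio}-\stationaryfluid(\capacityratio)$, uniformly bounded and Lipschitz in $\capacityratio$, where $g_{\capacityratio}$ is the state-wise expected reward. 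Telescoping $\poissoncorrector_{\capacityratio[][\period]}(\stateproc[][\period])$ turns the Markov error into three parts:
\begin{itemize}
\item a martingale;
\item a boundary term of size $\bigO(1)$ per policy change;
\item a drift term $\sum_\period \|\capacityratio[][\period+1]-\capacityratio[][\period]\|$.
\end{itemize}
For IR there are $\bigO(\log\log\horizon)$ policy changes. We must sharpen this: the jump between consecutive epoch solutions has size $\bigO(\|\capacityratio[][\epoch+1]-\capacityratio[][\epoch]\|)$, so the boundary contribution is Lipschitz-small in expectation. For FR, $\|\capacityratio[][\period+1]-\capacityratio[][\period]\| = \bigO(1/(\horizon-\period))$ in its fluctuating part. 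The same correctors then show that $\capacityratio[][\period]$ has zero-drift (FR) or epoch-wise controlled (IR) consumption, up to Poisson corrections.

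The total reward is therefore $\sum_\period\stationaryfluid(\nominalratio)+\resourceprice^{*\top}\sum_\period(\capacityratio[][\period]-\nominalratio)+$ correction terms. Since $\resourceprice^*$ vanishes off $\bindingset$, the linear term reduces to a sum over binding resources. The consumption dynamics make $\capacityratio[\bindingset]$ a near-martingale (FR) or piecewise near-martingale (IR), so this sum has expectation $\bigO(1)$ plus the cost of leftover inventory at the end.

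\textbf{Bounding the bad event and the endgame.} We will show that the expected number of rounds in which $\capacityratio$ leaves $\capacityneighborhood[\localradius]$, together with the time lost to early stopping, is $\bigO(1)$:
\begin{itemize}
\item \emph{FR.} This is the Jasin--Kumar-type argument. The deviation of $\capacityratio[][\period]$ from $\nominalratio$ is a sum of increments with variance $\bigO(1/(\horizon-\period)^2)$ plus Markov corrections. Azuma-type inequalities applied to the Poisson-corrected martingale give $\prob(\capacitydistance(\capacityratio[][\period])\ge\localradius)\le \exp(-c(\horizon-\period))$, which is summable.
\item \emph{IR.} Within epoch $\epoch$ the policy is fixed, and the accumulated deviation over $\blocklength[\epoch]\le\epochhorizon[\epoch]$ rounds is of order $\sqrt{\epochhorizon[\epoch]}$. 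Because $\epochhorizon[\epoch+1]=\epochhorizon[\epoch][\schedulepower]$ with $\schedulepower>1/2$, this deviation divided by $\epochhorizon[\epoch+1]$ is $\epochhorizon[\epoch][1/2-\schedulepower]$, which is small. The failure probability at epoch $\epoch$ is then $\exp(-c\,\epochhorizon[\epoch][2\schedulepower-1])$, and weighting by the remaining time $\epochhorizon[\epoch+1]$ still gives a summable series (this is where $\schedulepower>1/2$ is essential). Non-binding resources have slack $\positivitymargin$ and cannot bind except with exponentially small probability.
\end{itemize}
On the complement event, each lost round costs at most $\maxreward$.

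\textbf{Main obstacle.} The hardest step is controlling the interaction between re-solving and the Markov state, specifically showing that the Poisson-corrector boundary and drift terms sum to $\bigO(1)$ rather than $\bigO(\log\horizon)$. For FR, the first attempt is to use the per-round Lipschitz change $\bigO(\|\capacityratio[][\period+1]-\capacityratio[][\period]\|)$. Since the squared increments sum to $\bigO(1)$ but the absolute increments do not, we must pair the corrector difference with the martingale structure. Writing $\poissoncorrector_{\capacityratio[][\period+1]}-\poissoncorrector_{\capacityratio[][\period]}$ as a linear-in-increment term (mean-zero given $\filtration[][\period]$ up to the bias term) plus a second-order term of size $\bigO(1/(\horizon-\period)^2)$ should leave a summable $\bigO(1)$ contribution. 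Affinity of the solution in $\capacityratio[\bindingset]$ from Proposition~\ref{prop:local-lp-stability}(iii) makes this expansion exact at first order.
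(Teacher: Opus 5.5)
Your skeleton matches the paper's: reduction to $T G(\rho)$ via Lemma~\ref{lem:boundary-comparison}, a locality event on which Proposition~\ref{prop:local-lp-stability} freezes the basis, and exponential exit tails for the lost horizon. However, the FR reward accounting has a genuine gap, exactly at the step you flag.

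With a $\beta$-dependent reward corrector $v_\beta$, the first-order part of the drift $\sum_t[v_{\beta_{t+1}}-v_{\beta_t}](S_{t+1})$ is $-\nabla v_{\beta_t}(S_{t+1})\cdot(C_t-\beta_t)/(H_t-1)$. This term is not mean-zero given $\mathcal F_t$ up to anything summable, because $S_{t+1}$ and $C_t$ are driven by the same action. Its conditional mean is $-(H_t-1)^{-1}\phi_{\beta_t}(S_t)$, where $\phi_\beta(s)=\sum_{j,a}p_j\pi_\beta(a\mid s,j)\,\bigl(c(s,j,a)-\beta\bigr)_{I^*}\cdot\sum_{s'}P(s'\mid s,a)\nabla v_\beta(s')$. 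This is a covariance whose stationary average has no reason to vanish, so the drift can be of order $\sum_t H_t^{-1}\asymp\log T$. Also, $v_\beta$ is not affine in $\beta_{I^*}$ even though $(z,q)$ is, since it involves $z/(pq)$ and $(I-K_\beta+\Pi_\beta)^{-1}$; affinity does not make your expansion exact.

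Nor can this term be bounded away in isolation. Write $g(\beta,s)$ and $\bar c(\beta)$ for the statewise and stationary expected consumption. Nominal complementary slackness gives, for every local $\beta$ and every $s\in\mathcal R^*$, that your statewise expected reward minus $G(\beta)$ equals $\lambda^{*\top}\bigl(g(\beta,s)-\bar c(\beta)\bigr)+h^*_s-(K_\beta h^*)(s)$. Consequently:
\begin{itemize}
\item your Markov error equals $\lambda^{*\top}\sum_t(C_t-\beta_t)$ up to a martingale and $O(1)$;
\item your linear term $\lambda^{*\top}\sum_t(\beta_t-\rho)$ equals $-\lambda^{*\top}\sum_t(C_t-\beta_t)$ up to endgame terms.
\end{itemize}
The two pieces carry the same, possibly $\log T$-sized, corrector drift with opposite signs. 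Neither ``the Markov error is $O(1)$'' nor ``the linear term is $O(1)$ plus leftover inventory'' can be proved separately; only their sum is $O(1)$.

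The idea you are missing is to use this identity directly instead of splitting it. Every locally selected policy uses only actions in $\mathcal E^*$, and $\mathcal R^*=\mathcal S$. Hence the potential $F(H,B,s)=\lambda^{*\top}B+\gamma^*H+h^*_s$ satisfies $\mathbb E[R_t+F(H_t-1,B_{t+1},S_{t+1})\mid\mathcal F_t]=F(H_t,B_t,S_t)$ exactly, for all such policies simultaneously (Lemma~\ref{lem:potential-identity}). The consumption fluctuation is absorbed exactly by the inventory term. The only corrector left is the fixed nominal bias $h^*$, which telescopes with no drift at all.

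With $F(T,T\rho,s_1)=TG(\rho)$, telescoping to the analytical stop $\sigma$ bounds the regret by $\mathbb E[F(H_\sigma,B_\sigma,S_\sigma)^+]\le C(1+\mathbb E H_\sigma)$. Only the exit tail then remains. The $\beta$-dependent consumption correctors appear only there, inside $W_t=\beta_t-\rho-v(\beta_t,S_t)/H_t$. The weight $1/H_t$ makes their drift $O(H_t^{-2})$ and hence summable (Lemma~\ref{lem:exact-ratio-recursion}).

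For IR your bookkeeping is closer to workable. Policy changes occur only at epoch boundaries, with movements of order $h_k^{1/2-\alpha}$ that sum to $O(1)$. You would still need cumulative control of the epoch ratios across all epochs, such as the paper's shrinking buffers $h_{k+1}^{-\kappa}$; these lower the tail exponent from $2\alpha-1$ to $\zeta=2\alpha(1-\kappa)-1$. The same potential identity makes the reward side of that argument unnecessary as well.
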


%% file: sections/05_boundary_cases.tex
\section{Incomplete regularity}
\label{sec:boundary-cases}

In this section, we separately examine how primal nondegeneracy and irreducibility of the optimal induced kernel affect the algorithms' performance.

\subsection{A unique degenerate optimum}
\label{subsec:unique-degenerate}
\label{subsec:unique-degenerate-gmfr-counterexample}
\label{subsec:unique-degenerate-gimfr-sqrt}

\paragraph{Optimal policy: square-root lower bound.}
We start with the case in which the stationary LP $\stationaryfluid(\nominalratio)$ has a unique but degenerate optimum.
In this case, \citet{BW20} established an $\bigOmega(\sqrt{\horizon})$ gap between $\horizon\stationaryfluid(\nominalratio)$ and the optimal online value in a single-state setting.
In the single-state setting, $\horizon\stationaryfluid(\nominalratio)$ is exactly the transient fluid LP, so the lower bound directly applies.

\begin{proposition}[Single-state square-root lower bound under unique degeneracy, from \cite{BW20}]
  \label{prop:unique-degenerate-fluid-lower}
  There is a one-state, one-resource instance with a unique degenerate nominal stationary optimum such that
  \begin{equation*}
    \transientfluid[\horizon](\horizon\nominalratio, \state[][1])
    -\valuefn[\horizon][*](\horizon\nominalratio, \state[][1])
    = \horizon\stationaryfluid(\nominalratio)
    -\valuefn[\horizon][*](\horizon\nominalratio, \state[][1])
    = \bigTheta(\sqrt{\horizon}).
  \end{equation*}
\end{proposition}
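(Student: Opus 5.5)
We will construct an explicit instance and verify three things: the stationary LP has a unique degenerate optimum, the transient fluid LP coincides with $\horizon\stationaryfluid(\nominalratio)$, and the optimal online value falls short of it by order $\sqrt{\horizon}$.

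\textbf{Instance.} Take $\states=\{\state[][1]\}$, one resource with $\nominalratio=1/2$, and two request types $\req\in\{1,2\}$ with $\prior[1]=\prior[2]=1/2$. The actions are $\actions=\{\act[][0],\act[][1]\}$, with $\consumption(\cdot,\req,\act[][1])=1$ and rewards $\reward(\cdot,1,\act[][1])=2$, $\reward(\cdot,2,\act[][1])=1$. All transitions are trivial.

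\textbf{The stationary LP.} Here $\stateoccupation=1$, and the LP reduces to maximizing $2x_1+x_2$ subject to $x_\req\le 1/2$ and $x_1+x_2\le 1/2$. Its unique optimum is $x_1=1/2$, $x_2=0$, so $\stationaryfluid(\nominalratio)=1$. This optimum is degenerate: the constraint $x_1\le 1/2$ (equivalently, the null-action flow for request $1$ being zero) and the resource constraint are both tight, and $x_2=0$, so a basic variable sits at zero in standard form.

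\textbf{The transient fluid LP.} With one state, (TF2) forces $\sum_\state\transientstateocc[\state]=1+\sum\nonterminalocc$, and (TF3) gives total mass at most $\horizon$. I expect a direct check to show $\transientfluid[\horizon](\horizon\nominalratio,\state[][1])=\horizon\stationaryfluid(\nominalratio)$, as the proposition asserts, possibly up to the terminal-round term, which is only $O(1)$ and harmless for a $\Theta(\sqrt{\horizon})$ claim. This would complement \cref{lem:boundary-comparison}.

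\textbf{Lower bound on the gap.} This is the main step, following \citet{BW20}. The value of any nonanticipating policy is at most the hindsight optimum
\[
\expect\bigl[2\min(N_1,\horizon/2)+(\horizon/2-N_1)_+\bigr],
\]
where $N_1\sim\mathrm{Bin}(\horizon,1/2)$. This equals $\horizon-\expect[(N_1-\horizon/2)_+]$, and $\expect[(N_1-\horizon/2)_+]=\Theta(\sqrt{\horizon})$ by the CLT. Hence the gap is $\bigOmega(\sqrt{\horizon})$.

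\textbf{Upper bound on the gap.} The policy that accepts only type-$1$ requests and stops when inventory runs out earns $\expect[2\min(N_1,\horizon/2)]=\horizon-O(\sqrt{\horizon})$. Hence the gap is $\bigO(\sqrt{\horizon})$.

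The obstacle I expect is matching the degeneracy notion to the paper's standard-form convention, in particular verifying degeneracy with the (SF1) equality constraints written out. If this specific choice proves to be nondegenerate in that convention, I would instead use the BW20 instance exactly as stated there.
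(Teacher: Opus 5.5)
Your argument for $T G(\rho)-V_T^*=\Theta(\sqrt T)$ is sound and is essentially the \citet{BW20} argument the paper cites: a hindsight bound plus binomial fluctuations, matched by a greedy policy. Your worry about degeneracy is also unfounded. With (SF2) deleted as redundant, the standard form has four independent rows (two request rows, normalization, and the resource row with slack). The optimum has only three positive coordinates: $z_{1a_1}$, the null flow of request $2$, and $q$.

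The genuine gap is the transient-LP step, which fails for your instance. Summing (TF1) over requests and comparing with (TF2) forces $\sum Y=1$, and terminal mass earns reward without being charged capacity. Take $Q=T$, $Y_{1,a_1}=1$ and $X_{1,a_1}=T/2-1$; this frees one unit of capacity for $X_{2,a_1}=1$, and the objective becomes $T+1$. Conversely, (TF4) gives $X_{2,a_1}\le T/2-X_{1,a_1}$, so the objective is at most $(X_{1,a_1}+Y_{1,a_1})+T/2+\sum Y\le T+1$. Hence $\Phi_T(T\rho,s_1)=T G(\rho)+1$ for every $T\ge 2$.

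So the first equality in the proposition, $\Phi_T(T\rho,s_1)-V_T^*=T G(\rho)-V_T^*$, is false for your instance. You only obtain two separate $\Theta(\sqrt T)$ statements that differ by one. Your instance also shows that the identity ``single-state $TG(\rho)$ equals the transient LP'' depends on the instance; it is not automatic.

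The repair is to choose an instance whose fluid optimum already serves all positive-reward demand, so the terminal variables have nothing left to collect. For example, set $r(\cdot,2,a_1)=0$; this is \citet{BW20}'s single-product example with arrival probability equal to the capacity ratio. Then:
\begin{itemize}
\item Uniqueness and degeneracy are checked exactly as before.
\item (TF1) with $Q\le T$ caps the transient objective at $2\cdot T/2=T G(\rho)$.
\item $X=(T-1)z^*$, $Y=z^*$, $Q=T$ is transient-feasible, so $\Phi_T(T\rho,s_1)\ge T G(\rho)$, and the two values coincide.
\item Accepting every type-$1$ request is pathwise hindsight-optimal, so for even $T$ you get $T G(\rho)-V_T^*=2\,\mathbb E[(T/2-N_1)_+]=\Theta(\sqrt T)$ directly, with no separate upper and lower bounds.
\end{itemize}

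A small point that applies to either version: the terminal action may overdraw the inventory. So for odd $T$, up to $\lceil T/2\rceil$ unit-cost acceptances are possible, which changes your hindsight expression only by $O(1)$.
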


\paragraph{IR: square-root upper bound.}

When the optimum is unique but degenerate, we can show that the IR algorithm achieves $\bigO(\sqrt{\horizon})$ regret, provided that the optimal induced kernel is irreducible and a constant-cost entrance policy exists.

\begin{theorem}[Square-root regret of IR under a unique nominal optimum]
  \label{thm:deg-gimfr-sqrt}
  Suppose Assumptions~\ref{ass:nominal-uniqueness},~\ref{ass:support-irreducibility},
  and~\ref{ass:constant-entrance} hold. Then, for any fixed
  $\schedulepower \in (1 / 2, 1)$,
  \begin{equation}
    \transientfluid[\horizon](\horizon\nominalratio, \state[][1]) - \valuefn[\horizon][\infrequentalg](\horizon\nominalratio, \state[][1])
    = \bigO(\sqrt{\horizon}).
    \label{eq:deg-gimfr-fluid-sqrt}
  \end{equation}
  Consequently,
  \begin{equation}
    \valuefn[\horizon][*](\horizon\nominalratio, \state[][1]) - \valuefn[\horizon][\infrequentalg](\horizon\nominalratio, \state[][1])
    = \bigO(\sqrt{\horizon}).
    \label{eq:deg-gimfr-optimal-sqrt}
  \end{equation}
\end{theorem}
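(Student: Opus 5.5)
The plan is to reduce everything to the stationary proxy and then control IR epoch by epoch. By Lemma~\ref{lem:boundary-comparison} it suffices to show $\horizon\stationaryfluid(\nominalratio) - \valuefn[\horizon][\infrequentalg](\horizon\nominalratio,\state[][1]) = \bigO(\sqrt{\horizon})$. The bound \eqref{eq:deg-gimfr-optimal-sqrt} then follows from Lemma~\ref{lem:fluid-upper-bound}.

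\emph{Entrance phase.} Under Assumption~\ref{ass:constant-entrance}, the entrance phase costs $\bigO(1)$ rounds and $\bigO(1)$ resources in expectation. Conditioning on $\entrancelast$ and $\inventory[][\entrancelast]$, the post-entry problem has horizon $\epochhorizon[0] = \horizon - \bigO(1)$ and per-period capacity $\capacityratio[][0]$ with $\|\capacityratio[][0] - \nominalratio\| = \bigO(1/\horizon)$ in expectation. Concavity and piecewise linearity of $\stationaryfluid$ give a Lipschitz bound, so $\epochhorizon[0]\stationaryfluid(\capacityratio[][0]) \ge \horizon\stationaryfluid(\nominalratio) - \bigO(1)$ up to $\bigO(1)$ expected loss. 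We may therefore assume the process starts in $\recstates$.

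\emph{Epochs.} Without nondegeneracy we lose Proposition~\ref{prop:local-lp-stability}, so I would not track the LP solution itself. Instead I would use only the value function $\stationaryfluid$ (concave, Lipschitz) and the structure of a fixed stationary policy within an epoch. Epoch~$0$ runs $\policy[\nominalratio]$, whose kernel $\inducedkernel[][*]$ is irreducible on the closed class $\recstates$ by Assumption~\ref{ass:support-irreducibility}. A Poisson-equation argument then gives, over $\blocklength[0]$ rounds, expected reward $\blocklength[0]\stationaryfluid(\nominalratio) - \bigO(1)$. It also gives consumption $\blocklength[0]\stationaryconsumption + M$, where $M$ is a martingale plus an $\bigO(1)$ corrector, so its deviation is $\bigO(\sqrt{\blocklength[0]}) = \bigO(\sqrt{\horizon})$.

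For epochs $\epoch \ge 1$, the policy $\policy[\capacityratio[][\epoch]]$ is optimal for $\stationaryfluid(\capacityratio[][\epoch])$. The difficulty is that the state $\stateproc[][\period[\epoch]]$ may lie outside the support of $\stateoccupation(\capacityratio[][\epoch])$, or the induced kernel may be reducible. I would handle this with a potential argument. Define $\potential[\epoch] \coloneqq \epochhorizon[\epoch]\stationaryfluid(\capacityratio[][\epoch])$ and bound the expected decrement $\potential[\epoch] - \potential[\epoch+1]$ against the reward collected in epoch~$\epoch$. Concavity gives $\epochhorizon[\epoch+1]\stationaryfluid(\capacityratio[][\epoch+1]) \le \epochhorizon[\epoch+1]\stationaryfluid(\capacityratio[][\epoch]) + \lambda^\top(\text{consumption deviation})$ with a bounded supergradient $\lambda$. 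Taking expectations, the linear term contributes only its bias plus $\bigO(1)$, and the reward shortfall reduces to a transient-mixing cost. Summing over the $\bigO(\log\log\horizon)$ epochs, the contributions are: the dominant $\bigO(\sqrt{\horizon})$ from epoch~$0$'s inventory drift; later epochs, which contribute $\bigO(\sqrt{\epochhorizon[\epoch]}) = \bigO(\horizon^{\schedulepower/2})$ each, summing to $o(\sqrt{\horizon})$; and the last $\bigO(1)$-length epochs.

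\emph{Stopping time.} Stockout before $\horizon$ is controlled by noting that $\capacityratio[][\epoch]$ stays within $\bigO(\epochhorizon[\epoch]^{-1/2}\sqrt{\log})$ of its previous value with high probability. Since $\schedulepower > 1/2$, the epoch lengths $\epochhorizon[\epoch] - \epochhorizon[\epoch+1]$ are small relative to the remaining budget $\capacityratio[][\epoch]\epochhorizon[\epoch+1]$, so depletion happens only in the final $\bigO(1)$ or $\bigO(\sqrt{\horizon})$ rounds, costing at most $\bigO(\sqrt{\horizon})$.

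\emph{Main obstacle.} The hardest step is the per-epoch transient bound when $\capacityratio[][\epoch]$ is perturbed and the selected optimizer may have a different or reducible support. The current state may then be null-completed and not lead back to the support, producing linear loss. My first attempt would be to show that, with high probability, $\capacityratio[][\epoch]$ is close to $\nominalratio$. Combining this with Assumption~\ref{ass:nominal-uniqueness} and upper semicontinuity of the optimal face, the selected optimizer is then close to the nominal one, so $\inducedkernel[\capacityratio[][\epoch]]$ is close to $\inducedkernel[][*]$ on $\recstates$. Its support on $\recstates$ contains the nominal support minus small coordinates, so mixing times stay bounded. The low-probability bad event would be charged $\bigO(\horizon)$ times a probability of order $\horizon^{-1/2}$.
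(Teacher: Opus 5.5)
Your reduction to $T\,G(\rho)$ and the entrance bookkeeping are fine. The argument breaks at the step you flag as the main obstacle, and repairing it needs two ideas the proposal lacks.

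First, upper semicontinuity of the optimal face gives only $\|x(\beta)-x^*\|_1\to 0$, while the $\sqrt T$ rate needs a \emph{linear} modulus that is uniform over every selected optimizer. Already in epoch~$1$, which has length $\approx T^{\alpha}$ and $\Delta(\beta^1)\approx T^{1/2-\alpha}$, an error $\Lambda_1\,\omega(\Delta(\beta^1))$ (with $\omega$ the modulus of the optimizer map) is $O(\sqrt T)$ only if $\omega(d)=O(d)$. The paper derives this linear modulus from uniqueness alone (Lemma~\ref{lem:deg-gimfr-polyhedral-sensitivity}). There, $x^*$ is the unique, hence weakly sharp, maximizer over the fixed polytope that keeps only the nominally binding resource constraints. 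Diluting any perturbed optimizer with a null stationary flow moves it into that polytope at cost $O(\Delta(\beta))$.

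Second, the claim that ``mixing times stay bounded'' is not justified. A perturbed optimizer may put $O(\Delta)$ mass outside $\mathcal R^*$, with arbitrarily slow exits or extra recurrent classes there. Closeness of the policy on $\mathcal R^*$ says nothing about those states. The paper never uses mixing of the perturbed kernel. It applies the Poisson decomposition with the \emph{nominal} corrector of $K^*$, which leaves a remainder that is $O(\Delta)$ on $\mathcal R^*$ and $O(1)$ off it. It then bounds the time spent off $\mathcal R^*$ by stationary domination. Since $q_s(\beta)\ge\eta$ on $\mathcal R^*$, any initial law $\mu$ on $\mathcal R^*$ satisfies $\mu\le\eta^{-1}q(\beta)$, and the kernel preserves this inequality. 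Hence an epoch of length $\Lambda$ spends only $O(\Lambda\,\Delta(\beta))$ expected rounds outside $\mathcal R^*$ (Lemmas~\ref{lem:deg-gimfr-outside-domination} and~\ref{lem:deg-gimfr-frozen-additive}).

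The accounting also needs repair, on four points.

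\begin{enumerate}
  \item Your concavity inequality points the wrong way. To bound $F_k-F_{k+1}-R_k$ from above you need a \emph{lower} bound on $h_{k+1}G(\beta^{k+1})$, and a supergradient gives only an upper bound. In the degenerate case the Jensen gap at the kink of $G$ is of order $\sqrt{\Lambda_k}$, not ``bias plus $O(1)$''. You must use monotonicity and Lipschitz continuity of $G$ instead.
  \item Your charging of bad events fails. At a resolving time the state lies outside $\mathcal R^*$ with probability of order $\Delta(\beta^{k-1})$, typically $T^{1/2-\alpha}\gg T^{-1/2}$. An $O(T)$ charge therefore gives $T^{3/2-\alpha}\gg\sqrt T$.
  \item You cannot keep $\beta^k$ near $\rho$ through all $O(\log\log T)$ epochs. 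One-epoch noise moves the ratio by about $h_k^{1/2-\alpha}$, not $h_k^{-1/2}$. It is $\sqrt{\Lambda_k}$, not the epoch length, that is small relative to $h_{k+1}$. Perturbation bias can add a further $O(\Lambda_k\Delta(\beta^k)/h_{k+1})$.
  \item What actually propagates is the one-sided inventory deviation, which satisfies $E D_{k+1}=O(E D_k+\sqrt{h_k}+1)$. This stays $O(\sqrt T)$ only over a fixed number of epochs. Once $h_k\lesssim\sqrt T$, the ratio error $D_k/h_k$ can be of order one.
\end{enumerate}

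The paper handles this as follows. It stops at the first epoch $J$ with $\alpha^J\le 1/2$ (or at the fixed cutoff), where $h_J=O(\sqrt T)$. It charges each locality, target-state, or near-depletion failure by the \emph{remaining horizon}, using $h_k\Delta(\beta^k)\le D_k$ and $h_k\Delta(\beta^{k-1})\le D_{k-1}$. It uses the linear nominal dual potential, whose per-round slack is $O(\Delta(\beta^k)+\mathbf 1\{S_t\notin\mathcal R^*\})$. Everything then reduces to $\sum_{k\le J}E D_k=O(\sqrt T)$ (Lemma~\ref{lem:deg-gimfr-epoch-control}).
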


Together with Proposition~\ref{prop:unique-degenerate-fluid-lower}, this result
establishes worst-case regret of order $\bigTheta(\sqrt{\horizon})$
for IR under Assumptions~\ref{ass:nominal-uniqueness},~\ref{ass:support-irreducibility},
and~\ref{ass:constant-entrance}.

\paragraph{FR: linear lower bound.}

For FR, however, the following result shows that the algorithm does not guarantee sublinear regret when the stationary LP has a unique but degenerate optimum, in contrast to the state-free setting.

\begin{proposition}
  \label{prop:unique-degenerate-gmfr-linear}
  There is a one-resource, one-request-type instance with a unique degenerate nominal stationary optimum that satisfies Assumptions~\ref{ass:support-irreducibility}
  and~\ref{ass:constant-entrance} such that
  \begin{equation}
    \valuefn[\horizon][*](\horizon\nominalratio, \hardstateA) - \valuefn[\horizon][\frequentalg](\horizon\nominalratio, \hardstateA) = \bigOmega(\horizon)
    \quad\text{and}\quad
    \transientfluid[\horizon](\horizon\nominalratio, \hardstateA) - \valuefn[\horizon][\frequentalg](\horizon\nominalratio, \hardstateA) = \bigOmega(\horizon).
    \label{eq:unique-degenerate-gmfr-linear}
  \end{equation}
\end{proposition}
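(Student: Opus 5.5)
The plan is to build an explicit three-state instance with states $\hardstateA$, $\hardstateB$, $\hardstateC$, one resource and a single request type, implementing the mechanism described in the introduction. The instance should have four features.
\begin{enumerate}[label=(\roman*),leftmargin=2.5em]
  \item The unique nominal optimum is a cycle $\hardstateA \to \hardstateB \to \hardstateA$. In $\hardstateA$ a rewarding action moves the state to $\hardstateB$. In $\hardstateB$ a zero-reward but resource-consuming recovery action returns the state to $\hardstateA$.
  \item Under the null action, $\hardstateB$ is absorbing or nearly absorbing.
  \item The cycle's per-round consumption is calibrated to equal $\nominalratio$ exactly. The resource is then binding while the solution sits at a vertex where a second basis meets, which is the source of degeneracy. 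An alternative structure, for example a self-loop at $\hardstateA$ or a cycle through $\hardstateC$, becomes optimal, or tied under the lexicographic selector $\mathfrak S$, as soon as $\capacityratio$ leaves $\nominalratio$ in the relevant direction.
  \item Because there is one request type, the only randomness comes from the policy and the transitions. This makes FR's capacity-ratio path nearly deterministic and easy to track.
\end{enumerate}

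The first step is to verify the regularity claims by solving $\stationaryfluid(\nominalratio)$ by hand.
\begin{itemize}[leftmargin=2.5em]
  \item Uniqueness: I would argue it through strict reward comparisons between the finitely many stationary cycles.
  \item Degeneracy: I would exhibit a basis with a zero basic variable, or a binding constraint with zero slack alongside a zero primal coordinate.
  \item Assumption~\ref{ass:support-irreducibility}: the induced kernel on $\recstates=\{\hardstateA,\hardstateB\}$ is the deterministic 2-cycle, which is irreducible.
  \item Assumption~\ref{ass:constant-entrance}: this is trivial if $\hardstateA \in \recstates$ and we start there.
\end{itemize}
Next I would lower bound $\valuefn[\horizon][*]$, or simply $\horizon\stationaryfluid(\nominalratio)-\bigO(\sqrt{\horizon})$ via Theorem~\ref{thm:deg-gimfr-sqrt}. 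Since $\transientfluid[\horizon] \ge \valuefn[\horizon][*]$ by Lemma~\ref{lem:fluid-upper-bound}, both displayed bounds in \eqref{eq:unique-degenerate-gmfr-linear} then reduce to showing $\valuefn[\horizon][\frequentalg] \le (\stationaryfluid(\nominalratio)-c)\horizon$ for some constant $c>0$.

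The core step traces FR. In round $1$, $\capacityratio[][1]=\nominalratio$, so FR plays the nominal policy in $\hardstateA$ and, with constant probability, moves to $\hardstateB$ while consuming $\consumption$. In round $2$, $\capacityratio[][2]=(\nominalratio\horizon-\consumption)/(\horizon-1)$ sits on a known side of $\nominalratio$; I would choose $\consumption$ relative to $\nominalratio$ to fix which side. I then show that $\mathfrak S(\capacityratio[][2])$ assigns $\stateoccupation[\hardstateB]=0$. The null-completion rule then plays $\act[][0]$ in $\hardstateB$, the state stays in $\hardstateB$, and no consumption occurs. Hence $\capacityratio$ remains on the same side of $\nominalratio$; for example, if it lies above $\nominalratio$ with zero consumption it keeps increasing. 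By induction the process is trapped in $\hardstateB$ for all remaining rounds with zero reward. This gives linear loss with constant probability, hence $\bigOmega(\horizon)$ regret in expectation.

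The main obstacle is designing the instance so that the selected perturbed optimum truly puts zero mass on $\hardstateB$. The stationary LP ignores the current state, and a naive mixture of the $\hardstateA$–$\hardstateB$ cycle with a cheap self-loop keeps $\stateoccupation[\hardstateB]>0$ for every $\capacityratio$ near $\nominalratio$. My first attempt is to put the nominal point at the saturation kink. There the cycle attains the unconstrained maximal average reward with consumption exactly $\nominalratio$, which gives binding with a zero dual, i.e.\ degeneracy. A cycle through $\hardstateC$ achieves the same average reward, and for $\capacityratio>\nominalratio$ the optimal face then contains solutions that avoid $\hardstateB$ altogether. I would order the variables so that lexicographic tie-breaking picks such a solution, and choose $\consumption<\nominalratio$ so that $\capacityratio[][2]>\nominalratio$. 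If ties across these faces cannot be made to exclude $\hardstateB$, the fallback is to tune rewards so that the $\hardstateC$-structure is strictly optimal for every $\capacityratio>\nominalratio$. Uniqueness at $\nominalratio$ would then come from its infeasibility there, since its consumption exceeds $\nominalratio$ only through a transient term.
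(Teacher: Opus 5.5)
Your proposal has a genuine gap in the step that carries the whole argument. That step claims that $\mathfrak S(\capacityratio[][2])$ returns an optimizer with $\stateoccupation[\hardstateB](\capacityratio[][2])=0$.

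In your design, $\hardstateB\in\recstates$ carries nominal mass $1/2$, and $|\capacityratio[][2]-\nominalratio|=\bigO(1/\horizon)$. Uniqueness of the nominal optimum alone rules out such an optimizer for large $\horizon$. By Lemma~\ref{lem:deg-gimfr-polyhedral-sensitivity}, every optimizer at a ratio $\capacityratio$ with small $\capacitydistance(\capacityratio)$ lies within $\bigO(\capacitydistance(\capacityratio))$ of the nominal one. Hence $\stateoccupation[\hardstateB](\capacityratio[][2])\ge\positivitymargin>0$ whatever selector is used. More elementarily, the optimal-set map of the parametric LP is upper semicontinuous, so a unique optimum attracts all nearby optimizers.

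Both of your candidate repairs fail for this reason.
\begin{itemize}
\item If a $\hardstateC$-cycle ties with the nominal cycle and is feasible at $\nominalratio$, uniqueness is violated.
\item If it is infeasible at $\nominalratio$, its stationary consumption exceeds $\nominalratio$ by a fixed positive amount, since the stationary LP has no transient term. It therefore stays infeasible at $\nominalratio+\bigO(1/\horizon)$, and any optimal mixture gives it only $\bigO(1/\horizon)$ weight.
\end{itemize}
Lexicographic tie-breaking cannot help either, because all optimizers are close to the nominal one. Under uniqueness, degeneracy lets new small-mass states enter the support near $\nominalratio$. It never lets a state of $\recstates$ leave the support there.

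The missing idea is that the trap must lie outside $\recstates$. It should be a state with zero nominal mass that FR enters only because a slightly perturbed LP gives it small positive mass, and that FR cannot leave because the next re-solve assigns it zero mass again. The perturbed policy is within $\bigO(1/\subhorizon)$ of the nominal one by \eqref{eq:deg-gimfr-policy-close}. So such an entry has per-round probability only $\bigO(1/\subhorizon)$, and the ratio dip must recur on $\bigTheta(\horizon)$ rounds to make the trapping probability constant.

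The paper implements this as follows.
\begin{itemize}
\item The nominal cycle $\hardstateA\to\hardstateB\to\hardstateA$ has costs $3/2$ and $1/2$, so every $\hardstateB$-round sees ratio $1-1/(2\subhorizon)$.
\item An excursion $\excursionact:\hardstateB\to\hardstateC$ and a recovery action $\recoveryact:\hardstateC\to\hardstateA$ are used by the LP only at ratios below one.
\item State $\hardstateC$ is absorbing under the null action.
\end{itemize}
At each $\hardstateB$-round, FR takes $\excursionact$ with probability $2/(\subhorizon-1)$. This restores the ratio to exactly one, where the unique optimizer puts no mass on $\hardstateC$. Null-completion then keeps FR in $\hardstateC$ while the ratio only rises. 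The non-excursion probabilities over the first $\lfloor\horizon/4\rfloor$ visits telescope, giving trapping probability at least $1/3$ with at least $\horizon/2$ rounds left. An explicit policy that plays $\recoveryact$ in $\hardstateC$ then exhibits the $\bigOmega(\horizon)$ gap.

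Your monotonicity argument for staying in the trap, and your reduction to $\valuefn[\horizon][\frequentalg]\le(\stationaryfluid(\nominalratio)-c)\horizon$, are fine once entry into the trap is repaired. In a concrete instance you can also lower-bound $\valuefn[\horizon][*]$ directly by an explicit cycle policy.
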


\subsection{Failure of irreducibility of the optimal induced kernel}
\label{subsec:necessity-irreducibility}

In this subsection, we further show that Assumption~\ref{ass:support-irreducibility}, that is, irreducibility of the optimal induced kernel, is necessary for any algorithm to attain a universal sublinear-regret guarantee.
The intuition is that if the optimal induced kernel is reducible, then no algorithm can asymptotically implement the stationary solution: either the realized state-action frequencies do not match the stationary optimum, or some states in the optimal target state class cannot be reached.

\begin{proposition}
  \label{prop:linear-gap-no-irreducibility}
  There is an instance satisfying Assumptions~\ref{ass:nominal-uniqueness},~\ref{ass:nominal-nondegeneracy}, and~\ref{ass:constant-entrance}, but not Assumption~\ref{ass:support-irreducibility}, such that, for every $\horizon \geq 2$, every nonanticipating policy $\policy$ satisfies
  \begin{equation}
    \horizon\stationaryfluid(\nominalratio) - \valuefn[\horizon][\policy](\horizon\nominalratio, \state[][1]) \ge \frac{\horizon}{4}, \quad
    \transientfluid[\horizon](\horizon\nominalratio, \state[][1]) - \valuefn[\horizon][\policy](\horizon\nominalratio, \state[][1]) \ge \frac{\horizon}{4}.
    \label{eq:linear-stationary-gap}
  \end{equation}
\end{proposition}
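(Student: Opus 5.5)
The plan is to build an explicit small instance in which the stationary LP, and also the transient fluid LP, \emph{averages} the resource budget across two ``worlds'' that no single sample path can occupy simultaneously. Irreducibility fails, and the fluid values exceed what any pathwise-feasible policy can collect by a linear amount. Concretely, I would take two resources, $\nominalratio = (1/2, 1/2)$, a single request type, and two ``productive'' states $\hardstateA$ and $\hardstateB$. In $\hardstateA$, a non-null action $\act[][1]$ earns reward $1$ and consumes one unit of resource $1$. In $\hardstateB$, a non-null action $\act[][2]$ earns reward $1$ and consumes one unit of resource $2$. Both actions keep the state where it is. The null action in each of $\hardstateA$ and $\hardstateB$ also keeps the state fixed. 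Because both states are absorbing, \textup{(SF2)} is vacuous there, so the stationary LP is $\max\ \occupation[\hardstateA\req\act[][1]] + \occupation[\hardstateB\req\act[][2]]$ subject to $\occupation[\hardstateA\req\act[][1]] \le \stateoccupation[\hardstateA]$, $\occupation[\hardstateB\req\act[][2]] \le \stateoccupation[\hardstateB]$, $\stateoccupation[\hardstateA] + \stateoccupation[\hardstateB] = 1$, and the two capacity constraints. Its optimum is $\stateoccupation[\hardstateA] = \stateoccupation[\hardstateB] = 1/2$ with both productive actions used fully, so $\stationaryfluid(\nominalratio) = 1$. The optimal induced kernel is the identity on $\{\hardstateA, \hardstateB\}$, which is reducible.

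The second step is the pathwise upper bound on every nonanticipating policy $\policy$. On any path the state sits in one absorbing world, say $\hardstateA$. Every rewarded round then consumes one unit of resource $1$, and the process stops once that resource is depleted, so at most $\lceil \horizon/2\rceil \le (\horizon+1)/2$ rewarded rounds occur; the same holds in $\hardstateB$ with resource $2$. Hence $\valuefn[\horizon][\policy] \le (\horizon+1)/2$, and $\horizon\stationaryfluid(\nominalratio) - \valuefn[\horizon][\policy] \ge (\horizon-1)/2 \ge \horizon/4$ for $\horizon \ge 2$.

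For the transient inequality I need $\transientfluid[\horizon](\horizon\nominalratio, \state[][1]) \ge \horizon - \bigO(1)$, which means the transient LP must also be able to split its expected occupation evenly between the worlds. A deterministic ``move'' action would let an actual policy sequence the two worlds (spend about $\horizon/2$ rounds in $\hardstateA$, then move to $\hardstateB$) and defeat the construction. So the split must be \emph{random and irrevocable}. I would give the start state a costless action that sends the process to $\hardstateA$ or $\hardstateB$ with probability $1/2$ each. The transient LP then sets $\transientstateocc[\hardstateA] = \transientstateocc[\hardstateB] \approx \horizon/2$, and its expected consumptions $\approx \horizon/2$ fit exactly in both capacities, giving value $\horizon - \bigO(1)$. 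The pathwise bound above still caps any policy at $(\horizon+1)/2 + \bigO(1)$, so the gap is again at least $\horizon/4$; if the $\bigO(1)$ constants do not cooperate, I would rescale the rewards.

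The step I expect to be the main obstacle is reconciling this split mechanism with Assumptions~\ref{ass:nominal-uniqueness} and~\ref{ass:nominal-nondegeneracy}. Proposition~\ref{prop:nondegenerate-full-state-support} forces every state, including the splitting state, to carry positive stationary mass. Moreover, with $\stateoccupation[\hardstateA] = \occupation[\hardstateA\req\act[][1]]$ the null variables are zero, so the number of positive basic variables can fall short of the rank. My first attempt would keep the start state $\state[][1]$ recurrent by adding, in $\hardstateA$ and $\hardstateB$, a low-probability return transition to $\state[][1]$ under the productive actions, and by giving $\state[][1]$ a small positive reward action, so that its stationary mass is a small constant. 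I would then perturb rewards and capacities (for instance, so that the null action in each productive state has a small positive stationary share) until the positive coordinates exactly match the rank of the constraint system, checking uniqueness and nondegeneracy by a direct basis computation. The pathwise argument survives because a return to $\state[][1]$ happens only with small probability per round, and each return re-randomizes the world without letting the policy choose it, so realized per-world consumption still cannot be balanced in expectation. Assumption~\ref{ass:constant-entrance} is then trivial, since $\recstates = \states \ni \state[][1]$.
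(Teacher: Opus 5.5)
\textbf{There is a genuine gap: the repair you propose for nondegeneracy is self-defeating.} Under Assumption~\ref{ass:nominal-nondegeneracy} we have $\mathcal R^*=\mathcal S$, and $q^*$ is a full-support stationary law of $K^*$. So every state is recurrent under $K^*$, and $K^*$ can be reducible only if $s_1$ lies in one closed class while another closed class is never reached from it under the optimal policy. A random, irrevocable split out of a non-recurrent start state is therefore incompatible with the hypotheses.

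Your fix adds returns $A\to s_1$ and $B\to s_1$ under the productive actions, which the optimum uses. Then $A$ and $B$ leak mass to $s_1$, and stationarity forces the optimal policy to route mass from $s_1$ back into both. Hence $s_1,A,B$ all communicate and $K^*$ is irreducible. The resulting instance no longer violates Assumption~\ref{ass:support-irreducibility}. If it is also unique and nondegenerate, as you intend, Theorem~\ref{thm:constant-regret} gives FR regret $O(1)$, so no $T/4$ gap can hold for every policy.

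Your claim that the pathwise cap survives fails for the same reason. With a fixed return probability $\varepsilon$ there are order $\varepsilon T$ re-randomizations, the time fractions spent in $A$ and $B$ converge to $q_A,q_B$, and consumption of both resources balances. Your other perturbation (positive null mass in both absorbing worlds) breaks uniqueness instead: mass can be shifted between $q_A$ and $q_B$ along the null columns without changing the objective.

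\textbf{The missing idea is that no split is needed for the transient bound.} Constraint \textup{(TF2)} aggregates flow over time, so an absorbing class unreachable from $s_1$ can carry a source-free circulation. In your very first instance with $s_1=A$, take $X_{Aa_1}=T/2-1$, $Y_{Aa_1}=1$, $Q_A=T/2$, and $X_{Ba_2}=Q_B=T/2$. This point is feasible for \eqref{eq:transient-fluid} with value $T$. So both gaps already hold there, together with uniqueness, reducibility, and trivial entrance. Only nondegeneracy fails: four positive variables against five independent rows, as you noticed.

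The paper keeps exactly this two-absorbing-class structure and fixes nondegeneracy by using a single resource, so that the normalization row \textup{(SF3)} and the resource row are the two binding constraints. State $1$ has reward/cost $(3/2,1)$, state $2$ has $(2,2)$, $\rho=3/2$, and $s_1=1$.
\begin{itemize}
\item The stationary LP reduces to $\max\{\tfrac32x_1+2x_2: x_1+x_2\le 1,\ x_1+2x_2\le \tfrac32\}$, with unique optimum $x_1=x_2=\tfrac12$ and value $7/4$.
\item The basic variables $x_1,x_2,q_1,q_2$ are all positive, and the basis matrix has determinant $1$, so the optimum is nondegenerate.
\item $K^*=I$ is reducible.
\item Every policy stays in state $1$, so $V_T^*=3T/2$.
\item The circulation $X_{2a}=Q_2=T/2$, with $X_{1a}=T/2-1$ and $Y_{1a}=1$, gives $\Phi_T\ge 7T/4$.
\end{itemize}
Your two-resource design can also be made nondegenerate without any extra state, for example by taking unequal rewards and leaving one resource slack.
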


%% file: sections/06_unknown_request_prior.tex
\section{Unknown request prior}
\label{sec:unknown-prior}

In this section, we extend our analysis to the incomplete-information setting in which the request distribution $\prior$ is unknown at the outset and must be learned online.
The resulting statistical estimation error affects control because the empirical prior enters the fluid LP; under degeneracy, a small perturbation can change the selected action support.
To make the request prior explicit, we write $\stationaryfluid[][\prior[][\prime]](\capacityratio)$ for the stationary LP in \eqref{eq:stationary-fluid} with $\prior$ replaced by $\prior[][\prime]$ and $\transientfluid[\subhorizon][\prior[][\prime]](\inventory, \state)$ for the corresponding transient LP in \eqref{eq:transient-fluid}.
In particular, $\stationaryfluid = \stationaryfluid[][\prior]$ and $\transientfluid[\subhorizon] = \transientfluid[\subhorizon][\prior]$.

This section gives two complementary results.
First, suppose that the optimal induced kernel satisfies Assumption~\ref{ass:support-irreducibility} and that both the
true optimal target state class $\recstates$ and an entrance policy satisfying Assumption~\ref{ass:constant-entrance} are supplied to the algorithm.
We propose an infrequent empirical re-solving policy that adapts automatically to degeneracy:
it has $\bigO(1)$ regret when the nominal optimum is unique and nondegenerate (Assumptions~\ref{ass:nominal-uniqueness} and~\ref{ass:nominal-nondegeneracy}) and $\bigO(\sqrt{\horizon})$ regret under uniqueness alone, which allows degeneracy (Assumption~\ref{ass:nominal-uniqueness}).
The algorithm also uses $\bigO(\log\log \horizon)$ LP solves in either case.
Second, if $\recstates$ itself is unknown, learning is impossible even under irreducibility of the optimal induced kernel (Assumption~\ref{ass:support-irreducibility}) and constant-cost entrance (Assumption~\ref{ass:constant-entrance}):
there are two priors satisfying both conditions and nominal uniqueness such that, for every horizon and every prior-uninformed policy, the maximum regret across the two instances is bounded below by a positive constant times $\horizon - 1$.
This lower bound highlights the necessity of knowing the optimal target state class $\recstates$ in our problem.

\subsection{Terminology and preliminaries}

We first introduce some terminology and notation for our algorithm. Throughout this subsection, we suppose that the optimal target state class $\recstates$ is known.
In fact, we will show that this condition is necessary for any policy under the unknown request prior to attain sublinear regret.

\paragraph{Safe policy.}
For every $\state \in \recstates$ and $\req \in \requests$, fix a \emph{safe action} $\act[][\mathrm{safe}](\state, \req)$ satisfying
\begin{equation*}
  \transition(\recstates \mid \state, \act[][\mathrm{safe}](\state, \req)) = 1.
\end{equation*}
Such an action exists: nominal request balance and closure of $\recstates$ imply that at least one nominally used action has this property.
Because $\transition$ and $\recstates$ are known, a safe action can be selected without knowing $\prior$.
Let $\policy[][\mathrm{safe}]$ denote the deterministic policy that selects $\act[][\mathrm{safe}](\state, \req)$.

\paragraph{Empirical estimation.}
Like FR and IR, the following algorithm first executes the entrance policy $\entrancepolicy$.
Suppose that the entrance policy succeeds at the beginning of round $\entrancelast$, and let $\entrancerest$ denote the number of remaining rounds.
A re-solving step taken at the beginning of round $\entrancelast + \completedrounds$ uses the following Laplace-smoothed estimate:
\begin{equation}
  \empiricalprior[][\completedrounds](\req)
  \coloneqq \frac{\requestcount[\req](\completedrounds) + 1}{\completedrounds + |\requests|}, \qquad \requestcount[\req](\completedrounds)
  \coloneqq
  \sum_{\period = \entrancelast}^{\entrancelast + \completedrounds - 1}
  \indicator\{\reqproc[][\period] = \req\},
  \qquad \completedrounds \ge 0.
  \label{eq:unknown-empirical-prior}
\end{equation}
In the above, $\requestcount[\req](\completedrounds)$ is the number of times that request $\req$ has arrived up to round $\entrancelast + \completedrounds$ (not including that round) since entrance.

\paragraph{Induced and core-projected policies.}
For a prior $\prior[][\prime]$ and capacity ratio $\capacityratio$, let
$\stationaryfluid[][\prior[][\prime], \recstates](\capacityratio)$ denote the stationary
LP in \eqref{eq:stationary-fluid} with $\prior$ replaced by $\prior[][\prime]$, subject
to the following additional restrictions:
\begin{equation*}
  \stateoccupation[\state] = 0,
  \qquad
  \occupation[\state\req\act] = 0,
  \qquad
  \state \notin \recstates.
\end{equation*}
If $\selectedsolution = (\occupation, \stateoccupation)$ solves a stationary LP with prior $\prior[][\prime]$, define its induced policy wherever the denominator is positive by
\begin{equation}
  \policy[\selectedsolution][\prior[][\prime]](\act \mid \state, \req)
  \coloneqq
  \frac{\occupation[\state\req\act]}
  {\prior[\req][\prime]\stateoccupation[\state]}
  = \frac{\occupation[\state\req\act]}
  {\sum_{\act[][\prime]}\occupation[\state\req\act[][\prime]]}.
  \label{eq:unknown-induced-policy}
\end{equation}

Next, define the set of actions that keep the known optimal target state class closed:
\begin{equation*}
  \actions[\recstates](\state)
  \coloneqq \{\act \in \actions:\transition(\recstates \mid \state, \act) = 1\},
  \qquad \state \in \recstates.
\end{equation*}
For any stationary LP solution $\selectedsolution = (\occupation, \stateoccupation)$, define its \emph{core projection}, whenever the denominator below is positive, by
\begin{equation}
  \coreprojection(\selectedsolution)(\act \mid \state, \req)
  \coloneqq
  \frac{\occupation[\state\req\act]\indicator\{\act \in \actions[\recstates](\state)\}}
  {\sum_{\act[][\prime] \in \actions[\recstates](\state)}\occupation[\state\req\act[][\prime]]},
  \qquad \state \in \recstates, \ \req \in \requests.
  \label{eq:unknown-core-projection}
\end{equation}
Thus, the projected policy can be implemented entirely within the supplied optimal target state class $\recstates$.

\paragraph{Three-phase re-solving schedule.}
The re-solving schedule is divided into three phases.
In the early phase, we fix any constant $\completedrounds[0] \ge 4$ and define
\[
  \protectlength
  \coloneqq \left\lceil(\log \entrancerest)^2\right\rceil.
\]
Starting with $\learningresolve[0] = \completedrounds[0]$, define recursively
\begin{equation}
  \learningresolve[\resolveindex + 1] \coloneqq
  \min\{\lceil \learningresolve[\resolveindex][2]\rceil, \protectlength\}
  \quad\text{while }\learningresolve[\resolveindex] < \protectlength.
  \label{eq:unknown-squaring-schedule}
\end{equation}
When $\completedrounds[0] \le \protectlength$, let $\earlySchedule$ be the first
index such that $\learningresolve[\earlySchedule] = \protectlength$.

In the middle phase, let
\[
  \scheduleendpoint \coloneqq
  \left\lfloor\frac{\entrancerest}{2}\right\rfloor.
\]
When $\learningresolve[0] < \protectlength < \scheduleendpoint$, set
\begin{equation}
  \midSchedule \coloneqq \min\left\{\scheduleendpoint - \protectlength,
  \max\left\{1, \left\lceil
  \log_2\log\frac{\scheduleendpoint}{\protectlength}
  \right\rceil + 2\right\}\right\},
  \label{eq:sparse-mid-count}
\end{equation}
and choose the integer re-solving times as an optimizer of
\begin{equation}
  \min_{\protectlength = \period[0] < \period[1] < \cdots < \period[\midSchedule] = \scheduleendpoint}
  \sum_{\epoch = 0}^{\midSchedule - 1}
  \frac{\period[\epoch + 1] - \period[\epoch]}{\sqrt{\period[\epoch]}},
  \label{eq:sparse-scheduling-cost}
\end{equation}
where the minimization is over integer partitions.

For the late phase, set
\begin{equation}
  \epochhorizon[0] = \entrancerest - \scheduleendpoint
  = \left\lceil\frac{\entrancerest}{2}\right\rceil,
  \qquad
  \epochhorizon[\resolveindex + 1] = \max\left\{1, \min\left\{
  \lceil \epochhorizon[\resolveindex][\schedulepower]\rceil,
  \epochhorizon[\resolveindex] - 1\right\}\right\},
  \label{eq:unknown-late-schedule}
\end{equation}
Apply this recursion while $\epochhorizon[\resolveindex] > 1$. Let $\lateSchedule$ be the first
index with $\epochhorizon[\lateSchedule] = 1$, and append the terminal endpoint
$\epochhorizon[\lateSchedule + 1] = 0$.

\subsection{U-shaped infrequent re-solving}
\label{subsec:universal-unknown}

Algorithm~\ref{alg:unknown-infrequent-resolving} presents our policy.
As described above, it executes three re-solving phases.
The first two phases allow the estimated prior to guide the decision rule early, and the final phase resembles the original infrequent re-solving scheme.
UIR-UP requires $\bigO(\log\log \horizon)$ LP solves.

\begin{algorithm}[t]
  \caption{U-shaped infrequent re-solving with an unknown prior ($\unknowninfrequentalg$)}
  \label{alg:unknown-infrequent-resolving}
  \small
  \begin{algorithmic}[1]
    \Require $\recstates$, $\completedrounds[0] \geq 4$, and $\schedulepower \in (1 / 2, 1)$.
    \State Run $\entrancepolicy$ until entrance occurs at the beginning of round $\entrancelast$, or until the process stops.
    \If{the process stops before entrance}
      \State \Return
    \EndIf
    \State Compute $\protectlength$ and $\scheduleendpoint$.
    \If{$\completedrounds[0] < \protectlength < \scheduleendpoint$ does not hold}
      \State Execute $\policy[][\mathrm{safe}]$ during rounds $\entrancelast, \ldots, \horizon$.
      \State \Return
    \EndIf
    \State Execute $\policy[][\mathrm{safe}]$ during rounds $\entrancelast, \ldots, \entrancelast + \completedrounds[0] - 1$. \Comment{Accumulate samples.}
    \State Construct $\learningresolve[0] < \cdots < \learningresolve[\earlySchedule] = \protectlength$ using \eqref{eq:unknown-squaring-schedule}. \Comment{Phase 1: infrequent re-solving.}
    \For{$\resolveindex = 0, \ldots, \earlySchedule - 1$}
      \State Set $\period \gets \entrancelast + \learningresolve[\resolveindex]$ and solve
        $\stationaryfluid[][\empiricalprior[][\learningresolve[\resolveindex]], \recstates](\inventory[][\period] / (\entrancerest - \learningresolve[\resolveindex]))$ and select $\selectedsolution[][\resolveindex]$ if the LP is feasible.
      \If{the LP is infeasible or $\stateoccupation[\stateproc[][\period]][\resolveindex] = 0$}
        \State Set $\policy[][\resolveindex] \gets \policy[][\mathrm{safe}]$.
      \Else
        \State Set $\policy[][\resolveindex] \gets
          \policy[\selectedsolution[][\resolveindex]][\empiricalprior[][\learningresolve[\resolveindex]]]$ using \eqref{eq:unknown-induced-policy}, and use $\policy[][\mathrm{safe}]$ wherever the denominator in that formula is zero.
      \EndIf
      \State Execute $\policy[][\resolveindex]$ during rounds
        $\entrancelast + \learningresolve[\resolveindex], \ldots,
        \entrancelast + \learningresolve[\resolveindex + 1] - 1$.
    \EndFor
    \State Choose $\protectlength = \period[0] < \cdots < \period[\midSchedule] = \scheduleendpoint$ using \eqref{eq:sparse-mid-count}--\eqref{eq:sparse-scheduling-cost}. \Comment{Phase 2: infrequent re-solving.}
    \For{$\epoch = 0, \ldots, \midSchedule - 1$}
      \State Set $\period \gets \entrancelast + \period[\epoch]$ and solve
        $\stationaryfluid[][\empiricalprior[][\period[\epoch]]](\inventory[][\period] / (\entrancerest - \period[\epoch]))$ and select $\selectedsolution[][\epoch]$.
      \If{$\coreprojection(\selectedsolution[][\epoch])$ is defined for every $(\state, \req) \in \recstates \times \requests$}
        \State Set $\policy[][\epoch] \gets \coreprojection(\selectedsolution[][\epoch])$.
      \Else
        \State Set $\policy[][\epoch] \gets \policy[][\mathrm{safe}]$.
      \EndIf
      \State Execute $\policy[][\epoch]$ during rounds
        $\entrancelast + \period[\epoch], \ldots, \entrancelast + \period[\epoch + 1] - 1$.
    \EndFor
    \State Construct $\epochhorizon[0], \ldots, \epochhorizon[\lateSchedule + 1]$ using \eqref{eq:unknown-late-schedule}. \Comment{Phase 3: infrequent re-solving.}
    \For{$\resolveindex = 0, \ldots, \lateSchedule$}
      \State Set $\period \gets \horizon + 1 - \epochhorizon[\resolveindex]$.
      \State Solve $\stationaryfluid[][\empiricalprior[][\entrancerest - \epochhorizon[\resolveindex]]](\inventory[][\period] / \epochhorizon[\resolveindex])$ and select $\selectedsolution[][\resolveindex]$.
      \If{$\coreprojection(\selectedsolution[][\resolveindex])$ is defined for every $(\state, \req) \in \recstates \times \requests$}
        \State Set $\policy[][\resolveindex] \gets \coreprojection(\selectedsolution[][\resolveindex])$.
      \Else
        \State Set $\policy[][\resolveindex] \gets \policy[][\mathrm{safe}]$.
      \EndIf
      \State Execute $\policy[][\resolveindex]$ during rounds
        $\horizon + 1 - \epochhorizon[\resolveindex], \ldots,
        \horizon - \epochhorizon[\resolveindex + 1]$.
    \EndFor
  \end{algorithmic}
\end{algorithm}

Our main result on the UIR-UP policy is the following.

\begin{theorem}[Order-optimal regret of UIR-UP]
  \label{thm:unknown-universal}
  Suppose Assumptions~\ref{ass:nominal-uniqueness},~\ref{ass:support-irreducibility},
  and~\ref{ass:constant-entrance} hold. Then UIR-UP satisfies
  \begin{equation}
    \transientfluid[\horizon](\horizon\nominalratio, \state[][1]) - \valuefn[\horizon][\unknowninfrequentalg](\horizon\nominalratio, \state[][1]) = \bigO(\sqrt{\horizon}).
    \label{eq:unknown-sqrt-result}
  \end{equation}
  If, in addition, Assumption~\ref{ass:nominal-nondegeneracy} holds, then UIR-UP satisfies
  \begin{equation}
    \transientfluid[\horizon](\horizon\nominalratio, \state[][1]) - \valuefn[\horizon][\unknowninfrequentalg](\horizon\nominalratio, \state[][1]) = \bigO(1).
    \label{eq:unknown-constant-result}
  \end{equation}
\end{theorem}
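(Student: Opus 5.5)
By Lemma~\ref{lem:boundary-comparison}, it suffices to compare $\valuefn[\horizon][\unknowninfrequentalg]$ with $\horizon\stationaryfluid(\nominalratio)$. The entrance phase costs $\bigO(1)$ reward and $\bigO(1)$ resources by Assumption~\ref{ass:constant-entrance}, and the fallback branch (when $\completedrounds[0] < \protectlength < \scheduleendpoint$ fails) occurs only for $\entrancerest = \bigO(1)$; both are charged to the constant. After entrance, the plan is to decompose the regret into three terms and bound each separately:
\begin{enumerate}
  \item[(a)] a \emph{learning/perturbation loss}: the per-round gap between $\stationaryfluid(\nominalratio)$ and the stationary reward rate of the policy actually used in each epoch;
  \item[(b)] a \emph{Markov mixing loss}: the gap between realized and stationary averages inside an epoch;
  \item[(c)] a \emph{stockout loss} from early depletion.
\end{enumerate}

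For (b), I would follow the proofs of Theorems~\ref{thm:constant-regret} and~\ref{thm:deg-gimfr-sqrt}. Within each epoch the policy is fixed, and Proposition~\ref{prop:support-regularity} (together with the core projection, which keeps $\recstates$ closed) gives uniformly irreducible kernels with bounded Poisson solutions. A Poisson-equation telescoping then gives $\bigO(1)$ error per epoch. Since there are $\bigO(\log\log\horizon)$ epochs, this is naively $\bigO(\log\log\horizon)$; in the nondegenerate case I would instead use Lipschitz continuity of the Poisson corrector in $\capacityratio[\bindingset]$ and $\prior$ to chain consecutive epochs, so only $\bigO(1)$ survives.

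For (a), the key estimate is that after $n$ samples, $\|\empiricalprior[][n]-\prior\|_\infty = \bigO_p(n^{-1/2})$ with sub-Gaussian tails. By LP sensitivity (Hoffman-type bounds in the degenerate case), the projected policy's stationary reward rate lies within $\bigO(\|\empiricalprior-\prior\| + \capacitydistance(\capacityratio))$ of the nominal rate. Summing over the middle phase gives the cost $\sum_\epoch (\period[\epoch+1]-\period[\epoch])/\sqrt{\period[\epoch]}$. The optimized schedule \eqref{eq:sparse-scheduling-cost} with $\midSchedule$ from \eqref{eq:sparse-mid-count} makes this $\bigO(\sqrt{\horizon})$, which I would verify with the explicit choice $\period[\epoch] = \protectlength(\scheduleendpoint/\protectlength)^{1-2^{-\epoch}}$. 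Phase 1 lasts only $\bigO(\log^2\horizon)$ rounds, so it costs at most that much. In the late phase the prior error is $\bigO(\horizon^{-1/2})$ per round over $\horizon/2$ rounds, giving $\bigO(\sqrt{\horizon})$.

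In the nondegenerate case the bound must improve to $\bigO(1)$. Here Proposition~\ref{prop:local-lp-stability} says that a perturbation of $(\prior,\capacityratio)$ preserves the optimal basis, so the solution is affine in the perturbation. The first-order reward loss is then linear in $\empiricalprior-\prior$ and mean-zero to first order, and the inventory-feedback argument used for Theorem~\ref{thm:constant-regret} self-corrects it: re-solving with $\inventory/\subhorizon$ re-centers the capacity ratio. The remainder is quadratic, $\bigO(1/n)$ per round, which summed over each epoch divided by its starting sample size is $\bigO(1)$ per epoch. The squaring schedule in Phase 1 bounds this by a geometric series. Phase 1 also needs its own argument: the basis is stable once $\|\empiricalprior-\prior\|$ falls below a constant, which holds with probability $1-e^{-cn}$, and the bad events before that contribute a summable amount.

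For (c), I would follow the martingale argument of Theorem~\ref{thm:deg-gimfr-sqrt}. The late-phase geometric schedule with $\schedulepower\in(1/2,1)$ keeps $\capacitydistance(\capacityratio[][\epoch])$ within $\bigO(\epochhorizon[\epoch]^{-1/2}\log)$ with high probability, so depletion happens only in the last $\bigO(1)$ rounds in expectation. The early and middle phases consume at most $\nominalratio\cdot\period + \bigO(\sqrt{\period})$, so at the midpoint the capacity ratio lies in $\capacityneighborhood[\localradius]$ with probability $1-\bigO(\horizon^{-1})$.

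\textbf{Main obstacle.} I expect the hardest step to be the $\bigO(1)$ claim in the nondegenerate case. There the learning error and the inventory drift interact through the binding-coordinate affine map, and I need a joint potential function — inventory deviation plus a $\prior$-estimation martingale plus the Poisson corrector — whose expected increments sum to $\bigO(1)$ across all three phases.
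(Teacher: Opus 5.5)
\textbf{Nondegenerate case.} Your degenerate-case bookkeeping broadly parallels the paper's: the paper charges a per-round nominal dual slack of order $\Delta_k=\Delta(\beta_k)+\|\hat p_k-p\|_1$, where you charge a reward-rate gap of the same order. The nondegenerate $O(1)$ argument, however, has a genuine gap.

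You charge each round a quadratic remainder of order $1/n$. An epoch that starts after $n_k$ samples and lasts $\Lambda_k$ rounds then costs $\Lambda_k/n_k$, and this is not $O(1)$ for UIR-UP's schedules:
\begin{itemize}
\item Under squaring, $\Lambda_k/r_k=r_k-1$.
\item In phase 2 the ratios $t_{k+1}/t_k$ multiply to $u=\tau^{\mathrm{hal}}/\tau^{\mathrm{lrn}}$. With $K=O(\log\log T)$ epochs, some $\Lambda_k/t_k\ge u^{1/K}-1$, which diverges.
\item Even $O(1)$ per epoch would leave $O(\log\log T)$ in total.
\end{itemize}
The first-order term is not mean zero either: the Laplace estimator has bias of order $1/n$, frozen for $\Lambda_k$ rounds.

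The missing idea is that there is no remainder at all. Take a nominal optimal dual $(\lambda^*,\gamma^*,h^*)$ and set $F(H,B,s)=(\lambda^*)^\top B+\gamma^*H+h^*_s$. Then $\mathbb E[R_t+F(H_t-1,B_{t+1},S_{t+1})\mid\mathcal F_t]=F(H_t,B_t,S_t)$ holds exactly for every policy whose positive actions lie in $\mathcal E^*$, whatever prior the LP was solved with, because complementary slackness makes each such action tight. The bias $h^*$ absorbs your mixing term (b). The ``self-correction'' you invoke is just the telescoping of $(\lambda^*)^\top B$, so the only costs are bad events and the terminal potential.

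What you then need is:
\begin{itemize}
\item \emph{Joint basis stability in $(\beta,p')$.} Proposition~\ref{prop:local-lp-stability} moves only the right-hand side, whereas $p'$ enters the constraint matrix through (SF1). You must recheck that the fixed redundant-row deletion and the nominal basis survive prior perturbations (Lemma~\ref{lem:unknown-joint-basis}).
\item \emph{Exit estimates.} Charging each bad phase-1 epoch its whole length gives $\sum_k r_k^2e^{-cr_k}=O(1)$. The sparse phase leaves the basis cell with probability $O(T^{-Q})$ for any fixed $Q$. In the late phase, a pre-action prior-error check with tolerance $a_k\asymp h_{k+1}^{1-\kappa}/\Lambda_k$ fails with probability $\exp(-\Omega(Ta_k^2))\le\exp(-\Omega(h_k^{\zeta}))$, which is summable against the $O(h_k)$ residual.
\end{itemize}
No estimation martingale needs to enter the potential.

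\textbf{Degenerate case.} The $O(\sqrt T)$ argument also has real, though fixable, gaps.

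First, the scheduling witness fails. For $t_k=\tau^{\mathrm{lrn}}u^{1-2^{-k}}$, each epoch costs exactly $(t_{k+1}-t_k)/\sqrt{t_k}=\sqrt{\tau^{\mathrm{hal}}}\,(1-u^{-2^{-k-1}})$. The roughly $\log_2\log u$ epochs with $2^{-k-1}\log u\ge1$ each cost at least $(1-e^{-1})\sqrt{\tau^{\mathrm{hal}}}$, for a total of order $\sqrt T\log\log T$. The optimizer does achieve $O(\sqrt T)$, but the witness needs geometrically increasing per-epoch costs, e.g.\ $t_{k+1}=\omega\,2^{k-K_0+1}\sqrt{\tau^{\mathrm{hal}}t_k}$ with $\omega$ tuned so that $t_{K_0}=\tau^{\mathrm{hal}}$ (Lemma~\ref{lem:sparse-balanced-schedule}).

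Second, step (c) does not hold as stated. Perturbed degenerate optima need not consume binding resources at rate $\beta_i$. The frozen empirical policies drift by $O(\Lambda_k(e_k+D_k/H_k))$ in epoch $k$, where $e_k$ is the prior error, $D_k$ the one-sided inventory deviation and $H_k$ the remaining horizon. Consequences:
\begin{itemize}
\item The first-half inventory deviation is of order $\sqrt T$ (from $\sum_k\Lambda_k e_k$), not $\sqrt t$.
\item The recursion in $D_k$ closes only through a Gr\"onwall-type bound using $\sum_k\Lambda_k/H_k\le1$.
\item Late-phase ratios do not stay within $O(h_k^{-1/2}\log T)$ of $\rho$. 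The paper instead propagates $\mathbb E[D_k]=O(\sqrt T)$ through a fixed number of late epochs until $h_J=O(\sqrt T)$ and charges the residual potential there.
\end{itemize}

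Third, Proposition~\ref{prop:support-regularity} assumes nondegeneracy. In the degenerate case, both the $O(\Delta)$ optimizer sensitivity and, through it, uniform irreducibility must come from weak sharpness of the unique nominal optimum over the binding-constraint polytope combined with Hoffman's bound. Hoffman alone controls only the feasible set.
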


\subsection{Unknown optimal target state class \texorpdfstring{$\recstates$}{R*}: a linear information-theoretic regret lower bound}
\label{subsec:unknown-R-lower-bound}

Finally, the supplied optimal target state class and entrance policy play a substantive role in our problem.
If the optimal target state class $\recstates$ depends on the unknown prior and the process must irreversibly enter a class before enough requests can be observed, even sublinear regret is impossible.

\begin{proposition}
  \label{prop:unknown-R-linear}
  Fix any constant $\lowerreward \in (0, 1)$. There are two fixed request priors $\prior[][+]$ and $\prior[][-]$ such that, under each prior, the nominal stationary fluid LP has a unique degenerate optimum, Assumption~\ref{ass:support-irreducibility} holds, and Assumption~\ref{ass:constant-entrance} holds with a one-round entrance policy.
  For every integer horizon $\horizon \ge 1$ and every policy $\policy$ that knows neither the true prior nor its optimal target state class,
  \begin{equation*}
    \max_{\prior[][\prime] \in \{\prior[][+], \prior[][-]\}}
    \left\{\valuefn[\horizon][*, \prior[][\prime]](\horizon\nominalratio, \state[][1]) - \valuefn[\horizon][\policy, \prior[][\prime]](\horizon\nominalratio, \state[][1])\right\}
    \ge \frac{1 - \lowerreward}{8}(\horizon - 1).
  \end{equation*}
\end{proposition}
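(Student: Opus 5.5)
The proof will be a two-point (Le Cam–style) construction in which the optimal target class is decided irreversibly in round~$1$, before any informative request has been seen.

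\textbf{The instance.} Take a start state $\state[][1]$ with two absorbing classes $\hardstateA$ and $\hardstateB$. In round~$1$ there are two actions, one leading deterministically to $\hardstateA$ and one to $\hardstateB$; both have small positive consumption, so that each is a non-null action. After round~$1$ the chosen class can never be left. Use two request types $\highreq$ and $\lowreq$.
\begin{itemize}
  \item In class $\hardstateA$, serving $\highreq$ earns reward~$1$ and serving $\lowreq$ earns $\lowerreward$.
  \item In class $\hardstateB$ the roles are swapped: $\lowreq$ earns~$1$ and $\highreq$ earns $\lowerreward$.
\end{itemize}
Every service consumes one unit of a single resource. Set $\prior[][+] = (3/4, 1/4)$ and $\prior[][-] = (1/4, 3/4)$ on $(\highreq, \lowreq)$. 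Choose $\nominalratio$ so that the resource binds exactly at the total mass of the favoured request type, e.g.\ $\nominalratio = 3/4$.

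\textbf{Properties of each prior.} Under $\prior[][+]$ the stationary LP puts all mass on $\hardstateA$ and serves exactly the $\highreq$ requests. Under $\prior[][-]$ it puts all mass on $\hardstateB$ and serves exactly $\lowreq$. I would verify the following, after small adjustments to the numbers if needed.
\begin{itemize}
  \item \emph{Uniqueness.} Strict reward gaps force the support of the optimum.
  \item \emph{Degeneracy.} The resource constraint binds while the null action at $\lowreq$ is also a zero basic-variable situation, or the state $\state[][1]$ has zero stationary mass; either gives a degenerate optimum.
  \item \emph{Irreducibility.} $\recstates$ is a single absorbing state, so the optimal induced kernel is trivially irreducible.
  \item \emph{Entrance.} A one-round entrance policy exists, namely take the appropriate transition action in round~$1$.
\end{itemize}

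\textbf{The lower bound.} The state is observed before the request, and the round-$1$ request carries essentially no usable information about the prior. Hence the probability $x$ that the policy enters $\hardstateA$ is at most weakly dependent on the prior. Making the round-$1$ request irrelevant, for instance by letting both transition actions ignore the request, makes $x$ identical under both priors. Then
\[
  \max_{\prior[][\prime] \in \{\prior[][+], \prior[][-]\}} \text{(mistake probability)} \ge \tfrac12 .
\]
Conditional on entering the wrong class, over the remaining $\horizon - 1$ rounds any policy earns at most the LP value of that class. Under $\prior[][+]$ in $\hardstateB$, with budget $\nominalratio \horizon$, serving everything yields
\[
  \tfrac14 \cdot 1 + \tfrac34 \cdot \lowerreward \quad\text{per round,}
\]
which is capped by the budget. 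The best attainable rate in the correct class is $3/4$. The per-round loss is therefore at least
\[
  \tfrac34 - \tfrac14 - \tfrac12\lowerreward = \tfrac12(1 - \lowerreward),
\]
since the capacity allows only $3/4$ in total and the high-value requests there supply only $1/4$. For the upper side, the optimal value satisfies $\valuefn[\horizon][*] \ge (\horizon - 1)\cdot \tfrac34 - \bigO(\sqrt{\horizon})$. To avoid the $\sqrt{\horizon}$ correction, I would compare instead with the explicit policy ``enter the correct class, then serve only favoured requests while inventory lasts''. Since $\nominalratio$ equals the favoured mass, this policy's expected reward is at least $\tfrac12 \cdot \tfrac34(\horizon - 1)$ of the excess. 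Alternatively, pick $\nominalratio \ge 1$ on the favoured type to avoid stopping, and put degeneracy elsewhere. Combining the factor $\tfrac12$ from indistinguishability with a per-round loss of at least $(1 - \lowerreward)/4$ gives the claimed bound $(1 - \lowerreward)(\horizon - 1)/8$.

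\textbf{Main obstacle.} The hardest step is tuning the constants so that three things hold simultaneously: a degenerate unique optimum, a loss of exactly order $(1 - \lowerreward)/4$ per round, and an optimal value that does not lose $\sqrt{\horizon}$ against the benchmark. I would first try a budget that is slack relative to the favoured demand but binding relative to total demand, with degeneracy created by the zero mass on $\state[][1]$ (a basic variable at zero). I would then check the loss against $\valuefn[\horizon][*]$ rather than the LP, lower-bounding $\valuefn[\horizon][*]$ by the explicit policy described above.
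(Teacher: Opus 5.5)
The instance skeleton matches the paper's: a start state feeding two absorbing classes with swapped favoured request types, priors $(3/4,1/4)$ and $(1/4,3/4)$, and a per-round class gap of $(1-\varepsilon)/2$. The indistinguishability step, however, is wrong. The decision maker observes $J_1$ \emph{before} choosing its round-$1$ action. Making the transition actions ignore the request therefore does not make the entrance probability prior-independent, because the policy can enter $A$ iff $J_1=\mathrm{hi}$. What is prior-independent is the pair of conditional probabilities $x_{\mathrm{hi}}=\Pr(\text{class }A\mid J_1=\mathrm{hi})$ and $x_{\mathrm{lo}}$, which is the paper's move. The wrong-class probabilities $e_\pm$ under $p^\pm$ then satisfy
\[
  e_+ + e_-
  = \tfrac{3}{4}(1-x_{\mathrm{hi}})+\tfrac{1}{4}(1-x_{\mathrm{lo}})
  +\tfrac{1}{4}x_{\mathrm{hi}}+\tfrac{3}{4}x_{\mathrm{lo}}
  = 1-\tfrac{1}{2}x_{\mathrm{hi}}+\tfrac{1}{2}x_{\mathrm{lo}}
  \ge \tfrac{1}{2}.
\]
So you can only guarantee a wrong-class probability of $1/4$ under one prior, and ``enter $A$ iff $J_1=\mathrm{hi}$'' attains exactly $1/4$ under both. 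Your closing product $\tfrac12\cdot\tfrac{1-\varepsilon}{4}$ multiplies two incorrect factors. With the correct factor $1/4$, you need the full per-round gap $(1-\varepsilon)/2$, with no $T$-dependent deficit, at every $T$.

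Your primary choice $\rho=3/4$ cannot give that. In the correct class, the number of favoured requests is $\mathrm{Binomial}(T-1,3/4)$, whose mean is within $O(1)$ of the post-entrance budget. By the hindsight bound, even the known-prior continuation value is therefore $\tfrac34(T-1)-\Theta(\sqrt T)$. The wrong class, a nondegenerate subproblem, is worth $(T-1)(\tfrac14+\tfrac\varepsilon2)$ up to $O(1)$. Your argument then yields only $\tfrac{1-\varepsilon}{8}(T-1)-\Theta(\sqrt T)$. Lower-bounding $V^*_T$ by an explicit policy cannot help, because no policy beats hindsight.

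The repair is the option you mention only in passing, and it is exactly the paper's construction: unit service costs, $\rho=1$, and a productive action used on \emph{every} request. An entrance action of cost at most one then leaves $T-1$ units for $T-1$ rounds, so serving everything is pathwise feasible and $V^*_T=(T-1)(\tfrac34+\tfrac\varepsilon4)$ exactly. In the wrong class, no policy earns more than the maximal expected one-round reward $\tfrac14+\tfrac{3\varepsilon}{4}$. Multiplying the gap $(1-\varepsilon)/2$ by $1/4$ gives the claim.

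Degeneracy still comes for free. Your zero-mass argument for the start state is valid by Proposition~\ref{prop:nondegenerate-full-state-support}; the paper instead notes that the positive support must change when capacity falls below one.

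Finally, you must specify the null action at the start state. It, and every other action there, has to leave that state; the paper sends the null action to $A$ or $B$ with probability $1/2$ each. If the null action were a self-loop, a policy could wait $O(\log T)$ rounds, learn the prior, and then enter the correct class, and the linear lower bound would be false.
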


%% file: sections/07_numerical_experiments.tex
\section{Numerical experiments}
\label{sec:numerical-experiments}

In this section, we study how the algorithms analyzed in this work perform on a concrete problem instance as the capacity ratio of its first resource crosses a boundary at which the unique optimum becomes degenerate.
We also examine re-solving policies with thresholding modifications inspired by \citet{BW20,LWZ25}.
We start by introducing the algorithms we consider.

\subsection{Algorithms}

\begin{table}[t]
  \centering
  \small
  \caption{Algorithms used in the experiments.}
  \label{tab:numerical-algorithms}
  \begin{tabularx}{\textwidth}{@{}l c c X@{}}
    \toprule
    Algorithm & Prior & Re-solving & Description \\
    \midrule
    FR & known & $\bigTheta(\horizon)$ & Algorithm~\ref{alg:frequent-resolving}. \\
    IR & known & $\bigTheta(\log \log \horizon)$ & Algorithm~\ref{alg:infrequent-resolving}. \\
    FRT & known & $\bigTheta(\horizon)$ & FR with thresholding. \\
    IRT & known & $\bigTheta(\log \log \horizon)$ & IR with thresholding. \\
    \addlinespace
    FR-UP & unknown & $\bigTheta(\horizon)$ & FR with the request prior replaced by $\empiricalprior[][\completedrounds]$. \\
    UIR-UP & unknown & $\bigTheta(\log \log \horizon)$, three phases & Algorithm~\ref{alg:unknown-infrequent-resolving}. \\
    FRT-UP & unknown & $\bigTheta(\horizon)$ & FR-UP with thresholding. \\
    UIRT-UP & unknown & $\bigTheta(\log \log \horizon)$, three phases & UIR-UP with thresholding. \\
    \bottomrule
  \end{tabularx}
\end{table}
\Cref{tab:numerical-algorithms} summarizes the algorithms examined in this section.
We provide further details below.

\paragraph{Infrequent re-solving schedules.}
All infrequent re-solving algorithms use $\schedulepower = 0.7$; for the unknown-prior algorithms, this parameter applies in phase 3.

\paragraph{Empirical prior estimation.}
All algorithms in the unknown-prior setting use the Laplace-smoothed estimate after $\completedrounds$ requests have been observed:
\[
  \empiricalprior[][\completedrounds](\req) = \frac{\requestcount[\req](\completedrounds) + 1}{\completedrounds + |\requests|}.
\]

\paragraph{Sparse-schedule implementation.}
For UIR-UP and UIRT-UP, the sparse learning phase uses the explicit rounded construction in the proof of \cref{lem:sparse-balanced-schedule}.
Specifically, when $\scheduleendpoint / \protectlength \ge \exp(1)$, the implementation sets $\midSchedule = \lfloor\log_2\log(\scheduleendpoint / \protectlength)\rfloor + 2$, generates the real-valued times in \cref{eq:sparse-real-construction}, rounds them up, and removes repeated times.
The resulting schedule
$\protectlength = \period[0] < \cdots < \period[\midSchedule] = \scheduleendpoint$
satisfies
\[
  \sum_{\epoch = 0}^{\midSchedule - 1}
  \frac{\period[\epoch + 1] - \period[\epoch]}{\sqrt{\period[\epoch]}}
  \le 9\sqrt{\scheduleendpoint},
  \qquad
  \midSchedule = \bigO(\log\log\horizon).
\]
These are the sparse-phase scheduling properties used in the proof of \cref{thm:unknown-universal}; hence, this construction preserves the asymptotic guarantees for unthresholded UIR-UP\@.

\paragraph{Thresholding convention.}
At every re-solving step with $\subhorizon > 1$, FRT and FRT-UP set
$\empiricalthreshold = \subhorizon[][3(1 - 2\schedulepower) / 8] = \subhorizon[][-0.15]$.
For IRT and UIRT-UP, let
$\epochhorizon[\epoch]$ be the remaining horizon at the start of epoch $\epoch$, that is, at the $\epoch$-th re-solving step overall.
The threshold is fixed at
\[
  \empiricalthreshold[\epoch] = \epochhorizon[\epoch][3(1 - 2\schedulepower) / 8] = \epochhorizon[\epoch][-0.15]
\]
for all rounds until the next re-solving step; the policy is thresholded once at the start of the epoch and then frozen. Thresholding is disabled only for the last solve, when the remaining horizon is one.
For each state--request pair, if an action has probability greater than $1 - \empiricalthreshold$, the procedure deterministically selects an action with the largest probability;
otherwise, it sets probabilities below $\empiricalthreshold$ to zero and renormalizes the remaining probabilities.
If no mass remains, it again selects an action with the largest original probability.
Ties are broken in favor of the first action in the fixed action order. The suffix ``T'' in FRT, IRT, FRT-UP, and UIRT-UP denotes this thresholding operation.

\subsection{Environment}
\label{subsec:numerical-design}

\paragraph{Instance.}
In our experimental instance, there are three states $\states = \{\hardstateA, \hardstateB, \hardstateC\}$ and three request types with respective probabilities
\[
  \prior = \left(\frac12, \frac13, \frac16\right).
\]
There are two resources and three actions $\{0, \firstnonzeroact, \secondnonzeroact\}$, where $0$ is the null action.
The initial state is $\hardstateA$, and the per-period resource vector is
$\nominalratio = (\nominalratio[1], 1)$. Let
\[
  \requestbonus = \left(\frac12, \frac13, \frac14\right),
  \qquad
  \requestdiscount = \left(\frac1{24}, \frac1{12}, \frac18\right).
\]
For request $\req$, the deterministic next state, reward, and two-dimensional resource-consumption vector are given in \cref{tab:numerical-environment}.

\begin{table}[t]
  \centering
  \small
  \caption{Primitive data of the experimental environment. Each entry in the
  last column is $(\text{next state};\text{reward};\text{resource-consumption vector})$.}
  \label{tab:numerical-environment}
  \begin{tabularx}{0.5\textwidth}{lll}
    \toprule
    State & Action & Outcome for request $\req$ \\
    \midrule
    $\hardstateA$ & $0$ & $(\hardstateB;0;(0, 0))$ \\
    $\hardstateA$ & $\firstnonzeroact$ & $(\hardstateB;1;(3 / 2, 1 / 2))$ \\
    $\hardstateA$ & $\secondnonzeroact$ & $(\hardstateB;1 + \requestbonus[\req];(5 / 2, 3 / 2))$ \\
    \addlinespace
    $\hardstateB$ & $0$ & $(\hardstateA;0;(0, 0))$ \\
    $\hardstateB$ & $\firstnonzeroact$ & $(\hardstateA;1;(1 / 2, 3 / 2))$ \\
    $\hardstateB$ & $\secondnonzeroact$ & $(\hardstateC;1 - \requestdiscount[\req];(1 / 2, 3 / 2))$ \\
    \addlinespace
    $\hardstateC$ & $0$ & $(\hardstateC;0;(0, 0))$ \\
    $\hardstateC$ & $\firstnonzeroact$ & $(\hardstateA;3 / 4;(1 / 2, 1 / 2))$ \\
    $\hardstateC$ & $\secondnonzeroact$ & $(\hardstateC;0;(1, 1))$ \\
    \bottomrule
  \end{tabularx}
\end{table}

The only primitive varied across experimental instances is $\nominalratio[1]$.
In a neighborhood of $\nominalratio[1] = 1$, the optimal homogeneous core has state masses and value
\begin{equation*}
  (\stateoccupation[\hardstateA], \stateoccupation[\hardstateB], \stateoccupation[\hardstateC];\stationaryfluid(\nominalratio)) =
  \begin{cases}
    \bigl(\nominalratio[1] - \tfrac12, \nominalratio[1] - \tfrac12,
    2(1 - \nominalratio[1]);\,1 - \tfrac7{12}(1 - \nominalratio[1])\bigr),
    & \nominalratio[1] < 1, \\[2mm]
    \bigl(\tfrac12, \tfrac12, 0;\,1\bigr),
    & \nominalratio[1] \ge 1.
  \end{cases}
\end{equation*}
At $\nominalratio[1] = 1$, the optimizer is unique and degenerate, and the optimal target state class changes under an arbitrarily small downward perturbation.
Nearby values of $\nominalratio[1]$ above one also have unique degenerate optima with the same two-state optimal target state class.
At $\nominalratio[1] = 0.995$, the optimizer is nondegenerate, its optimal target state class is $\{\hardstateA, \hardstateB, \hardstateC\}$, and its optimal induced kernel is irreducible.
For the unknown-prior infrequent policies, the supplied optimal target state class is $\{\hardstateA, \hardstateB, \hardstateC\}$ when $\nominalratio[1] < 1$ and $\{\hardstateA, \hardstateB\}$ when $\nominalratio[1] \geq 1$.
In all cases, the entrance policy is empty.

\paragraph{Experimental settings.}
For every trial, we compute the empirical regret against the transient fluid LP benchmark,
\[
  \transientfluid[\horizon](\horizon\nominalratio, \state[][1]) - \sum_{\period = 1}^{\stoptime}\rewardproc[][\period],
\]
where $\stoptime$ is the stopping time in \eqref{eq:stopping}.
Reported bands are two-sided $95\%$ Student's $t$ confidence intervals for the mean regret.
The infrequent methods IR, IRT, UIR-UP, and UIRT-UP use $1{,}000$ independent trials per configuration, whereas the more expensive round-by-round methods FR, FRT,
FR-UP, and FRT-UP use $200$.

We conduct horizon sweeps over
\[
  \horizon \in \{500, 1000, 2000, 4000, 8000, 16000, 32000\}
\]
at both $\nominalratio[1] = 0.995$ and $\nominalratio[1] = 1$.

The capacity-ratio sweep fixes $\horizon = 2000$ and uses
\[
  \nominalratio[1] \in \{0.995, 0.997, 0.998, 0.999, 1, 1.001, 1.002, 1.003, 1.005\}.
\]

\subsection{Known request prior}
\label{subsec:numerical-known-prior}

\begin{figure}[t]
  \centering
  \begin{subfigure}[t]{0.49\textwidth}
    \centering
    \includegraphics[width=\linewidth]{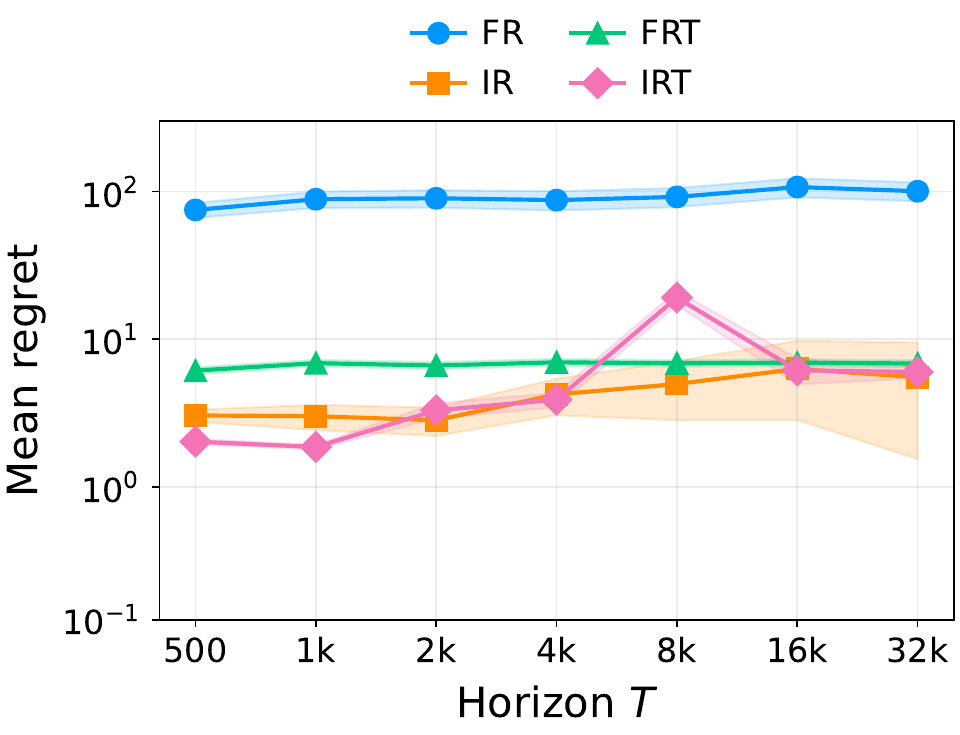}
    \caption{Unique nondegenerate optimum, $\nominalratio[1] = 0.995$.}
    \label{fig:numerical-nondegenerate-known}
  \end{subfigure}\hfill
  \begin{subfigure}[t]{0.49\textwidth}
    \centering
    \includegraphics[width=\linewidth]{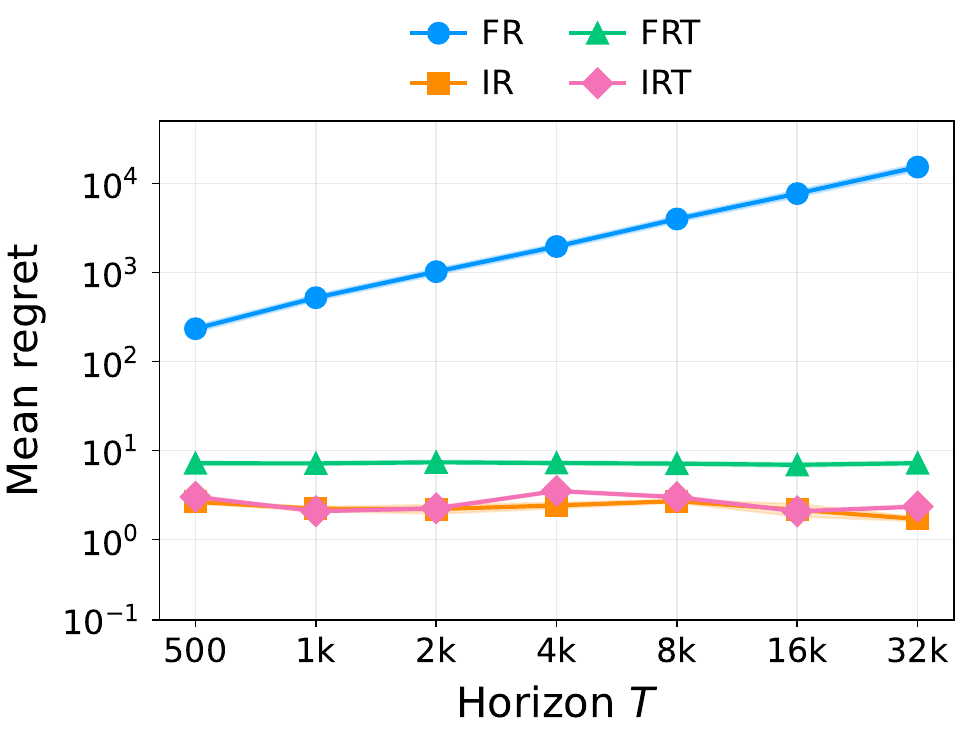}
    \caption{Unique degenerate optimum, $\nominalratio[1] = 1$.}
    \label{fig:numerical-degenerate-known}
  \end{subfigure}
  \caption{Mean regret against the transient fluid LP with a known request prior.}
  \label{fig:numerical-known-prior}
\end{figure}

Figure~\ref{fig:numerical-known-prior} compares the horizon sweeps at two nearby capacity ratios: $\nominalratio[1] = 0.995$, where the unique optimum is nondegenerate, and $\nominalratio[1] = 1$, where it is degenerate.
At the nondegenerate point, the mean regret of FR ranges from
$75.185$ to $107.435$ over the tested horizons, while that of FRT remains between $6.136$ and $6.982$ over the full sweep.
The behavior of unthresholded FR is consistent with Theorem~\ref{thm:constant-regret}, while thresholding further enhances performance, as reported by \citet{BW20} in the state-free setting.
At $\horizon = 32{,}000$, IR has mean regret $5.535$ with a $95\%$ confidence interval of $[1.542, 9.527]$, and IRT has mean regret $6.009$ with a $95\%$ confidence interval of $[5.445, 6.573]$.
In this case, the infrequent re-solving algorithms have a small advantage over the frequent re-solving algorithms.

This gap is much larger at the unique degenerate point, where the mean regret of FR increases from
$233.852$ at $\horizon = 500$ to $15{,}285.072$ at $\horizon = 32{,}000$.
Its ratio of mean regret to horizon stays between approximately $0.468$ and $0.521$ over the sweep, illustrating the linear-loss phenomenon in
Proposition~\ref{prop:unique-degenerate-gmfr-linear}.
Thresholding removes most of this loss: the mean regret of FRT remains between
$6.920$ and $7.391$.
Nevertheless, infrequent re-solving policies still perform better:
the mean regret of IR remains between $1.704$ and $2.706$. However, introducing thresholding appears to weaken IR: the mean regret of IRT ranges from $2.079$ to $3.518$.
These results suggest that re-solving too frequently may not be optimal in online resource allocation with a Markov state, contrary to the findings for the state-free problem reported by \citet{LWZ25}.

\subsection{Unknown request prior}
\label{subsec:numerical-unknown-prior}

\begin{figure}[t]
  \centering
  \begin{subfigure}[t]{0.49\textwidth}
    \centering
    \includegraphics[width=\linewidth]{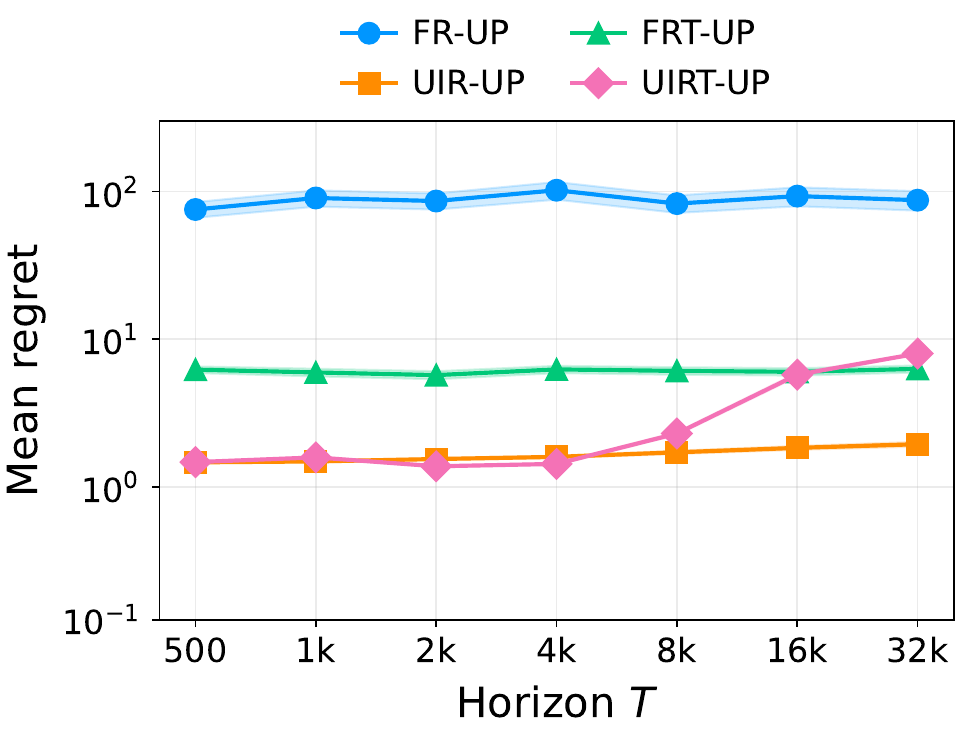}
    \caption{Unique nondegenerate optimum, $\nominalratio[1] = 0.995$.}
    \label{fig:numerical-nondegenerate-unknown}
  \end{subfigure}\hfill
  \begin{subfigure}[t]{0.49\textwidth}
    \centering
    \includegraphics[width=\linewidth]{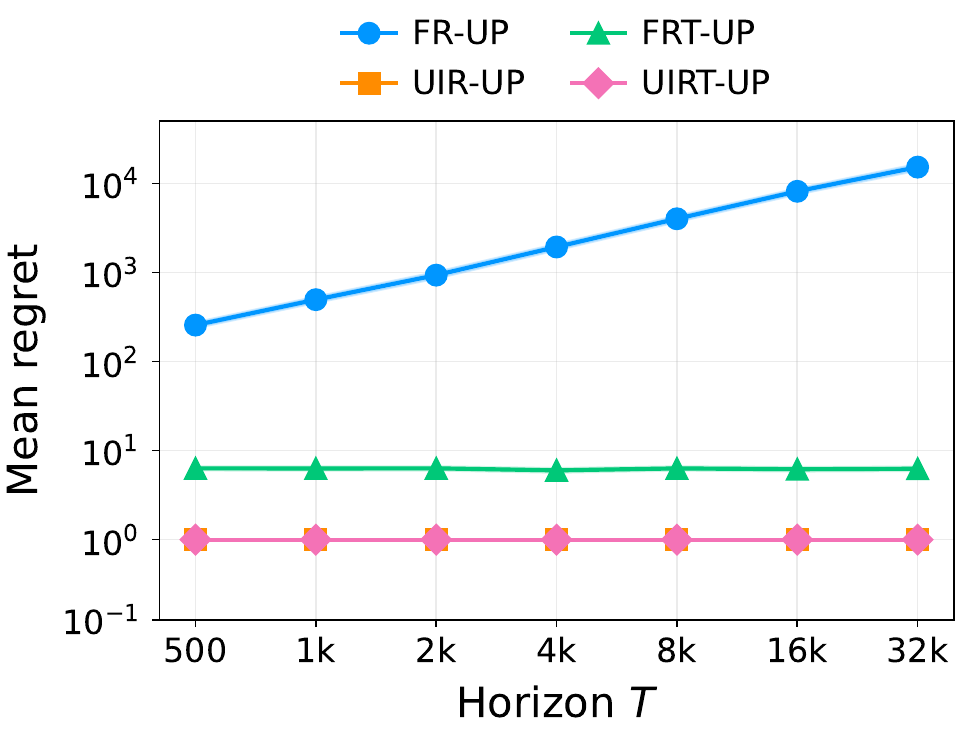}
    \caption{Unique degenerate optimum, $\nominalratio[1] = 1$.}
    \label{fig:numerical-degenerate-unknown}
  \end{subfigure}
  \caption{Mean regret against the transient fluid LP with an unknown request prior.}
  \label{fig:numerical-unknown-prior}
\end{figure}

\begin{table}[t]
  \centering
  \small
  \caption{Mean regret against the transient fluid LP at $\horizon = 32{,}000$.
  Parentheses give the number of trials.}
  \label{tab:numerical-endpoints}
  \begin{tabular}{@{}l rr rr@{}}
    \toprule
    & \multicolumn{2}{c}{Known prior} & \multicolumn{2}{c}{Unknown prior} \\
    \cmidrule(lr){2-3}\cmidrule(l){4-5}
    $\nominalratio[1]$ & FR (200) & IR (1,000) & FR-UP (200) & UIR-UP (1,000) \\
    \midrule
    $0.995$ & $100.725$ & $5.535$ & $87.568$ & $1.947$ \\
    $1$ & $15{,}285.072$ & $1.704$ & $15{,}266.652$ & $1.000$ \\
    \addlinespace
    & FRT (200) & IRT (1,000) & FRT-UP (200) & UIRT-UP (1,000) \\
    \midrule
    $0.995$ & $6.854$ & $6.009$ & $6.303$ & $8.001$ \\
    $1$ & $7.230$ & $2.351$ & $6.250$ & $1.000$ \\
    \bottomrule
  \end{tabular}
\end{table}

Figure~\ref{fig:numerical-unknown-prior} gives the corresponding horizon sweeps when the policies estimate the request prior.
At $\nominalratio[1] = 0.995$, the mean regret of FR-UP ranges from $75.704$ to $102.245$, while that of FRT-UP remains between $5.720$ and $6.303$ over the full sweep.
At $\horizon = 32{,}000$, UIR-UP has mean regret $1.947$ and UIRT-UP has mean regret $8.001$.
The regrets of FRT-UP, UIR-UP, and UIRT-UP remain far below that of FR-UP throughout the sweep.

At $\nominalratio[1] = 1$, FR-UP exhibits the same linear-loss pattern as FR, reaching a mean regret of $15{,}266.652$ at $\horizon = 32{,}000$ with a $95\%$ confidence interval of $[13{,}919.190, 16{,}614.114]$.
The mean regret of FRT-UP remains between $5.999$ and $6.320$ over the horizon sweep.
UIR-UP and UIRT-UP have zero empirical variance at each recorded horizon, suggesting that they consistently attain the optimal reward in this instance.

These observations match those in the known-prior case, implying that prior estimation is not the dominant difficulty.
Table~\ref{tab:numerical-endpoints} compares all eight policies at $\horizon = 32{,}000$.

\subsection{Perturbing the capacity ratio across the degeneracy boundary}
\label{subsec:numerical-capacity}

\begin{figure}[t]
  \centering
  \begin{subfigure}[t]{0.49\textwidth}
    \centering
    \includegraphics[width=\linewidth]{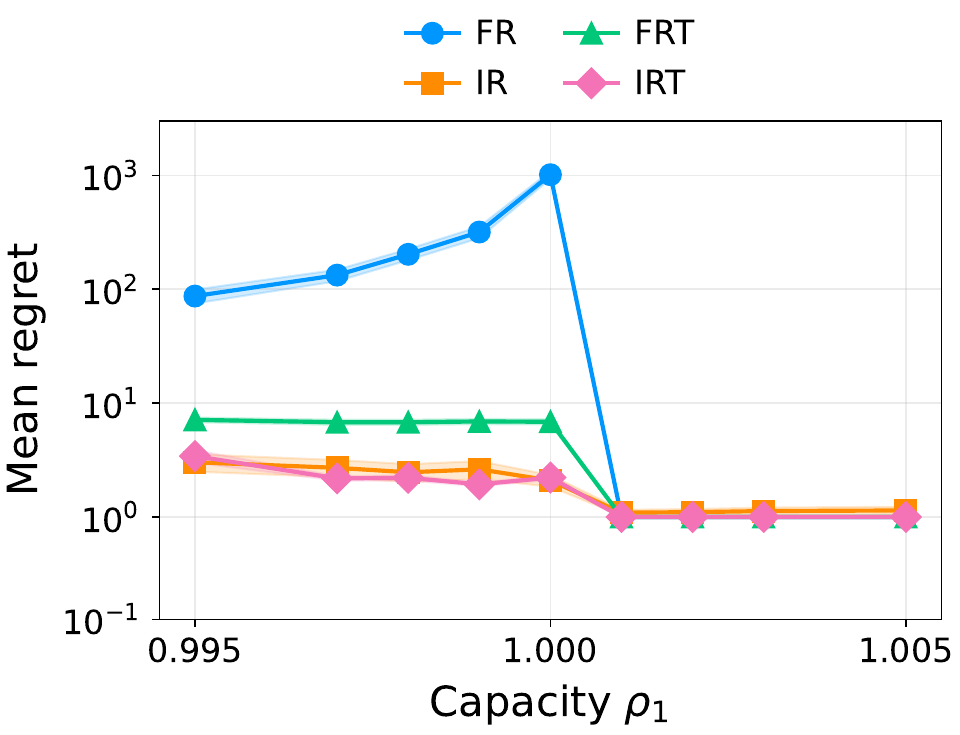}
    \caption{Known request prior.}
    \label{fig:numerical-capacity-known}
  \end{subfigure}\hfill
  \begin{subfigure}[t]{0.49\textwidth}
    \centering
    \includegraphics[width=\linewidth]{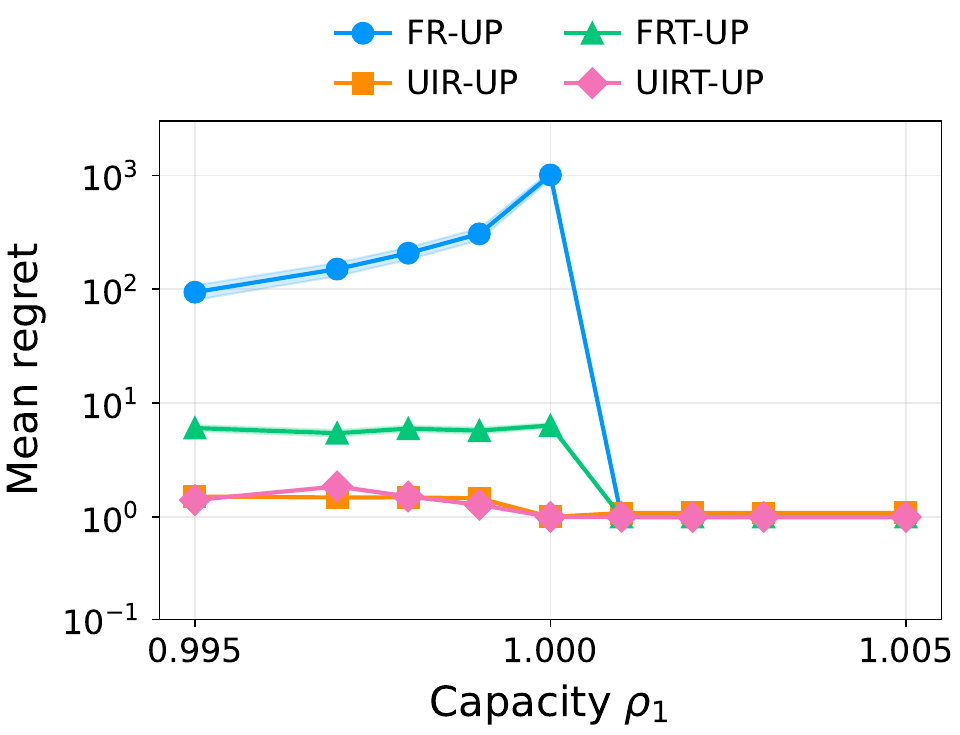}
    \caption{Unknown request prior.}
    \label{fig:numerical-capacity-unknown}
  \end{subfigure}
  \caption{Mean regret against the transient fluid LP, with $95\%$ confidence
  intervals, as a function of the first resource's capacity ratio $\nominalratio[1]$ at $\horizon = 2000$.
  All other model primitives are unchanged.}
  \label{fig:numerical-capacity}
\end{figure}

\begin{table}[t]
  \centering
  \small
  \caption{Mean regret against the transient fluid LP near the capacity-ratio
  boundary at $\horizon = 2000$.}
  \label{tab:numerical-boundary}
  \begin{tabular}{@{}c rrrr rrrr@{}}
    \toprule
    & \multicolumn{4}{c}{Known prior} & \multicolumn{4}{c}{Unknown prior} \\
    \cmidrule(lr){2-5}\cmidrule(l){6-9}
    $\nominalratio[1]$ & FR & IR & FRT & IRT & FR-UP & UIR-UP & FRT-UP & UIRT-UP \\
    \midrule
    $0.998$ & $203.055$ & $2.467$ & $6.794$ & $2.209$ & $207.775$ & $1.485$ & $5.971$ & $1.512$ \\
    $0.999$ & $318.835$ & $2.623$ & $6.901$ & $1.934$ & $307.075$ & $1.457$ & $5.734$ & $1.278$ \\
    $1.000$ & $1{,}016.932$ & $2.090$ & $6.860$ & $2.219$ & $1{,}011.322$ & $1.000$ & $6.360$ & $1.000$ \\
    $1.001$ & $1.000$ & $1.090$ & $1.000$ & $1.000$ & $1.000$ & $1.081$ & $1.000$ & $1.000$ \\
    $1.002$ & $1.000$ & $1.102$ & $1.000$ & $1.000$ & $1.000$ & $1.087$ & $1.000$ & $1.000$ \\
    \bottomrule
  \end{tabular}
\end{table}

\Cref{fig:numerical-capacity} reports the capacity-ratio sweep, which holds $\horizon = 2000$ fixed and varies $\nominalratio[1]$
across the boundary at $1$; the left panel shows the known-prior policies, and the right panel shows the unknown-prior policies.
This provides a direct numerical illustration of support instability under perturbations near the unique degenerate point.
For known-prior FR, mean regret rises from $87.255$ at $\nominalratio[1] = 0.995$ to $318.835$ at $0.999$, jumps to $1{,}016.932$ at $1$, and drops to $1.000$ at $1.001$.
FR-UP follows the same pattern: $307.075$ at $0.999$, $1{,}011.322$ at $1$, and $1.000$ at $1.001$.
A change of only $0.001$ in $\nominalratio[1]$
on either side of the boundary produces a large change in the observed performance
of unthresholded round-by-round re-solving.

Thresholding attenuates the boundary spike: at $\nominalratio[1] = 1$, FRT and FRT-UP have mean regrets of $6.860$ and $6.360$.
Their means stay between $5.441$ and $7.133$ across all tested capacity ratios at or below one and equal $1.000$ above one.
For policies using infrequent schedules, regret remains low when $\nominalratio[1] \in [0.995, 1.005]$.

\subsection{Lowering the re-solving frequency}
\label{subsec:numerical-constant-resolving-frequency}

\begin{figure}[t]
  \centering
  \begin{subfigure}[t]{0.49\textwidth}
    \centering
    \includegraphics[width=\linewidth]{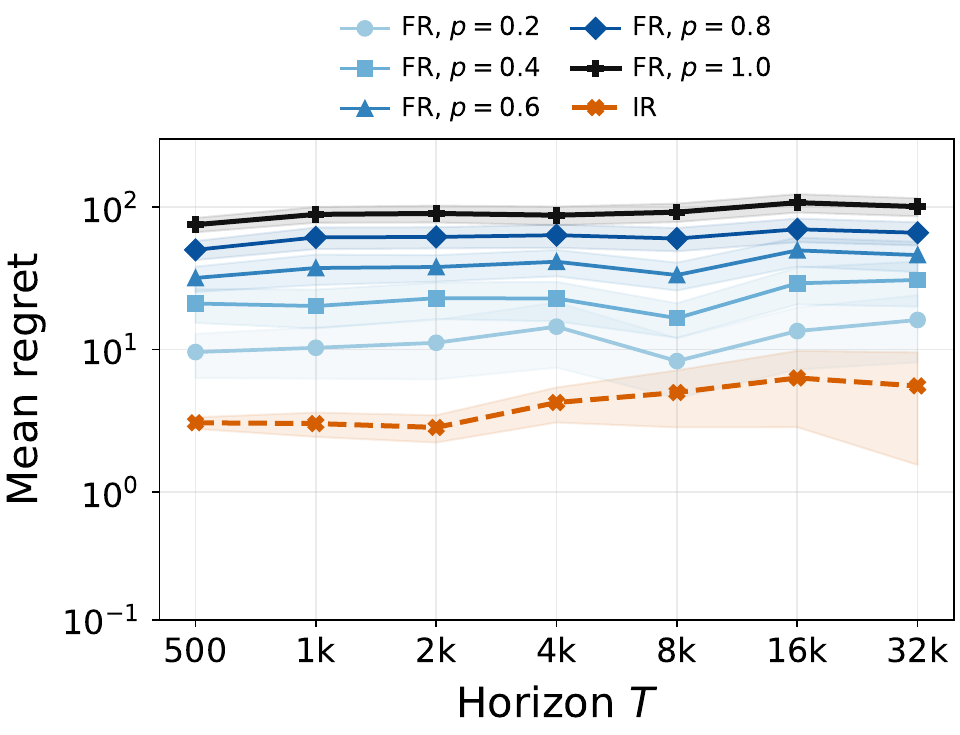}
    \caption{Known request prior, unique nondegenerate optimum, $\nominalratio[1] = 0.995$.}
    \label{fig:numerical-frequency-known-nondegenerate}
  \end{subfigure}\hfill
  \begin{subfigure}[t]{0.49\textwidth}
    \centering
    \includegraphics[width=\linewidth]{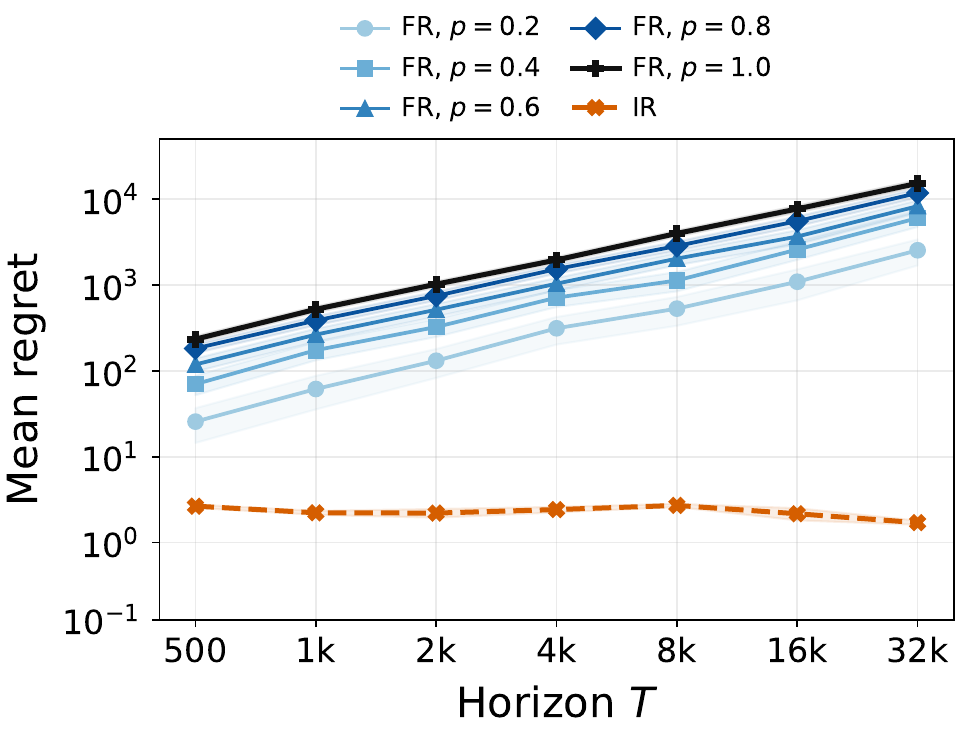}
    \caption{Known request prior, unique degenerate optimum, $\nominalratio[1] = 1$.}
    \label{fig:numerical-frequency-known-degenerate}
  \end{subfigure}
  \\
  \vspace{0.3em}

  \begin{subfigure}[t]{0.49\textwidth}
    \centering
    \includegraphics[width=\linewidth]{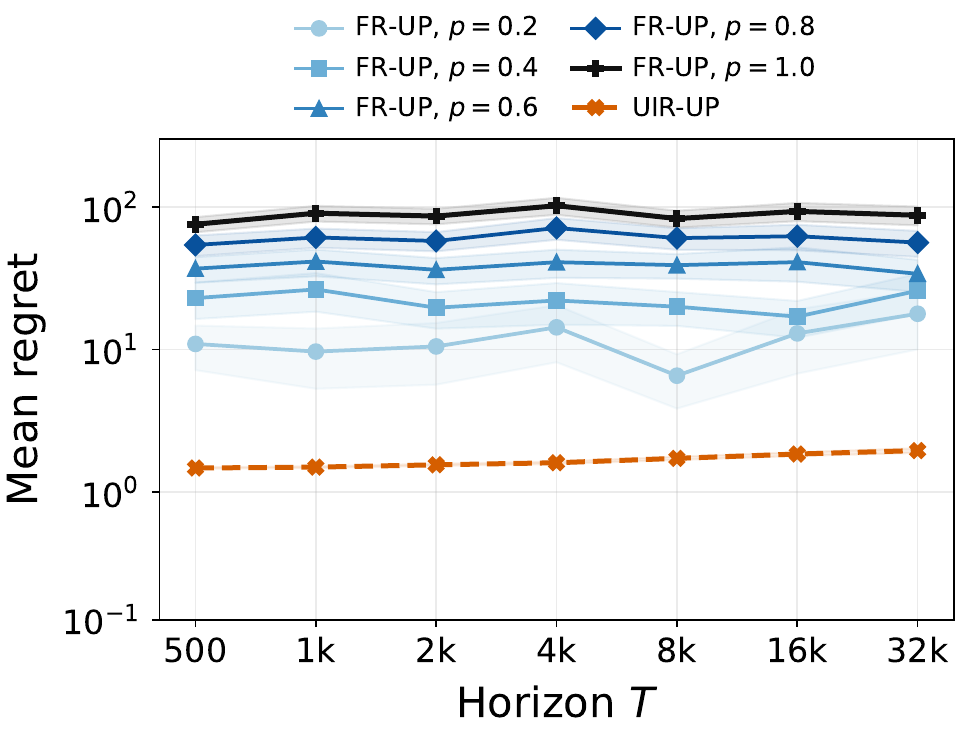}
    \caption{Unknown request prior, unique nondegenerate optimum, $\nominalratio[1] = 0.995$.}
    \label{fig:numerical-frequency-unknown-nondegenerate}
  \end{subfigure}\hfill
  \begin{subfigure}[t]{0.49\textwidth}
    \centering
    \includegraphics[width=\linewidth]{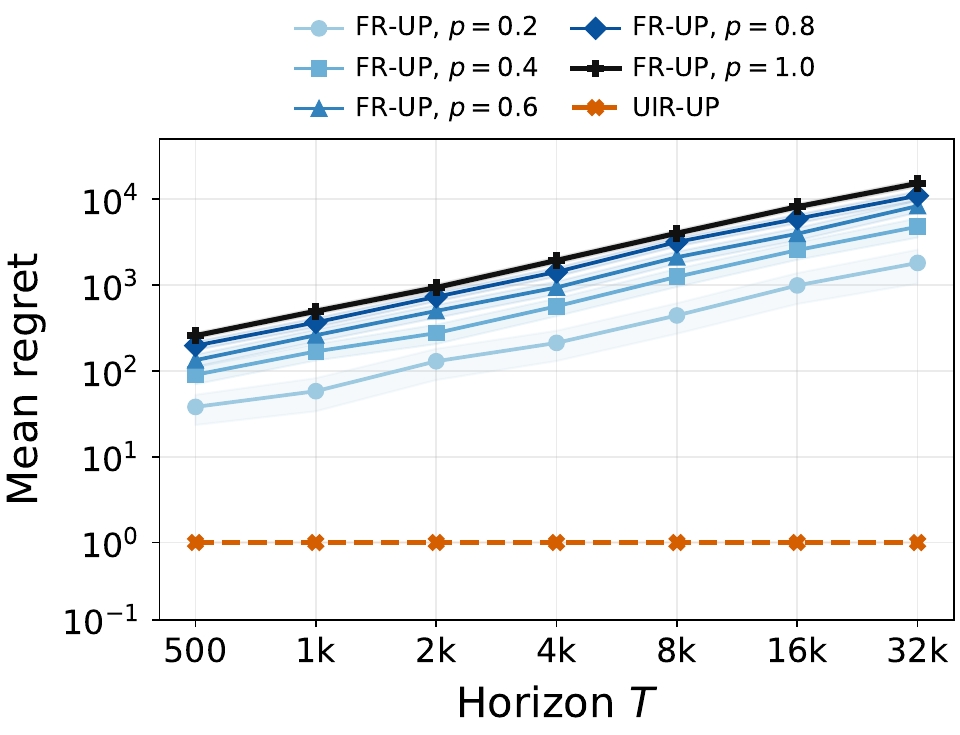}
    \caption{Unknown request prior, unique degenerate optimum, $\nominalratio[1] = 1$.}
    \label{fig:numerical-frequency-unknown-degenerate}
  \end{subfigure}
  \caption{Mean regret for constant-frequency re-solving.}
  \label{fig:numerical-frequency}
\end{figure}

To isolate the effect of re-solving frequency, we introduce FR-$p$ and FR-UP-$p$.
Each policy solves the LP in the first round and, before every later round, independently re-solves with probability $p$; otherwise, it retains the most recently computed stationary policy.
Thus $p = 1$ recovers FR or FR-UP pathwise.
We test $p \in \{0.2, 0.4, 0.6, 0.8\}$ using $200$ trials for each configuration and compare the resulting performance with the existing results for $p = 1$ and for the corresponding infrequent policy, IR or UIR-UP\@.

Figures~\ref{fig:numerical-frequency-known-nondegenerate} and~\ref{fig:numerical-frequency-known-degenerate} show a consistent ordering of the sample mean regrets by re-solving probability.
At the nondegenerate point, the mean regret at $\horizon = 32{,}000$ decreases from $100.725$ for $p = 1$ to $16.098$ for $p = 0.2$, while IR has mean regret $5.535$.
This ordering of the sample means holds at every tested horizon.
At the degenerate point, reducing $p$ lowers the loss substantially but does not prevent it from growing on the scale of the horizon. At $\horizon = 32{,}000$, the mean
regrets for $p = 0.2, 0.4, 0.6, 0.8, 1$ are, respectively, $2{,}540.116$, $6{,}021.859$, $8{,}299.999$, $11{,}809.823$, and $15{,}285.072$, whereas IR has mean regret $1.704$.

Figures~\ref{fig:numerical-frequency-unknown-nondegenerate} and~\ref{fig:numerical-frequency-unknown-degenerate} give the same comparison when the request prior is estimated.
The results follow the same pattern.
At the nondegenerate point, the sample mean regrets are ordered by $p$ at every tested horizon.
At $\horizon = 32{,}000$, their values are $17.806$, $25.826$, $33.965$, $56.300$, and $87.568$ for $p = 0.2, 0.4, 0.6, 0.8, 1$, respectively, compared with $1.947$ for UIR-UP\@.
At the degenerate point, the mean regrets at $\horizon = 32{,}000$ are $1{,}809.039$, $4{,}777.806$, $8{,}323.301$, $10{,}979.652$, and $15{,}266.652$ for the same five values of $p$, while UIR-UP has mean regret
$1.000$.

Together, these experiments show that using a lower constant re-solving probability can improve the performance of a re-solving algorithm, but this adjustment does not attain the performance order of IR and UIR-UP\@.
This observation supports the importance of a carefully designed LP-solving schedule for infrequent re-solving.

%% file: sections/08_conclusion.tex
\section{Conclusion}

In this work, we introduce a practically motivated extension of the well-studied online resource allocation problem in which a controlled state evolves according to action-dependent Markov transitions.
For this problem, we first develop a stationary LP that extends the fluid benchmark widely used in online resource allocation.
In the known-request-prior setting, we consider frequent re-solving policies, which re-solve the LP in every round, and infrequent re-solving policies, which re-solve it in $\bigO(\log \log \horizon)$ rounds.
When the stationary LP has a unique, primal-nondegenerate optimum and the optimal induced kernel is irreducible, both algorithms achieve $\bigO(1)$ regret.
However, without nondegeneracy, the infrequent re-solving policy exhibits greater robustness: it still achieves $\bigO(\sqrt{\horizon})$ regret, while the frequent re-solving algorithm can incur $\bigTheta(\horizon)$ regret.
This result contrasts sharply with results for the original online resource allocation problem showing that frequent LP re-solving is usually more robust.
The main underlying mechanism is that too many LP solves can drive the policy away from the stationary optimum and prevent it from returning.
In the unknown-prior setting, we further develop a three-phase infrequent re-solving schedule that guarantees order-optimal regret under different levels of regularity.
The resulting algorithm also solves the LP $\bigO(\log \log \horizon)$ times in total.
Our numerical studies further reveal the advantage of infrequent over frequent re-solving schemes for this problem.


%% file: appendices/01_model_proofs.tex
\section{Proofs for Section~\ref{sec:model}}
\label{app:model-proofs}

\subsection{Proof of Lemma~\ref{lem:fluid-upper-bound}}
\label{app:proof:lem:fluid-upper-bound}

Fix an arbitrary nonanticipating policy $\policy$ for the $\subhorizon$-round subproblem, and let $\stoptime \le \subhorizon$ be its stopping time. Define
\[
  \nonterminalocc[\state\req\act][\policy] \coloneqq \expect^{\policy}\left[\sum_{\period = 1}^{\stoptime - 1}
  \indicator\{\stateproc[][\period] = \state, \reqproc[][\period] = \req, \actproc[][\period] = \act\}\right],
\]
\[
  \terminalocc[\state\req\act][\policy] \coloneqq \prob[][\policy](\stateproc[][\stoptime] = \state, \reqproc[][\stoptime] = \req, \actproc[][\stoptime] = \act),
  \qquad
  \transientstateocc[\state][\policy] \coloneqq \expect^{\policy}\left[\sum_{\period = 1}^{\stoptime}\indicator\{\stateproc[][\period] = \state\}\right].
\]
We verify the four constraint families separately.

\noindent\textbf{Request balance.}
For each $\period$, the event $\{\period \le \stoptime, \stateproc[][\period] = \state\}$ is determined before $\reqproc[][\period]$ is drawn: it depends only on whether a resource was exhausted in an earlier round and on the current state. Therefore
\[
  \expect^{\policy}[\indicator\{\period \le \stoptime, \stateproc[][\period] = \state, \reqproc[][\period] = \req\}]
  = \prior[\req]\expect^{\policy}[\indicator\{\period \le \stoptime, \stateproc[][\period] = \state\}].
\]
Summing over $\period = 1, \ldots, \subhorizon$ gives
\[
  \sum_{\act}(\nonterminalocc[\state\req\act][\policy] + \terminalocc[\state\req\act][\policy]) = \prior[\req]\transientstateocc[\state][\policy],
\]
which is \textup{(TF1)}.

\noindent\textbf{State flow.}
Every visit to $\state[][\prime]$ is either the initial visit or follows a round that is strictly before the terminal round. Hence
\begin{align*}
  \transientstateocc[\state[][\prime]][\policy] & = \indicator\{\state[][\prime] = \state[][0]\}
  +\expect^{\policy}\left[\sum_{\period = 1}^{\stoptime - 1}\indicator\{\stateproc[][\period + 1] = \state[][\prime]\}\right]\\
  & = \indicator\{\state[][\prime] = \state[][0]\}
  +\sum_{\state, \req, \act}\transition(\state[][\prime] \mid \state, \act)\nonterminalocc[\state\req\act][\policy],
\end{align*}
where the second equality follows by conditioning on $(\stateproc[][\period], \reqproc[][\period], \actproc[][\period])$. This is \textup{(TF2)}.

\noindent\textbf{Horizon.}
By definition,
\[
  \sum_{\state}\transientstateocc[\state][\policy] = \expect^{\policy}[\stoptime] \le \subhorizon,
\]
which is \textup{(TF3)}.

\noindent\textbf{Resources before the terminal action.}
Pathwise,
\[
  \sum_{\period = 1}^{\stoptime - 1}\consumption[\res](\stateproc[][\period], \reqproc[][\period], \actproc[][\period]) = \inventory[\res] - \inventory[\res][\stoptime] \le \inventory[\res].
\]
The last inequality holds because no earlier round has exhausted resource $\res$; when $\stoptime = 1$, the left-hand sum is empty and the same inequality is immediate. Taking expectations gives \textup{(TF4)}.

Finally,
\[
  \sum_{\state, \req, \act}\reward(\state, \req, \act)(\nonterminalocc[\state\req\act][\policy] + \terminalocc[\state\req\act][\policy])
  = \expect^{\policy}\left[\sum_{\period = 1}^{\stoptime}\rewardproc[][\period]\right].
\]
Thus, for every nonanticipating policy, these expected state-request-action and state-visit counts form a feasible transient-LP solution with the same objective value. Maximizing over $\policy$ proves the result.

\subsection{Proof of Lemma~\ref{lem:boundary-comparison}}
\label{app:proof:lem:boundary-comparison}

Consider the dual of the stationary homogeneous core LP:
\begin{equation}
  \boxed{
  \begin{aligned}
    \min_{\resourceprice, \averagegain, \statebias, \statedual}\quad
    &
    \resourceprice[][\top]\capacityratio + \averagegain
    \\[0.3em]
    \text{subject to}\quad
    &
    \statedual[\state\req] \ge
    \reward(\state, \req, \act) - \resourceprice[][\top]\consumption(\state, \req, \act)
    +\sum_{\state[][\prime] \in \states}\transition(\state[][\prime] \mid \state, \act)\statebias[\state[][\prime]],
    & & \state, \req, \act,
    \qquad\text{\rm (SD1)}
    \\
    &
    \statebias[\state] + \averagegain \ge \sum_{\req \in \requests}\prior[\req]\statedual[\state\req],
    & & \state \in \states,
    \qquad\text{\rm (SD2)}
    \\
    &
    \resourceprice \ge 0.
  \end{aligned}
  }
  \label{eq:stationary-dual}
\end{equation}

The null action implies $\averagegain \ge 0$ for every feasible solution of the stationary dual. Indeed, \textup{(SD1)} with $\act = \act[][0]$, together with the fact that the null action has zero reward and zero consumption, gives
\[
  \statedual[\state\req] \ge \sum_{\state[][\prime] \in \states}
  \transition(\state[][\prime] \mid \state, \act[][0])\statebias[\state[][\prime]].
\]
Averaging this inequality over $\req$ and using \textup{(SD2)} yields
\[
  \statebias[\state] + \averagegain
  \ge \sum_{\state[][\prime] \in \states}
  \transition(\state[][\prime] \mid \state, \act[][0])\statebias[\state[][\prime]].
\]
Let $\nullstationarylaw$ be any stationary distribution of the null-action transition matrix; one exists because $\states$ is finite. Multiply the last inequality by $\nullstationarylaw[\state]$ and sum over $\state$. Stationarity cancels the state-bias terms and gives $\averagegain \ge 0$.

The dual of the transient LP \eqref{eq:transient-fluid} is
\begin{equation}
  \boxed{
  \begin{aligned}
    \min_{\resourceprice, \averagegain, \statebias, \statedual}\quad
    &
    \resourceprice[][\top]\inventory + \averagegain \subhorizon + \statebias[\state[][0]]\\
    \text{subject to}\quad
    &
    \statedual[\state\req] \ge \reward(\state, \req, \act),
    & & \state, \req, \act,
    \qquad\text{\rm (TD1)}
    \\
    &
    \statedual[\state\req] \ge \reward(\state, \req, \act) - \resourceprice[][\top]\consumption(\state, \req, \act)
    +\sum_{\state[][\prime]}\transition(\state[][\prime] \mid \state, \act)\statebias[\state[][\prime]],
    & & \state, \req, \act,
    \qquad\text{\rm (TD2)}
    \\
    &
    \statebias[\state] + \averagegain \ge \sum_{\req}\prior[\req]\statedual[\state\req],
    & & \state,
    \qquad\text{\rm (TD3)}
    \\
    &
    \resourceprice \ge 0, \qquad\averagegain \ge 0.
  \end{aligned}
  }
  \label{eq:transient-dual}
\end{equation}

Let $(\resourceprice[][*], \averagegain[][*], \statebias[][*], \statedual[][*])$ be an optimal solution of \cref{eq:stationary-dual} at $\capacityratio = \nominalratio$, normalized by $\statebias[\state[][1]][*] = 0$.
Define
\begin{equation*}
  \boundaryconstant
  \coloneqq
  \max_{\state, \req, \act}\bigl(\reward(\state, \req, \act) - \statedual[\state\req][*]\bigr)_+.
\end{equation*}
Increase every component of $\statedual[][*]$ and $\statebias[][*]$ by $\boundaryconstant$, and leave $\resourceprice[][*], \averagegain[][*]$ unchanged. By definition of $\boundaryconstant$, \textup{(TD1)} holds. Since every row of $\transition(\cdot \mid \state, \act)$ sums to one, adding the same constant to both $\statedual$ and $\statebias$ preserves \textup{(TD2)}. Since $\sum_{\req}\prior[\req] = 1$, it also preserves \textup{(TD3)}. Moreover, $\averagegain[][*] \ge 0$. Hence this shifted solution is feasible for \cref{eq:transient-dual}, and its objective at $\inventory = \horizon\nominalratio, \subhorizon = \horizon, \state[][0] = \state[][1]$ is
\[
  \horizon\bigl((\resourceprice[][*])^\top\nominalratio + \averagegain[][*]\bigr)
  +\statebias[\state[][1]][*] + \boundaryconstant
  = \horizon\stationaryfluid(\nominalratio) + \boundaryconstant.
\]
Weak duality proves the lemma.

%% file: appendices/02_regularity_proofs.tex
\section{Proofs for Section~\ref{sec:regularity}}
\label{app:regularity-proofs}

\subsection{Proof of Proposition~\ref{prop:local-lp-stability}}
\label{app:proof:prop:local-lp-stability}

\begin{proof}
  Consider an LP in the standard maximization form:
  \[
    \max\left\{
    \objectivevector[][\top]\selectedsolution:
    \constraintmatrix\selectedsolution = \rhsvector, \
    \selectedsolution \ge 0
    \right\},
  \]
  and a nonsingular basis $\basis$ of $\constraintmatrix$. Let $\nonbasiccols$ be the columns of $\constraintmatrix$ outside $\basis$, and define the reduced costs by
  \[
    \reducedcost[\nonbasiccols] \coloneqq \objectivevector[\nonbasiccols] - \objectivevector[\basis][\top]\basisinverse\nonbasiccols.
  \]
  We first recall an important result from \cite{bertsimas1997introduction}:
  \begin{proposition}[From \cite{bertsimas1997introduction}]
    \label{prop:reduced-costs}
    The following hold:
    \begin{itemize}
      \item If $\selectedsolution$ is the unique and nondegenerate optimal solution and $\basis$ is its basis, then $\reducedcost[\nonbasiccols] < 0$.
      \item If $\selectedsolution$ is a feasible solution with basis $\basis$, $\selectedsolution[\basis] > 0$, and $\reducedcost[\nonbasiccols] < 0$, then $\selectedsolution$ is the unique and nondegenerate optimal solution.
    \end{itemize}
  \end{proposition}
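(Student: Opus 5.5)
The statement is the standard reduced-cost characterization of a unique nondegenerate vertex optimum. I would prove it directly from the identity that expresses the objective in terms of the nonbasic variables.

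\textbf{The identity.} Fix the basis $\basis$ and any $\comparatorsolution$ with $\constraintmatrix\comparatorsolution = \rhsvector$. Splitting $\constraintmatrix\comparatorsolution = \basis\comparatorsolution[\basis] + \nonbasiccols\comparatorsolution[\nonbasiccols]$ gives $\comparatorsolution[\basis] = \basisinverse(\rhsvector - \nonbasiccols\comparatorsolution[\nonbasiccols])$. Let $\selectedsolution$ be the basic solution of $\basis$, so $\selectedsolution[\nonbasiccols] = 0$. Substituting into the objective,
\[
  \objectivevector[][\top]\comparatorsolution = \objectivevector[][\top]\selectedsolution + \reducedcost[\nonbasiccols][\top]\comparatorsolution[\nonbasiccols].
\]
Both bullets follow from this identity.

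\textbf{Second bullet.} Let $\comparatorsolution$ be any feasible point. Then $\comparatorsolution[\nonbasiccols] \ge 0$, and $\reducedcost[\nonbasiccols] < 0$ componentwise. Hence $\objectivevector[][\top]\comparatorsolution \le \objectivevector[][\top]\selectedsolution$, so $\selectedsolution$ is optimal. Equality holds only if $\comparatorsolution[\nonbasiccols] = 0$. In that case $\comparatorsolution[\basis] = \basisinverse\rhsvector = \selectedsolution[\basis]$, so $\comparatorsolution = \selectedsolution$, which gives uniqueness. Nondegeneracy is the hypothesis $\selectedsolution[\basis] > 0$.

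\textbf{First bullet.} Let $\selectedsolution$ be the unique nondegenerate optimum with basis $\basis$, so $\selectedsolution[\basis] > 0$. For each nonbasic index $\res$, define the edge direction $\unitvector$ by setting its $\res$-th nonbasic coordinate to $1$, its other nonbasic coordinates to $0$, and its basic part to $-\basisinverse\constraintmatrix[\res]$. Then $\constraintmatrix\unitvector = 0$. Since $\selectedsolution[\basis] > 0$, the point $\selectedsolution + \theta\unitvector$ is feasible for all sufficiently small $\theta > 0$. By the identity, its objective equals $\objectivevector[][\top]\selectedsolution + \theta\reducedcost[\res]$.

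\begin{itemize}
  \item If $\reducedcost[\res] > 0$, this point strictly improves on $\selectedsolution$, contradicting optimality.
  \item If $\reducedcost[\res] = 0$, it is a second optimal solution distinct from $\selectedsolution$ (its $\res$-th coordinate is $\theta \neq 0$), contradicting uniqueness.
\end{itemize}
Hence $\reducedcost[\nonbasiccols] < 0$.

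\textbf{Where care is needed.} The only delicate point is the use of nondegeneracy in the first bullet. It guarantees that a strictly positive step along every edge direction stays feasible. Without it, the step could be blocked at $\theta = 0$, and a zero or even positive reduced cost could coexist with a unique optimum. This is why the hypothesis that $\selectedsolution$ is nondegenerate with basis $\basis$ must be used in that exact form.
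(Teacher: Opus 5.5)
Your proof is correct and is the standard argument behind this result: the paper does not prove the proposition itself but cites Bertsimas and Tsitsiklis. There, both bullets follow, as in your write-up, from the identity $d^\top y = d^\top x + \bar d_N^\top y_N$ for every $y$ with $Ay = b$, together with a small feasible step along a basic direction (where nondegeneracy is used), and your closing remark on why nondegeneracy is indispensable in the first bullet is accurate.
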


  Using Proposition~\ref{prop:reduced-costs}, we prove our results as follows.
  First, add a slack $\resourceslack[\res] \ge 0$ to each resource constraint in \textup{(SF4)}, retain every resource row, and delete a fixed set of redundant rows among \textup{(SF1)--(SF3)}.
  Every resource row can be retained because the column of $\resourceslack[\res]$ is nonzero only in that row.
  With
  \[
    \selectedsolution = (\occupation, \stateoccupation, \resourceslack),
  \]
  the resulting full-row-rank standard form is
  \begin{equation}
    \max\left\{
    \objectivevector[][\top]\selectedsolution:
    \constraintmatrix\selectedsolution = \rhsvector(\capacityratio), \
    \selectedsolution \ge 0
    \right\}.
    \label{eq:standard-parametric-lp}
  \end{equation}

  Notice that only the resource entries of the right-hand side depend on $\capacityratio$, and they are affine in $\capacityratio$. In fact, if $\unitvector[\res]$ is the unit vector corresponding to resource row $\res$, then
  \[
    \rhsvector(\capacityratio)
    = \rhsvector(\nominalratio)
    +\sum_{\res \in [\numres]}
    (\capacityratio[\res] - \nominalratio[\res])
    \unitvector[\res].
  \]

  Now let $\basis$ be the basis for the optimal solution $\selectedsolution[][*](\nominalratio)$ under $\capacityratio = \nominalratio$, and let $\nonbasiccols$ be the remaining columns of $\constraintmatrix$. Therefore,
  \[
    \selectedsolution[\basis][*](\nominalratio) = \basisinverse \rhsvector(\nominalratio) > 0, \quad \selectedsolution[\nonbasiccols][*](\nominalratio) = 0.
  \]
  Also, Proposition~\ref{prop:reduced-costs} and the assumed uniqueness and nondegeneracy give $\reducedcost[\nonbasiccols] < 0$.

  Define $\selectedsolution(\capacityratio)$ by
  \[
    \selectedsolution[\basis](\capacityratio) \coloneqq \basisinverse \rhsvector(\capacityratio), \quad \selectedsolution[\nonbasiccols](\capacityratio) \coloneqq 0.
  \]
  Since $\basisinverse \rhsvector(\nominalratio) > 0$, there exist sufficiently small constants $\localradius, \positivitymargin > 0$ such that $\selectedsolution[\basis](\capacityratio) \geq \positivitymargin$ whenever $\capacityratio \in \capacityneighborhood[3\localradius]$. By Proposition~\ref{prop:reduced-costs}, $\selectedsolution(\capacityratio)$ is the unique and nondegenerate optimal solution to \eqref{eq:standard-parametric-lp} and has the same basis $\basis$.

  Further, for $\res \notin \bindingset$, the column of the basic slack
  $\resourceslack[\res]$ is $\unitvector[\resourceslack[\res]]$. Hence
  \[
    \basisinverse\unitvector[\res] = \unitvector[\resourceslack[\res]],
  \]
  and
  \begin{align*}
    \selectedsolution[\basis](\capacityratio)
    & = \basisinverse \rhsvector(\capacityratio) \\
    & = \basisinverse \bigl(\rhsvector(\nominalratio)
    +\sum_{\res \in [\numres]}
    (\capacityratio[\res] - \nominalratio[\res])
    \unitvector[\res]\bigr) \\
    & = \selectedsolution[\basis][*](\nominalratio) + \sum_{\res \in \bindingset}
    (\capacityratio[\res] - \nominalratio[\res]) \basisinverse \unitvector[\res] + \sum_{\res \notin \bindingset}
    (\capacityratio[\res] - \nominalratio[\res]) \unitvector[\resourceslack[\res]].
  \end{align*}
  Therefore, the coordinates of $\selectedsolution[\basis](\capacityratio)$ that do not correspond to resource slacks (i.e., those corresponding to states and decisions) do not depend on $\capacityratio[\res]$ for $\res \notin \bindingset$.
\end{proof}

\subsection{Proof of Proposition~\ref{prop:support-regularity}}
\label{app:proof:prop:support-regularity}

\begin{proof}
  \noindent\textbf{Closure.}
  Fix $\state[][\prime] \notin \recstates$.  Local support stability gives $\stateoccupation[\state[][\prime]](\capacityratio) = 0$.  The state-flow equality is
  \[
    0 = \stateoccupation[\state[][\prime]](\capacityratio) = \sum_{\state, \req, \act}\transition(\state[][\prime] \mid \state, \act)\occupation[\state\req\act](\capacityratio).
  \]
  Every summand is nonnegative.  Therefore, whenever $\occupation[\state\req\act](\capacityratio) > 0$, one must have $\transition(\state[][\prime] \mid \state, \act) = 0$.  Averaging these transition probabilities under $\policy[\capacityratio]$ shows $\inducedkernel[\capacityratio](\state, \state[][\prime]) = 0$ for every $\state \in \recstates$.

  \noindent\textbf{Lipschitz dependence.}
  By Proposition~\ref{prop:local-lp-stability}, the fixed-basis stationary LP solution is affine in $\capacityratio[\bindingset]$.  On $\recstates$, every $\stateoccupation[\state](\capacityratio)$ is at least some positive constant, so division in \eqref{eq:lp-induced-policy} preserves Lipschitz continuity.  \eqref{eq:induced-kernel} then gives the same property for $\inducedkernel[\capacityratio]$.  Uniform boundedness is immediate.

  \noindent\textbf{Irreducibility.}
  Finally, irreducibility within $\recstates$ follows by observing that $\inducedkernel[][*](\state, \state[][\prime]) > 0$ if and only if $\inducedkernel[\capacityratio](\state, \state[][\prime]) > 0$.
\end{proof}

\subsection{Proof of Proposition~\ref{prop:nondegenerate-full-state-support}}
\label{app:proof:prop:nondegenerate-full-state-support}

\begin{proof}
  Consider the fixed standard form \eqref{eq:standard-parametric-lp}, which retains all variables $(\occupation, \stateoccupation, \resourceslack)$ and deletes only redundant equality rows.
  Let $\basis$ be the nominal basis matrix.  It is nonsingular, and Assumption~\ref{ass:nominal-nondegeneracy} ensures that every basic variable is strictly positive.

  Suppose that $\stateoccupation[\state][*] = 0$ for some $\state \in \states$.  By \textup{(SF1)} in \eqref{eq:stationary-fluid},
  \[
    \sum_{\act \in \actions}\occupation[\state\req\act][*] = \prior[\req]\stateoccupation[\state][*] = 0
    \qquad\text{for every }\req \in \requests.
  \]
  Nonnegativity implies $\occupation[\state\req\act][*] = 0$ for every $\req \in \requests$ and $\act \in \actions$.  Thus $\stateoccupation[\state]$ and all coordinates $\occupation[\state\req\act]$ indexed by this state are nonbasic.

  Fix any $\req \in \requests$ and consider the coefficient row of the equality
  \[
    \sum_{\act \in \actions}\occupation[\state\req\act] - \prior[\req]\stateoccupation[\state] = 0.
  \]
  This row is nonzero, but all its nonzero coefficients belong to nonbasic columns, so its restriction to the basis columns is zero.
  Thus the basis matrix contains an all-zero row, contradicting its nonsingularity.
  Hence $\stateoccupation[\state][*] > 0$ for every $\state \in \states$, and \eqref{eq:nominal-supports} gives $\recstates = \states$.
\end{proof}

%% file: appendices/03_algorithm_proofs.tex

\subsection{Analytical setup}
\label{app:algorithm-proofs}

This appendix supplies the stopped comparisons and supporting analysis for the algorithms in Section~\ref{sec:algorithms}.

To start with, we mention that the null-completion convention is never invoked along a regular path in the proof of Theorem~\ref{thm:constant-regret}.
Nevertheless, it is essential to the unique-degenerate lower-bound construction in Proposition~\ref{prop:unique-degenerate-gmfr-linear} for frequent re-solving.

For the upper-bound proof, choose an integer cutoff $\cutrest$ satisfying
\begin{equation}
  \cutrest \coloneqq \left\lceil 1 +
  \max\left\{
  \frac{\maxconsumption}{\min_{\res}(\nominalratio[\res] - \localradius)},
  \frac{\maxconsumption + \max_{\res}\nominalratio[\res]}{\localradius} + 1
  \right\} \right\rceil.
  \label{eq:cutoff}
\end{equation}

\paragraph{Analytical-check convention.}
Throughout the appendices, a \emph{check} means a history-measurable test that
can terminate a proof-only comparison.  We use five distinct types.  
\begin{enumerate}
  \item The
\emph{locality check} is applied at a resolving time, before an action, and
tests membership of the current resolving parameter in the neighborhood
specified at that use. 
  \item The \emph{target-state check}, also applied at a
resolving time before an action, tests whether the current state belongs to
$\recstates$.  
  \item The \emph{resource near-depletion check}
is applied before each contemplated action and fires when
$\min_{\res\in[\numres]}\inventory[\res][\period]\le\maxconsumption$.
  \item The
\emph{epoch-movement check} is applied after a reached epoch endpoint and
before the checks for the next epoch; it compares the old-to-new capacity
movement with the buffer specified there. 
  \item Finally, the \emph{prior-error check}, used only with an unknown request prior, is
applied at the start of an empirical epoch before an action and compares the
empirical-prior error with the tolerance specified there. 
\end{enumerate} 
Each use below states the relevant
neighborhood, tolerance, or buffer explicitly.  

On a successful entrance, call the entrance \emph{regular} when
$\entrancecapacityratio\in\capacityneighborhood[\localradius]$ and
\emph{irregular} otherwise.  Define $\gfrequentalg$ by coupling it to FR until
the first post-entry round at which the pre-action locality check
$\capacityratio[][\period]\in\capacityneighborhood[\localradius]$ fails or the
deterministic cutoff $\subhorizon[][\period]\le\cutrest$ is reached, and using
the null action thereafter; it also truncates immediately after an irregular
entrance.
The comparison then satisfies the pathwise inequality
\begin{equation}
  \totalreward[\horizon][\frequentalg] \ge \totalreward[\horizon][\gfrequentalg].
  \label{eq:fr-dominates-guarded-fr}
\end{equation}
Thus, every upper bound proved for the guarded comparison transfers directly to FR\@.

Before its analytical truncation, \cref{eq:cutoff} implies $\inventory[\res][\period] > \maxconsumption$ for every resource. 
Upon successful entry, $\stateproc[][\entrancelast] \in \recstates$, and Proposition~\ref{prop:support-regularity} ensures that the optimal target state class $\recstates$ is closed under every locally induced kernel. 
Hence the denominator in \cref{eq:lp-induced-policy} is uniformly positive along the guarded comparison.
The second term in \cref{eq:cutoff} also provides a one-step buffer: if $\capacityratio[][\period] \in \capacityneighborhood[\localradius]$ and $\subhorizon[][\period] > \cutrest$, then $\capacityratio[][\period + 1] \in \capacityneighborhood[2\localradius]$.

Similarly, define $\ginfrequentalg$ by coupling it to IR and switching permanently to the null action after an irregular entrance. After a regular entrance, the switch occurs at the first resolving round at which the pre-action locality check
$\capacityratio[][\period]\in\capacityneighborhood[\localradius]$ fails, the
first round at which the pre-action resource near-depletion check
$\min_{\res \in [\numres]}\inventory[\res][\period]\le\maxconsumption$ fires,
or the first round at which the deterministic cutoff
$\subhorizon[][\period]\le\cutrest$ is reached.
The comparison also pathwise satisfies
\begin{equation}
  \totalreward[\horizon][\infrequentalg] \ge \totalreward[\horizon][\ginfrequentalg].
  \label{eq:ir-dominates-guarded-ir}
\end{equation}

%% file: appendices/04_common_ingredients_proofs.tex

\subsection{Common analytical ingredients and proofs}
\label{app:common-proofs}
\label{sec:common-analysis}

\subsubsection{The dual Bellman potential}
\label{app:proof:lem:potential-identity}
\label{app:proof:lem:entrance-bookkeeping}

Choose an arbitrary optimal dual solution of $\stationaryfluid(\nominalratio)$,
\begin{equation*}
  (\resourceprice[][*], \averagegain[][*], \statebias[][*], \statedual[][*]),
\end{equation*}
and use the dual shift invariance to normalize $\statebias[\state[][1]][*] = 0$.
Complementary slackness gives
\begin{align}
  \resourceprice[\res][*] & = 0,
  & & \res \notin \bindingset,
  \notag\\
  \statedual[\state\req][*] & = \reward(\state, \req, \act) - (\resourceprice[][*])^\top \consumption(\state, \req, \act)
  +\sum_{\state[][\prime]}\transition(\state[][\prime] \mid \state, \act)\statebias[\state[][\prime]][*],
  & & (\state, \req, \act) \in \actsupport,
  \label{eq:nominal-cs-action}\\
  \statebias[\state][*] + \averagegain[][*] & = \sum_{\req}\prior[\req]\statedual[\state\req][*],
  & & \state \in \recstates.
  \label{eq:nominal-cs-state}
\end{align}

For every action with $\policy[\capacityratio[][\prime]](\act \mid \state, \req) > 0$, one has $\occupation[\state\req\act](\capacityratio[][\prime]) > 0$. Proposition~\ref{prop:local-lp-stability} therefore implies $(\state, \req, \act) \in \actsupport$. Nominal complementary slackness \eqref{eq:nominal-cs-action} gives
\[
  \reward(\state, \req, \act) - (\resourceprice[][*])^\top \consumption(\state, \req, \act)
  +\sum_{\state[][\prime]}\transition(\state[][\prime] \mid \state, \act)\statebias[\state[][\prime]][*] = \statedual[\state\req][*].
\]
Average this equality over $\actproc \sim \policy[\capacityratio[][\prime]](\cdot \mid \state, \req)$. The right-hand side remains $\statedual[\state\req][*]$ because it does not depend on $\act$. Then average over $\reqproc \sim \prior$. Since $\state \in \recstates$, \eqref{eq:nominal-cs-state} gives
\begin{equation}
  \expect\left[
  \rewardproc - (\resourceprice[][*])^\top \consumption(\state, \reqproc, \actproc) + \statebias[\stateproc[][\prime]][*] \mid \stateproc = \state
  \right]
  = \sum_{\req}\prior[\req]\statedual[\state\req][*] = \statebias[\state][*] + \averagegain[][*].
  \label{eq:action-tightness-used}
\end{equation}

Now define
\begin{equation*}
  \potential(\subhorizon, \inventory, \state) \coloneqq (\resourceprice[][*])^\top \inventory + \averagegain[][*]\subhorizon + \statebias[\state][*].
\end{equation*}
We directly obtain the following lemma.

\begin{lemma}
  \label{lem:potential-identity}
  Fix $\capacityratio[][\prime] \in \capacityneighborhood[3\localradius]$ and $\state \in \recstates$. Suppose that one round uses the policy $\policy[\capacityratio[][\prime]]$, regardless of the current inventory ratio. Then, for every $\subhorizon \ge 1$ and $\inventory \in \reals[][\numres]$,
  \begin{equation}
    \expect\left[
    \rewardproc + \potential(\subhorizon - 1, \inventory - \consumption(\state, \reqproc, \actproc), \stateproc[][\prime])
    \mid \stateproc = \state, \inventory, \subhorizon
    \right]
    = \potential(\subhorizon, \inventory, \state),
    \label{eq:one-step-potential}
  \end{equation}
  where $\reqproc \sim \prior$, $\actproc \sim \policy[\capacityratio[][\prime]](\cdot \mid \state, \reqproc)$, and $\stateproc[][\prime] \sim \transition(\cdot \mid \state, \actproc)$.
\end{lemma}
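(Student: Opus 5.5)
The plan is to expand the potential on the left-hand side of \eqref{eq:one-step-potential} using its affine structure and then invoke the averaged complementary-slackness identity \eqref{eq:action-tightness-used}. By definition of $\potential$,
\[
  \rewardproc + \potential(\subhorizon - 1, \inventory - \consumption(\state, \reqproc, \actproc), \stateproc[][\prime])
  = (\resourceprice[][*])^\top\inventory + \averagegain[][*](\subhorizon - 1)
  + \Bigl[\rewardproc - (\resourceprice[][*])^\top\consumption(\state, \reqproc, \actproc) + \statebias[\stateproc[][\prime]][*]\Bigr].
\]
The first two terms are deterministic given $(\inventory, \subhorizon)$. The key observation is that the joint law of $(\reqproc, \actproc, \stateproc[][\prime])$ depends only on $\state$ and the fixed parameter $\capacityratio[][\prime]$, not on $\inventory$ or $\subhorizon$, because the policy is applied regardless of the current inventory ratio. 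Consequently, the conditional expectation of the bracketed term equals the unconditional one-round expectation appearing in \eqref{eq:action-tightness-used}.

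The substantive step is verifying that \eqref{eq:action-tightness-used} applies, and I will re-check its hypotheses carefully. Since $\state \in \recstates$ and $\capacityratio[][\prime] \in \capacityneighborhood[3\localradius]$, Proposition~\ref{prop:local-lp-stability}(i) gives $\stateoccupation[\state](\capacityratio[][\prime]) > 0$. Hence $\policy[\capacityratio[][\prime]](\cdot \mid \state, \req)$ is defined by \eqref{eq:lp-induced-policy} rather than by null completion, and it is a genuine probability distribution by (SF1).

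Every action drawn with positive probability has $\occupation[\state\req\act](\capacityratio[][\prime]) > 0$, so $(\state, \req, \act) \in \actsupport$, again by Proposition~\ref{prop:local-lp-stability}(i). Therefore \eqref{eq:nominal-cs-action} holds pointwise for every realized action. Taking the expectation over $\stateproc[][\prime] \sim \transition(\cdot \mid \state, \actproc)$ turns $\statebias[\stateproc[][\prime]][*]$ into $\sum_{\state[][\prime]}\transition(\state[][\prime] \mid \state, \actproc)\statebias[\state[][\prime]][*]$. Averaging next over $\actproc$ and then over $\reqproc \sim \prior$, and finally applying \eqref{eq:nominal-cs-state} for $\state \in \recstates$, shows that the bracket has expectation $\statebias[\state][*] + \averagegain[][*]$.

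Substituting this value back gives $(\resourceprice[][*])^\top\inventory + \averagegain[][*]\subhorizon + \statebias[\state][*] = \potential(\subhorizon, \inventory, \state)$, which is the claim.

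The only point needing care is that the identity is algebraic in $\inventory$: it holds for every $\inventory \in \reals[][\numres]$, including negative entries, since no stopping or feasibility is involved in this one-round statement. I expect no real obstacle beyond confirming the support inclusion via Proposition~\ref{prop:local-lp-stability}(i). That inclusion is exactly why the lemma is restricted to $\capacityratio[][\prime] \in \capacityneighborhood[3\localradius]$ and $\state \in \recstates$.
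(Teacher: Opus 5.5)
Your proposal is correct and follows the paper's argument. The paper first derives the averaged complementary-slackness identity \eqref{eq:action-tightness-used} from the support inclusion in Proposition~\ref{prop:local-lp-stability} and the state condition \eqref{eq:nominal-cs-state}, and then obtains \eqref{eq:one-step-potential} by adding $(\resourceprice[][*])^\top\inventory + \averagegain[][*](\subhorizon - 1)$ to both sides; your extra check that $\stateoccupation[\state](\capacityratio[][\prime]) > 0$, so that the policy comes from \eqref{eq:lp-induced-policy} rather than null completion, makes explicit a step the paper leaves implicit.
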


\begin{proof}
  Adding $(\resourceprice[][*])^\top \inventory + \averagegain[][*](\subhorizon - 1)$ to both sides of \eqref{eq:action-tightness-used} gives \eqref{eq:one-step-potential}. 
\end{proof}

At the initial point, strong duality and $\statebias[\state[][1]][*] = 0$ give
\begin{equation}
  \potential(\horizon, \horizon\nominalratio, \state[][1])
  = \horizon\bigl((\resourceprice[][*])^\top\nominalratio + \averagegain[][*]\bigr)
  = \horizon\stationaryfluid(\nominalratio).
  \label{eq:initial-potential}
\end{equation}
For any history on which entrance succeeds, recall that $\entrancelast$ is the first round $\period$ such that $\stateproc[][\period] \in \recstates$, and define
\begin{gather*}
  \entranceduration \coloneqq \entrancelast - 1, \quad \entrancerest \coloneqq \horizon - \entranceduration; \\
  \entranceconsumption \coloneqq \sum_{\period = 1}^{\entranceduration}\consumption(\stateproc[][\period], \reqproc[][\period], \actproc[][\period]), \quad 
  \entranceinventory \coloneqq \inventory[][1] - \entranceconsumption, \quad 
  \entrancereward \coloneqq \sum_{\period = 1}^{\entranceduration}\reward(\stateproc[][\period], \reqproc[][\period], \actproc[][\period]); \\
  \entrancestate \coloneqq \stateproc[][\entrancelast], \quad
  \entranceinventorydeviation
  \coloneqq \entranceinventory - \entrancerest\nominalratio.
\end{gather*}

For any inventory deviation $\inventoryimbalance \in \reals[][\numres]$, define the one-sided norm
\begin{equation}
  \inventorydeviationnorm(\inventoryimbalance)
  \coloneqq \sum_{\res \in \bindingset}|\inventoryimbalance[\res]|
  +\sum_{\res \notin \bindingset}(-\inventoryimbalance[\res])_+.
  \label{eq:one-sided-inventory-deviation}
\end{equation}
The next lemma shows that the entrance imbalance affects every performance bound by at most a constant.

\begin{lemma}
  \label{lem:entrance-bookkeeping}
  Under Assumption~\ref{ass:constant-entrance}, 
  \begin{itemize}
    \item let $\event[\horizon]$ be the event that $\entranceduration < \horizon$ and $\consumptionproc[\res][\entrance] < \horizon\nominalratio[\res]$ for every resource. Then
    \begin{align}
      \horizon\,\prob(\event[\horizon][\mathrm c]) & = \bigO(1), \label{eq:entrance-failure-prob}\\
      \expect\bigl[\|\entranceinventorydeviation\|_1\indicator\{\event[\horizon]\}\bigr] & = \bigO(1), \quad \expect[\inventorydeviationnorm(\entranceinventorydeviation)\indicator\{\event[\horizon]\}] = \bigO(1); \label{eq:entrance-imbalance-bound}
    \end{align}
    \item on $\event[\horizon]$,
    \begin{equation}
      \begin{aligned}
        &\mathrel{\phantom{=}}\horizon\stationaryfluid(\nominalratio) - \entrancereward - \potential(\entrancerest, \entranceinventory, \entrancestate) \\
        &\le (\|\resourceprice[][*]\|_\infty + 1)\|\entranceconsumption\|_1
        +(\|\nominalratio\|_1 + |\averagegain[][*]|)\entranceduration
        +\max_{\state}|\statebias[\state][*]|,
      \end{aligned}
      \label{eq:entrance-potential-cost}
    \end{equation}
    and the expectation of the right-hand side is $\bigO(1)$;
    \item for every fixed $\neighborhoodradius > 0$,
    \begin{equation}
      \expect\left[\entrancerest\indicator\{\event[\horizon],
      \entranceinventory / \entrancerest \notin \capacityneighborhood[\neighborhoodradius]\}\right]
      \le \frac1\neighborhoodradius\expect[\inventorydeviationnorm(\entranceinventorydeviation)\indicator\{\event[\horizon]\}]
      = \bigO(1).
      \label{eq:bad-entrance-weighted-horizon}
    \end{equation}
  \end{itemize}
\end{lemma}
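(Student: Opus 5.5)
The plan is to prove the three bullets in order from the two moment bounds in \eqref{eq:entrance-bounds}, using elementary tail bounds (Markov) together with pathwise bookkeeping identities. The starting point is a coupling. Up to the first entrance, the algorithm follows $\entrancepolicy$ exactly as in the infinite-resource process, so on the event that entrance happens before the horizon ends and before any resource runs out, the quantities $\entranceduration=\entrancelast-1$ and $\entranceconsumption$ coincide with their infinite-resource counterparts. Let $\tilde C$ denote the infinite-resource entrance consumption. Then $\event[\horizon]^{\mathrm c}\subseteq\{\entrancelast-1\ge\horizon\}\cup\{\|\tilde C\|_1\ge \horizon\min_\res\nominalratio[\res]\}$. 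By Markov,
\[
\horizon\,\prob(\event[\horizon]^{\mathrm c})\le \expect[\entrancelast]+\expect\|\tilde C\|_1/\min_\res\nominalratio[\res]=\bigO(1),
\]
which is \eqref{eq:entrance-failure-prob}.

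For \eqref{eq:entrance-imbalance-bound}, write on $\event[\horizon]$
\[
\entranceinventorydeviation=\horizon\nominalratio-\entranceconsumption-(\horizon-\entranceduration)\nominalratio=\entranceduration\nominalratio-\entranceconsumption .
\]
This gives $\|\entranceinventorydeviation\|_1\le \entranceduration\|\nominalratio\|_1+\|\entranceconsumption\|_1$, whose expectation is $\bigO(1)$ by the coupling and \eqref{eq:entrance-bounds}. The bound for $\inventorydeviationnorm$ then follows because $\inventorydeviationnorm(\inventoryimbalance)\le\|\inventoryimbalance\|_1$.

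For \eqref{eq:entrance-potential-cost}, expand
\[
\horizon\stationaryfluid(\nominalratio)=\potential(\horizon,\horizon\nominalratio,\state[][1])
\]
using \eqref{eq:initial-potential}. Subtracting $\potential(\entrancerest,\entranceinventory,\entrancestate)$ gives
\[
(\resourceprice[][*])^\top\entranceconsumption+\averagegain[][*]\entranceduration+\statebias[\state[][1]][*]-\statebias[\entrancestate][*].
\]
Since $\statebias[\state[][1]][*]=0$, this is at most $\|\resourceprice[][*]\|_\infty\|\entranceconsumption\|_1+|\averagegain[][*]|\entranceduration+\max_\state|\statebias[\state][*]|$. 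Because $\entrancereward\ge0$, subtracting it only decreases the left side. The stated right-hand side is weaker still (it carries extra $+\|\entranceconsumption\|_1$ and $\|\nominalratio\|_1\entranceduration$ terms), so the inequality holds a fortiori. Its expectation is $\bigO(1)$ by \eqref{eq:entrance-bounds} and the coupling.

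The last bullet is the step needing the most care. On $\event[\horizon]$, with $\capacityratio=\entranceinventory/\entrancerest$, the identity $\capacityratio-\nominalratio=\entranceinventorydeviation/\entrancerest$ gives $\capacitydistance(\capacityratio)\le\inventorydeviationnorm(\entranceinventorydeviation)/\entrancerest$. This holds coordinatewise because the one-sided pieces of $\capacitydistance$ and $\inventorydeviationnorm$ match: absolute value on $\bindingset$, negative part off it, and a max is bounded by a sum. Hence $\capacityratio\notin\capacityneighborhood[\neighborhoodradius]$ forces $\neighborhoodradius\le\inventorydeviationnorm(\entranceinventorydeviation)/\entrancerest$, i.e.
\[
\entrancerest\,\indicator\{\cdot\}\le\inventorydeviationnorm(\entranceinventorydeviation)/\neighborhoodradius .
\]
Taking expectations and applying the previous bullet gives $\bigO(1)$. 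The main obstacle overall is making the coupling precise: ensuring that the entrance-phase quantities under the resource-limited process are dominated on $\event[\horizon]$ by the infinite-resource ones, so that the moment bounds of Assumption~\ref{ass:constant-entrance} transfer.
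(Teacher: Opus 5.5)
Your proposal is correct and follows essentially the same route as the paper: Markov plus a union bound for \eqref{eq:entrance-failure-prob}, the identity $\entranceinventorydeviation=\entranceduration\nominalratio-\entranceconsumption$ on $\event[\horizon]$ together with $\inventorydeviationnorm\le\|\cdot\|_1$, direct substitution into $\potential$ using \eqref{eq:initial-potential} and $\statebias[\state[][1]][*]=0$, and the max-versus-sum comparison between $\capacitydistance$ and $\inventorydeviationnorm$ for the last bullet. The only differences are cosmetic: you bound the resource part of $\event[\horizon]^{\mathrm c}$ through the $\ell_1$ norm of the consumption rather than coordinatewise, and you spell out the coupling with the infinite-resource process, which the paper handles simply by declaring that the entrance quantities are read off that trajectory.
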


\begin{proof}
  Markov's inequality and Assumption~\ref{ass:constant-entrance} give
  \[
    \prob(\entranceduration \ge \horizon) \le \frac{\expect \entranceduration}\horizon,
    \qquad
    \prob(\consumptionproc[\res][\entrance] \ge \horizon\nominalratio[\res])
    \le \frac{\expect \consumptionproc[\res][\entrance]}{\horizon\nominalratio[\res]}.
  \]
  A union bound proves \eqref{eq:entrance-failure-prob}. 
  Interpret the entrance quantities on the infinite-resource trajectory from
  Assumption~\ref{ass:constant-entrance}. On $\event[\horizon]$, we know that
  \[
    \entranceinventorydeviation
    = (\horizon\nominalratio - \entranceconsumption)
    - (\horizon - \entranceduration)\nominalratio
    = \entranceduration\,\nominalratio - \entranceconsumption.
  \]
  Consequently, using componentwise nonnegativity of consumption,
  \begin{align*}
    \expect\bigl[\|\entranceinventorydeviation\|_1
      \indicator\{\event[\horizon]\}\bigr]
    &\le \|\nominalratio\|_1\expect[\entranceduration]
      + \expect[\|\entranceconsumption\|_1] \\
    &= \|\nominalratio\|_1\expect[\entrancelast - 1]
      + \expect\left[\sum_{\period = 1}^{\entrancelast - 1}
        \|\consumption(\stateproc[][\period], \reqproc[][\period],
        \actproc[][\period])\|_1\right]
    = \bigO(1),
  \end{align*}
  where the indicator was dropped when obtaining the first upper bound and
  the final equality follows from \eqref{eq:entrance-bounds}. Moreover,
  \[
    \inventorydeviationnorm(\inventoryimbalance)
    \le \|\inventoryimbalance\|_1,
    \qquad \inventoryimbalance \in \reals[][\numres],
  \]
  directly from \eqref{eq:one-sided-inventory-deviation}. This proves
  \eqref{eq:entrance-imbalance-bound}. Direct substitution into the definition
  of $\potential$ gives
  \[
    \horizon\stationaryfluid(\nominalratio) - \entrancereward - \potential(\entrancerest, \entranceinventory, \entrancestate)
    = (\resourceprice[][*])^\top \entranceconsumption + \averagegain[][*]\entranceduration
    -\statebias[\entrancestate][*] - \entrancereward.
  \]
  This expression is bounded above by the right-hand side of \eqref{eq:entrance-potential-cost}, whose expectation is finite because rewards are nonnegative and the entrance bounds hold. Finally, if $\entranceinventory / \entrancerest \notin \capacityneighborhood[\neighborhoodradius]$, then either $|\inventorydeviation[\res][\entrance]| \ge \neighborhoodradius\entrancerest$ for some $\res \in \bindingset$ or $-\inventorydeviation[\res][\entrance] \ge \neighborhoodradius\entrancerest$ for some $\res \notin \bindingset$. Hence $\neighborhoodradius\entrancerest \le \inventorydeviationnorm(\entranceinventorydeviation)$ on that event, proving \eqref{eq:bad-entrance-weighted-horizon}.
\end{proof}

\subsection{Common resource averages and Poisson equations}
\label{app:proof:lem:poisson-bounds}

For each resource $\res$, define
\begin{equation*}
  \stateconsumption[\res](\capacityratio, \state)
  \coloneqq \sum_{\req}\prior[\req]\sum_{\act}\policy[\capacityratio](\act \mid \state, \req)\consumption[\res](\state, \req, \act),
\end{equation*}
and
\begin{equation}
  \stationaryconsumption[\res](\capacityratio)
  \coloneqq \sum_{\state}\stateoccupation[\state](\capacityratio)\stateconsumption[\res](\capacityratio, \state)
  = \sum_{\state, \req, \act}\consumption[\res](\state, \req, \act)\occupation[\state\req\act](\capacityratio).
  \label{eq:stationary-consumption}
\end{equation}
Proposition~\ref{prop:local-lp-stability} implies that when $\capacityratio \in \capacityneighborhood[3\localradius]$,
\begin{equation}
  \stationaryconsumption[\res](\capacityratio) = \capacityratio[\res]\quad(\res \in \bindingset),
  \qquad
  \stationaryconsumption[\res](\capacityratio) < \capacityratio[\res] - \positivitymargin\quad(\res \notin \bindingset).
  \label{eq:resource-drift}
\end{equation}

Now let $\inducedkernel[\capacityratio]$ and
$\stateoccupation[\capacityratio]$ denote the kernel and stationary
distribution on $\recstates$ corresponding to those binding coordinates, and
set
\[
  \stationaryprojector[\capacityratio]
  \coloneqq \onesvector\stateoccupation[\capacityratio][\top].
\]
These objects are independent of the inactive capacity coordinates by
Proposition~\ref{prop:local-lp-stability}, and
$\inducedkernel[\capacityratio]$ is irreducible by
Proposition~\ref{prop:support-regularity}. We claim that
\[
  \identitymatrix-\inducedkernel[\capacityratio]
  +\stationaryprojector[\capacityratio]
\]
is nonsingular. Indeed, suppose that $\genericvalue\in
\reals[][|\recstates|]$ belongs to its null space. Premultiplying by
$\stateoccupation[\capacityratio][\top]$ and using stationarity $\stateoccupation[\capacityratio][\top]\inducedkernel[\capacityratio] = \stateoccupation[\capacityratio][\top]$ and
$\stateoccupation[\capacityratio][\top]\onesvector = 1$ gives
\[
  0
  =\stateoccupation[\capacityratio][\top]
  (\identitymatrix-\inducedkernel[\capacityratio]
  +\stationaryprojector[\capacityratio])\genericvalue
  =\stateoccupation[\capacityratio][\top]\genericvalue.
\]
Hence $\stationaryprojector[\capacityratio]\genericvalue = \onesvector\stateoccupation[\capacityratio][\top]\genericvalue = 0$, and the null-space equation reduces to $(\identitymatrix - \inducedkernel[\capacityratio])\genericvalue = 0$.
For a finite irreducible stochastic matrix, the right eigenspace associated with eigenvalue one is $\operatorname{span}\{\onesvector\}$. 
Thus $\genericvalue=c\onesvector$ for some scalar $c$, and the centering equality above gives $c=0$. The null space is therefore trivial, so the matrix is nonsingular. 
Define
\[
  \fundamentalmatrix[\capacityratio]
  \coloneqq
  (\identitymatrix-\inducedkernel[\capacityratio]
  +\stationaryprojector[\capacityratio])^{-1}.
\]
More generally, if $\stateoccupation[\capacityratio][\top]\genericvalue=0$ and $\genericvalue[][\prime]=\fundamentalmatrix[\capacityratio]\genericvalue$, then premultiplying $(\identitymatrix-\inducedkernel[\capacityratio] +\stationaryprojector[\capacityratio])\genericvalue[][\prime]=\genericvalue$ by $\stateoccupation[\capacityratio][\top]$ gives $\stationaryprojector[\capacityratio]\genericvalue[][\prime]=0$
and
$(\identitymatrix-\inducedkernel[\capacityratio])\genericvalue[][\prime]=\genericvalue$. 
Conversely, every centered solution of this last equation solves the invertible system with the stationary projector added, so the centered solution is unique.

For $\capacityratio \in \capacityneighborhood[3\localradius]$, the vector
$\stateconsumption[\res](\capacityratio,\cdot)
-\stationaryconsumption[\res](\capacityratio)\onesvector$ has zero
$\stateoccupation(\capacityratio)$-mean by
\eqref{eq:stationary-consumption}. Consequently,
$\fundamentalmatrix[\capacityratio[\bindingset]]$ maps this vector to the
unique centered solution of the Poisson equation. We denote that solution by
$\poissoncorrector[\res](\capacityratio, \cdot)$; equivalently, it is the
unique solution on $\recstates$ of
\begin{align*}
  \poissoncorrector[\res](\capacityratio, \state) - \sum_{\state[][\prime] \in \recstates}\inducedkernel[\capacityratio](\state, \state[][\prime])\poissoncorrector[\res](\capacityratio, \state[][\prime])
  & = \stateconsumption[\res](\capacityratio, \state) - \stationaryconsumption[\res](\capacityratio),
\\
  \sum_{\state \in \recstates}\stateoccupation[\state](\capacityratio)\poissoncorrector[\res](\capacityratio, \state) & = 0.
\end{align*}

\begin{lemma}
  \label{lem:poisson-bounds}
  There are finite constants $\poissonbound, \poissonlip$ such that, for every resource $\res$, every state $\state \in \recstates$, and all $\capacityratio, \capacityratio[][\prime] \in \capacityneighborhood[2\localradius]$,
  \begin{equation*}
    |\poissoncorrector[\res](\capacityratio, \state)| \le \poissonbound,
    \qquad
    |\poissoncorrector[\res](\capacityratio, \state) - \poissoncorrector[\res](\capacityratio[][\prime], \state)|
    \le \poissonlip\|\capacityratio[\bindingset] - \capacityratio[\bindingset][\prime]\|_\infty.
  \end{equation*}
\end{lemma}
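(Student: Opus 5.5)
The bound will come from the explicit representation
\[
  \poissoncorrector[\res](\capacityratio,\cdot)
  =\fundamentalmatrix[\capacityratio]\bigl(\stateconsumption[\res](\capacityratio,\cdot)-\stationaryconsumption[\res](\capacityratio)\onesvector\bigr).
\]
Continuity of each factor in $\capacityratio[\bindingset]$ over a compact parameter set then gives both uniform boundedness and Lipschitz continuity.

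First, by Proposition~\ref{prop:local-lp-stability}(iii), every object involved depends only on the binding coordinates $\capacityratio[\bindingset]$. For $\capacityratio\in\capacityneighborhood[2\localradius]$ these coordinates lie in $\bindingbox[2\localradius]\subset\bindingbox[3\localradius]$. For every $\bindingparameter\in\bindingbox[3\localradius]$ we can realize $\bindingparameter$ by some $\capacityratio\in\capacityneighborhood[3\localradius]$, taking the inactive coordinates equal to $\nominalratio$. So it suffices to work with functions of $\bindingparameter$ on the closed box $\bindingbox[3\localradius]$.

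By Proposition~\ref{prop:support-regularity}(ii), on this box the maps $\stateoccupation$, $\policy$, and $\inducedkernel$ restricted to $\recstates$ are bounded and Lipschitz. Hence the following maps are bounded and Lipschitz:
\begin{itemize}
  \item $\stationaryprojector=\onesvector\stateoccupation^\top$;
  \item $\stateconsumption[\res](\cdot,\state)$, which is a finite sum of $\policy$ times fixed constants $\prior[\req]\consumption[\res]$;
  \item $\stationaryconsumption[\res]$, which is a product of bounded Lipschitz maps.
\end{itemize}
Consequently $M(\bindingparameter)\coloneqq\identitymatrix-\inducedkernel+\stationaryprojector$ is Lipschitz on the box.

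The key step is a uniform bound on $\|\fundamentalmatrix\|=\|M^{-1}\|$. The preceding argument shows that $M(\bindingparameter)$ is nonsingular at every point of the box; this uses irreducibility from Proposition~\ref{prop:support-regularity}(iii), which holds on $\capacityneighborhood[3\localradius]$. Since $\det M$ is continuous and nonzero on the compact box, $|\det M|\ge c>0$ there, and $\|M^{-1}\|$ is bounded by continuity via the adjugate formula. This compactness step is the main point needing care. In particular, irreducibility must hold on the closed box, not only on the open neighborhood, and we obtain this by using the $3\localradius$-box rather than the $2\localradius$ one.

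With the bound on $\|M^{-1}\|$, the resolvent identity
\[
  M(\bindingparameter)^{-1}-M(\bindingparameter')^{-1}
  =M(\bindingparameter)^{-1}\bigl(M(\bindingparameter')-M(\bindingparameter)\bigr)M(\bindingparameter')^{-1}
\]
shows that $\fundamentalmatrix$ is Lipschitz on the box.

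Finally, $\poissoncorrector[\res]$ is the product of the bounded Lipschitz matrix $\fundamentalmatrix$ and the bounded Lipschitz vector $\stateconsumption[\res]-\stationaryconsumption[\res]\onesvector$. The standard estimate $\|AB-A'B'\|\le\|A\|\,\|B-B'\|+\|A-A'\|\,\|B'\|$ gives a Lipschitz constant in $\|\capacityratio[\bindingset]-\capacityratio[\bindingset][\prime]\|_\infty$. Taking the maximum over the finitely many resources $\res$ and states $\state\in\recstates$ yields the constants $\poissonbound$ and $\poissonlip$.
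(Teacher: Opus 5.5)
\textbf{Gap in the compactness step.} Your overall route is the paper's: the representation $\poissoncorrector[\res]=\fundamentalmatrix[\capacityratio](\stateconsumption[\res]-\stationaryconsumption[\res]\onesvector)$, a compactness bound on $\|\fundamentalmatrix\|$, the resolvent identity for Lipschitz continuity, and the product estimate. The gap is in the step you flag as needing the most care, namely the choice of compact set.

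$\bindingbox[3\localradius]$ is closed, whereas $\capacityneighborhood[3\localradius]$ requires the strict inequality $|\capacityratio[\res]-\nominalratio[\res]|<3\localradius$ for $\res\in\bindingset$. So your claim that every $\bindingparameter\in\bindingbox[3\localradius]$ is realized by some $\capacityratio\in\capacityneighborhood[3\localradius]$ fails at every $\bindingparameter$ with $\max_{\res\in\bindingset}|\bindingparameter[\res]-\nominalratio[\res]|=3\localradius$. Proposition~\ref{prop:support-regularity}(iii) gives irreducibility of $\inducedkernel[\capacityratio]$, and hence nonsingularity of $M$, only for $\capacityratio\in\capacityneighborhood[3\localradius]$. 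Part (ii) gives Lipschitz continuity on the closed box but says nothing about invertibility there. Nothing you cite therefore rules out $\det M(\bindingparameter)\to 0$ as $\bindingparameter$ approaches $\partial\bindingbox[3\localradius]$. The bound $|\det M|\ge c>0$ on $\bindingbox[3\localradius]$, on which both your uniform bound on $\|M^{-1}\|$ and your resolvent step rest, is unjustified. Your remark that taking the $3\localradius$-box instead of the $2\localradius$-box secures irreducibility on a closed set has it backwards.

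\textbf{The fix.} Do what the paper does and run the compactness argument on $\bindingbox[2\localradius]$. It is compact and contains $\capacityratio[\bindingset]$ for every $\capacityratio\in\capacityneighborhood[2\localradius]$, which is all the lemma requires. It also lies strictly inside the open region $|\capacityratio[\res]-\nominalratio[\res]|<3\localradius$ where irreducibility is guaranteed. This extra $\localradius$ of margin is exactly why the lemma is stated on $\capacityneighborhood[2\localradius]$ rather than $\capacityneighborhood[3\localradius]$.

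Alternatively, you could keep $\bindingbox[3\localradius]$ and add a separate continuity argument showing that the fixed-basis support $\actsupport$, and hence irreducibility, persists up to the boundary. That argument is absent from your proposal.

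With either repair, the rest of your proof goes through and coincides with the paper's argument: the Lipschitz properties of $\stateconsumption[\res]$, $\stationaryconsumption[\res]$, and $M$ inherited from Proposition~\ref{prop:support-regularity}(ii), the continuity bound on $\|M^{-1}\|$, the resolvent identity, and the product estimate.
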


\begin{proof}
  Only the binding coordinates affect $\stateoccupation$, $\policy$, $\inducedkernel$, $\stateconsumption[\res]$, and $\stationaryconsumption[\res]$. Consider the compact box $\bindingbox[2\localradius]$ defined in Section~\ref{sec:regularity}.
  It lies strictly inside the basis-stability region $|\capacityratio[\res] - \nominalratio[\res]| < 3\localradius$.

  For $\capacityratio \in \bindingbox[2\localradius]$, consider the corresponding restrictions to $\recstates$ and the fundamental matrix defined above. 
  Proposition~\ref{prop:support-regularity} ensures irreducibility throughout the box. 
  The matrix depends continuously on $\capacityratio$, so compactness gives
  \begin{equation}
    \sup_{\capacityratio \in \bindingbox[2\localradius]}\|\fundamentalmatrix[\capacityratio]\| < \infty.
    \label{eq:uniform-fundamental-bound}
  \end{equation}
  The centered solution is
  \[
    \poissoncorrector[\res](\capacityratio) = \fundamentalmatrix[\capacityratio]\bigl(\stateconsumption[\res](\capacityratio) - \stationaryconsumption[\res](\capacityratio)\onesvector\bigr).
  \]
  The right-hand side is uniformly bounded, proving the first claim.

  For the second claim, the fixed-basis formulas make $\stateoccupation$, $\policy$, $\inducedkernel$, $\stateconsumption[\res]$, and $\stationaryconsumption[\res]$ Lipschitz on the box. 
  The inverse identity
  \[
    \fundamentalmatrix[\capacityratio] - \fundamentalmatrix[\capacityratio[][\prime]] = \fundamentalmatrix[\capacityratio]\bigl[(\inducedkernel[\capacityratio] - \inducedkernel[\capacityratio[][\prime]]) - (\stationaryprojector[\capacityratio] - \stationaryprojector[\capacityratio[][\prime]])\bigr]\fundamentalmatrix[\capacityratio[][\prime]]\]
  combined with \eqref{eq:uniform-fundamental-bound} makes $\fundamentalmatrix[\capacityratio]$ Lipschitz as well. 
  Substitution into the formula for $\poissoncorrector[\res]$ proves the stated bound. 
  Since inactive coordinates do not affect these objects, the result holds on the full one-sided neighborhood.
\end{proof}

%% file: appendices/05_frequent_resolving_proofs.tex

\subsection{Frequent re-solving analysis and proof}
\label{app:frequent-proofs}
\label{sec:frequent-analysis}

Throughout this section, we analyze the policy $\gfrequentalg$ defined in Appendix~\ref{app:algorithm-proofs}.
The implemented FR policy dominates this stopped policy pathwise by \eqref{eq:fr-dominates-guarded-fr}, so the resulting regret upper bound transfers to FR\@.

\subsubsection{Time-varying martingale decomposition and exact ratio recursion}
\label{app:proof:lem:exact-ratio-recursion}

We call a round \emph{regular} if it occurs before analytical truncation, its state belongs to $\recstates$, its remaining horizon exceeds $\cutrest$, and the remaining average inventory of the round belongs to $\capacityneighborhood[\localradius]$.
In a regular FR round, write
\[
  \consumptionproc[][\period] \coloneqq \consumption(\stateproc[][\period], \reqproc[][\period], \actproc[][\period]),
  \qquad
  \consumptionproc[\res][\period] \coloneqq \consumption[\res](\stateproc[][\period], \reqproc[][\period], \actproc[][\period]).
\]
Define the martingale difference
\begin{align*}
  \martdiff[\res][\period + 1] \coloneqq {} & \consumptionproc[\res][\period] - \stateconsumption[\res](\capacityratio[][\period], \stateproc[][\period])
  +\poissoncorrector[\res](\capacityratio[][\period], \stateproc[][\period + 1])
  \nonumber\\
  & -\sum_{\state[][\prime]}\inducedkernel[\capacityratio[][\period]](\stateproc[][\period], \state[][\prime])\poissoncorrector[\res](\capacityratio[][\period], \state[][\prime]).
\end{align*}
We thus have by definition that
\begin{equation*}
  \expect[\martdiff[\res][\period + 1] \mid \filtration[][\period]] = 0.
\end{equation*}
Moreover, $|\martdiff[\res][\period + 1]| \le \maxconsumption + 2\poissonbound = \bigO(1)$. The Poisson equation gives the exact identity
\begin{equation}
  \consumptionproc[\res][\period] - \stationaryconsumption[\res](\capacityratio[][\period])
  = \martdiff[\res][\period + 1] + \poissoncorrector[\res](\capacityratio[][\period], \stateproc[][\period]) - \poissoncorrector[\res](\capacityratio[][\period], \stateproc[][\period + 1]).
  \label{eq:cost-decomposition}
\end{equation}

Since $\inventory[][\period + 1] = \inventory[][\period] - \consumptionproc[][\period]$ and $\subhorizon[][\period + 1] = \subhorizon[][\period] - 1$,
\begin{equation}
  \capacityratio[\res][\period + 1] - \capacityratio[\res][\period] = -\frac{\consumptionproc[\res][\period] - \capacityratio[\res][\period]}{\subhorizon[][\period] - 1}.
  \label{eq:ratio-recursion}
\end{equation}
Define
\begin{equation*}
  \correcteddeviation[\res][\period] \coloneqq \capacityratio[\res][\period] - \nominalratio[\res] - \frac{\poissoncorrector[\res](\capacityratio[][\period], \stateproc[][\period])}{\subhorizon[][\period]}.
\end{equation*}

\begin{lemma}
  \label{lem:exact-ratio-recursion}
  When $\capacityratio[][\period] \in \capacityneighborhood[\localradius]$, $\subhorizon[][\period] > \cutrest$, $\capacityratio[][\period + 1] \in \capacityneighborhood[2\localradius]$. Further, for every resource $\res$,
  \begin{equation}
    \correcteddeviation[\res][\period + 1] - \correcteddeviation[\res][\period] = -\frac{\martdiff[\res][\period + 1]}{\subhorizon[][\period] - 1}
    -\frac{\stationaryconsumption[\res](\capacityratio[][\period]) - \capacityratio[\res][\period]}{\subhorizon[][\period] - 1}
    +\ratioremainder[\res][\period + 1],
    \label{eq:corrected-recursion}
  \end{equation}
  where
  \begin{equation}
    \ratioremainder[\res][\period + 1] = \frac{\poissoncorrector[\res](\capacityratio[][\period], \stateproc[][\period + 1]) - \poissoncorrector[\res](\capacityratio[][\period + 1], \stateproc[][\period + 1])}{\subhorizon[][\period] - 1}
    -\frac{\poissoncorrector[\res](\capacityratio[][\period], \stateproc[][\period])}{\subhorizon[][\period](\subhorizon[][\period] - 1)},
    \label{eq:exact-remainder}
  \end{equation}
  \begin{equation}
    |\ratioremainder[\res][\period + 1]| = \bigO\!\left((\subhorizon[][\period] - 1)^{-2}\right).
    \label{eq:remainder-bound}
  \end{equation}
\end{lemma}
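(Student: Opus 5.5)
We prove the three claims of Lemma~\ref{lem:exact-ratio-recursion} in order: the one-step locality buffer, the exact algebraic recursion \eqref{eq:corrected-recursion}, and the remainder bound \eqref{eq:remainder-bound}.

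\textbf{Step 1: the buffer.} This follows from \eqref{eq:ratio-recursion} and the choice of $\cutrest$ in \eqref{eq:cutoff}. For each resource,
\[
|\capacityratio[\res][\period+1]-\capacityratio[\res][\period]| \le \frac{\maxconsumption+\capacityratio[\res][\period]}{\subhorizon[][\period]-1}.
\]
Since $\capacityratio[][\period]\in\capacityneighborhood[\localradius]$, we have $\capacityratio[\res][\period]\le \nominalratio[\res]+\localradius$ for binding resources. For nonbinding resources only a downward bound on the move is needed, and a downward move is at most $\maxconsumption/(\subhorizon[][\period]-1)$. Because $\subhorizon[][\period]-1\ge\cutrest$ exceeds $(\maxconsumption+\max_\res\nominalratio[\res])/\localradius+1$, each one-step move is at most $\localradius$ in the relevant one-sided sense. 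Hence $\capacitydistance(\capacityratio[][\period+1])<2\localradius$.

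A small point needs care: the binding coordinate can reach $\nominalratio[\res]+\localradius$, so the numerator is $\maxconsumption+\nominalratio[\res]+\localradius$. The extra $+1$ in $\cutrest$ absorbs this, because $(\maxconsumption+\nominalratio[\res]+\localradius)/(\subhorizon-1)<\localradius$ once $\subhorizon-1>(\maxconsumption+\nominalratio[\res])/\localradius+1$.

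\textbf{Step 2: the exact identity.} We expand the difference of $\correcteddeviation$. First,
\[
\capacityratio[\res][\period+1]-\capacityratio[\res][\period] = -\frac{\consumptionproc[\res][\period]-\capacityratio[\res][\period]}{\subhorizon[][\period]-1}.
\]
Split $\consumptionproc[\res][\period]-\capacityratio[\res][\period]=(\consumptionproc[\res][\period]-\stationaryconsumption[\res](\capacityratio[][\period]))+(\stationaryconsumption[\res](\capacityratio[][\period])-\capacityratio[\res][\period])$. Apply \eqref{eq:cost-decomposition} to the first part. This produces the martingale term, the drift term, and
\[
-\frac{\poissoncorrector[\res](\capacityratio[][\period],\stateproc[][\period])-\poissoncorrector[\res](\capacityratio[][\period],\stateproc[][\period+1])}{\subhorizon[][\period]-1}.
\]
Next subtract the corrector terms,
\[
\frac{\poissoncorrector[\res](\capacityratio[][\period+1],\stateproc[][\period+1])}{\subhorizon[][\period]-1}-\frac{\poissoncorrector[\res](\capacityratio[][\period],\stateproc[][\period])}{\subhorizon[][\period]},
\]
using $\subhorizon[][\period+1]=\subhorizon[][\period]-1$. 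Collecting terms, the pieces involving $\stateproc[][\period+1]$ combine into the first fraction of \eqref{eq:exact-remainder}. The pieces involving $\stateproc[][\period]$ give $\poissoncorrector(\capacityratio[][\period],\stateproc[][\period])\bigl(\tfrac1{\subhorizon}-\tfrac1{\subhorizon-1}\bigr)=-\poissoncorrector/(\subhorizon(\subhorizon-1))$. This is pure bookkeeping. The only thing to check is that $\poissoncorrector(\capacityratio[][\period+1],\cdot)$ is well defined, which Step 1 guarantees since $\capacityratio[][\period+1]\in\capacityneighborhood[2\localradius]$. It also requires $\stateproc[][\period+1]\in\recstates$, which holds by closure (Proposition~\ref{prop:support-regularity}(i)).

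\textbf{Step 3: the remainder bound.} This is the substantive step. The second term of \eqref{eq:exact-remainder} is at most $\poissonbound/(\subhorizon(\subhorizon-1))\le\poissonbound(\subhorizon-1)^{-2}$ by Lemma~\ref{lem:poisson-bounds}. For the first term, both $\capacityratio[][\period]$ and $\capacityratio[][\period+1]$ lie in $\capacityneighborhood[2\localradius]$, so the Lipschitz part of Lemma~\ref{lem:poisson-bounds} gives a bound of
\[
\poissonlip\|\capacityratio[\bindingset][\period+1]-\capacityratio[\bindingset][\period]\|_\infty \le \poissonlip\,\frac{\maxconsumption+\max_\res\nominalratio[\res]+\localradius}{\subhorizon[][\period]-1}.
\]
Dividing by $\subhorizon[][\period]-1$ yields $\bigO((\subhorizon[][\period]-1)^{-2})$, with constants independent of $\horizon$.
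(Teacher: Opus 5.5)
Your proposal is correct and follows essentially the same route as the paper. It derives the one-step buffer from \eqref{eq:ratio-recursion} together with the second term of the cutoff, obtains the identity by substituting \eqref{eq:cost-decomposition} and cancelling the corrector terms, and bounds the remainder with the uniform and Lipschitz estimates of Lemma~\ref{lem:poisson-bounds}. Your explicit checks that $\capacityratio[][\period+1]\in\capacityneighborhood[2\localradius]$ and, by closure, $\stateproc[][\period+1]\in\recstates$, which make every Poisson term well defined, are a useful addition; the paper handles this point in its analytical setup rather than inside the proof.
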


\begin{proof}
  For $\res \in \bindingset$ and $\capacityratio[][\period] \in \capacityneighborhood[\localradius]$,
  \[
    |\capacityratio[\res][\period + 1] - \capacityratio[\res][\period]|
    \le \frac{\consumptionproc[\res][\period] + \capacityratio[\res][\period]}{\subhorizon[][\period] - 1}
    \le \frac{\maxconsumption + \nominalratio[\res] + \localradius}{\subhorizon[][\period] - 1}
    \le \frac{\maxconsumption + \max_{\res}\nominalratio[\res] + \localradius}{\subhorizon[][\period] - 1} < \localradius.
  \]
  For $\res \notin \bindingset$, only the lower boundary matters and
  \[
    \capacityratio[\res][\period + 1] \ge \capacityratio[\res][\period] - \frac{\consumptionproc[\res][\period]}{\subhorizon[][\period] - 1}
    > \nominalratio[\res] - \localradius - \frac{\maxconsumption}{\subhorizon[][\period] - 1} > \nominalratio[\res] - 2\localradius.
  \]
  Thus $\capacityratio[][\period + 1] \in \capacityneighborhood[2\localradius]$.

  Substitute \eqref{eq:cost-decomposition} into \eqref{eq:ratio-recursion}, and then subtract the two Poisson-correction terms in the definition of $\correcteddeviation$. Direct cancellation gives \eqref{eq:corrected-recursion} and \eqref{eq:exact-remainder}.

  The preceding bound and Lemma~\ref{lem:poisson-bounds} imply
  \[
    |\poissoncorrector[\res](\capacityratio[][\period], \stateproc[][\period + 1]) - \poissoncorrector[\res](\capacityratio[][\period + 1], \stateproc[][\period + 1])|
    \le \frac{\poissonlip(\maxconsumption + \max_{\res}\nominalratio[\res] + \localradius)}{\subhorizon[][\period] - 1}.
  \]
  Also $|\poissoncorrector[\res](\capacityratio[][\period], \stateproc[][\period])| \le \poissonbound$. Therefore \eqref{eq:exact-remainder} gives
  \[
    |\ratioremainder[\res][\period + 1]|
    \le \frac{\poissonlip(\maxconsumption + \max_{\res}\nominalratio[\res] + \localradius)}{(\subhorizon[][\period] - 1)^2}
    +\frac{\poissonbound}{\subhorizon[][\period](\subhorizon[][\period] - 1)}
    = \bigO\!\left((\subhorizon[][\period] - 1)^{-2}\right).
  \]
\end{proof}

\subsubsection{An exponential exit bound}
\label{app:proof:lem:exit-bound}

We use the following standard maximal form of Freedman's inequality.

\begin{fact}[Freedman's inequality, from \citet{Freedman75}]
  \label{fact:freedman}
  Let $(\martingale[][\martindex], \filtration[][\martindex])$ be a martingale with $\martingale[][0] = 0$, increments $|\martingale[][\martindex] - \martingale[][\martindex - 1]| \le \incrementbound$, and accumulated quadratic variation
  \[
    \quadvariation[][\martindex] \coloneqq \sum_{\prefixstep = 1}^{\martindex}\expect[(\martingale[][\prefixstep] - \martingale[][\prefixstep - 1])^2 \mid \filtration[][\prefixstep - 1]].
  \]
  Then, for $\deviationlevel, \variancebudget > 0$,
  \[
    \prob\left(\max_{\prefixstep \le \martindex}\martingale[][\prefixstep] \ge \deviationlevel, \ \quadvariation[][\martindex] \le \variancebudget\right)
    \le \exp\left(-\frac{\deviationlevel[][2]}{2(\variancebudget + \incrementbound\deviationlevel / 3)}\right).
  \]
  Applying the result to $\martingale$ and $-\martingale$ gives the corresponding two-sided bound with a factor $2$.
\end{fact}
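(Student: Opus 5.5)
We would prove this by the standard exponential-supermartingale argument combined with optional stopping; the maximal form needs no extra work beyond stopping at the first crossing. Write $\martdiff[][\prefixstep] \coloneqq \martingale[][\prefixstep] - \martingale[][\prefixstep - 1]$, so that $|\martdiff[][\prefixstep]| \le \incrementbound$ and $\expect[\martdiff[][\prefixstep] \mid \filtration[][\prefixstep - 1]] = 0$. Fix $\lambda \in (0, 3/\incrementbound)$ and set
\[
  \psi(\lambda) \coloneqq \frac{e^{\lambda\incrementbound} - 1 - \lambda\incrementbound}{\incrementbound^{2}}.
\]

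\textbf{Step 1 (one-step bound).} The function $y \mapsto (e^{y} - 1 - y)/y^{2}$ is nondecreasing on $\reals$. Hence for every $y \le \lambda\incrementbound$ we have $e^{y} \le 1 + y + y^{2}\psi(\lambda)/\incrementbound^{2}\cdot\incrementbound^{2}/\lambda^{2}$; taking $y = \lambda\martdiff[][\prefixstep]$ this reads $e^{\lambda\martdiff[][\prefixstep]} \le 1 + \lambda\martdiff[][\prefixstep] + \psi(\lambda)\martdiff[][\prefixstep][2]$. Taking conditional expectations and using $1 + u \le e^{u}$ gives
\[
  \expect\bigl[e^{\lambda\martdiff[][\prefixstep]} \mid \filtration[][\prefixstep - 1]\bigr] \le \exp\bigl(\psi(\lambda)\expect[\martdiff[][\prefixstep][2] \mid \filtration[][\prefixstep - 1]]\bigr).
\]
Consequently $\mathcal{E}_{\prefixstep} \coloneqq \exp(\lambda\martingale[][\prefixstep] - \psi(\lambda)\quadvariation[][\prefixstep])$ is a nonnegative supermartingale with $\mathcal{E}_{0} = 1$, because $\quadvariation[][\prefixstep]$ is $\filtration[][\prefixstep - 1]$-predictable.

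\textbf{Step 2 (stopping).} Let $\analyticalstop \coloneqq \min\{\prefixstep : \martingale[][\prefixstep] \ge \deviationlevel\}$. Consider the event $\{\max_{\prefixstep \le \martindex}\martingale[][\prefixstep] \ge \deviationlevel,\ \quadvariation[][\martindex] \le \variancebudget\}$. On it we have $\analyticalstop \le \martindex$, $\martingale[][\analyticalstop] \ge \deviationlevel$, and, since $\quadvariation$ is nondecreasing, $\quadvariation[][\analyticalstop] \le \variancebudget$. Therefore $\mathcal{E}_{\analyticalstop \wedge \martindex} \ge e^{\lambda\deviationlevel - \psi(\lambda)\variancebudget}$ on this event. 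Optional stopping at the bounded time $\analyticalstop \wedge \martindex$ gives $\expect[\mathcal{E}_{\analyticalstop \wedge \martindex}] \le 1$, and Markov's inequality then yields
\[
  \prob(\cdot) \le \exp\bigl(-\lambda\deviationlevel + \psi(\lambda)\variancebudget\bigr).
\]

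\textbf{Step 3 (optimization).} This is the only delicate computation. We first establish the scalar bound $\psi(\lambda) \le \lambda^{2}/\bigl(2(1 - \lambda\incrementbound/3)\bigr)$ for $\lambda\incrementbound < 3$. To do so, expand $e^{y} - 1 - y = \sum_{k \ge 2} y^{k}/k!$, use $k! \ge 2 \cdot 3^{k-2}$, and sum the resulting geometric series. We then choose
\[
  \lambda = \frac{\deviationlevel}{\variancebudget + \incrementbound\deviationlevel/3},
\]
which lies in $(0, 3/\incrementbound)$. A direct substitution shows that the exponent is at most $-\deviationlevel^{2}/\bigl(2(\variancebudget + \incrementbound\deviationlevel/3)\bigr)$.

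\textbf{Two-sided form.} Applying the same argument to $-\martingale$, which has the same increment bound and the same $\quadvariation$, and taking a union bound gives the two-sided statement with the factor $2$.
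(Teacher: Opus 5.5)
Your proof is correct. The one-step bound needs only $M_k-M_{k-1}\le b$, and the process $\exp(\lambda M_k-\psi(\lambda)V_k)$ is a supermartingale because $V_k$ is predictable. Stopping at $\tau\wedge n$, together with monotonicity of $V$, handles the maximal form. Finally, $\lambda=x/(v+bx/3)$ gives $1-\lambda b/3=v/(v+bx/3)$ and hence exactly the stated exponent.

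The paper gives no proof of this fact; it is imported from Freedman (1975). So the relevant comparison is with Freedman's argument. His argument runs on the same exponential supermartingale but optimizes $\lambda$ exactly, which yields, for $b=1$, the Bennett-type bound $\bigl(v/(x+v)\bigr)^{x+v}e^{x}$. The $bx/3$ form quoted in the paper then requires rescaling plus the elementary inequality $(1+u)\log(1+u)-u\ge u^2/\bigl(2(1+u/3)\bigr)$. Your series comparison $k!\ge 2\cdot 3^{k-2}$ reaches the Bernstein form directly, giving up only the sharper Bennett constant, which the paper never uses.

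Two small additions would make the write-up complete:
\begin{itemize}
\item The monotonicity of $y\mapsto(e^y-1-y)/y^2$ follows at once from the representation $\int_0^1(1-s)e^{sy}\,ds$.
\item The case $b=0$, where $\psi$ is undefined, is trivial because then $M\equiv 0$.
\end{itemize}

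Your stopping argument in fact proves Freedman's stronger event $\{M_k\ge x,\ V_k\le v\text{ for some }k\le n\}$. The two-sided increment bound is needed only for the symmetric version via $-M$.
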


Fix a post-entry stopping time $\period[0]$ with $\stateproc[][\period[0]] \in \recstates$ and define
\begin{equation*}
  \analyticalstop[N](\period[0]) \coloneqq \inf\{\period \ge \period[0]:\subhorizon[][\period] > \cutrest\text{ and }\capacityratio[][\period] \notin \capacityneighborhood[\localradius]\},
\end{equation*}
with the convention $\inf\varnothing = \infty$. Under the hypotheses of Lemma~\ref{lem:exit-bound}, the stopped quantities in its proof use only regular rounds before $\analyticalstop[N](\period[0])$ and above the fixed cutoff, so every Poisson term is well defined by the one-step buffer.

\begin{lemma}
  \label{lem:exit-bound}
  Conditional on any history for which $\gfrequentalg$ has not been analytically truncated before $\period[0]$ and
  \begin{equation*}
    \stateproc[][\period[0]] \in \recstates, \qquad \subhorizon[][\period[0]] > \cutrest, \qquad
    \capacityratio[][\period[0]] \in \capacityneighborhood[\localradius / 4],
  \end{equation*}
  the following bound holds conditionally on $\filtration[][\period[0]]$ for every integer $\epochhorizon \ge \cutrest$:
  \begin{equation}
    \prob(\analyticalstop[N](\period[0]) < \infty, \ \subhorizon[][\analyticalstop[N](\period[0])] \ge \epochhorizon \mid \filtration[][\period[0]])
    \le \exp(-\bigOmega(\epochhorizon)).
    \label{eq:exit-tail}
  \end{equation}
\end{lemma}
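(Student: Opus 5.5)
We will run the corrected deviation $\correcteddeviation[\res][\period]$ from Lemma~\ref{lem:exact-ratio-recursion}, stopped at $\analyticalstop[N](\period[0])$, and show that leaving $\capacityneighborhood[\localradius]$ while the remaining horizon is still at least $\epochhorizon$ forces a martingale to make a deviation of order $\epochhorizon$ in a time window also of order $\epochhorizon$. Freedman's inequality (Fact~\ref{fact:freedman}) then bounds the probability by $\exp(-\bigOmega(\epochhorizon))$. Before exit, every round is regular. By the one-step buffer in Lemma~\ref{lem:exact-ratio-recursion}, every Poisson term is evaluated at a point of $\capacityneighborhood[2\localradius]$, so Lemma~\ref{lem:poisson-bounds} applies throughout.

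\textbf{Step 1: control the drift.} For $\res \in \bindingset$, \eqref{eq:resource-drift} gives $\stationaryconsumption[\res](\capacityratio[][\period]) = \capacityratio[\res][\period]$, so the drift term in \eqref{eq:corrected-recursion} vanishes. Summing from $\period[0]$ to any $\period$ before the exit time then gives
\[
  \correcteddeviation[\res][\period] - \correcteddeviation[\res][\period[0]]
  = -\sum_{u = \period[0]}^{\period - 1}\frac{\martdiff[\res][u + 1]}{\subhorizon[][u] - 1}
  + \sum_{u = \period[0]}^{\period - 1}\ratioremainder[\res][u + 1].
\]
For $\res \notin \bindingset$, the drift term is $+(\capacityratio[\res][u] - \stationaryconsumption[\res](\capacityratio[][u]))/(\subhorizon[][u] - 1) > \positivitymargin/(\subhorizon[][u] - 1) \ge 0$. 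It only pushes $\correcteddeviation[\res]$ upward, which is the harmless direction for the one-sided neighborhood, so it can be dropped in a lower-bound argument. The remainder sum is at most $\sum_{u} \bigO((\subhorizon[][u] - 1)^{-2}) = \bigO(1/\subhorizon[][\period])$. The Poisson offset satisfies $|\correcteddeviation[\res][\period] - (\capacityratio[\res][\period] - \nominalratio[\res])| \le \poissonbound/\subhorizon[][\period]$.

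\textbf{Step 2: reduce exit to a martingale deviation.} Suppose exit happens at $\analyticalstop[N]$ with $\subhorizon[][\analyticalstop[N]] \ge \epochhorizon$. Since $\capacityratio[][\period[0]] \in \capacityneighborhood[\localradius/4]$, some coordinate has $\capacityratio$ moving by at least $3\localradius/4$, in the relevant direction. The Poisson offset and the remainder sum together are $\bigO(1/\epochhorizon)$, which is below $\localradius/4$ once $\epochhorizon \ge \cutrest$ is large enough; if needed, we enlarge the constants inside the $\bigOmega$, which is harmless since small $\epochhorizon$ is trivial. Hence the martingale
\[
  \martingale[][\period] \coloneqq \sum_{u = \period[0]}^{\period - 1}\frac{\martdiff[\res][u + 1]}{\subhorizon[][u] - 1}
\]
(stopped at exit) reaches level $\localradius/2$ in absolute value at some time with remaining horizon at least $\epochhorizon$. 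Its increments are bounded by $b \coloneqq (\maxconsumption + 2\poissonbound)/(\epochhorizon - 1)$. Its quadratic variation up to that time is at most
\[
  \sum_{h \ge \epochhorizon} \frac{(\maxconsumption + 2\poissonbound)^2}{(h - 1)^2} = \bigO(1/\epochhorizon) \eqqcolon \variancebudget,
\]
deterministically. This is the key point: the step sizes $1/(\subhorizon - 1)$ decay, so the entire tail has variance of order $1/\epochhorizon$, not of order $1$.

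\textbf{Step 3: apply Freedman and union bound.} Apply the two-sided version of Fact~\ref{fact:freedman} to the martingale of Step 2, stopped once $\subhorizon$ drops below $\epochhorizon$, with $\deviationlevel = \localradius/2$. The exponent is
\[
  \frac{\localradius^2/4}{2\bigl(\bigO(1/\epochhorizon) + \bigO(1/\epochhorizon)\,\localradius/6\bigr)} = \bigOmega(\epochhorizon).
\]
A union bound over the $\numres$ resources completes the proof of \eqref{eq:exit-tail}.

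The main obstacle is the bookkeeping in Step 2. We must check that stopping the martingale at $\analyticalstop[N]$ keeps every increment well defined, that the Poisson terms are taken at the parameter before versus after the update, and that the one-sided treatment of non-binding resources uses the drift sign correctly. If the constants do not close, we would first try shrinking the target deviation to $\localradius/8$.
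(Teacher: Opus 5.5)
Your proposal is correct and follows the paper's proof essentially step for step. Both arguments use the stopped martingale $\sum \martdiff[\res][\period+1]/(\subhorizon[][\period]-1)$ over rounds with remaining horizon above $\epochhorizon$, whose increments and quadratic variation are $\bigO(\epochhorizon[][-1])$; both absorb the $\bigO(\epochhorizon[][-1])$ remainder and Poisson offsets into the $\localradius/4$ margin, apply Freedman at level $\localradius/2$ after dropping the favorable drift for non-binding coordinates, and finish with a union bound over resources. One small point: the finitely many small $\epochhorizon$ have to be absorbed into a multiplicative prefactor in \eqref{eq:exit-tail}, as the paper does, rather than into the constant inside $\bigOmega$, because $\exp(-c\epochhorizon)<1$ for every $c>0$.
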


\begin{proof}
  Condition on a history in $\filtration[][\period[0]]$ satisfying the stated hypotheses. The guarded comparison follows FR from $\period[0]$ until its first subsequent neighborhood exit or fixed cutoff, so Lemma~\ref{lem:exact-ratio-recursion} applies to every increment used below. Fix a resource $\res$ and an integer $\epochhorizon \ge \cutrest$. If $\epochhorizon > \subhorizon[][\period[0]]$, the event in \eqref{eq:exit-tail} is empty, so assume $\epochhorizon \le \subhorizon[][\period[0]]$. Rounds whose increments can affect a state with remaining horizon at least $\epochhorizon$ satisfy $\subhorizon[][\period] - 1 \ge \epochhorizon$. Define the stopped martingale, for $\martindex = \period[0] - 1, \period[0], \ldots, \horizon - \epochhorizon$,
  \begin{equation*}
    \martingale[\res][\martindex, (\epochhorizon)] \coloneqq \sum_{\period = \period[0]}^{\martindex \wedge (\analyticalstop[N](\period[0]) - 1)}\frac{\martdiff[\res][\period + 1]}{\subhorizon[][\period] - 1}.
  \end{equation*}
  With respect to the shifted filtration $(\filtration[][\martindex + 1])$, this is a martingale because $\indicator\{\period[0] \le \period < \analyticalstop[N](\period[0])\}$ is $\filtration[][\period]$-measurable. Its increments are $\bigO(\epochhorizon[][-1])$, and its accumulated quadratic variation is $\bigO(\epochhorizon[][-1])$ by computation.
  By \eqref{eq:remainder-bound}, the cumulative remainder over the same rounds satisfies
  \begin{equation}
    \sum_{\period = \period[0]}^{(\horizon - \epochhorizon) \wedge (\analyticalstop[N](\period[0]) - 1)}|\ratioremainder[\res][\period + 1]|
    = \bigO(\epochhorizon[][-1]).
    \label{eq:stopped-remainder}
  \end{equation}
  Finally,
  \begin{equation}
    |\correcteddeviation[\res][\period] - (\capacityratio[\res][\period] - \nominalratio[\res])| \le \frac \poissonbound{\subhorizon[][\period]}.
    \label{eq:W-beta-distance}
  \end{equation}

  For all sufficiently large $\epochhorizon$, the Poisson correction, which is at most $2\poissonbound / \epochhorizon$, and the $\bigO(\epochhorizon[][-1])$ accumulated remainder sum to at most $\localradius / 4$. We first treat such $\epochhorizon$.

  \noindent\textbf{Binding resource.}
  For $\res \in \bindingset$, \eqref{eq:resource-drift} makes the second term in \eqref{eq:corrected-recursion} zero. If this coordinate exits while $\subhorizon[][\analyticalstop[N](\period[0])] \ge \epochhorizon$, then
  \[
    |\capacityratio[\res][\analyticalstop[N](\period[0])] - \nominalratio[\res]| \ge \localradius.
  \]
  Summing \eqref{eq:corrected-recursion} from $\period[0]$ to $\analyticalstop[N](\period[0]) - 1$ and then using \eqref{eq:W-beta-distance} and \eqref{eq:stopped-remainder} together with the regular-start condition gives
  \[
    |\correcteddeviation[\res][\period[0]]| \le \frac\localradius4 + \frac \poissonbound{\subhorizon[][\period[0]]}
  \]
  for binding coordinates. For all sufficiently large $\epochhorizon$, the initial offset, terminal Poisson correction, and accumulated remainder together use at most $\localradius / 2$ of the available $\localradius$ margin. Hence an exit with remaining horizon at least $\epochhorizon$ implies
  \[
    \max_{\martindex \le \horizon - \epochhorizon}|\martingale[\res][\martindex, (\epochhorizon)]| \ge \frac\localradius2.
  \]
  Fact~\ref{fact:freedman}, with $\deviationlevel = \localradius / 2$ and the preceding increment and variation bounds, gives
  \begin{equation*}
    \prob(\text{binding coordinate $\res$ exits while }\subhorizon \ge \epochhorizon)
    \le \exp(-\bigOmega(\epochhorizon)).
  \end{equation*}

  \noindent\textbf{Non-binding resource.}
  For $\res \notin \bindingset$, only a lower-bound exit is possible. By \eqref{eq:resource-drift},
  \[
    -\frac{\stationaryconsumption[\res](\capacityratio[][\period]) - \capacityratio[\res][\period]}{\subhorizon[][\period] - 1}
    \ge \frac\positivitymargin{\subhorizon[][\period] - 1} \ge 0.
  \]
  Thus, after summing \eqref{eq:corrected-recursion}, dropping this nonnegative drift can only decrease the lower bound on $\correcteddeviation[\res][\period]$:
  \[
    \correcteddeviation[\res][\period] \ge \correcteddeviation[\res][\period[0]] - \martingale[\res][\period - 1, (\epochhorizon)] - \bigO(\epochhorizon[][-1]).
  \]
  For all sufficiently large $\epochhorizon$, if $\capacityratio[\res][\period] \le \nominalratio[\res] - \localradius$ and $\subhorizon[][\period] \ge \epochhorizon$, then \eqref{eq:W-beta-distance} implies
  \[
    \martingale[\res][\period - 1, (\epochhorizon)] \ge \frac\localradius2.
  \]
  The one-sided Freedman bound gives
  \begin{equation*}
    \prob(\text{non-binding coordinate $\res$ exits while }\subhorizon \ge \epochhorizon)
    \le \exp(-\bigOmega(\epochhorizon)).
  \end{equation*}

  A union bound over the finite resource set proves \eqref{eq:exit-tail} for all sufficiently large $\epochhorizon$. The finitely many remaining integers $\epochhorizon \ge \cutrest$ are absorbed into the multiplicative constant implicit in \eqref{eq:exit-tail}.
\end{proof}

\subsubsection{Finishing the proof for frequent re-solving}
\label{app:proof:thm:constant-regret}

By Proposition~\ref{prop:nondegenerate-full-state-support}, $\state[][1] \in \recstates$ and the entrance phase is empty. Let
\begin{equation*}
  \analyticalstop \coloneqq \inf\{\period \ge 1:\subhorizon[][\period] \le \cutrest\text{ or }\capacityratio[][\period] \notin \capacityneighborhood[\localradius]\}
\end{equation*}
be the first round in which the guarded comparison is truncated. Before $\analyticalstop$, the current parameter lies in $\capacityneighborhood[\localradius]$, and Proposition~\ref{prop:support-regularity} keeps the state in $\recstates$. Consequently, Lemma~\ref{lem:potential-identity} makes the reward-plus-potential increments martingale differences up to $\analyticalstop$. Conditional finite-horizon telescoping from $1$ gives
\begin{equation*}
  \potential(\subhorizon[][1], \inventory[][1], \stateproc[][1]) - \valuefn[\horizon][\gfrequentalg] \le \expect\left[\potential(\subhorizon[][\analyticalstop], \inventory[][\analyticalstop], \stateproc[][\analyticalstop])^+ \mid \filtration[][\entrancelast]\right].
\end{equation*}

Whether the cutoff or a neighborhood exit is reached, the preceding regular round and the one-step buffer give $\capacityratio[][\analyticalstop] \in \capacityneighborhood[2\localradius]$. Thus the inventory is at most $\subhorizon[][\analyticalstop](\nominalratio[\res] + 2\localradius)$ for every $\res \in \bindingset$, while $\resourceprice[\res][*] = 0$ for $\res \notin \bindingset$. The definition of $\potential$ therefore gives
\[
  \potential(\subhorizon[][\analyticalstop], \inventory[][\analyticalstop], \stateproc[][\analyticalstop])^+ \le \genericconstant(1 + \subhorizon[][\analyticalstop]).
\]
At the cutoff, $\subhorizon[][\analyticalstop] \le \cutrest$, so this is a deterministic constant. On a neighborhood exit, the tail-sum formula and Lemma~\ref{lem:exit-bound} give the required integrable bound: conditionally on $\filtration[][1]$,
\begin{align*}
  \expect[\subhorizon[][\analyticalstop]\indicator\{\text{neighborhood exit}\} \mid \filtration[][1]]
  & \le \cutrest + \sum_{\epochhorizon = \cutrest + 1}^{\infty}\exp(-\bigOmega(\epochhorizon)) = \bigO(1),
\end{align*}
uniformly over all regular entry histories. Consequently,
\begin{equation*}
  \expect\left[\potential(\subhorizon[][\analyticalstop], \inventory[][\analyticalstop], \stateproc[][\analyticalstop])^+\right] = \bigO(1).
\end{equation*}
Hence
\[
  \horizon\stationaryfluid(\nominalratio) - \valuefn[\horizon][\gfrequentalg] = \bigO(1).
\]

%% file: appendices/06_infrequent_resolving_proofs.tex

\subsection{Infrequent re-solving analysis and proof}
\label{app:infrequent-proofs}
\label{sec:infrequent-analysis}

This section analyzes $\ginfrequentalg$ from Appendix~\ref{app:algorithm-proofs}. 
By \eqref{eq:ir-dominates-guarded-ir}, the implemented guard-free IR policy earns at least as much reward on every coupled sample path, so all upper bounds transfer to IR\@.

\subsubsection{Uniform concentration within a frozen epoch}
\label{app:proof:lem:frozen-concentration}

We use the following maximal bounded-increment inequality.

\begin{fact}[Maximal Azuma--Hoeffding inequality, from \citet{Roch24}]
  \label{fact:maximal-azuma}
  If $(\martingale[][\prefixidx])_{\prefixidx = 0}^{\blocklength}$ is a martingale with $\martingale[][0] = 0$ and $|\martingale[][\prefixidx] - \martingale[][\prefixidx - 1]| \le \incrementbound$ almost surely, then
  \[
    \prob\left(\max_{0 \le \prefixidx \le \blocklength}|\martingale[][\prefixidx]| \ge \deviationlevel\right)
    \le 2\exp\left(-\frac{\deviationlevel[][2]}{2\blocklength\incrementbound[][2]}\right).
  \]
\end{fact}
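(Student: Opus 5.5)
The plan is the standard exponential-martingale argument, in which Doob's maximal inequality turns the one-time Azuma bound into a bound on the running maximum at no cost. We treat the upper tail $\max_{\prefixidx\le\blocklength}\martingale[][\prefixidx]\ge\deviationlevel$ first; the lower tail then follows by applying the same argument to $-\martingale$, which is again a martingale with the same increment bound. A union bound over the two tails produces the factor $2$.

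\textbf{Step 1: conditional Hoeffding lemma.} Fix $\theta>0$ and write $D_{\prefixidx}\coloneqq\martingale[][\prefixidx]-\martingale[][\prefixidx-1]$. By assumption, $D_{\prefixidx}$ has conditional mean zero given the past and satisfies $|D_{\prefixidx}|\le\incrementbound$. Hoeffding's lemma, applied conditionally to a mean-zero variable supported in $[-\incrementbound,\incrementbound]$, then gives
\[
  \expect\bigl[e^{\theta D_{\prefixidx}}\mid \filtration[][\prefixidx-1]\bigr]\le e^{\theta^2\incrementbound[][2]/2}.
\]
Iterating by the tower property yields $\expect[e^{\theta\martingale[][\blocklength]}]\le e^{\theta^2\blocklength\incrementbound[][2]/2}$.

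\textbf{Step 2: maximal inequality.} Since $x\mapsto e^{\theta x}$ is convex and increasing, the process $(e^{\theta\martingale[][\prefixidx]})_{\prefixidx\le\blocklength}$ is a nonnegative submartingale by Jensen's inequality. Doob's maximal inequality therefore gives
\[
  \prob\Bigl(\max_{0\le\prefixidx\le\blocklength}\martingale[][\prefixidx]\ge\deviationlevel\Bigr)
  =\prob\Bigl(\max_{\prefixidx} e^{\theta\martingale[][\prefixidx]}\ge e^{\theta\deviationlevel}\Bigr)
  \le e^{-\theta\deviationlevel}\,\expect\bigl[e^{\theta\martingale[][\blocklength]}\bigr]
  \le \exp\Bigl(-\theta\deviationlevel+\tfrac{\theta^2\blocklength\incrementbound[][2]}{2}\Bigr).
\]
An equivalent route, if one prefers to avoid Doob, is to stop the martingale at the first index at which it reaches $\deviationlevel$. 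The stopped process has the same increment bound, and the event in question is exactly the event that the stopped process at time $\blocklength$ is at least $\deviationlevel$.

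\textbf{Step 3: optimize over $\theta$.} The exponent is minimized at $\theta=\deviationlevel/(\blocklength\incrementbound[][2])$, which gives the one-sided bound $\exp(-\deviationlevel[][2]/(2\blocklength\incrementbound[][2]))$. Adding the symmetric bound for $-\martingale$ yields the stated two-sided inequality.

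The only step with real content is the conditional Hoeffding lemma in Step 1. Its convexity argument must be applied with conditional expectations, so one has to check that the bound $e^{\theta^2\incrementbound[][2]/2}$ is deterministic; it is, because the support $[-\incrementbound,\incrementbound]$ does not depend on the history. This is what allows the conditional bounds to be iterated in Step 1. Everything else is routine.
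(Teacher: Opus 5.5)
Your argument is correct and is the standard proof of this fact: a conditional Hoeffding lemma for the increments, then Doob's submartingale inequality for $e^{\theta\martingale[][\prefixidx]}$ (or, equivalently, one-time Azuma for the process stopped when it first reaches $\deviationlevel$), then the choice $\theta=\deviationlevel/(\blocklength\incrementbound[][2])$, and finally a union bound over $\pm\martingale$. The paper gives no proof of its own and simply cites \citet{Roch24}, which proves the result by this same route. One point worth stating explicitly: your choice of $\theta$ uses $\deviationlevel>0$ and $\blocklength\incrementbound[][2]>0$, which the statement leaves implicit. For sufficiently negative $\deviationlevel$ the left side equals $1$ while the displayed bound is smaller than $1$, so the inequality fails there.
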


Fix $\capacityratio \in \capacityneighborhood[2\localradius]$ and run the stationary randomized policy $\policy[\capacityratio]$ for $\blocklength$ rounds from an arbitrary initial state in $\recstates$. 
We have the following lemma. 
Here $\prob$ denotes probability under this frozen policy; its stationary mean consumption of resource $\res$ is the already defined quantity $\stationaryconsumption[\res](\capacityratio)$.

\begin{lemma}
  \label{lem:frozen-concentration}
  For every resource $\res$ and $\deviationlevel \ge \max\{4\poissonbound, 1\}$,
  \begin{equation*}
    \prob\left(
    \max_{1 \le \prefixidx \le \blocklength}
    \left|
    \sum_{\prefixstep = 1}^{\prefixidx}(\consumptionproc[\res][\prefixstep] - \stationaryconsumption[\res](\capacityratio))
    \right| \ge \deviationlevel
    \right)
    \le 2\exp\left(-\frac{\deviationlevel[][2]}{8\blocklength(\maxconsumption + 2\poissonbound)^2}\right).
  \end{equation*}
\end{lemma}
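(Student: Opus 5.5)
We prove the bound with the Poisson-equation decomposition from \cref{eq:cost-decomposition}, now for a single frozen parameter $\capacityratio$. Index the frozen run by $\prefixstep = 1, \ldots, \blocklength$ with states $\stateproc[][\prefixstep] \in \recstates$. Because $\capacityratio \in \capacityneighborhood[2\localradius]$, Proposition~\ref{prop:support-regularity} keeps the chain in $\recstates$ throughout, so the corrector $\poissoncorrector[\res](\capacityratio, \cdot)$ is well defined along the whole run, and Lemma~\ref{lem:poisson-bounds} bounds it by $\poissonbound$. Define
\[
  \martdiff[\res][\prefixstep + 1] \coloneqq \consumptionproc[\res][\prefixstep] - \stateconsumption[\res](\capacityratio, \stateproc[][\prefixstep]) + \poissoncorrector[\res](\capacityratio, \stateproc[][\prefixstep + 1]) - \sum_{\state[][\prime]}\inducedkernel[\capacityratio](\stateproc[][\prefixstep], \state[][\prime])\poissoncorrector[\res](\capacityratio, \state[][\prime]).
\]
Conditionally on the past and on $\stateproc[][\prefixstep]$, the request is drawn from $\prior$, the action from $\policy[\capacityratio]$, and the next state from $\transition$. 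Hence $\expect[\martdiff[\res][\prefixstep + 1] \mid \text{past}] = 0$. Each increment satisfies $|\martdiff[\res][\prefixstep + 1]| \le \maxconsumption + 2\poissonbound$: the first two terms form a difference of two numbers in $[0, \maxconsumption]$, and the last two are each at most $\poissonbound$ in absolute value.

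Next, the Poisson equation gives, exactly as in \cref{eq:cost-decomposition} with the parameter frozen,
\[
  \consumptionproc[\res][\prefixstep] - \stationaryconsumption[\res](\capacityratio) = \martdiff[\res][\prefixstep + 1] + \poissoncorrector[\res](\capacityratio, \stateproc[][\prefixstep]) - \poissoncorrector[\res](\capacityratio, \stateproc[][\prefixstep + 1]).
\]
Because $\capacityratio$ is fixed, the corrector terms telescope across rounds. For every prefix $\prefixidx$ this yields
\[
  \sum_{\prefixstep = 1}^{\prefixidx}(\consumptionproc[\res][\prefixstep] - \stationaryconsumption[\res](\capacityratio)) = \martingale[][\prefixidx] + \poissoncorrector[\res](\capacityratio, \stateproc[][1]) - \poissoncorrector[\res](\capacityratio, \stateproc[][\prefixidx + 1]),
  \qquad \martingale[][\prefixidx] \coloneqq \sum_{\prefixstep = 1}^{\prefixidx}\martdiff[\res][\prefixstep + 1].
\]
The boundary term is bounded by $2\poissonbound$ uniformly in $\prefixidx$.

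Now use the hypothesis $\deviationlevel \ge 4\poissonbound$, which gives $2\poissonbound \le \deviationlevel / 2$. So if the maximal prefix deviation reaches $\deviationlevel$, then $\max_{\prefixidx \le \blocklength}|\martingale[][\prefixidx]| \ge \deviationlevel / 2$. Apply Fact~\ref{fact:maximal-azuma} with increment bound $\incrementbound = \maxconsumption + 2\poissonbound$ and level $\deviationlevel / 2$. This gives
\[
  2\exp\left(-\frac{(\deviationlevel / 2)^2}{2\blocklength(\maxconsumption + 2\poissonbound)^2}\right) = 2\exp\left(-\frac{\deviationlevel[][2]}{8\blocklength(\maxconsumption + 2\poissonbound)^2}\right),
\]
which is the stated bound. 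The condition $\deviationlevel \ge 1$ is not needed for this argument; presumably it is there for convenience in later uses.

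The main point to verify is that the stationary mean in the decomposition is the correct one. We need $\stationaryconsumption[\res](\capacityratio)$ to be the $\stateoccupation(\capacityratio)$-average of $\stateconsumption[\res](\capacityratio, \cdot)$ on $\recstates$, so that the right-hand side of the Poisson equation is centered. This is \cref{eq:stationary-consumption}, combined with closure of $\recstates$ and $\stateoccupation(\capacityratio)$ vanishing off $\recstates$ (Proposition~\ref{prop:local-lp-stability}). We also need the uniform corrector bound of Lemma~\ref{lem:poisson-bounds}, which applies because $\capacityratio \in \capacityneighborhood[2\localradius]$. Once both are in place, the remaining steps are bookkeeping.
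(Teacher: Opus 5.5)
Your proposal is correct and matches the paper's proof essentially step for step. The paper uses the same Poisson-corrected martingale differences with increment bound $\maxconsumption + 2\poissonbound$, the same telescoping boundary term of size at most $2\poissonbound$, the same reduction to a martingale deviation of size $\deviationlevel/2$ via $\deviationlevel \ge 4\poissonbound$, and the maximal Azuma--Hoeffding inequality. You are also right that the condition $\deviationlevel \ge 1$ plays no role in this argument.
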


\begin{proof}
  Use the Poisson solution $\poissoncorrector[\res](\capacityratio, \cdot)$. 
  Under the frozen policy, similarly define
  \[
    \martdiff[\res][\prefixstep + 1] \coloneqq \consumptionproc[\res][\prefixstep] - \stateconsumption[\res](\capacityratio, \stateproc[][\prefixstep])
    +\poissoncorrector[\res](\capacityratio, \stateproc[][\prefixstep + 1])
    -\sum_{\state[][\prime]}\inducedkernel[\capacityratio](\stateproc[][\prefixstep], \state[][\prime])\poissoncorrector[\res](\capacityratio, \state[][\prime]).
  \]
  The conditional law of the request, action, and next state gives
  \[
    \expect[\martdiff[\res][\prefixstep + 1] \mid \filtration[][\prefixstep]] = 0,
    \qquad
    |\martdiff[\res][\prefixstep + 1]| \le \maxconsumption + 2\poissonbound.
  \]
  The Poisson equation yields
  \begin{equation}
    \consumptionproc[\res][\prefixstep] - \stationaryconsumption[\res](\capacityratio)
    = \martdiff[\res][\prefixstep + 1] + \poissoncorrector[\res](\capacityratio, \stateproc[][\prefixstep]) - \poissoncorrector[\res](\capacityratio, \stateproc[][\prefixstep + 1]).
    \label{eq:frozen-martingale-decomposition}
  \end{equation}
  Summing to a prefix $\prefixidx$ gives
  \[
    \sum_{\prefixstep = 1}^{\prefixidx}(\consumptionproc[\res][\prefixstep] - \stationaryconsumption[\res](\capacityratio))
    = \sum_{\prefixstep = 1}^{\prefixidx}\martdiff[\res][\prefixstep + 1] + \poissoncorrector[\res](\capacityratio, \stateproc[][1]) - \poissoncorrector[\res](\capacityratio, \stateproc[][\prefixidx + 1]).
  \]
  The last two terms have total absolute value at most $2\poissonbound$. Therefore, for $\deviationlevel \ge 4\poissonbound$, a centered-cost deviation of size $\deviationlevel$ implies a martingale deviation of size at least $\deviationlevel / 2$. Fact~\ref{fact:maximal-azuma} gives
  \[
    \prob\left(
    \max_{\prefixidx \le \blocklength}\left|\sum_{\prefixstep = 1}^{\prefixidx}(\consumptionproc[\res][\prefixstep] - \stationaryconsumption[\res](\capacityratio))\right| \ge \deviationlevel
    \right)
    \le 2\exp\left(-\frac{\deviationlevel[][2]}{8\blocklength(\maxconsumption + 2\poissonbound)^2}\right).
  \]
\end{proof}

\subsubsection{Stability of infrequent resolving points}
\label{app:proof:lem:one-epoch-bounds}
\label{app:proof:lem:protected-first-epoch}
\label{app:proof:lem:cumulative-stability}

Choose $\concentrationpower$ such that
\begin{equation}
  0 < \concentrationpower < 1 - \frac1{2\schedulepower},
  \qquad
  \concentrationexponent \coloneqq 2\schedulepower(1 - \concentrationpower) - 1 > 0.
  \label{eq:zeta-kappa}
\end{equation}
For every epoch with $\epochhorizon[\epoch + 1] > \cutrest$, define
\begin{equation*}
  \epochbuffer[\epoch] \coloneqq \epochhorizon[\epoch + 1][-\concentrationpower].
\end{equation*}

In the following, we temporarily write $\capacityratio[][\period[\epoch]]$ as $\capacityratio[][\epoch]$ for brevity.
Notice that the entrance policy is not invoked when the stationary LP has a unique nondegenerate optimum according to Proposition~\ref{prop:nondegenerate-full-state-support}.

The IR epoch analysis uses two of the checks in the convention of
Appendix~\ref{app:algorithm-proofs}.  The \emph{resource near-depletion check} is the
pre-action test
\begin{equation*}
  \min_{\res\in[\numres]}\inventory[\res][\period]\le\maxconsumption.
\end{equation*}
The guarded comparison stops before taking the next action when this check
fires.  It is not the actual depletion rule \eqref{eq:stopping}: since every
coordinate of one-period consumption is at most $\maxconsumption$, actual
depletion cannot occur before the near-depletion check fires.

For an epoch $\epoch$ satisfying the hypotheses below, condition on
$\filtration[][\period[\epoch]]$ and couple the guarded comparison to a
\emph{full frozen continuation} that uses
$\policy[\capacityratio[][\epoch]]$ for all $\blocklength[\epoch]$ scheduled
rounds.  Use the same requests, action randomizations, and transition
randomizations until the resource near-depletion check fires.  Denote the
hypothetical costs by $\hypotheticalcost[][\period]$ and define the hypothetical endpoint
ratio by
\begin{equation}
  \hypotheticalratio[][\epoch+1]
  \coloneqq \capacityratio[][\epoch]
  +\frac{\blocklength[\epoch]\capacityratio[][\epoch]
    -\sum_{\period=\period[\epoch]}^{\period[\epoch+1]-1}
      \hypotheticalcost[][\period]}
    {\epochhorizon[\epoch+1]}.
  \label{eq:epoch-ratio-recursion}
\end{equation}
At the hypothetical endpoint, the \emph{epoch-movement check} fails if
\begin{equation*}
  \left|\hypotheticalratio[\res][\epoch+1]
    -\capacityratio[\res][\epoch]\right|>\epochbuffer[\epoch]
  \quad\text{for some }\res\in\bindingset,
  \quad\text{or}\quad
  \hypotheticalratio[\res][\epoch+1]
    <\capacityratio[\res][\epoch]-\epochbuffer[\epoch]
  \quad\text{for some }\res\notin\bindingset.
\end{equation*}
Let $\event[\epoch]$ be the event that either the epoch-movement check fails or
the resource near-depletion check fires during epoch $\epoch$.  Thus the two
checks are distinct: the first controls movement of the endpoint ratio, while
the second prevents resource depletion within an epoch.

\begin{lemma}
  \label{lem:one-epoch-bounds}
  Suppose the guarded comparison reaches $\period[\epoch]$ without truncation,
  uses the frozen policy $\policy[\capacityratio[][\epoch]]$ during epoch
  $\epoch$, and satisfies
  $\capacityratio[][\epoch]\in\capacityneighborhood[\localradius]$ and
  $\epochhorizon[\epoch+1]>\cutrest$.  Then
  \[
    \prob(\event[\epoch]\mid\filtration[][\period[\epoch]])
    \le
    \exp(-\bigOmega(\epochhorizon[\epoch][\concentrationexponent])).
  \]
\end{lemma}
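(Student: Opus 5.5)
The plan is to reduce both parts of $\event[\epoch]$ to a single large deviation of the frozen-epoch cumulative consumption, and then apply Lemma~\ref{lem:frozen-concentration} with $\blocklength = \blocklength[\epoch]$ and $\capacityratio = \capacityratio[][\epoch] \in \capacityneighborhood[\localradius] \subset \capacityneighborhood[2\localradius]$.

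By the coupling, the full frozen continuation is a stationary run of $\policy[\capacityratio[][\epoch]]$. It starts from $\stateproc[][\period[\epoch]]$, which lies in $\recstates$ because the guarded comparison has not truncated and $\recstates$ is closed under every locally induced kernel (Proposition~\ref{prop:support-regularity}). The lemma therefore applies conditionally on $\filtration[][\period[\epoch]]$. Write
\[
  D_\res \coloneqq \max_{\prefixidx \le \blocklength[\epoch]}\Bigl|\sum_{\prefixstep \le \prefixidx}\bigl(\hypotheticalcost[\res][\prefixstep] - \stationaryconsumption[\res](\capacityratio[][\epoch])\bigr)\Bigr|.
\]

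\textbf{Step 1 (epoch-movement check).} From \eqref{eq:epoch-ratio-recursion},
\[
  \hypotheticalratio[\res][\epoch+1] - \capacityratio[\res][\epoch] = \frac{\blocklength[\epoch]\bigl(\capacityratio[\res][\epoch] - \stationaryconsumption[\res](\capacityratio[][\epoch])\bigr) - \sum_{\period}\bigl(\hypotheticalcost[\res][\period] - \stationaryconsumption[\res](\capacityratio[][\epoch])\bigr)}{\epochhorizon[\epoch+1]}.
\]
By \eqref{eq:resource-drift}, the drift term vanishes for $\res \in \bindingset$. For $\res \notin \bindingset$ it is at least $\blocklength[\epoch]\positivitymargin \ge 0$, so it only helps the one-sided check. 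Hence a failure of the movement check forces $D_\res \ge \epochhorizon[\epoch+1]\epochbuffer[\epoch] = \epochhorizon[\epoch+1][1-\concentrationpower]$ for some resource $\res$.

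\textbf{Step 2 (near-depletion check).} Suppose the check fires at some prefix $\prefixidx \le \blocklength[\epoch]$. Then $\sum_{\prefixstep\le\prefixidx}\hypotheticalcost[\res][\prefixstep] \ge \inventory[\res][\period[\epoch]] - \maxconsumption = \epochhorizon[\epoch]\capacityratio[\res][\epoch] - \maxconsumption$. Meanwhile $\prefixidx\,\stationaryconsumption[\res](\capacityratio[][\epoch]) \le \blocklength[\epoch]\capacityratio[\res][\epoch]$, again by \eqref{eq:resource-drift}. It follows that
\[
  D_\res \ge \epochhorizon[\epoch+1]\capacityratio[\res][\epoch] - \maxconsumption \ge \epochhorizon[\epoch+1](\nominalratio[\res]-\localradius) - \maxconsumption,
\]
which is of order $\epochhorizon[\epoch+1]$. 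This is at least $\epochhorizon[\epoch+1][1-\concentrationpower]$ once $\epochhorizon[\epoch+1] > \cutrest$; if not, we enlarge the constant, using the first term of \eqref{eq:cutoff}. So both failure modes imply $D_\res \ge \deviationlevel \coloneqq \epochhorizon[\epoch+1][1-\concentrationpower]$ for some $\res$. This is the step where I expect care is needed: the coupling must stop at the near-depletion check, and the hypothetical ratio must be shown to control the realized one.

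\textbf{Step 3 (tail bound).} For large $\epochhorizon[\epoch+1]$, $\deviationlevel \ge \max\{4\poissonbound,1\}$, so Lemma~\ref{lem:frozen-concentration} and a union bound over $\res$ give
\[
  \prob(\event[\epoch]\mid\filtration[][\period[\epoch]]) \le 2\numres\exp\Bigl(-\frac{\epochhorizon[\epoch+1][2(1-\concentrationpower)]}{8\blocklength[\epoch](\maxconsumption+2\poissonbound)^2}\Bigr).
\]
Now use the schedule: $\blocklength[\epoch] \le \epochhorizon[\epoch]$ and $\epochhorizon[\epoch+1] \ge \epochhorizon[\epoch][\schedulepower]$ (or $\epochhorizon[\epoch+1] = \epochhorizon[\epoch]-1$, which is even better). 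The exponent is then at least a constant times $\epochhorizon[\epoch][2\schedulepower(1-\concentrationpower)-1] = \epochhorizon[\epoch][\concentrationexponent]$, with $\concentrationexponent > 0$ by \eqref{eq:zeta-kappa}.

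The finitely many small values of $\epochhorizon[\epoch+1]$ for which $\deviationlevel < \max\{4\poissonbound,1\}$ are absorbed into the constant in $\bigOmega$. The main obstacle is the bookkeeping in Step 2, together with verifying that the regularity inputs (closure of $\recstates$ and the drift identity \eqref{eq:resource-drift}) hold at $\capacityratio[][\epoch]$; the concentration itself is routine.
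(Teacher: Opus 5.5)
Your proposal is correct and follows essentially the paper's route: you reduce both components of $\event[\epoch]$ to a maximal deviation of the stationary-centered frozen cost, and then combine Lemma~\ref{lem:frozen-concentration} with $\blocklength[\epoch]\le\epochhorizon[\epoch]$ and $\epochhorizon[\epoch+1]\ge\epochhorizon[\epoch][\schedulepower]$. The movement part goes through the drift identity \eqref{eq:resource-drift}, and the near-depletion part goes through the agreement of the actual and frozen paths up to the check.

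The differences are cosmetic. For near-depletion the paper keeps the linear threshold $(\minimumcapacityratio/2)\epochhorizon[\epoch+1]$, which yields exponent $\epochhorizon[\epoch][2\schedulepower-1]$, and then uses $\concentrationexponent<2\schedulepower-1$; it also explicitly enlarges $\cutrest$ so that this threshold is large enough. In your version, the first term of \eqref{eq:cutoff} alone only makes $\epochhorizon[\epoch+1]\minimumcapacityratio-\maxconsumption$ positive, not at least $\epochhorizon[\epoch+1][1-\concentrationpower]$. So you likewise need to enlarge $\cutrest$, or absorb the finitely many small values of $\epochhorizon[\epoch+1]$ into the multiplicative constant.
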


\begin{proof}
  Condition on $\filtration[][\period[\epoch]]$ and use the full frozen
  continuation defined above.  Until the resource near-depletion check fires,
  the actual and hypothetical trajectories agree.  Hence, if neither component
  of $\event[\epoch]$ occurs, the guarded comparison reaches the endpoint and
  its endpoint ratio equals $\hypotheticalratio[][\epoch+1]$.

  \noindent\textbf{Binding coordinates.}
  For $\res \in \bindingset$, $\stationaryconsumption[\res](\capacityratio[][\epoch]) = \capacityratio[\res][\epoch]$. Hence
  \[
    \hypotheticalratio[\res][\epoch + 1] - \capacityratio[\res][\epoch] = -\frac1{\epochhorizon[\epoch + 1]}
    \sum_{\period = \period[\epoch]}^{\period[\epoch + 1] - 1}
    (\hypotheticalcost[\res][\period] - \stationaryconsumption[\res](\capacityratio[][\epoch])).
  \]
  Applying Lemma~\ref{lem:frozen-concentration} with
  \[
    \deviationlevel = \epochbuffer[\epoch]\epochhorizon[\epoch + 1] = \epochhorizon[\epoch + 1][1 - \concentrationpower]\]
  gives
  \[
    \prob\left(
    |\hypotheticalratio[\res][\epoch + 1] - \capacityratio[\res][\epoch]| > \epochbuffer[\epoch] \mid \filtration[][\period[\epoch]]\right)
    \le \exp\left(
    -\bigOmega\!\left(\frac{\epochhorizon[\epoch + 1][2(1 - \concentrationpower)]}{\blocklength[\epoch]}\right)\right).
  \]
  Because $\epochhorizon[\epoch + 1] \ge \epochhorizon[\epoch][\schedulepower]$ and $\blocklength[\epoch] \le \epochhorizon[\epoch]$, this is $\exp(-\bigOmega(\epochhorizon[\epoch][\concentrationexponent]))$.

  \noindent\textbf{Non-binding coordinates.}
  For $\res \notin \bindingset$, $\capacityratio[\res][\epoch] - \stationaryconsumption[\res](\capacityratio[][\epoch]) \ge \positivitymargin$. \eqref{eq:epoch-ratio-recursion} shows that
  \[
    \hypotheticalratio[\res][\epoch + 1] < \capacityratio[\res][\epoch] - \epochbuffer[\epoch]\]
  requires
  \[
    \sum_{\period = \period[\epoch]}^{\period[\epoch + 1] - 1}
    (\hypotheticalcost[\res][\period] - \stationaryconsumption[\res](\capacityratio[][\epoch]))
    > \epochbuffer[\epoch]\epochhorizon[\epoch + 1] + \positivitymargin \blocklength[\epoch].
  \]
  This threshold is at least $\epochbuffer[\epoch]\epochhorizon[\epoch + 1]$, so the same exponential estimate applies.

  \noindent\textbf{Resource near-depletion.}
  Let
  \begin{equation*}
    \minimumcapacityratio \coloneqq \min_{\res}(\nominalratio[\res] - \localradius) > 0.
  \end{equation*}
  After a hypothetical prefix of length $\prefixidx \le \blocklength[\epoch]$, the deterministic inventory reference centered at the stationary mean satisfies
  \begin{align*}
    \epochhorizon[\epoch]\capacityratio[\res][\epoch] - \prefixidx\stationaryconsumption[\res](\capacityratio[][\epoch])
    & \ge (\epochhorizon[\epoch] - \prefixidx)\capacityratio[\res][\epoch]\\
    & \ge \epochhorizon[\epoch + 1]\minimumcapacityratio.
  \end{align*}
  The first inequality uses $\stationaryconsumption[\res](\capacityratio[][\epoch]) \le \capacityratio[\res][\epoch]$, with equality for binding resources and strict slack for non-binding resources. For each fixed prefix length, the conditional expected inventory of the hypothetical frozen-policy process can differ from this reference by at most $2\poissonbound$ because of the Poisson boundary terms in \eqref{eq:frozen-martingale-decomposition}. The pathwise near-depletion comparison below uses the stationary-centered cost sum, whose concentration bound already includes those terms. If the resource near-depletion check fires after $\prefixidx$ preceding actions, the coupled hypothetical inventory at that prefix is at most $\maxconsumption$. Therefore
  \[
  \sum_{\prefixstep = 0}^{\prefixidx-1}
    (\hypotheticalcost[\res][\period[\epoch]+\prefixstep]
      - \stationaryconsumption[\res](\capacityratio[][\epoch]))
    \ge \epochhorizon[\epoch + 1]\minimumcapacityratio - \maxconsumption
  \]
  for some resource $\res$. Enlarge $\cutrest$ so that this threshold is at least $(\minimumcapacityratio / 2)\epochhorizon[\epoch + 1]$ and at least $\max\{4\poissonbound, 1\}$. Lemma~\ref{lem:frozen-concentration} then gives
  \[
    \prob(\text{resource near-depletion in epoch }\epoch \mid \filtration[][\period[\epoch]])
    \le \genericconstant\exp\left(-\thresholdfraction\frac{\epochhorizon[\epoch + 1][2]}{\blocklength[\epoch]}\right)
    \le \genericconstant \exp(-\thresholdfraction \epochhorizon[\epoch][2\schedulepower - 1]).
  \]
  Since $\concentrationexponent < 2\schedulepower - 1$, a union bound over the
  two components of $\event[\epoch]$ and the finitely many resources proves the
  lemma.
\end{proof}

In the following lemma, with minor abuse of notation, we use $\numepochs$ to denote the first schedule index with $\epochhorizon[\numepochs] \le \cutrest$.
Recall that it originally refers to the index of the final re-solving epoch of the non-truncated IR\@.

\begin{lemma}
  \label{lem:cumulative-stability}
  The cutoff $\cutrest$ can be enlarged so that, for every $\horizon$,
  \begin{equation}
    \sum_{\epoch = 0}^{\numepochs - 2}\epochbuffer[\epoch] < \frac\localradius2,
    \label{eq:sum-movements}
  \end{equation}
  with an empty sum when $\numepochs \le 1$. More generally, fix any $\epoch[0] \in \{0, \ldots, \numepochs - 1\}$. If $\capacityratio[][\epoch[0]] \in \capacityneighborhood[\localradius / 4]$ and none of the events $\event[\epoch]$ defined before Lemma~\ref{lem:one-epoch-bounds} occurs for $\epoch = \epoch[0], \ldots, \numepochs - 2$, then every resolving ratio $\capacityratio[][\epoch]$, $\epoch[0] \le \epoch \le \numepochs - 1$, lies in $\capacityneighborhood[\localradius]$. 
\end{lemma}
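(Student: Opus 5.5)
The plan is to prove the two claims separately. For \eqref{eq:sum-movements}, I will bound the buffer sum by a geometric-type series driven by the schedule \eqref{eq:horizon-schedule}. For the second claim, I will run an induction over epochs that adds up the per-epoch movements.

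\textbf{Step 1: the buffer sum.} For $\epoch \le \numepochs - 2$ we have $\epochhorizon[\epoch + 1] > \cutrest$, and the recursion gives $\epochhorizon[\epoch + 1] \ge \epochhorizon[\epoch][\schedulepower]$, as long as the $\epochhorizon[\epoch] - 1$ branch is not active (which holds once $\cutrest$ is large). Hence, read backward from the last index, the horizons satisfy $\epochhorizon[\epoch] \le \epochhorizon[\epoch + 1][1 / \schedulepower]$. Starting from $\epochhorizon[\numepochs - 1] > \cutrest$, induction gives $\epochhorizon[\numepochs - 1 - \ell] \ge \cutrest^{(1/\schedulepower)^{\ell}}$ heuristically, in the other direction. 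More precisely, along the forward schedule the terms $\epochbuffer[\epoch] = \epochhorizon[\epoch + 1][-\concentrationpower]$ increase with $\epoch$. Going backward from $\epoch = \numepochs - 2$, each earlier term is at most the later one raised to the power $1/\schedulepower > 1$: since $\epochhorizon[\epoch + 1] \ge \epochhorizon[\epoch + 2]^{1/\schedulepower}$ up to rounding, we get $\epochbuffer[\epoch] \le \epochbuffer[\epoch + 1]^{1/\schedulepower}$. Every term is bounded by $\cutrest^{-\concentrationpower} < 1$. The terms therefore decay doubly exponentially backward, and the sum is at most $\sum_{\ell \ge 0} (\cutrest^{-\concentrationpower})^{(1/\schedulepower)^\ell} \le C \cutrest^{-\concentrationpower}$, with $C$ independent of $\horizon$. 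I still need to check how the ceiling rounding interacts with this estimate. One way is to use $\epochhorizon[\epoch + 1] \le \epochhorizon[\epoch][\schedulepower] + 1 \le 2\epochhorizon[\epoch][\schedulepower]$, which changes only constants. Enlarging $\cutrest$ then makes the sum smaller than $\localradius / 2$.

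\textbf{Step 2: staying in the neighborhood.} I argue by induction on $\epoch$ from $\epoch[0]$. The claim to carry is $\capacitydistance(\capacityratio[][\epoch]) \le \localradius / 4 + \sum_{\epoch' = \epoch[0]}^{\epoch - 1} \epochbuffer[\epoch']$, which is below $3\localradius/4 < \localradius$ by Step 1.

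In the inductive step, assume $\capacityratio[][\epoch] \in \capacityneighborhood[\localradius]$ and $\epoch \le \numepochs - 2$, so that $\epochhorizon[\epoch + 1] > \cutrest$. Then Lemma~\ref{lem:one-epoch-bounds}'s setting applies. Since $\event[\epoch]$ does not occur, the near-depletion check does not fire, the guarded comparison reaches $\period[\epoch + 1]$, and $\capacityratio[][\epoch + 1] = \hypotheticalratio[][\epoch + 1]$. The movement check passes, so each binding coordinate moves by at most $\epochbuffer[\epoch]$, and each non-binding coordinate decreases by at most $\epochbuffer[\epoch]$ (increases are harmless for the one-sided distance). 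Therefore $\capacitydistance(\capacityratio[][\epoch + 1]) \le \capacitydistance(\capacityratio[][\epoch]) + \epochbuffer[\epoch]$. I also need that the locality check does not truncate at $\period[\epoch+1]$; this follows from the bound itself.

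\textbf{Main obstacle.} The delicate point is Step 1's uniformity in $\horizon$ under rounding and the $\max/\min$ in \eqref{eq:horizon-schedule}. Near the cutoff, the $\epochhorizon[\epoch]-1$ branch could make consecutive horizons close. My first attempt will be to show that this branch is inactive whenever $\epochhorizon[\epoch][\schedulepower] + 1 < \epochhorizon[\epoch] - 1$, which holds for all $\epochhorizon[\epoch] > \cutrest$ once $\cutrest$ is a large constant.
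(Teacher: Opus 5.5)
Your proposal is correct and follows essentially the paper's route: you read the schedule backward from the last pre-cutoff index so that the buffers $d_k$ decay at least geometrically, enlarge $H_\perp$ to make the sum smaller than $\delta/2$, and then propagate the one-sided distance by at most $d_k$ per epoch on which the bad event does not occur. The only difference is in absorbing the ceiling: the paper picks $\alpha' \in (\alpha,1)$ with $\lceil h^{\alpha}\rceil \le h^{\alpha'}$ for $h > H_\perp$, whereas you use $h_{k+1} \le 2h_k^{\alpha}$, which works equally well once $H_\perp$ is large.

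Two small corrections. Your opening inequality $h_k \le h_{k+1}^{1/\alpha}$ points the wrong way and is not needed. Likewise, the clipping at $h_k - 1$ is no obstacle, because the argument only uses the upper bound $h_{k+1} \le \lceil h_k^{\alpha}\rceil$, which holds regardless of which branch is active.
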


\begin{proof}
  Choose any number $\schedulepower[][\prime]$ with $\schedulepower < \schedulepower[][\prime] < 1$. Enlarge $\cutrest$ so that
  \begin{equation}
    \lceil \epochhorizon[][\schedulepower]\rceil \le \epochhorizon[][\schedulepower[][\prime]]\qquad(\epochhorizon > \cutrest).
    \label{eq:alpha-prime-schedule}
  \end{equation}
  If $\numepochs \le 1$, \eqref{eq:sum-movements} is an empty sum and the propagation assertion is immediate. For the movement-sum argument, assume $\numepochs \ge 2$.
  For $\epoch \le \numepochs - 2$, both $\epochhorizon[\epoch]$ and $\epochhorizon[\epoch + 1]$ exceed $\cutrest$, and \eqref{eq:alpha-prime-schedule} implies
  \[
    \epochhorizon[\epoch][-\concentrationpower] \le \epochhorizon[\epoch + 1][-\concentrationpower / \schedulepower[][\prime]].
  \]
  Let $\seriesratio \coloneqq \epochhorizon[\numepochs - 1][-\concentrationpower] \le \cutrest[][-\concentrationpower] < 1$. Reading the sequence backward gives
  \[
    \epochhorizon[\numepochs - 2][-\concentrationpower] \le \seriesratio[][1 / \schedulepower[][\prime]],
    \quad
    \epochhorizon[\numepochs - 3][-\concentrationpower] \le \seriesratio[][1 / (\schedulepower[][\prime])^2],
    \quad\ldots.
  \]
  The ratio of each successive term in this backward list to the preceding one is at most
  $\seriesratio[][1 / \schedulepower[][\prime] - 1] \le
  \cutrest[][-\concentrationpower(1 / \schedulepower[][\prime] - 1)]$.  Enlarge
  $\cutrest$ so that this bound is at most $1 / 2$. Then
  \begin{equation*}
    \sum_{\epoch = 0}^{\numepochs - 2}\epochbuffer[\epoch] = \sum_{\prefixstep = 1}^{\numepochs - 1}\epochhorizon[\prefixstep][-\concentrationpower] \le 2\seriesratio
    \le 2\cutrest[][-\concentrationpower].
  \end{equation*}
  A final enlargement makes this smaller than $\localradius / 2$.

  If $\capacityratio[][\epoch[0]] \in \capacityneighborhood[\localradius / 4]$, then, on the complement of all bad events, every binding coordinate changes by at most $\epochbuffer[\epoch]$ in absolute value during epoch $\epoch$, and every non-binding coordinate decreases by at most $\epochbuffer[\epoch]$. The tail sum $\sum_{\epoch = \epoch[0]}^{\numepochs - 2}\epochbuffer[\epoch]$ is bounded by the full sum in \eqref{eq:sum-movements}. Induction therefore gives
  \[
    |\capacityratio[\res][\epoch] - \nominalratio[\res]| < 3\localradius / 4\quad(\res \in \bindingset),
    \qquad
    \capacityratio[\res][\epoch] > \nominalratio[\res] - 3\localradius / 4\quad(\res \notin \bindingset)
  \]
  for $\epoch[0] \le \epoch \le \numepochs - 1$. Hence all these average capacities lie in $\capacityneighborhood[\localradius]$.
\end{proof}

\subsubsection{Finishing the proof for infrequent re-solving}
\label{app:proof:thm:infrequent-constant-regret}

\begin{proof}
  By Proposition~\ref{prop:nondegenerate-full-state-support},
  $\state[][1] \in \recstates$ and the entrance phase is empty. 
  First handle $\epochhorizon[1] \le \cutrest$. By the clipped recursion, this implies $\epochhorizon[0] \le \max\{\cutrest + 1, \cutrest[][1 / \schedulepower]\}$, a fixed constant. The inventory at entry is at most $\epochhorizon[0](\nominalratio[\res] + \localradius / 4)$ for every $\res \in \bindingset$ and can only decrease. Since $\resourceprice[\res][*] = 0$ for $\res \notin \bindingset$, the definition of $\potential$ therefore bounds every later positive potential by $\genericconstant(1 + \epochhorizon[0]) = \bigO(1)$, so nonnegative continuation rewards suffice for this case. We may therefore assume $\epochhorizon[1] > \cutrest$, and hence $\numepochs \ge 2$.

  At $\period[0]=1$, the epoch-$0$ ratio is $\inventory[][1]/\epochhorizon[0]=\nominalratio$, and epoch $0$ uses the corresponding frozen policy $\policy[\nominalratio]$. Thus Lemma~\ref{lem:one-epoch-bounds} applies also to epoch $0$. For $\epoch = 0, \ldots, \numepochs - 2$, use the bad event
  $\event[\epoch]$ defined before Lemma~\ref{lem:one-epoch-bounds}, and define
  \[
    \firstbadepoch \coloneqq \inf\{\epoch \in \{0, \ldots, \numepochs - 2\}:\event[\epoch]\text{ occurs}\},
  \]
  with $\firstbadepoch = \infty$ if no such event occurs.  Let $\analyticalstop$ be the round in which the guarded comparison is first truncated by the locality, resource near-depletion, or cutoff rule specified in Appendix~\ref{app:algorithm-proofs}. Every pre-$\analyticalstop$ action in epoch $0$ uses $\policy[\nominalratio]$, and every later pre-$\analyticalstop$ action uses a frozen policy $\policy[\capacityratio[][\epoch]]$ with parameter in $\capacityneighborhood[\localradius]$; Proposition~\ref{prop:support-regularity} keeps the state in $\recstates$. Consequently, Lemma~\ref{lem:potential-identity} makes the reward-plus-potential increments martingale differences up to $\analyticalstop$. Finite-horizon telescoping from round $1$, followed by the zero-reward null actions after truncation and deletion of the negative part of the terminal potential, gives
  \begin{equation*}
    \potential(\subhorizon[][1], \inventory[][1], \stateproc[][1]) - \valuefn[\horizon][\ginfrequentalg] \le \expect\left[\potential(\subhorizon[][\analyticalstop], \inventory[][\analyticalstop], \stateproc[][\analyticalstop])^+\right].
  \end{equation*}

  On $\{\firstbadepoch = \infty\}$, Lemma~\ref{lem:cumulative-stability} with $\epoch[0] = 0$ shows that all subsequent pre-cutoff resolving ratios remain stable and the last epoch before the proof cutoff begins with $\epochhorizon[\numepochs - 1] \le \cutrest[][1 / \schedulepower]$. At the start of that last epoch, the inventory is at most $\epochhorizon[\numepochs - 1](\nominalratio[\res] + \localradius)$ for every $\res \in \bindingset$ and can only decrease. Since $\resourceprice[\res][*] = 0$ for $\res \notin \bindingset$, the definition of $\potential$ and the displayed bound on $\epochhorizon[\numepochs - 1]$ give a deterministic constant bound on the residual at truncation. On $\{\firstbadepoch = \epoch\}$, no earlier bad event occurred, so $\capacityratio[][\epoch] \in \capacityneighborhood[\localradius]$, and the same argument from the regular start of epoch $\epoch$ gives
  \[
    \potential(\subhorizon[][\analyticalstop], \inventory[][\analyticalstop], \stateproc[][\analyticalstop])^+ \le \genericconstant(1 + \epochhorizon[\epoch]).
  \]
  Applying Lemma~\ref{lem:one-epoch-bounds} conditionally on $\filtration[][\period[\epoch]]$ and then using the tower property gives
  \[
    \prob(\firstbadepoch = \epoch)
    \le \exp\bigl(-\bigOmega(\epochhorizon[\epoch][\concentrationexponent])\bigr).
  \]
  Therefore,
  \[
    \expect[\potential(\subhorizon[][\analyticalstop], \inventory[][\analyticalstop], \stateproc[][\analyticalstop])^+]
    \le \genericconstant + \genericconstant\sum_{\epoch = 0}^{\numepochs - 2}(1 + \epochhorizon[\epoch])\exp(-\thresholdfraction \epochhorizon[\epoch][\concentrationexponent]).
  \]
  The last sum is uniformly bounded in $\horizon$.  Indeed, before the cutoff the sequence $(\epochhorizon[\epoch])$ is a strictly decreasing sequence of distinct integers larger than $\cutrest$.  Hence
  \begin{equation*}
    \sum_{\epoch = 0}^{\numepochs - 2}(1 + \epochhorizon[\epoch])\exp(-\thresholdfraction \epochhorizon[\epoch][\concentrationexponent]) \le
    \sum_{\epochhorizon = \cutrest + 1}^{\infty}(1 + \epochhorizon)\exp(-\thresholdfraction \epochhorizon[][\concentrationexponent]) < \infty,
  \end{equation*}
  where the right-hand side depends only on the fixed instance constants, $\cutrest$, and $\concentrationexponent > 0$. Thus
  \[
    \expect[\potential(\subhorizon[][\analyticalstop], \inventory[][\analyticalstop], \stateproc[][\analyticalstop])^+] = \bigO(1).
  \]
  
  Hence
  \[
    \horizon\stationaryfluid(\nominalratio) - \valuefn[\horizon][\ginfrequentalg] = \bigO(1).
  \]
\end{proof}

%% file: appendices/07_1_boundary_cases_proofs.tex
\section{Incomplete regularity analysis and proofs}
\label{app:boundary-proofs}

\subsection{Proof of Theorem~\ref{thm:deg-gimfr-sqrt}}
\label{app:proof:thm:deg-gimfr-sqrt}

Degeneracy creates two difficulties that were absent from the constant-regret proof.  
First, a nearby LP may have several optimizers, so there need not be a locally fixed basis.  
Second, a nearby optimizer may assign positive stationary state probability to states outside the optimal target state class.  
The analysis below handles these difficulties separately: uniqueness gives a selector-uniform Lipschitz bound on every nearby optimizer, and stationary domination converts the small total stationary probability outside the optimal target state class into a small finite-time probability of visiting those states.

Use the unique-optimum notation $\selectedsolution[][*], \bindingset, \recstates, \actsupport, \resourceslack[][*]$, and $\capacitydistance$ introduced in \eqref{eq:nominal-supports} and \eqref{eq:one-sided-capacity-distance}, together with the one-sided inventory-deviation norm $\inventorydeviationnorm$ defined in \eqref{eq:one-sided-inventory-deviation}. 

The proof of Theorem~\ref{thm:deg-gimfr-sqrt} uses the following lemmas. 
Lemma~\ref{lem:deg-gimfr-polyhedral-sensitivity} is the only step that uses nominal uniqueness; it replaces local basis stability by an upper-Lipschitz estimate. 
Lemma~\ref{lem:deg-gimfr-outside-domination} shows that states outside the optimal target state class remain rare throughout a frozen epoch. 
Lemma~\ref{lem:deg-gimfr-frozen-additive} then gives a maximal additive-functional estimate without requiring the perturbed stationary state distribution to be supported on the optimal target state class. 
Finally, Lemma~\ref{lem:deg-gimfr-epoch-control} propagates inventory deviations through a fixed number of epochs, by which time the remaining horizon is $\bigO(\sqrt {\horizon})$. 
The theorem then follows from one stopped dual-potential identity.

For brevity, we define $\outsidestates \coloneqq \states \setminus \recstates$ in this proof.

\subsubsection{Lipschitz continuity of optimizers}
\begin{lemma}
  \label{lem:deg-gimfr-polyhedral-sensitivity}
  There is a radius $\localradius[0] > 0$ such that, whenever $\capacitydistance(\capacityratio) \le \localradius[0]$, every optimizer $\selectedsolution(\capacityratio)$ of $\stationaryfluid(\capacityratio)$ satisfies
  \begin{equation}
    \|\selectedsolution(\capacityratio) - \selectedsolution[][*]\|_1
    = \bigO(\capacitydistance(\capacityratio)).
    \label{eq:deg-gimfr-optimizer-lipschitz}
  \end{equation}
  In particular,
  there is $\positivitymargin > 0$ such that every selected local optimizer satisfies
  \begin{align}
    \stateoccupation[\state](\capacityratio) & \ge \positivitymargin,
    & & \state \in \recstates,
    \label{eq:deg-gimfr-core-positive}\\
    \sum_{\state \in \outsidestates}\stateoccupation[\state](\capacityratio) & = \bigO(\capacitydistance(\capacityratio)),
    \label{eq:deg-gimfr-outside-mass}\\
    \|\policy[\capacityratio](\cdot \mid \state, \req) - \policy[\nominalratio](\cdot \mid \state, \req)\|_1
    & = \bigO(\capacitydistance(\capacityratio)),
    & & \state \in \recstates, \ \req \in \requests.
    \label{eq:deg-gimfr-policy-close}
  \end{align}
\end{lemma}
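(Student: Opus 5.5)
The approach is to treat $\stationaryfluid(\capacityratio)$ as a parametric LP whose feasible region $\mathcal P(\capacityratio)\coloneqq\{\selectedsolution\ge 0:\ \constraintmatrix\selectedsolution=\rhsvector(\capacityratio)\}$ in the slack form \eqref{eq:standard-parametric-lp} varies only through the affine right-hand side. We then use a sharpness argument for the unique nominal optimizer. Since $\selectedsolution[][*]$ is the unique optimum of an LP over a polyhedron, the objective has a linear growth (weak sharp minimum) property: there is $\sharpnessconstant>0$ with
\[
  \objectivevector[][\top]\selectedsolution[][*]-\objectivevector[][\top]\selectedsolution\ \ge\ \sharpnessconstant\,\|\selectedsolution-\selectedsolution[][*]\|_1
  \qquad\text{for all }\selectedsolution\in\mathcal P(\nominalratio).
\]
This holds because every feasible point is a convex combination of vertices plus recession directions; each other vertex has strictly smaller value, and each nonzero recession direction strictly decreases the objective. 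Boundedness of the stationary polytope (via (SF3) and (SF1)) in fact removes recession directions, apart from slack directions, which can be handled by bounding the slacks linearly in $\occupation$.

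The first step is to compare $\mathcal P(\capacityratio)$ with $\mathcal P(\nominalratio)$ via Hoffman's lemma, which applies because the constraint matrix is fixed. For any optimizer $\selectedsolution(\capacityratio)$, reduce the slacks on non-binding resources if $\capacityratio[\res]>\nominalratio[\res]$. This is allowed because only the one-sided distance $\capacitydistance$ matters: the $\occupation$ part of $\selectedsolution(\capacityratio)$ remains feasible for the relaxed constraints, and non-binding slack at $\selectedsolution[][*]$ is positive. We then obtain a point $\selectedsolution'\in\mathcal P(\nominalratio)$ with $\|\selectedsolution'-\selectedsolution(\capacityratio)\|_1=\bigO(\capacitydistance(\capacityratio))$. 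A cleaner route is to project the $\occupation$-part and pad the slacks. Symmetrically, Hoffman gives a point of $\mathcal P(\capacityratio)$ within $\bigO(\capacitydistance(\capacityratio))$ of $\selectedsolution[][*]$, provided $\capacitydistance$ is small enough that the non-binding slacks $\resourceslack[][*]$ absorb any decrease. Hence $\stationaryfluid(\capacityratio)\ge\stationaryfluid(\nominalratio)-\bigO(\capacitydistance(\capacityratio))$.

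Combining the two steps gives
\[
  \sharpnessconstant\|\selectedsolution'-\selectedsolution[][*]\|_1\le \objectivevector[][\top]\selectedsolution[][*]-\objectivevector[][\top]\selectedsolution'\le \objectivevector[][\top]\selectedsolution[][*]-\stationaryfluid(\capacityratio)+\bigO(\capacitydistance(\capacityratio))=\bigO(\capacitydistance(\capacityratio)).
\]
The triangle inequality then yields \eqref{eq:deg-gimfr-optimizer-lipschitz} for the $(\occupation,\stateoccupation)$ coordinates. The dependence on the radius $\localradius[0]$ enters only through the requirement that non-binding perturbations are absorbed by the nominal slacks.

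The consequences follow directly. For \eqref{eq:deg-gimfr-core-positive}, take $\positivitymargin=\tfrac12\min_{\state\in\recstates}\stateoccupation[\state][*]$ and shrink $\localradius[0]$ so that the $\bigO(\localradius[0])$ error is below $\positivitymargin$. For \eqref{eq:deg-gimfr-outside-mass}, use $\stateoccupation[\state][*]=0$ on $\outsidestates$. For \eqref{eq:deg-gimfr-policy-close}, write $\policy[\capacityratio](\act\mid\state,\req)=\occupation[\state\req\act](\capacityratio)/(\prior[\req]\stateoccupation[\state](\capacityratio))$. The denominators are bounded below by $\min_\req \prior[\req]\positivitymargin$ on $\recstates$, so the map is Lipschitz there.

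I expect the main obstacle to be the one-sided nature of $\capacitydistance$ in the Hoffman step. Increases in non-binding capacities are not penalized by $\capacitydistance$, so the comparison must be carried out on the $(\occupation,\stateoccupation)$ coordinates together with capped slacks rather than on the full slack vector. The sharpness constant must likewise be stated for these coordinates, with the objective independent of slacks.
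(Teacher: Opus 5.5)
Your architecture matches the paper's: linear growth of the objective around the unique nominal optimizer, the lower bound $G(\beta)\ge G(\rho)-O(\Delta(\beta))$, and a transfer of the perturbed optimizer back to a fixed polytope. The transfer step, however, fails as you set it up. You take sharpness over the full nominal feasible set $\mathcal P(\rho)$, inactive resource rows included, and then assert that every optimizer $x(\beta)$ lies within $O(\Delta(\beta))$ of $\mathcal P(\rho)$. Hoffman bounds that distance by the violation of $x(\beta)$ at $\rho$. For an inactive resource $i$ this violation is $(c_i^\top z(\beta)-\rho_i)_+$. Because $\Delta$ ignores upward moves of inactive capacities, $\beta_i-\rho_i$ can be arbitrarily large, and a priori you only know $c_i^\top z(\beta)\le\beta_i$. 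So this violation is not $O(\Delta(\beta))$. Invoking the nominal slack $\sigma_i^*>0$ to absorb it presupposes that $z(\beta)$ is already close to $z^*$, which is the conclusion you are trying to reach. Adjusting or capping slack variables does not help either: once $c_i^\top z(\beta)>\rho_i$, no nonnegative slack restores feasibility. The remedy in your last paragraph therefore does not address the obstacle you correctly flag there.

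The missing idea is the paper's Step~1: drop the inactive rows before using sharpness.
\begin{itemize}
\item Let $\mathcal P^{+}$ consist of stationary flows satisfying (SF1)--(SF3) and only $c_i^\top z\le\rho_i$ for $i\in I^*$.
\item Then $x^*$ is still the unique maximizer on $\mathcal P^{+}$. Suppose $y\in\mathcal P^{+}\setminus\{x^*\}$ had $r^\top y\ge r^\top x^*$. Take the mixture $(1-\epsilon)x^*+\epsilon y$ with $\epsilon\,\overline{c}\le\min_{i\notin I^*}\sigma_i^*$, where $\overline c$ is the per-round consumption bound. It is nominally feasible: the binding rows hold by convexity, and every stationary flow has $\|z\|_1=1$, so $c_i^\top y\le\overline c$. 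This contradicts optimality or uniqueness of $x^*$.
\item The vertex argument then gives sharpness on $\mathcal P^{+}$. Only the exact equality rows and the binding rows, with violation at most $\Delta(\beta)$, now enter the transfer.
\item Hoffman then gives an $O(\Delta(\beta))$ transfer, and your chain of inequalities goes through. The paper instead mixes $x(\beta)$ with a null-action stationary flow at weight $\Delta(\beta)/\min_i\rho_i$; this lands exactly in $\mathcal P^{+}$ and avoids Hoffman.
\end{itemize}
Alternatively, you can keep $\mathcal P(\rho)$: transfer the fixed-weight mixture $(1-\epsilon)x^*+\epsilon x(\beta)$ instead of $x(\beta)$, and divide the resulting bound by $\epsilon$. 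Your derivation of the three consequences coincides with the paper's and is fine.
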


\begin{proof}
  For a stationary flow $\selectedsolution=(\occupation,\stateoccupation)$, with minor abuse of notation, abbreviate its reward and its rate of using resource $\res$ by
  \[
    \stationaryreward(\selectedsolution)
    \coloneqq \sum_{\state,\req,\act}
    \reward(\state,\req,\act)\occupation[\state\req\act],
    \qquad
    \stationaryconsumption[\res](\selectedsolution)
    \coloneqq \sum_{\state,\req,\act}
    \consumption[\res](\state,\req,\act)\occupation[\state\req\act].
  \]
  Also set $\stationaryreward[][*]\coloneqq\stationaryreward(\selectedsolution[][*])$.

  \noindent\textbf{Step 1: one fixed comparison polytope.}
  Let $\bindingpolytope$ be the fixed polytope of nonnegative stationary flows satisfying \textup{(SF1)--(SF3)} and only the nominally binding resource constraints
  \[
    \stationaryconsumption[\res](\selectedsolution) \le \nominalratio[\res],
    \qquad \res \in \bindingset;
  \]
  the inactive resource constraints are omitted.  This polytope is compact: \textup{(SF1)--(SF3)} imply
  $\sum_{\state,\req,\act}\occupation[\state\req\act]
  =\sum_{\state}\stateoccupation[\state]=1$.

  We first show that $\selectedsolution[][*]$ is the unique reward maximizer over $\bindingpolytope$.  Take any $\comparatorsolution\in\bindingpolytope$ with $\comparatorsolution\ne\selectedsolution[][*]$.  Since there are finitely many inactive resources and each has nominal slack $\resourceslack[\res][*]>0$, one can choose a single $0<\perturbation<1$ so small that
  \begin{equation}
    \perturbation
    \bigl(\stationaryconsumption[\res](\comparatorsolution)
      -\stationaryconsumption[\res](\selectedsolution[][*])\bigr)_+
    \le \resourceslack[\res][*],
    \qquad \res\notin\bindingset.
    \label{eq:deg-gimfr-mixture-slack}
  \end{equation}
  The mixture
  \[
    \selectedsolution[\perturbation]
    \coloneqq(1-\perturbation)\selectedsolution[][*]
      +\perturbation\comparatorsolution
  \]
  is then feasible for the nominal LP\@.  The binding constraints hold by convexity because both endpoints satisfy them, while for $\res\notin\bindingset$,
  \[
    \stationaryconsumption[\res](\selectedsolution[\perturbation])
    \le \stationaryconsumption[\res](\selectedsolution[][*])
       +\perturbation
        \bigl(\stationaryconsumption[\res](\comparatorsolution)
          -\stationaryconsumption[\res](\selectedsolution[][*])\bigr)_+
    \le \nominalratio[\res]
  \]
  by \eqref{eq:deg-gimfr-mixture-slack}.  Consequently,
  $\stationaryreward(\comparatorsolution)>\stationaryreward(\selectedsolution[][*])$ would produce a strictly better nominal feasible solution, and equality would produce the distinct nominal optimizer $\selectedsolution[\perturbation]$.  Both contradict nominal uniqueness.  Thus
  \[
    \stationaryreward(\comparatorsolution)<\stationaryreward(\selectedsolution[][*])
    \qquad
    \text{for every }\comparatorsolution\in\bindingpolytope
    \setminus\{\selectedsolution[][*]\}.
  \]

  A unique maximizer of a linear objective over a polytope is a vertex.  Because $\bindingpolytope$ has finitely many vertices, the strict separation above can be made quantitative.  If $\bindingpolytope\ne\{\selectedsolution[][*]\}$, define
  \[
    \sharpnessconstant
    \coloneqq
    \min_{\substack{\vertex\in\operatorname{vert}(\bindingpolytope)\\
                     \vertex\ne\selectedsolution[][*]}}
    \frac{\stationaryreward[][*]-\stationaryreward(\vertex)}
         {\|\vertex-\selectedsolution[][*]\|_1}>0.
  \]
  If $\comparatorsolution=\sum_{\vertex}\lambda_{\vertex}\vertex\in\bindingpolytope$ is any vertex decomposition, then
  \begin{align}
    \stationaryreward[][*]-\stationaryreward(\comparatorsolution)
    &=\sum_{\vertex}\lambda_{\vertex}
      \bigl(\stationaryreward[][*]-\stationaryreward(\vertex)\bigr)  \notag\\
    &\ge \sharpnessconstant
      \sum_{\vertex}\lambda_{\vertex}
      \|\vertex-\selectedsolution[][*]\|_1  \notag\\
    &\ge \sharpnessconstant
      \|\comparatorsolution-\selectedsolution[][*]\|_1.
    \label{eq:deg-gimfr-weak-sharp}
  \end{align}

  \noindent\textbf{Step 2: move a nearby optimizer into the fixed polytope.}
  Set
  \[
    d\coloneqq\capacitydistance(\capacityratio),
    \qquad
    \underline\rho\coloneqq\min_{\res\in[\numres]}\nominalratio[\res]>0,
    \qquad
    \mixtureweight\coloneqq d/\underline\rho.
  \]
  We will shrink $\localradius[0]$ a few times.  First choose it so that
  $\localradius[0]\le\underline\rho/2$ and
  $\localradius[0]\le\resourceslack[\res][*]/2$ for every
  $\res\notin\bindingset$, and suppose $d\le\localradius[0]$.
  The second condition is vacuous if every resource is binding.
  To make the comparison flow explicit, let $\nullstationarylaw$ be any stationary distribution of the finite-state transition matrix induced by the null action, and define $\nullsolution=(\occupation[][0],\stateoccupation[][0])$ by
  \[
    \stateoccupation[\state][0]=\nullstationarylaw[\state],
    \qquad
    \occupation[\state\req\act][0]
    =\prior[\req]\nullstationarylaw[\state]
      \indicator\{\act=\act[][0]\}.
  \]
  Then $\nullsolution$ satisfies \textup{(SF1)--(SF3)} and has zero reward and zero resource use.

  First mix the nominal optimizer with $\nullsolution$:
  \[
    \widetilde{\selectedsolution}
    \coloneqq (1-\mixtureweight)\selectedsolution[][*]
    +\mixtureweight\nullsolution.
  \]
  This flow is feasible at $\capacityratio$.  For a binding resource,
  \[
    \stationaryconsumption[\res](\widetilde{\selectedsolution})
    =(1-\mixtureweight)\nominalratio[\res]
    \le \nominalratio[\res]-d
    \le \capacityratio[\res],
  \]
  because $\mixtureweight\nominalratio[\res]\ge d$.  For an inactive resource,
  \[
    \stationaryconsumption[\res](\widetilde{\selectedsolution})
    \le \nominalratio[\res]-\resourceslack[\res][*]
    \le \nominalratio[\res]-d
    \le \capacityratio[\res].
  \]
  Thus every optimizer $\selectedsolution(\capacityratio)$ satisfies
  \begin{equation}
    \stationaryreward(\selectedsolution(\capacityratio))
    \ge \stationaryreward(\widetilde{\selectedsolution})
    =(1-\mixtureweight)\stationaryreward[][*].
    \label{eq:deg-gimfr-perturbed-value-lower}
  \end{equation}

  Now apply the same mixture to an arbitrary perturbed optimizer:
  \[
    \widehat{\selectedsolution}
    \coloneqq (1-\mixtureweight)\selectedsolution(\capacityratio)
    +\mixtureweight\nullsolution.
  \]
  For every $\res\in\bindingset$, feasibility at $\capacityratio$ and the definition of $d$ give
  \[
    \stationaryconsumption[\res](\widehat{\selectedsolution})
    \le (1-\mixtureweight)(\nominalratio[\res]+d)
    =\nominalratio[\res]
      +(d-\mixtureweight\nominalratio[\res])-\mixtureweight d
    \le \nominalratio[\res],
  \]
  where the last inequality again uses
  $\mixtureweight\nominalratio[\res]\ge d$.
  Hence $\widehat{\selectedsolution}\in\bindingpolytope$.  By \eqref{eq:deg-gimfr-perturbed-value-lower},
  \[
    \stationaryreward(\widehat{\selectedsolution})
    =(1-\mixtureweight)\stationaryreward(\selectedsolution(\capacityratio))
    \ge(1-\mixtureweight)^2\stationaryreward[][*].
  \]
  Since $\widehat{\selectedsolution}\in\bindingpolytope$ and $\selectedsolution[][*]$ maximizes reward there,
  \[
    0
    \le \stationaryreward[][*]-\stationaryreward(\widehat{\selectedsolution})
    \le \bigl(1-(1-\mixtureweight)^2\bigr)
      \stationaryreward[][*]
    \le 2\mixtureweight\stationaryreward[][*].
  \]
  Therefore, if $\bindingpolytope\ne\{\selectedsolution[][*]\}$, the sharpness bound \eqref{eq:deg-gimfr-weak-sharp} yields
  \begin{equation}
    \|\widehat{\selectedsolution}-\selectedsolution[][*]\|_1
    \le \frac{2\stationaryreward[][*]}{\sharpnessconstant}
    \mixtureweight.
    \label{eq:deg-gimfr-contracted-close}
  \end{equation}
  If $\bindingpolytope=\{\selectedsolution[][*]\}$, the left-hand side is instead zero.

  Finally, every stationary flow $(\occupation,\stateoccupation)$ has
  $\|\occupation\|_1=\|\stateoccupation\|_1=1$.  Thus, for the concatenated $\ell_1$-norm used in the lemma, every stationary flow has norm two, and
  \[
    \|\selectedsolution(\capacityratio)-\widehat{\selectedsolution}\|_1
    =\mixtureweight
      \|\selectedsolution(\capacityratio)-\nullsolution\|_1
    \le4\mixtureweight.
  \]
  Combining this estimate with \eqref{eq:deg-gimfr-contracted-close} gives, in the nonsingleton case,
  \[
    \|\selectedsolution(\capacityratio)-\selectedsolution[][*]\|_1
    \le
    \left(4+\frac{2\stationaryreward[][*]}{\sharpnessconstant}\right)
    \frac{d}{\underline\rho}.
  \]
  In the singleton case the same bound holds with the second term in parentheses omitted.  Thus there is a constant $K<\infty$, depending only on the nominal problem, such that
  \begin{equation}
    \|\selectedsolution(\capacityratio)-\selectedsolution[][*]\|_1
    \le Kd.
    \label{eq:deg-gimfr-explicit-lipschitz}
  \end{equation}
  This proves \eqref{eq:deg-gimfr-optimizer-lipschitz}, uniformly over every choice of the perturbed optimizer.

  \noindent\textbf{Step 3: state masses and policies.}
  The set $\recstates$ is nonempty because the nominal state masses sum to one.  Define
  \[
    q_{\min}^*\coloneqq\min_{\state\in\recstates}
      \stateoccupation[\state][*]>0,
    \qquad
    \positivitymargin\coloneqq q_{\min}^*/2.
  \]
  Shrink $\localradius[0]$ once more so that
  $K\localradius[0]\le q_{\min}^*/2$.  Then, for every
  $\state\in\recstates$, \eqref{eq:deg-gimfr-explicit-lipschitz} gives
  \[
    \stateoccupation[\state](\capacityratio)
    \ge \stateoccupation[\state][*]-Kd
    \ge q_{\min}^*/2
    =\positivitymargin,
  \]
  proving \eqref{eq:deg-gimfr-core-positive}.  On the other hand,
  $\stateoccupation[\state][*]=0$ for $\state\in\outsidestates$, so
  \[
    \sum_{\state\in\outsidestates}
      \stateoccupation[\state](\capacityratio)
    \le \|\stateoccupation(\capacityratio)-\stateoccupation[][*]\|_1
    \le Kd,
  \]
  which proves \eqref{eq:deg-gimfr-outside-mass}.

  It remains to compare the induced policies.  Set
  $p_{\min}\coloneqq\min_{\req\in\requests}\prior[\req]>0$.
  Fix $\state\in\recstates$ and $\req\in\requests$, and abbreviate, for the following calculation,
  \[
    z_{\act}=\occupation[\state\req\act](\capacityratio),
    \quad z_{\act}^*=\occupation[\state\req\act][*],
    \quad q=\stateoccupation[\state](\capacityratio),
    \quad q^*=\stateoccupation[\state][*],
    \quad p=\prior[\req].
  \]
  Since $q\ge\positivitymargin$ and
  $\sum_{\act}z_{\act}^*=pq^*$ by \textup{(SF1)},
  \begin{align*}
    \|\policy[\capacityratio](\cdot\mid\state,\req)
      -\policy[\nominalratio](\cdot\mid\state,\req)\|_1
    &=\sum_{\act}\left|\frac{z_{\act}}{pq}
      -\frac{z_{\act}^*}{pq^*}\right| \\
    &\le \frac{\sum_{\act}|z_{\act}-z_{\act}^*|}{pq}
      +\frac{\sum_{\act}z_{\act}^*}{p}
        \left|\frac1q-\frac1{q^*}\right| \\
    &=\frac{\sum_{\act}|z_{\act}-z_{\act}^*|}{pq}
      +\frac{|q-q^*|}{q} \\
    &\le \left(\frac1{p_{\min}\positivitymargin}
      +\frac1{\positivitymargin}\right)Kd.
  \end{align*}
  This is \eqref{eq:deg-gimfr-policy-close}.  Every constant above depends only on the fixed nominal problem, so all bounds are uniform over every optimizer.
\end{proof}

\subsubsection{Rare opportunity to escape the optimal target state class}

\begin{lemma}
  \label{lem:deg-gimfr-outside-domination}
  Fix $\capacityratio$ with $\capacitydistance(\capacityratio) \le \localradius[0]$, and run the frozen selected policy $\policy[\capacityratio]$. If the initial state distribution $\initiallaw$ is supported on $\recstates$, then for every $\poststep \ge 0$,
  \begin{equation*}
    \prob(\stateproc[][\poststep] \in \outsidestates) = \bigO(\capacitydistance(\capacityratio)).
  \end{equation*}
  Consequently, for every $\blocklength \ge 1$,
  \begin{equation}
    \expect\left[\sum_{\poststep = 0}^{\blocklength - 1}\indicator\{\stateproc[][\poststep] \in \outsidestates\}\right]
    = \bigO(\blocklength\capacitydistance(\capacityratio)).
    \label{eq:deg-gimfr-outside-visits}
  \end{equation}
\end{lemma}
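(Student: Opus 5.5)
The plan is a stationary-domination argument. The perturbed LP solution already supplies a stationary law for the frozen kernel. By \eqref{eq:deg-gimfr-core-positive} this law puts mass at least $\positivitymargin$ on every state of $\recstates$, so it dominates any initial law supported on $\recstates$ up to the factor $1/\positivitymargin$. Stationarity then propagates the domination to every time step. Finally, \eqref{eq:deg-gimfr-outside-mass} makes the dominating mass on $\outsidestates$ of order $\capacitydistance(\capacityratio)$.

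\emph{Step 1: stationarity on the whole state space.} Let $\inducedkernel[\capacityratio]$ be the kernel on all of $\states$ induced by the frozen policy $\policy[\capacityratio]$, using the null-completion convention \eqref{eq:null-completion} at states with $\stateoccupation[\state](\capacityratio)=0$. I will check that $\stateoccupation(\capacityratio)^\top \inducedkernel[\capacityratio] = \stateoccupation(\capacityratio)^\top$. The product $\sum_{\state}\stateoccupation[\state](\capacityratio)\inducedkernel[\capacityratio](\state,\state[][\prime])$ only involves rows with $\stateoccupation[\state](\capacityratio)>0$. On those rows \eqref{eq:lp-induced-policy} gives $\stateoccupation[\state](\capacityratio)\inducedkernel[\capacityratio](\state,\state[][\prime])=\sum_{\req,\act}\transition(\state[][\prime]\mid\state,\act)\occupation[\state\req\act](\capacityratio)$. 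By \textup{(SF1)}, the states with zero mass carry zero flow, so summing over all $\state$ and applying \textup{(SF2)} returns $\stateoccupation[\state[][\prime]](\capacityratio)$. Consequently, how the policy is completed on zero-mass states is irrelevant, and this holds for every selected optimizer.

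\emph{Step 2: domination.} The initial law $\initiallaw$ is supported on $\recstates$, and Lemma~\ref{lem:deg-gimfr-polyhedral-sensitivity} gives $\stateoccupation[\state](\capacityratio)\ge\positivitymargin$ there. Hence, componentwise, $\initiallaw\le \positivitymargin^{-1}\stateoccupation(\capacityratio)$, using $\initiallaw[\state]\le 1$. Right-multiplication by the nonnegative matrix $\inducedkernel[\capacityratio][\poststep]$ preserves componentwise order. Combining this with Step~1 gives $\initiallaw^\top\inducedkernel[\capacityratio][\poststep]\le\positivitymargin^{-1}\stateoccupation(\capacityratio)^\top$ for every $\poststep\ge 0$. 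Summing over $\outsidestates$ and applying \eqref{eq:deg-gimfr-outside-mass} yields
\[
\prob(\stateproc[][\poststep]\in\outsidestates)\le\positivitymargin^{-1}\sum_{\state\in\outsidestates}\stateoccupation[\state](\capacityratio)=\bigO(\capacitydistance(\capacityratio)),
\]
uniformly in $\poststep$ and in the selected optimizer.

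\emph{Step 3: the consequence.} Summing the one-step bound over $\poststep=0,\ldots,\blocklength-1$ gives \eqref{eq:deg-gimfr-outside-visits}.

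The only delicate point is Step~1. Under degeneracy, the perturbed stationary law may charge states of $\outsidestates$, and the kernel may be reducible there. The argument does not need irreducibility or mixing, only exact stationarity of $\stateoccupation(\capacityratio)$ together with the uniform lower bound on $\recstates$. So I expect no real obstacle beyond verifying that the null completion does not break stationarity, which the flow identity in Step~1 settles.
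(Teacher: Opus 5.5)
Your proposal is correct and matches the paper's proof. Both arguments use the domination $\initiallaw \le \positivitymargin[][-1]\stateoccupation(\capacityratio)$ from \eqref{eq:deg-gimfr-core-positive}, propagate it through the nonnegative kernel by stationarity, and finish with the outside-mass bound \eqref{eq:deg-gimfr-outside-mass}. Your Step~1 uses \textup{(SF1)}--\textup{(SF2)} to check that stationarity holds on all of $\states$ regardless of the null completion at zero-mass states, which the paper only asserts.
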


\begin{proof}
  Let $\inducedkernel[\capacityratio]$ be the kernel induced by the frozen policy and let $\stateoccupation(\capacityratio)$ be its stationary state-probability vector supplied by the selected fluid optimizer.  Since $\initiallaw$ is supported on $\recstates$ and $\stateoccupation[\state](\capacityratio) \ge \positivitymargin$ there,
  \[
    \initiallaw \le \positivitymargin[][-1]\stateoccupation(\capacityratio).
  \]
  This is a componentwise inequality of nonnegative row vectors.  Multiplication by the nonnegative matrix $\inducedkernel[\capacityratio][\poststep]$ preserves it, and stationarity gives $\stateoccupation(\capacityratio)\inducedkernel[\capacityratio][\poststep] = \stateoccupation(\capacityratio)$.  Therefore
  \[
    \prob(\stateproc[][\poststep] \in \outsidestates)
    = \initiallaw \inducedkernel[\capacityratio][\poststep](\outsidestates)
    \le \positivitymargin[][-1]\stateoccupation(\capacityratio)(\outsidestates)
    = \bigO(\capacitydistance(\capacityratio))
  \]
  by \eqref{eq:deg-gimfr-outside-mass}.  Summing over $\poststep$ proves the second claim.
\end{proof}

\subsubsection{Bounding the accumulated gap}

\begin{lemma}
  \label{lem:deg-gimfr-frozen-additive}
  Let $\filtration[][\poststep]$ contain the full history before the request in post-entry round $\poststep$.  Let $\oneperiodquantity[][\poststep]$ be a uniformly bounded, $\filtration[][\poststep + 1]$-measurable one-period quantity under a frozen policy $\policy[\capacityratio]$.  Assume there is a state function $\conditionalmean[\capacityratio]$ such that, for every $\poststep$,
  \[
    \expect[\oneperiodquantity[][\poststep] \mid \filtration[][\poststep]]
    = \conditionalmean[\capacityratio](\stateproc[][\poststep]).
  \]
  In particular, current-round reward and resource consumption satisfy this full-history conditional-mean condition.
  Suppose that there is a bounded nominal function $\conditionalmean[][*]$ on $\recstates$ such that, uniformly over $\state \in \recstates$,
  \begin{equation}
    |\conditionalmean[\capacityratio](\state) - \conditionalmean[][*](\state)| = \bigO(\capacitydistance(\capacityratio)).
    \label{eq:deg-gimfr-conditional-mean-close}
  \end{equation}
  Set $\nominalmean \coloneqq \sum_{\state \in \recstates}\stateoccupation[\state][*]\conditionalmean[][*](\state)$.  If the epoch starts in $\recstates$, then
  \begin{equation}
    \expect\left[
    \max_{1 \le \poststep \le \blocklength}
    \left|\sum_{\prefixstep = 0}^{\poststep - 1}(\oneperiodquantity[][\prefixstep] - \nominalmean)\right|
    \right]
    = \bigO\left(\sqrt {\blocklength} + \blocklength\capacitydistance(\capacityratio) + 1\right).
    \label{eq:deg-gimfr-frozen-maximal}
  \end{equation}
\end{lemma}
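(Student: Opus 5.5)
The plan is to split the centered sum into three pieces:
\[
  \sum_{\prefixstep=0}^{\poststep-1}(\oneperiodquantity[][\prefixstep]-\nominalmean)
  =\sum_{\prefixstep=0}^{\poststep-1}\bigl(\oneperiodquantity[][\prefixstep]-\conditionalmean[\capacityratio](\stateproc[][\prefixstep])\bigr)
  +\sum_{\prefixstep=0}^{\poststep-1}\bigl(\conditionalmean[\capacityratio](\stateproc[][\prefixstep])-\conditionalmean[][*](\stateproc[][\prefixstep])\bigr)\indicator\{\stateproc[][\prefixstep]\in\recstates\}
  +\sum_{\prefixstep=0}^{\poststep-1}\bigl(g(\stateproc[][\prefixstep])-\nominalmean\bigr).
\]
Here $g$ equals $\conditionalmean[][*]$ on $\recstates$ and $\conditionalmean[\capacityratio]$ on $\outsidestates$. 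The first piece is a martingale with bounded increments, so Doob's $L^2$ maximal inequality bounds its expected maximum by $\bigO(\sqrt{\blocklength})$. The second piece is bounded pathwise by $\blocklength\cdot\bigO(\capacitydistance(\capacityratio))$ using \eqref{eq:deg-gimfr-conditional-mean-close}. All the work is therefore in the state-additive functional of the third piece.

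For the third piece, I will first remove visits to $\outsidestates$. Let $\bar g$ be extended by $\conditionalmean[][*]$ on $\recstates$ and by some fixed value, say $\nominalmean$, on $\outsidestates$. Replacing $g$ by $\bar g$ changes the sum by at most a constant times the number of visits to $\outsidestates$ before $\blocklength$. By \eqref{eq:deg-gimfr-outside-visits} of Lemma~\ref{lem:deg-gimfr-outside-domination}, this has expectation $\bigO(\blocklength\capacitydistance(\capacityratio))$, and since the count is monotone the bound is uniform in $\poststep$. What remains is $\sum_{\prefixstep<\poststep}\phi(\stateproc[][\prefixstep])$ with $\phi\coloneqq\bar g-\nominalmean$, and $\phi$ has zero mean under $\stateoccupation[][*]$.

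I plan to handle this by comparing with the nominal kernel through a Poisson equation. Because $\inducedkernel[][*]$ is irreducible on $\recstates$ (Assumption~\ref{ass:support-irreducibility}), the fundamental-matrix argument of Appendix~\ref{app:proof:lem:poisson-bounds} gives a bounded centered $v$ on $\recstates$ with $v-\inducedkernel[][*]v=\phi$. Extend $v$ by zero on $\outsidestates$. Then
\[
  \phi(\stateproc[][\prefixstep])=\bigl[v(\stateproc[][\prefixstep+1])-(\inducedkernel[\capacityratio]v)(\stateproc[][\prefixstep])\bigr]
  +\bigl[v(\stateproc[][\prefixstep])-v(\stateproc[][\prefixstep+1])\bigr]
  +\bigl[(\inducedkernel[\capacityratio]v-\inducedkernel[][*]v)(\stateproc[][\prefixstep])\bigr]
\]
on $\recstates$. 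The first bracket is a bounded martingale difference and contributes $\bigO(\sqrt{\blocklength})$ by Doob. The second telescopes to $\bigO(1)$.

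The third bracket is the \emph{main obstacle}. Its size is controlled by $\|\policy[\capacityratio](\cdot\mid\state,\req)-\policy[\nominalratio](\cdot\mid\state,\req)\|_1$, which is $\bigO(\capacitydistance(\capacityratio))$ by \eqref{eq:deg-gimfr-policy-close}. There is one subtlety: $\inducedkernel[\capacityratio]$ can leak mass into $\outsidestates$, where $v=0$. This leakage is already accounted for, because its total probability is $\bigO(\capacitydistance(\capacityratio))$ (again by \eqref{eq:deg-gimfr-policy-close}, since the nominal policy keeps $\recstates$ closed). So the third bracket contributes $\bigO(\blocklength\capacitydistance(\capacityratio))$ pathwise.

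Time steps spent in $\outsidestates$ need a separate bound. Their increments are bounded and their number is controlled by \eqref{eq:deg-gimfr-outside-visits}. Re-entries into $\recstates$ break the telescoping only by a bounded jump each, and the number of re-entries is at most the number of outside visits, so this adds $\bigO(\blocklength\capacitydistance(\capacityratio))$.

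Summing the three pieces gives \eqref{eq:deg-gimfr-frozen-maximal}. The "in particular" clause holds because reward and consumption have conditional means $\sum_\req\prior[\req]\sum_\act\policy[\capacityratio](\act\mid\state,\req)\reward(\state,\req,\act)$, and the analogous expression for consumption. These are $\bigO(\capacitydistance(\capacityratio))$-close to their nominal counterparts on $\recstates$ by \eqref{eq:deg-gimfr-policy-close}.
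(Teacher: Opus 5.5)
Your proposal is correct and follows essentially the paper's route: a bounded centered Poisson corrector for the nominal kernel, extended by zero on $\outsidestates$; a martingale built with the frozen kernel $\inducedkernel[\capacityratio]$; a remainder that is $\bigO(\capacitydistance(\capacityratio))$ on $\recstates$ by \eqref{eq:deg-gimfr-policy-close}; and outside visits charged through \eqref{eq:deg-gimfr-outside-visits}. The paper merges your three pieces into one martingale and one remainder $\approxerror[\capacityratio]$, which keeps the telescoping intact globally without re-entry bookkeeping, and it first disposes of the case $\capacitydistance(\capacityratio) > \localradius[0]$ by the trivial bound $\bigO(\blocklength) = \bigO(\blocklength\capacitydistance(\capacityratio))$. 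You should add that reduction, because \eqref{eq:deg-gimfr-policy-close} and \eqref{eq:deg-gimfr-outside-visits} are only available locally.
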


\begin{proof}
  If $\capacitydistance(\capacityratio) > \localradius[0]$, uniform boundedness of $\oneperiodquantity[][\poststep]$ and $\nominalmean$ bounds the left-hand side of \eqref{eq:deg-gimfr-frozen-maximal} by $\bigO(\blocklength) = \bigO(\blocklength\capacitydistance(\capacityratio))$. It therefore suffices to consider $\capacitydistance(\capacityratio) \le \localradius[0]$, where the local estimates below apply.

  \noindent\textbf{Step 1: use a corrector for the nominal chain.}
  Assumption~\ref{ass:support-irreducibility} makes the optimal induced kernel $\inducedkernel[][*]$ irreducible on $\recstates$.  Therefore the centered Poisson equation
  \begin{equation*}
    (\identitymatrix - \inducedkernel[][*])\poissoncorrector = \conditionalmean[][*] - \nominalmean\onesvector,
    \qquad (\stateoccupation[][*])^\top \poissoncorrector = 0
  \end{equation*}
  has a bounded solution.  Extend $\poissoncorrector$ by zero on $\outsidestates$.

  \noindent\textbf{Step 2: isolate a bounded-increment martingale.}
  Define
  \begin{equation*}
    \martingale[][\prefixstep + 1] \coloneqq \oneperiodquantity[][\prefixstep] - \conditionalmean[\capacityratio](\stateproc[][\prefixstep])
    +\poissoncorrector(\stateproc[][\prefixstep + 1]) - (\inducedkernel[\capacityratio]\poissoncorrector)(\stateproc[][\prefixstep]).
  \end{equation*}
  Then $\expect[\martingale[][\prefixstep + 1] \mid \filtration[][\prefixstep]] = 0$, and the increments are uniformly bounded. Doob's square-integrable maximal inequality gives
  \begin{equation}
    \expect\max_{\poststep \le \blocklength}\left|\sum_{\prefixstep = 0}^{\poststep - 1}\martingale[][\prefixstep + 1]\right|
    = \bigO(\sqrt {\blocklength}).
    \label{eq:deg-gimfr-additive-martingale-bound}
  \end{equation}

  \noindent\textbf{Step 3: bound the remainder.}
  Set
  \[
    \approxerror[\capacityratio](\state) \coloneqq
    \conditionalmean[\capacityratio](\state) - \nominalmean - \poissoncorrector(\state) + (\inducedkernel[\capacityratio]\poissoncorrector)(\state).
  \]
  For $\state \in \recstates$, the nominal Poisson equation gives
  \[
    \approxerror[\capacityratio](\state)
    = [\conditionalmean[\capacityratio](\state) - \conditionalmean[][*](\state)]
    +[(\inducedkernel[\capacityratio] - \inducedkernel[][*])\poissoncorrector](\state).
  \]
  The first term is $\bigO(\capacitydistance(\capacityratio))$ by \eqref{eq:deg-gimfr-conditional-mean-close}; the second is of the same order by \eqref{eq:deg-gimfr-policy-close}.  Hence
  \[
    |\approxerror[\capacityratio](\state)| = \bigO(\capacitydistance(\capacityratio)),
    \qquad \state \in \recstates.
  \]
  On $\outsidestates$, boundedness of $\oneperiodquantity[][\poststep]$ and $\poissoncorrector$ gives $|\approxerror[\capacityratio](\state)| = \bigO(1)$.  The exact pathwise decomposition is
  \begin{align*}
    \sum_{\prefixstep = 0}^{\poststep - 1}(\oneperiodquantity[][\prefixstep] - \nominalmean)
    = {} & \sum_{\prefixstep = 0}^{\poststep - 1}\martingale[][\prefixstep + 1] + \poissoncorrector(\stateproc[][0]) - \poissoncorrector(\stateproc[][\poststep])\\
    & +\sum_{\prefixstep = 0}^{\poststep - 1}\approxerror[\capacityratio](\stateproc[][\prefixstep]).
  \end{align*}
  Consequently,
  \[
    \begin{aligned}
      \max_{\poststep \le \blocklength}\left|\sum_{\prefixstep = 0}^{\poststep - 1}(\oneperiodquantity[][\prefixstep] - \nominalmean)\right|
      \le {} &
      \max_{\poststep \le \blocklength}\left|\sum_{\prefixstep = 0}^{\poststep - 1}\martingale[][\prefixstep + 1]\right|
      +2\|\poissoncorrector\|_\infty\\
      & +\bigO\bigl(\blocklength\capacitydistance(\capacityratio)\bigr)
      +\bigO\left(\sum_{\prefixstep = 0}^{\blocklength - 1}\indicator\{\stateproc[][\prefixstep] \in \outsidestates\}\right).
    \end{aligned}
  \]
  Taking expectations and applying \eqref{eq:deg-gimfr-outside-visits} and \eqref{eq:deg-gimfr-additive-martingale-bound} proves \eqref{eq:deg-gimfr-frozen-maximal}.
\end{proof}

\subsubsection{Stopping control}

Before proceeding to the next lemma, we make the following definitions.
The successful-entrance
event is
\begin{equation*}
  \event[\horizon]
  \coloneqq
  \{\entranceduration<\horizon\}
  \cap\bigcap_{\res\in[\numres]}
    \{\entranceconsumption[\res]
      <\horizon\nominalratio[\res]\}.
\end{equation*}
Thus, on $\event[\horizon]$, at least one round and a strictly positive amount
of every resource remain at entrance.  On this event set
\begin{equation*}
  \period[0]\coloneqq\entrancelast,
  \qquad
  \epochhorizon[0]\coloneqq\entrancerest,
  \qquad
  \capacityratio[][0]
    \coloneqq\frac{\entranceinventory}{\epochhorizon[0]}.
\end{equation*}
The value of $\capacityratio[][0]$ off $\event[\horizon]$ is immaterial; set
it equal to $\nominalratio$ there.  On $\event[\horizon]$, restart the clipped
power schedule \eqref{eq:horizon-schedule} from $\epochhorizon[0]$.
With minor abuse of notation, let $\numepochs$ be the first
schedule index for which $\epochhorizon[\numepochs]\le\cutrest$.
Set
\begin{equation*}
  \sqrtepoch[\max]
    \coloneqq\min\{\epoch\ge1:\schedulepower[][\epoch]\le1/2\},
  \qquad
  \sqrtepoch
    \coloneqq\min\{\numepochs,\sqrtepoch[\max]\}.
\end{equation*}
The restarted schedule and $\sqrtepoch$ are determined by the entrance
history, whereas $\sqrtepoch[\max]$ is a deterministic constant.

Define the regular-entry event by
\begin{equation*}
  \goodevent[0]
  \coloneqq\event[\horizon]
    \cap\{\capacitydistance(\capacityratio[][0])<\localradius[0]\}.
\end{equation*}

On $\goodevent[0]$, let $\analyticalstop$ be the first of the following times:
\begin{enumerate}[label=\textup{(\alph*)},leftmargin=2.2em]
  \item a resolving time $\period[\epoch]$, $1\le\epoch<\sqrtepoch$, at
  which either the pre-action locality check
  $\capacitydistance(\capacityratio[][\epoch])<\localradius[0]$ or the
  pre-action target-state check
  $\stateproc[][\period[\epoch]]\in\recstates$ fails;
  \item the first pre-action round $\period$ in an epoch
  $\epoch<\sqrtepoch$ at which the resource near-depletion check
  $\min_{\res\in[\numres]}\inventory[\res][\period]
  \le\maxconsumption$ fires;
  \item the scheduled cutoff $\period[\sqrtepoch]$, which is fixed at entry.
\end{enumerate}

For each fixed $0\le\epoch\le\sqrtepoch[\max]$, define
$\reachedevent[\epoch]$ as follows.  On
$\goodevent[0]\cap\{\epoch\le\sqrtepoch\}$, it is the event
$\{\analyticalstop\ge\period[\epoch]\}$; off that event, set
$\indicator\{\reachedevent[\epoch]\}=0$.  Thus
$\reachedevent[\epoch]$ means that the analysis reaches
$\period[\epoch]$ before applying the checks there.

For every epoch $\epoch<\sqrtepoch$ that begins with a local parameter and a
state in $\recstates$, couple the actual path to its \emph{full frozen
continuation}: this continuation uses the policy assigned to epoch $\epoch$
for all $\blocklength[\epoch]$ scheduled rounds, even if the resource near-depletion check would
stop the actual path.  Let $\consumptionproc[\epoch,\res](\prefixstep)$ be its
cumulative consumption of resource $\res$ in the first $\prefixstep$ rounds,
write $\consumptionproc[\epoch](\prefixstep)$ for the vector of these totals,
and let $\stateproc[][\period[\epoch]+\prefixstep,\mathrm{hyp}]$ be its state
after those rounds.  The actual and hypothetical paths agree until resource near-depletion
activation.  At the hypothetical endpoint define
\begin{equation}
  \inventorydeviation[][\epoch+1,\mathrm{hyp}]
  \coloneqq
    \inventory[][\period[\epoch]]
    -\consumptionproc[\epoch](\blocklength[\epoch])
    -\epochhorizon[\epoch+1]\nominalratio,
  \qquad
  \onesideddeviation[\epoch+1][\mathrm{hyp}]
  \coloneqq
    \inventorydeviationnorm(
      \inventorydeviation[][\epoch+1,\mathrm{hyp}]).
  \label{eq:deg-gimfr-hypothetical-endpoint}
\end{equation}
All full-epoch conditional estimates below concern this continuation and do
not condition on the actual path surviving the epoch.

At entrance, define on all of $\event[\horizon]$
\begin{equation*}
  \inventorydeviation[][0]
    \coloneqq\entranceinventory-\epochhorizon[0]\nominalratio
    =\entranceinventorydeviation,
  \qquad
  \onesideddeviation[0]
    \coloneqq\inventorydeviationnorm(\inventorydeviation[][0]),
\end{equation*}
and set both quantities to zero off $\event[\horizon]$.  For
$1\le\epoch\le\sqrtepoch[\max]$, define
\begin{equation*}
  \inventorydeviation[][\epoch]
  \coloneqq
  \begin{cases}
    \inventory[][\period[\epoch]]
      -\epochhorizon[\epoch]\nominalratio,
      &\text{on }\reachedevent[\epoch],\\
    0,&\text{otherwise},
  \end{cases}
  \qquad
  \onesideddeviation[\epoch]
    \coloneqq\inventorydeviationnorm(\inventorydeviation[][\epoch]).
\end{equation*}
Whenever the corresponding ratio and deviation are defined---on
$\event[\horizon]$ for $\epoch=0$ and on $\reachedevent[\epoch]$ for
$\epoch\ge1$---we have
\begin{equation}
  \epochhorizon[\epoch]
    \capacitydistance(\capacityratio[][\epoch])
  \le\onesideddeviation[\epoch].
  \label{eq:deg-gimfr-distance-deviation-relation}
\end{equation}
Indeed, after multiplying by $\epochhorizon[\epoch]$, the distance on the
left is the maximum of the relevant coordinate deviations, whereas the norm
on the right is their sum.  Any other quantity multiplied by a reached-event
indicator is assigned zero off that event, and an empty epoch sum is zero.

\begin{lemma}
  \label{lem:deg-gimfr-epoch-control}
  For every fixed $0\le\epoch\le\sqrtepoch[\max]$,
  \begin{equation}
    \expect[\onesideddeviation[\epoch]\indicator\{\reachedevent[\epoch]\}]
    = \bigO(\sqrt {\horizon}).
    \label{eq:deg-gimfr-Dk-sqrt}
  \end{equation}
  Moreover,
  \begin{align}
    \expect\left[\sum_{\epoch = 0}^{\sqrtepoch - 1}\onesideddeviation[\epoch]\indicator\{\reachedevent[\epoch], \ \capacitydistance(\capacityratio[][\epoch]) < \localradius[0], \
    \stateproc[][\period[\epoch]] \in \recstates\}\right]
    & = \bigO(\sqrt {\horizon}),
    \label{eq:deg-gimfr-sum-Dk}\\
    \expect\left[
    \left(\subhorizon[][\analyticalstop] + \sum_{\res \in \bindingset}\inventory[\res][\analyticalstop]\right)
    \indicator\{\goodevent[0]\}
    \right]
    & = \bigO(\sqrt {\horizon}).
    \label{eq:deg-gimfr-terminal-envelope}
  \end{align}
  In particular, after multiplication by $\indicator\{\goodevent[0]\}$, both the remaining horizon and the total binding-resource inventory at the analytical stop have expectations of order $\bigO(\sqrt {\horizon})$.
\end{lemma}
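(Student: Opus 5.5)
I would prove \eqref{eq:deg-gimfr-Dk-sqrt} by induction on $\epoch$, using a one-epoch propagation inequality, and then read off \eqref{eq:deg-gimfr-sum-Dk} and \eqref{eq:deg-gimfr-terminal-envelope} from it. The base case $\epoch=0$ is \eqref{eq:entrance-imbalance-bound} in Lemma~\ref{lem:entrance-bookkeeping}, which gives $\expect[\onesideddeviation[0]]=\bigO(1)$.

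\textbf{Propagation step.} Condition on $\filtration[][\period[\epoch]]$ on the event $\reachedevent[\epoch]\cap\{\capacitydistance(\capacityratio[][\epoch])<\localradius[0],\stateproc[][\period[\epoch]]\in\recstates\}$, and work with the full frozen continuation. Write the endpoint deviation as
\[
  \inventorydeviation[][\epoch+1,\mathrm{hyp}]
  =\inventorydeviation[][\epoch]+\blocklength[\epoch]\nominalratio-\consumptionproc[\epoch](\blocklength[\epoch]).
\]
Apply Lemma~\ref{lem:deg-gimfr-frozen-additive} with $\oneperiodquantity=\consumptionproc[\res]$ and $\conditionalmean[][*]=\stateconsumption[\res](\nominalratio,\cdot)$. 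The closeness hypothesis \eqref{eq:deg-gimfr-conditional-mean-close} follows from \eqref{eq:deg-gimfr-policy-close}. The nominal mean $\nominalmean$ equals $\nominalratio[\res]$ for $\res\in\bindingset$ and is at most $\nominalratio[\res]$ otherwise, which is enough for the one-sided norm. Together with \eqref{eq:deg-gimfr-distance-deviation-relation} and $\blocklength[\epoch]\le\epochhorizon[\epoch]$, this gives
\[
  \expect\bigl[\onesideddeviation[\epoch+1][\mathrm{hyp}]\mid\filtration[][\period[\epoch]]\bigr]
  \le\onesideddeviation[\epoch]+C\Bigl(\sqrt{\epochhorizon[\epoch]}+\tfrac{\blocklength[\epoch]}{\epochhorizon[\epoch]}\onesideddeviation[\epoch]+1\Bigr)
  \le(1+C)\onesideddeviation[\epoch]+C\sqrt{\horizon}+C.
\]
On $\reachedevent[\epoch+1]$ the actual path agrees with the continuation, so $\onesideddeviation[\epoch+1]=\onesideddeviation[\epoch+1][\mathrm{hyp}]$ there. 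If instead the target-state or locality check failed at $\period[\epoch]$, the analysis stops and $\reachedevent[\epoch+1]$ fails, so the indicator vanishes. Taking expectations and inducting over the constant number $\sqrtepoch[\max]$ of epochs gives \eqref{eq:deg-gimfr-Dk-sqrt}, with constant $(1+C)^{\sqrtepoch[\max]}$. Since $\sqrtepoch\le\sqrtepoch[\max]$, summing the indicated terms over $\epoch$ gives \eqref{eq:deg-gimfr-sum-Dk}.

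\textbf{Terminal envelope.} I would split by the reason for stopping.
\begin{itemize}
  \item \emph{Scheduled cutoff $\period[\sqrtepoch]$.} If $\sqrtepoch=\sqrtepoch[\max]$, then $\epochhorizon[\sqrtepoch]\le\lceil\horizon^{\schedulepower^{\sqrtepoch[\max]}}\rceil+\bigO(1)=\bigO(\sqrt{\horizon})$, because $\epochhorizon[\epoch]\approx\epochhorizon[0]^{\schedulepower^\epoch}$ up to ceilings. If $\sqrtepoch=\numepochs$, then $\epochhorizon[\sqrtepoch]\le\cutrest$. In both cases the binding inventory is at most $\epochhorizon[\sqrtepoch]\nominalratio[\res]+|\inventorydeviation[\res][\sqrtepoch]|$, and the deviation term is controlled by \eqref{eq:deg-gimfr-Dk-sqrt}.
  \item \emph{Locality or target-state failure at $\period[\epoch]$.} The remaining quantities are $\epochhorizon[\epoch]$ and $\epochhorizon[\epoch]\nominalratio+\inventorydeviation[][\epoch]$. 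The deviation part is $\bigO(\onesideddeviation[\epoch])$.
    \begin{itemize}
      \item A locality failure means $\capacitydistance(\capacityratio[][\epoch])\ge\localradius[0]$. By \eqref{eq:deg-gimfr-distance-deviation-relation} this gives $\epochhorizon[\epoch]\le\onesideddeviation[\epoch]/\localradius[0]$, bounded in expectation by \eqref{eq:deg-gimfr-Dk-sqrt}.
      \item A target-state failure has probability at most $\bigO(\blocklength[\epoch-1]\capacitydistance(\capacityratio[][\epoch-1]))/\blocklength[\epoch-1]\cdot$ per round by Lemma~\ref{lem:deg-gimfr-outside-domination}. Conditioning on $\filtration[][\period[\epoch-1]]$, its contribution is $\epochhorizon[\epoch]\cdot\bigO(\capacitydistance(\capacityratio[][\epoch-1]))\le\bigO(\onesideddeviation[\epoch-1])$, again using \eqref{eq:deg-gimfr-distance-deviation-relation}.
    \end{itemize}
  \item \emph{Near-depletion during epoch $\epoch$.} The binding inventory is then at most $\bigO(\maxconsumption)$ in at least one coordinate, but other binding coordinates may be large. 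I would bound the remaining horizon by $\epochhorizon[\epoch]$ and control the probability of this event. Near-depletion requires the cumulative consumption deviation to exceed roughly $\epochhorizon[\epoch+1]\minimumcapacityratio-\onesideddeviation[\epoch]$. By Markov's inequality applied to the maximal bound of Lemma~\ref{lem:deg-gimfr-frozen-additive},
    \[
      \epochhorizon[\epoch]\,\prob(\cdot)\lesssim\frac{\epochhorizon[\epoch]}{\epochhorizon[\epoch+1]}\Bigl(\sqrt{\epochhorizon[\epoch]}+\onesideddeviation[\epoch]\Bigr).
    \]
\end{itemize}

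\textbf{Main obstacle.} The ratio $\epochhorizon[\epoch]/\epochhorizon[\epoch+1]\approx\epochhorizon[\epoch]^{1-\schedulepower}$ is not bounded, so the near-depletion case is the step I expect to be hardest. For this case I would first try a sharper tail bound on the martingale part of the decomposition, using Azuma (Fact~\ref{fact:maximal-azuma}) to get exponential tails in $\epochhorizon[\epoch+1]^2/\blocklength[\epoch]$. I would keep the $\bigO(\blocklength\capacitydistance)$ and outside-visit terms in expectation, and use Markov's inequality only on the deterministic drift part, which is $\lesssim\onesideddeviation[\epoch]$. If that fails, the fallback is to bound the terminal inventory directly by the endpoint deviation: on near-depletion the remaining binding inventories are at most $\epochhorizon[\epoch+1]\nominalratio$ plus a martingale maximum of size $\bigO(\sqrt{\horizon})$ in expectation.
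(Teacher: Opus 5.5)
\textbf{Gap: the resource near-depletion case of \eqref{eq:deg-gimfr-terminal-envelope}.} Your induction for \eqref{eq:deg-gimfr-Dk-sqrt} and \eqref{eq:deg-gimfr-sum-Dk} is essentially the paper's argument. So are your charges for the scheduled cutoff, locality failures, and target-state failures. The near-depletion case you flag but do not close.

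Your primary route cannot give $\bigO(\sqrt{\horizon})$. That route bounds the remaining horizon by $\epochhorizon[\epoch]$ and then estimates the probability of near-depletion. Azuma does handle the martingale part. But the drift part has size up to $\blocklength[\epoch]\capacitydistance(\capacityratio[][\epoch])\le\onesideddeviation[\epoch]$, and you control this (like the number of outside-state visits) only in expectation. So Markov's inequality brings back the factor $\epochhorizon[\epoch]/\epochhorizon[\epoch+1]\approx\epochhorizon[\epoch][1-\schedulepower]$. For $\epoch\ge1$, where $\expect[\onesideddeviation[\epoch]\indicator\{\reachedevent[\epoch]\}]$ is only $\bigO(\sqrt{\horizon})$, the product exceeds $\sqrt{\horizon}$ by a factor polynomial in $\horizon$.

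Your fallback points the wrong way. After $\poststep\le\blocklength[\epoch]$ actions the remaining horizon is $\epochhorizon[\epoch]-\poststep\ge\epochhorizon[\epoch+1]$, and inventory only decreases within an epoch. So an undepleted binding coordinate holds about $(\epochhorizon[\epoch]-\poststep)\nominalratio[\res]$. Up to deviations, that is at least $\epochhorizon[\epoch+1]\nominalratio[\res]$, not at most. The fallback also says nothing about the $\subhorizon[][\analyticalstop]$ term.

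\textbf{The missing idea.} The near-depleted coordinate bounds the remaining horizon pathwise, so you never need the probability of the event. Suppose the check fires after $\poststep$ actions of a regular epoch $\epoch$. For every resource, write $\inventory[\res][\period[\epoch]+\poststep]=(\epochhorizon[\epoch]-\poststep)\nominalratio[\res]+\inventorydeviation[\res][\epoch]+\poststep\resourceslack[\res][*]-[\consumptionproc[\epoch,\res](\poststep)-\poststep\stationaryconsumption[\res](\nominalratio)]$, with $\resourceslack[\res][*]=0$ on $\bindingset$. Since $-\inventorydeviation[\res][\epoch]\le\onesideddeviation[\epoch]$ and some coordinate is at most $\maxconsumption$,
\[
  \subhorizon[][\analyticalstop]\min_{\res}\nominalratio[\res]
  =(\epochhorizon[\epoch]-\poststep)\min_{\res}\nominalratio[\res]
  \le\maxconsumption+\onesideddeviation[\epoch]
  +\sum_{\res}\max_{\prefixstep\le\blocklength[\epoch]}
  \bigl|\consumptionproc[\epoch,\res](\prefixstep)-\prefixstep\stationaryconsumption[\res](\nominalratio)\bigr|.
\]
The right-hand side is nonnegative. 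So you may replace the first-activation indicator by the reached regular-start indicator of epoch $\epoch$ before conditioning on $\filtration[][\period[\epoch]]$. Your maximal estimate $\bigO(\sqrt{\epochhorizon[\epoch]}+\onesideddeviation[\epoch]+1)$ from Lemma~\ref{lem:deg-gimfr-frozen-additive}, together with \eqref{eq:deg-gimfr-Dk-sqrt}, then gives $\bigO(\sqrt{\horizon})$.

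At that stop, each binding inventory is at most $\subhorizon[][\analyticalstop]\nominalratio[\res]+|\inventorydeviation[\res][\epoch]|+\max_{\prefixstep}|\consumptionproc[\epoch,\res](\prefixstep)-\prefixstep\nominalratio[\res]|$, which the same three terms control. Summing over at most $\sqrtepoch[\max]$ epochs completes \eqref{eq:deg-gimfr-terminal-envelope}.
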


\begin{proof}
  If $\sqrtepoch = 0$, then $\analyticalstop = \entrancelast$ and $\epochhorizon[0] \le \cutrest$. On $\goodevent[0]$, each binding inventory is at most $(\nominalratio[\res] + \localradius[0])\epochhorizon[0]$, so the terminal envelope on these histories is $\bigO(1)$ and the epoch sum is empty. Moreover, $\expect[\onesideddeviation[0]\indicator\{\goodevent[0]\}] = \bigO(1)$ by Lemma~\ref{lem:entrance-bookkeeping}. In what follows, we therefore consider epochs only on histories with $\sqrtepoch \ge 1$; all sums and recursions include the corresponding reach indicators.

  \noindent\textbf{Step 1: the schedule reaches the square-root horizon after a fixed number of epochs.}
  At every schedule index $\epoch \le \sqrtepoch$, the ceiling recursion gives
  \[
    \epochhorizon[\epoch] = \bigO(\horizon[][\schedulepower[][\epoch]] + 1).
  \]
  If $\sqrtepoch=\numepochs$, then
  $\epochhorizon[\sqrtepoch]
  =\epochhorizon[\numepochs]\le\cutrest$.
  Otherwise $\schedulepower[][\sqrtepoch]\le1/2$. In either case,
  \begin{equation}
    \epochhorizon[\sqrtepoch] = \bigO(\sqrt {\horizon}).
    \label{eq:deg-gimfr-hJ}
  \end{equation}
  Moreover, $\sqrtepoch$ is bounded by a constant depending only on $\schedulepower$, even when the fixed cutoff is reached first.

  \noindent\textbf{Step 2: maximal resource deviations within one epoch.}
  Epoch $0$ is protected and uses the nominal policy $\policy[\nominalratio]$, not the selector at $\capacityratio[][0]$. The nominal Poisson decomposition gives, conditional on the entrance history,
  \begin{equation}
    \expect\left[
    \max_{\poststep \le \blocklength[0]}|\consumptionproc[0, \res](\poststep) - \poststep\stationaryconsumption[\res](\nominalratio)|
    \mid \filtration[][\period[0]]\right]
    = \bigO(\sqrt{\epochhorizon[0]} + 1).
    \label{eq:deg-gimfr-protected-resource-maximal}
  \end{equation}
  For $\epoch \ge 1$, suppose the analysis has reached $\period[\epoch]$ and passed the checks at $\period[\epoch]$. Lemma~\ref{lem:deg-gimfr-frozen-additive}, applied to the one-period consumption of resource $\res$, gives
  \begin{equation}
    \expect\left[
    \max_{\poststep \le \blocklength[\epoch]}|\consumptionproc[\epoch, \res](\poststep) - \poststep\stationaryconsumption[\res](\nominalratio)|
    \mid \filtration[][\period[\epoch]]\right]
    = \bigO(\sqrt{\epochhorizon[\epoch]} + \onesideddeviation[\epoch] + 1).
    \label{eq:deg-gimfr-resource-maximal}
  \end{equation}
  Indeed, the conditional mean on $\recstates$ differs from the nominal mean by $\bigO(\capacitydistance(\capacityratio[][\epoch]))$ by \eqref{eq:deg-gimfr-policy-close}, and
  $\blocklength[\epoch]\capacitydistance(\capacityratio[][\epoch]) \le \epochhorizon[\epoch]\capacitydistance(\capacityratio[][\epoch]) \le \onesideddeviation[\epoch]$.

  \noindent\textbf{Step 3: propagate the one-sided inventory deviation.}
  For $\res \in \bindingset$, $\stationaryconsumption[\res](\nominalratio) = \nominalratio[\res]$, and at a hypothetical full-epoch endpoint,
  \[
    \inventorydeviation[\res][\epoch + 1, \mathrm{hyp}] = \inventorydeviation[\res][\epoch] - [\consumptionproc[\epoch, \res](\blocklength[\epoch]) - \blocklength[\epoch]\nominalratio[\res]].
  \]
  For $\res \notin \bindingset$, $\stationaryconsumption[\res](\nominalratio) = \nominalratio[\res] - \resourceslack[\res][*]$, and
  \[
    \inventorydeviation[\res][\epoch + 1, \mathrm{hyp}] = \inventorydeviation[\res][\epoch] + \blocklength[\epoch]\resourceslack[\res][*] - [\consumptionproc[\epoch, \res](\blocklength[\epoch]) - \blocklength[\epoch]\stationaryconsumption[\res](\nominalratio)].
  \]
  The favorable drift $\blocklength[\epoch]\resourceslack[\res][*]$ can only reduce the negative part, so
  \[
    (-\inventorydeviation[\res][\epoch + 1, \mathrm{hyp}])_+
    \le (-\inventorydeviation[\res][\epoch])_+
    +|\consumptionproc[\epoch, \res](\blocklength[\epoch]) - \blocklength[\epoch]\stationaryconsumption[\res](\nominalratio)|.
  \]
  Taking the one-sided norm specified in
  \eqref{eq:deg-gimfr-hypothetical-endpoint}, summing over resources, and using
  \eqref{eq:deg-gimfr-protected-resource-maximal} in epoch $0$ gives
  \begin{equation*}
    \expect[\onesideddeviation[1][\mathrm{hyp}] \mid \filtration[][\period[0]]]
    = \bigO(\onesideddeviation[0] + \sqrt{\epochhorizon[0]} + 1).
  \end{equation*}
  For $1 \le \epoch < \sqrtepoch$, \eqref{eq:deg-gimfr-resource-maximal} similarly gives
  \begin{equation*}
    \expect[\onesideddeviation[\epoch + 1][\mathrm{hyp}] \mid \filtration[][\period[\epoch]]]
    = \bigO(\onesideddeviation[\epoch] + \sqrt{\epochhorizon[\epoch]} + 1).
  \end{equation*}

  On $\goodevent[0]$,
  \[
    \inventorydeviation[][0] = \entranceinventorydeviation = \entranceduration\nominalratio - \entranceconsumption,
  \]
  so Lemma~\ref{lem:entrance-bookkeeping} gives
  \[
    \expect[\onesideddeviation[0]\indicator\{\goodevent[0]\}] = \bigO(1).
  \]
  If the analysis reaches $\period[\epoch + 1]$, then epoch $\epoch$ did not stop for resource near-depletion, so the actual and hypothetical endpoint inventories coincide. Consequently,
  \[
    \begin{aligned}
      \expect[\onesideddeviation[\epoch + 1]\indicator\{\reachedevent[\epoch + 1]\}]
      & \le
      \expect[\onesideddeviation[\epoch + 1][\mathrm{hyp}]\indicator\{\reachedevent[\epoch], \epoch < \sqrtepoch, \capacitydistance(\capacityratio[][\epoch]) < \localradius[0],
      \stateproc[][\period[\epoch]] \in \recstates\}]\\
      & \le
      \bigO\left(\expect[\onesideddeviation[\epoch]\indicator\{\reachedevent[\epoch]\}]
      +\sqrt{\horizon} + 1\right).
    \end{aligned}
  \]
  Here the start-event indicator is measurable at $\period[\epoch]$, and $\sqrt{\epochhorizon[\epoch]} \le \sqrt{\horizon}$ on that event. Iterating over the deterministic index range $0 \le \epoch < \sqrtepoch[\max]$, with the reach indicators equal to zero after the realized cutoff, proves
  \eqref{eq:deg-gimfr-Dk-sqrt}. Summing over the same fixed set of indices proves
  \eqref{eq:deg-gimfr-sum-Dk}.

  \noindent\textbf{Step 4: charge the analytical stopping events.}
  At every reached resolving point,
  \[
    \epochhorizon[\epoch]\capacitydistance(\capacityratio[][\epoch]) \le \onesideddeviation[\epoch].
  \]
  Therefore
  \begin{equation*}
    \epochhorizon[\epoch]\indicator\{\reachedevent[\epoch], \capacitydistance(\capacityratio[][\epoch]) \ge \localradius[0]\}
    \le \localradius[0][-1]\onesideddeviation[\epoch]\indicator\{\reachedevent[\epoch]\},
  \end{equation*}
  whose expectation is $\bigO(\sqrt {\horizon})$ by \eqref{eq:deg-gimfr-Dk-sqrt}.

  Closure of the optimal target state class under the optimal induced kernel gives
  \[
    \prob(\stateproc[][\period[1]] \notin \recstates, \reachedevent[1]) = 0.
  \]
  For $2 \le \epoch < \sqrtepoch$, Lemma~\ref{lem:deg-gimfr-outside-domination}, applied to the preceding frozen epoch, gives
  \[
    \prob(\stateproc[][\period[\epoch], \mathrm{hyp}] \notin \recstates \mid \filtration[][\period[\epoch - 1]])
    = \bigO(\capacitydistance(\capacityratio[][\epoch - 1]))
  \]
  on every reached regular start of epoch $\epoch - 1$ with $\epoch - 1 < \sqrtepoch$. On $\reachedevent[\epoch]$, the actual and hypothetical endpoints coincide; this inclusion, rather than conditioning on survival, transfers the estimate to the reached endpoint. Since $\epochhorizon[\epoch] \le \epochhorizon[\epoch - 1]$,
  \[
    \begin{aligned}
      \expect\!\left[
      \epochhorizon[\epoch]\indicator\{\reachedevent[\epoch], \stateproc[][\period[\epoch]] \notin \recstates\}
      \right]
      & \le
      \bigO\left(\expect\!\left[
      \epochhorizon[\epoch - 1]\capacitydistance(\capacityratio[][\epoch - 1])
      \indicator\{\reachedevent[\epoch - 1], \epoch - 1 < \sqrtepoch, \capacitydistance(\capacityratio[][\epoch - 1]) < \localradius[0],
      \stateproc[][\period[\epoch - 1]] \in \recstates\}
      \right]\right)\\
      & = \bigO\left(\expect[\onesideddeviation[\epoch - 1]\indicator\{\reachedevent[\epoch - 1]\}]\right)
      = \bigO(\sqrt {\horizon}).
    \end{aligned}
  \]

  Now suppose resource near-depletion activates after $\poststep$ completed actions of a regular
  epoch $\epoch<\sqrtepoch$.  There are then
  $\epochhorizon[\epoch]-\poststep$ rounds remaining. For
  $\res\in\bindingset$,
  \[
    \inventory[\res][\period[\epoch] + \poststep]
    = (\epochhorizon[\epoch]-\poststep)\nominalratio[\res]
      + \inventorydeviation[\res][\epoch]
      - [\consumptionproc[\epoch, \res](\poststep)
        - \poststep\nominalratio[\res]],
  \]
  and for $\res \notin \bindingset$,
  \[
    \inventory[\res][\period[\epoch] + \poststep]
    = (\epochhorizon[\epoch]-\poststep)\nominalratio[\res]
      + \inventorydeviation[\res][\epoch]
      + \poststep\resourceslack[\res][*]
      - [\consumptionproc[\epoch, \res](\poststep)
        - \poststep\stationaryconsumption[\res](\nominalratio)].
  \]
  If some inventory is at most $\maxconsumption$, then in either case
  \begin{equation}
    (\epochhorizon[\epoch]-\poststep)\min_{\res}\nominalratio[\res]
    \le \maxconsumption + \onesideddeviation[\epoch] +
    \sum_{\res}\max_{\prefixstep \le \blocklength[\epoch]}|\consumptionproc[\epoch, \res](\prefixstep) - \prefixstep\stationaryconsumption[\res](\nominalratio)|.
    \label{eq:deg-gimfr-resource-near-depletion-pathwise}
  \end{equation}
  Multiply \eqref{eq:deg-gimfr-resource-near-depletion-pathwise} by the event that resource near-depletion first activates in this epoch. This event is contained in its reached regular-start event; because the right-hand side is nonnegative, it can be bounded using that start event before taking conditional expectations. For $\epoch = 0$, the maximal terms are controlled by
  \eqref{eq:deg-gimfr-protected-resource-maximal}; for $\epoch \ge 1$ they are controlled by \eqref{eq:deg-gimfr-resource-maximal}. Together with \eqref{eq:deg-gimfr-Dk-sqrt}, this bounds the expected remaining horizon at a resource near-depletion stop by $\bigO(\sqrt {\horizon})$. Summing over at most $\sqrtepoch[\max]$ epochs does not change the order. If the analysis instead reaches $\period[\sqrtepoch]$, \eqref{eq:deg-gimfr-hJ} gives $\subhorizon[][\analyticalstop] = \epochhorizon[\sqrtepoch] = \bigO(\sqrt {\horizon})$. Combining the ratio, outside-state, resource near-depletion, and scheduled-cutoff bounds yields
  \[
    \expect[\subhorizon[][\analyticalstop]\indicator\{\goodevent[0]\}] = \bigO(\sqrt {\horizon}).
  \]

  It remains to control binding inventories at $\analyticalstop$. At a resolving stop $\period[\epoch]$,
  \[
    \inventory[\res][\period[\epoch]] = \epochhorizon[\epoch]\nominalratio[\res] + \inventorydeviation[\res][\epoch] \le \epochhorizon[\epoch]\nominalratio[\res] + \onesideddeviation[\epoch],
    \qquad \res \in \bindingset.
  \]
  The preceding event-specific bounds control $\epochhorizon[\epoch]$, while
  \eqref{eq:deg-gimfr-Dk-sqrt} controls $\onesideddeviation[\epoch]$. At a resource near-depletion stop in epoch $\epoch$,
  \[
    \inventory[\res][\analyticalstop] \le \subhorizon[][\analyticalstop]\nominalratio[\res] + |\inventorydeviation[\res][\epoch]|
    +\max_{\prefixstep \le \blocklength[\epoch]}|\consumptionproc[\epoch, \res](\prefixstep) - \prefixstep\nominalratio[\res]|,
    \qquad \res \in \bindingset.
  \]
  The resource near-depletion estimate, \eqref{eq:deg-gimfr-Dk-sqrt}, and the appropriate maximal resource estimate control the three terms. Summing over the fixed set of binding resources proves \eqref{eq:deg-gimfr-terminal-envelope}.
\end{proof}

\subsubsection{Finishing the proof}

\begin{proof}
  We proceed in three steps.

  \noindent\textbf{Step 1: entrance paths and the stopped comparison.}
  Use the entrance event, initial post-entry ratio and deviation, regular-entry
  event, and analytical stop defined in the setup preceding
  Lemma~\ref{lem:deg-gimfr-epoch-control}.
  Lemma~\ref{lem:entrance-bookkeeping} gives
  \begin{equation}
    \expect[\onesideddeviation[0]\indicator\{\event[\horizon]\}]
    =\bigO(1),
    \qquad
    \horizon\prob(\event[\horizon][\mathrm c])=\bigO(1).
    \label{eq:deg-gimfr-entry-D0}
  \end{equation}
  It also shows that, on $\event[\horizon]$, the entrance discrepancy
  \[
    Z_{\mathrm{ent}}
    \coloneqq
    \horizon \stationaryfluid(\nominalratio)-\entrancereward
    -\potential(\epochhorizon[0],\entranceinventory,\entrancestate)
  \]
  is bounded above by a nonnegative random variable with $\bigO(1)$
  expectation.  In particular,
  \begin{equation}
    \expect[Z_{\mathrm{ent}}^+
      \indicator\{\event[\horizon]\}]=\bigO(1).
    \label{eq:deg-gimfr-entrance-potential-cost}
  \end{equation}

  On $\event[\horizon]\setminus\goodevent[0]$, the
  distance--deviation relation \eqref{eq:deg-gimfr-distance-deviation-relation}
  gives
  \[
    \epochhorizon[0]
    \le \frac{\onesideddeviation[0]}{\localradius[0]}.
  \]
  Since $\horizon=\entranceduration+\epochhorizon[0]$ after successful
  entrance, Assumption~\ref{ass:constant-entrance} and
  \eqref{eq:deg-gimfr-entry-D0} imply
  \begin{align}
    \expect[\horizon\indicator\{\goodevent[0][\mathrm c]\}]
    &\le \horizon\prob(\event[\horizon][\mathrm c])
      +\expect[\entranceduration]
      +\frac1{\localradius[0]}
        \expect[\onesideddeviation[0]\indicator\{\event[\horizon]\}]  \notag\\
    &=\bigO(1).
    \label{eq:deg-gimfr-irregular-entry-weight}
  \end{align}

  On $\goodevent[0]$, use the analytical stop controlled by
  Lemma~\ref{lem:deg-gimfr-epoch-control}.  Its resource near-depletion check is applied before
  an action when some inventory coordinate is at most $\maxconsumption$, so
  resource depletion cannot precede $\analyticalstop$.  Define a comparison
  policy as follows: after regular
  entry it agrees with IR up to $\analyticalstop$ and then always uses the
  null action; after a successful but nonregular entry it uses the null action
  immediately.  On entrance-failure paths it simply follows IR until the
  process stops.  Actual IR dominates this comparison pathwise, because the
  policies agree before analytical truncation and every later reward of IR is
  nonnegative.  For notational completeness, set
  $\analyticalstop=\entrancelast$ on
  $\event[\horizon]\setminus\goodevent[0]$.  On entrance-failure paths, any
  post-entry random variable below is assigned an arbitrary value, say zero;
  such variables only occur multiplied by $\indicator\{\goodevent[0]\}$.

  \noindent\textbf{Step 2: bound the accumulated dual slack.}
  Define the nonnegative nominal dual action and state slacks by
  \begin{align*}
    \dualloss[\state\req\act]
    &\coloneqq \statedual[\state\req][*]
      -\left[\reward(\state,\req,\act)
      -(\resourceprice[][*])^\top\consumption(\state,\req,\act)
      +\sum_{\state[][\prime]}
        \transition(\state[][\prime]\mid\state,\act)
        \statebias[\state[][\prime]][*]\right],\\
    \dualloss[\state]
    &\coloneqq \statebias[\state][*]+\averagegain[][*]
      -\sum_{\req}\prior[\req]\statedual[\state\req][*].
  \end{align*}
  Under a selected policy at parameter $\capacityratio$, its expected
  one-period slack at state $\state$ is
  \[
    \dualloss[\capacityratio](\state)
    \coloneqq \dualloss[\state]
      +\sum_{\req}\prior[\req]\sum_{\act}
        \policy[\capacityratio](\act\mid\state,\req)
        \dualloss[\state\req\act].
  \]
  If $\state\in\recstates$, complementary slackness makes the state slack
  zero and makes every action used by the nominal policy tight.  Hence
  \begin{align*}
    0\le\dualloss[\capacityratio](\state)
    &=\sum_{\req}\prior[\req]\sum_{\act}
      \bigl(\policy[\capacityratio](\act\mid\state,\req)
      -\policy[\nominalratio](\act\mid\state,\req)\bigr)
      \dualloss[\state\req\act] \\
    &= \bigO(\capacitydistance(\capacityratio)),
    \qquad \state\in\recstates,
  \end{align*}
  where the final estimate follows from \eqref{eq:deg-gimfr-policy-close} and the
  finiteness of the action-slack collection.  For $\state \in \outsidestates$, all slacks
  are bounded, so $\dualloss[\capacityratio](\state) = \bigO(1)$.
  Combining the two cases gives the pointwise bound
  \begin{equation}
    \dualloss[\capacityratio](\state)
    = \bigO(\capacitydistance(\capacityratio) + \indicator\{\state\in\outsidestates\}).
    \label{eq:deg-gimfr-dual-loss-all-states}
  \end{equation}

  Consider a reached epoch $1\le\epoch<\sqrtepoch$ whose resolving-point
  checks pass, and use the hypothetical full frozen continuation defined in the
  setup before Lemma~\ref{lem:deg-gimfr-epoch-control}.  Summing
  \eqref{eq:deg-gimfr-dual-loss-all-states} and then applying
  Lemma~\ref{lem:deg-gimfr-outside-domination} give
  \begin{align}
    &\expect\left[
      \sum_{\period=\period[\epoch]}^{\period[\epoch+1]-1}
      \dualloss[\capacityratio[][\epoch]]
        (\stateproc[][\period,\mathrm{hyp}])
      \ \middle|\ \filtration[][\period[\epoch]]\right]  \notag\\
    &\qquad\le
      \bigO\left(\blocklength[\epoch]
        \capacitydistance(\capacityratio[][\epoch])
      +\expect\left[
        \sum_{\period=\period[\epoch]}^{\period[\epoch+1]-1}
        \indicator\{\stateproc[][\period,\mathrm{hyp}]
          \in\outsidestates\}
        \ \middle|\ \filtration[][\period[\epoch]]\right]\right)  \notag\\
    &\qquad\le
      \bigO \left(\blocklength[\epoch]
        \capacitydistance(\capacityratio[][\epoch])\right)
      \le\bigO(\onesideddeviation[\epoch]).
    \label{eq:deg-gimfr-epoch-dual-loss}
  \end{align}
  The last inequality uses $\blocklength[\epoch]\le\epochhorizon[\epoch]$
  and \eqref{eq:deg-gimfr-distance-deviation-relation}.  If resource near-depletion stops the
  actual epoch early, its realized loss is a prefix of this nonnegative
  full-epoch loss, so the same estimate applies.

  The protected epoch $0$ has zero dual slack.  Indeed, the entrance state is
  in $\recstates$, stationarity of $\stateoccupation[][*]$ implies that
  $\recstates$ is closed under the nominal policy, and complementary slackness
  makes both the state slack and every used action slack zero there.  Set
  $\dualloss[][\period]=0$ in that epoch; in a later epoch $\epoch$, set
  $\dualloss[][\period]
   =\dualloss[\capacityratio[][\epoch]](\stateproc[][\period])$.  Thus
  \eqref{eq:deg-gimfr-epoch-dual-loss} and
  \eqref{eq:deg-gimfr-sum-Dk} yield
  \begin{equation}
    \expect\left[
      \sum_{\period=\entrancelast}^{\analyticalstop-1}
      \dualloss[][\period]
      \indicator\{\goodevent[0]\}\right]
    = \bigO(\sqrt{\horizon}).
    \label{eq:deg-gimfr-total-dual-loss}
  \end{equation}

  \noindent\textbf{Step 3: finish by the stopped-potential reduction.}
  Direct substitution of the slack definitions into the nominal dual
  constraints gives, on $\goodevent[0]$ and at every round before
  $\analyticalstop$,
  \begin{equation*}
    \expect\left[
      \rewardproc[][\period]
      +\potential(\subhorizon[][\period]-1,
        \inventory[][\period+1],\stateproc[][\period+1])
      \ \middle|\ \filtration[][\period]\right]
    =\potential(\subhorizon[][\period],\inventory[][\period],
      \stateproc[][\period])-\dualloss[][\period].
  \end{equation*}
  This includes the protected epoch because its slack is zero.  Let
  \[
    R_{\mathrm{post}}
    \coloneqq\sum_{\period=\entrancelast}^{\analyticalstop-1}
      \rewardproc[][\period]
  \]
  be the stopped comparison's post-entry reward.  Since
  $\analyticalstop\le\horizon+1$ is bounded, conditional finite-horizon
  telescoping from $\entrancelast$, followed by the zero-reward null actions
  after truncation and deletion of the negative part of the terminal
  potential, gives, on $\goodevent[0]$,
  \begin{equation}
    \begin{aligned}
      &\potential(\epochhorizon[0],\entranceinventory,\entrancestate)
      -\expect\left[R_{\mathrm{post}}
        \mid\filtration[][\entrancelast]\right]\\
      &\qquad\le
      \expect\left[
        \potential(\subhorizon[][\analyticalstop],
          \inventory[][\analyticalstop],
          \stateproc[][\analyticalstop])^+
        +\sum_{\period=\entrancelast}^{\analyticalstop-1}
          \dualloss[][\period]
        \ \middle|\ \filtration[][\entrancelast]\right].
    \end{aligned}
    \label{eq:deg-gimfr-stopped-potential}
  \end{equation}

  For completeness, complementary slackness gives
  $\resourceprice[\res][*]=0$ for $\res\notin\bindingset$.  Hence the positive
  terminal potential satisfies
  \[
    \potential(\subhorizon[][\analyticalstop],
      \inventory[][\analyticalstop],\stateproc[][\analyticalstop])^+
    \le \sum_{\res\in\bindingset}
      \resourceprice[\res][*]\inventory[\res][\analyticalstop]
      +|\averagegain[][*]|\subhorizon[][\analyticalstop]
      +\|\statebias[][*]\|_\infty.
  \]
  The terminal-envelope estimate
  \eqref{eq:deg-gimfr-terminal-envelope} therefore gives
  \begin{equation}
    \expect\left[
      \potential(\subhorizon[][\analyticalstop],
        \inventory[][\analyticalstop],\stateproc[][\analyticalstop])^+
      \indicator\{\goodevent[0]\}\right]
    \le\bigO(\sqrt{\horizon}).
    \label{eq:deg-gimfr-residual-potential}
  \end{equation}

  The event $\goodevent[0]$ is measurable at entrance.  Multiplying
  \eqref{eq:deg-gimfr-stopped-potential} by its indicator, taking
  expectations, and using \eqref{eq:deg-gimfr-residual-potential} and
  \eqref{eq:deg-gimfr-total-dual-loss} give
  \begin{equation}
    \expect\left[
      \bigl(\potential(\epochhorizon[0],\entranceinventory,\entrancestate)
      -R_{\mathrm{post}}\bigr)
      \indicator\{\goodevent[0]\}\right]
    \le\bigO(\sqrt{\horizon}).
    \label{eq:deg-gimfr-post-entry-potential-gap}
  \end{equation}

  Finally, the stopped comparison earns
  $\entrancereward+R_{\mathrm{post}}$ on $\goodevent[0]$, earns a
  nonnegative reward on its complement, and is dominated pathwise by IR\@.
  Therefore
  \begin{align*}
    &\horizon\stationaryfluid(\nominalratio)
      -\valuefn[\horizon][\infrequentalg]
        (\horizon\nominalratio,\state[][1]) \notag\\
    &\quad\le
      \expect\left[
        (\horizon \stationaryfluid(\nominalratio)-\entrancereward-R_{\mathrm{post}})
        \indicator\{\goodevent[0]\}\right]
      +\stationaryfluid(\nominalratio)\expect[\horizon\indicator\{\goodevent[0][\mathrm c]\}] \notag\\
    &\quad=
      \expect[Z_{\mathrm{ent}}\indicator\{\goodevent[0]\}]
      +\expect\left[
        \bigl(\potential(\epochhorizon[0],\entranceinventory,
          \entrancestate)-R_{\mathrm{post}}\bigr)
        \indicator\{\goodevent[0]\}\right]
      +\stationaryfluid(\nominalratio)\expect[\horizon\indicator\{\goodevent[0][\mathrm c]\}] \notag\\
    &\quad\le\bigO(\sqrt{\horizon}),
  \end{align*}
  where the last line follows from
  \eqref{eq:deg-gimfr-entrance-potential-cost},
  \eqref{eq:deg-gimfr-post-entry-potential-gap}, and
  \eqref{eq:deg-gimfr-irregular-entry-weight}; here we also used
  $\goodevent[0]\subseteq\event[\horizon]$ to bound the first expectation
  by \eqref{eq:deg-gimfr-entrance-potential-cost}.

  Hence the stationary-benchmark gap is $\bigO(\sqrt{\horizon})$.
  Lemma~\ref{lem:boundary-comparison} and
  $\valuefn[\horizon][*]\le
    \transientfluid[\horizon](\horizon\nominalratio,\state[][1])$
  now give \eqref{eq:deg-gimfr-fluid-sqrt} and
  \eqref{eq:deg-gimfr-optimal-sqrt}.
\end{proof}

%% file: appendices/07_2_boundary_cases_proofs.tex
\subsection{Proof of Proposition~\ref{prop:unique-degenerate-gmfr-linear}}
\label{app:proof:prop:unique-degenerate-gmfr-linear}

\begin{proof}
  There is one request type, one resource, and three states
  \[
    \states = \{\hardstateA, \hardstateB, \hardstateC\}.
  \]
  The initial state is $\hardstateA$, the initial inventory is $\inventory[][1] = \horizon$, and $\nominalratio = 1$.  The common action set contains the null action $\act[][0]$ and non-null actions $\act, \excursionact, \recoveryact$.  The relevant state-action pairs are
  \begin{equation*}
    \begin{array}{c|c|c|c|c}
      \text{state} & \text{action} & \text{reward} & \text{cost} & \text{next state}\\ \hline
      \hardstateA & \act & 1 & 3 / 2 & \hardstateB\\
      \hardstateB & \act & 1 & 1 / 2 & \hardstateA\\
      \hardstateB & \excursionact & 1 & 1 / 2 & \hardstateC\\
      \hardstateC & \recoveryact & 3 / 4 & 1 / 2 & \hardstateA.
    \end{array}
  \end{equation*}
  The null action has zero reward and cost and transitions according to
  \begin{equation*}
    \hardstateA \xrightarrow {\act[][0]}\hardstateB,
    \qquad
    \hardstateB \xrightarrow {\act[][0]}\hardstateA,
    \qquad
    \hardstateC \xrightarrow {\act[][0]}\hardstateC.
  \end{equation*}
  Every unspecified non-null state-action pair has reward zero, positive cost,
  and the same transition as the null action in that state, so it is weakly
  dominated by the null action and can be omitted when computing the fluid
  value.  This omission does not create a uniqueness issue.  Indeed, whenever
  the reduced optimizer identified below is capacity-binding, a full optimizer
  cannot place positive mass on an unspecified action: replacing that mass by
  the same mass on the null action preserves reward and state flow and strictly
  lowers resource use.  The resulting point would have to be the unique reduced
  optimizer, which binds the resource constraint, making the original point
  infeasible.  For ratios
  above one, uniqueness instead follows directly from the equality case in the
  reward bound.

  \noindent\textbf{Reduction of the stationary LP\@.}
  Let $\excursionmass$ be the stationary state-request-action mass of action $\excursionact$ in state $\hardstateB$, let $\nullmass[\hardstateC]$ be the null-action mass in state $\hardstateC$, and let $\nullmass[\hardstateA], \nullmass[\hardstateB]$ be the null-action masses in states $\hardstateA, \hardstateB$.  State flow gives
  \begin{equation*}
    \occupation[\hardstateC, \recoveryact] = \excursionmass,
    \qquad
    \stateoccupation[\hardstateA] = \stateoccupation[\hardstateB] = \stateoccupation \coloneqq \frac{1 - \excursionmass - \nullmass[\hardstateC]}{2},
    \qquad
    \stateoccupation[\hardstateC] = \excursionmass + \nullmass[\hardstateC].
  \end{equation*}
  The remaining relevant stationary state-request-action masses are
  \begin{equation*}
    \occupation[\hardstateA, \act] = \stateoccupation - \nullmass[\hardstateA],
    \qquad
    \occupation[\hardstateB, \act] = \stateoccupation - \excursionmass - \nullmass[\hardstateB],
    \qquad
    \occupation[\hardstateB, \excursionact] = \excursionmass.
  \end{equation*}
  Accordingly, the stationary reward and resource consumption are
  \begin{align}
    \stationaryreward
    & = (\stateoccupation - \nullmass[\hardstateA]) + (\stateoccupation - \excursionmass - \nullmass[\hardstateB]) + \excursionmass + \frac34\excursionmass
    = 1 - \frac14\excursionmass - \nullmass[\hardstateC] - \nullmass[\hardstateA] - \nullmass[\hardstateB],
    \label{eq:unique-degenerate-reward}\\
    \sum_{\state, \req, \act}\consumption(\state, \req, \act)\occupation[\state\req\act] & = \frac32(\stateoccupation - \nullmass[\hardstateA]) + \frac12(\stateoccupation - \excursionmass - \nullmass[\hardstateB]) + \frac12\excursionmass + \frac12\excursionmass
    = 1 - \frac12\excursionmass - \nullmass[\hardstateC] - \frac32\nullmass[\hardstateA] - \frac12\nullmass[\hardstateB].
    \notag
  \end{align}
  For a capacity ratio $\capacityratio = 1 - \capacityshortfall$, resource feasibility is equivalent to
  \begin{equation}
    \frac12\excursionmass + \nullmass[\hardstateC] + \frac32\nullmass[\hardstateA] + \frac12\nullmass[\hardstateB] \ge \capacityshortfall.
    \label{eq:unique-degenerate-feasibility}
  \end{equation}

  \noindent\textbf{The nominal optimizer is unique.}
  At $\capacityshortfall = 0$, \eqref{eq:unique-degenerate-reward} gives
  \begin{equation*}
    \stationaryreward
    = 1 - \frac14\excursionmass - \nullmass[\hardstateC] - \nullmass[\hardstateA] - \nullmass[\hardstateB] \le 1.
  \end{equation*}
  Equality requires $\excursionmass = \nullmass[\hardstateC] = \nullmass[\hardstateA] = \nullmass[\hardstateB] = 0$; it also rules out every unspecified action, whose reward is zero.  Hence the nominal optimizer is unique in the full action space and is given by
  \begin{equation}
    \stateoccupation[\hardstateA][*] = \stateoccupation[\hardstateB][*] = \frac12,
    \qquad \stateoccupation[\hardstateC][*] = 0,
    \qquad \occupation[\hardstateA, \act][*] = \occupation[\hardstateB, \act][*] = \frac12,
    \qquad \stationaryfluid(1) = 1.
    \label{eq:unique-degenerate-nominal-solution}
  \end{equation}

  \noindent\textbf{Every small negative perturbation activates state $\hardstateC$.}
  For $\capacityshortfall > 0$, \eqref{eq:unique-degenerate-feasibility} implies
  \begin{align*}
    1 - \stationaryreward
    & = \frac14\excursionmass + \nullmass[\hardstateC] + \nullmass[\hardstateA] + \nullmass[\hardstateB]\\
    & = \frac12\left(
    \frac12\excursionmass + \nullmass[\hardstateC] + \frac32\nullmass[\hardstateA] + \frac12\nullmass[\hardstateB]\right)
    +\frac12\nullmass[\hardstateC] + \frac14\nullmass[\hardstateA] + \frac34\nullmass[\hardstateB]\\
    & \ge \frac \capacityshortfall2.
  \end{align*}
  Therefore
  \begin{equation*}
    \stationaryreward \le 1 - \frac \capacityshortfall2.
  \end{equation*}
  Equality requires $\nullmass[\hardstateC] = \nullmass[\hardstateA] = \nullmass[\hardstateB] = 0$ and $\excursionmass = 2\capacityshortfall$.  For $0 < \capacityshortfall \le 1 / 6$, this solution is feasible and hence is the unique optimizer:
  \begin{equation*}
    \nullmass[\hardstateC] = \nullmass[\hardstateA] = \nullmass[\hardstateB] = 0,
    \qquad \excursionmass = 2\capacityshortfall,
    \qquad \stateoccupation[\hardstateA] = \stateoccupation[\hardstateB] = \frac{1 - 2\capacityshortfall}{2},
    \qquad \stateoccupation[\hardstateC] = 2\capacityshortfall.
  \end{equation*}
  Indeed, the remaining mass of action $\act$ in state $\hardstateB$ is $(1 - 6\capacityshortfall) / 2 \ge 0$, including the endpoint $\capacityshortfall = 1 / 6$.  Thus, throughout $0 < \capacityshortfall \le 1 / 6$,
  \begin{equation}
    \stationaryfluid(1 - \capacityshortfall) = 1 - \frac12\capacityshortfall,
    \qquad
    \policy[1 - \capacityshortfall](\excursionact \mid \hardstateB)
    = \frac{\excursionmass}{\stateoccupation[\hardstateB]}
    = \frac{4\capacityshortfall}{1 - 2\capacityshortfall}.
    \label{eq:unique-degenerate-excursion-probability}
  \end{equation}

  For every $\capacityratio \ge 1$, the nominal solution in
  \eqref{eq:unique-degenerate-nominal-solution} remains feasible and uniquely
  optimal: the reward is at most one, with equality only when
  $\excursionmass = \nullmass[\hardstateC] = \nullmass[\hardstateA]
  = \nullmass[\hardstateB] = 0$ and no unspecified zero-reward action is
  used.  In particular,
  \begin{equation}
    \stateoccupation[\hardstateC](\capacityratio) = 0
    \qquad\text{for every }\capacityratio \ge 1.
    \label{eq:unique-degenerate-c-absent-above}
  \end{equation}

  The nominal solution is degenerate.  To see this without choosing a particular redundant-row deletion, recall Proposition~\ref{prop:local-lp-stability}: a unique nondegenerate nominal optimum would have a fixed positive-variable support throughout some right-hand-side neighborhood.  Here $\occupation[\hardstateB, \excursionact] = \stateoccupation[\hardstateC] = \occupation[\hardstateC, \recoveryact] = 0$ at $\capacityshortfall = 0$ but all three are positive for every $\capacityshortfall > 0$.  Hence the unique nominal optimum cannot be nondegenerate.

  \noindent\textbf{Both dynamic assumptions hold.}
  The optimal target state class is
  \[
    \recstates = \{\hardstateA, \hardstateB\}.
  \]
  The optimal induced kernel on this class is
  \begin{equation*}
    \inducedkernel[][*]
    = \begin{pmatrix}0 & 1\\1 & 0\end{pmatrix},
  \end{equation*}
  which is irreducible, although periodic, so
  Assumption~\ref{ass:support-irreducibility} holds.  The initial state
  $\hardstateA$ belongs to $\recstates$, so the entrance phase is empty and
  Assumption~\ref{ass:constant-entrance} also holds.

  \noindent\textbf{Round-by-round behavior of FR\@.}
  Write
  \[
    \subhorizon[][\period] = \horizon - \period + 1,
    \qquad
    \inventorydeviation[][\period] \coloneqq \inventory[][\period] - \subhorizon[][\period].
  \]
  Initially $\inventorydeviation[][1] = 0$.  At the nominal ratio, FR takes action $\act$ in state $\hardstateA$, earns one, and consumes $3 / 2$.  If $\horizon = 1$, this is the terminal action and there is no subsequent ratio to evaluate.  If $\horizon \ge 2$, the next round is active in state $\hardstateB$, with
  \[
    \inventorydeviation[][\period] = -\frac12,
    \qquad
    \capacityratio[][\period] = 1 - \frac{1}{2\subhorizon[][\period]}.
  \]
  Using \eqref{eq:unique-degenerate-excursion-probability}, whenever such a $\hardstateB$-round has $\subhorizon[][\period] \ge 3$, the conditional probability of selecting $\excursionact$ is
  \begin{equation}
    \policy[1 - 1 / (2\subhorizon[][\period])](\excursionact \mid \hardstateB)
    = \frac{2}{\subhorizon[][\period] - 1},
    \qquad \subhorizon[][\period] \ge 3.
    \label{eq:unique-degenerate-round-excursion-probability}
  \end{equation}
  For the remaining cases $\subhorizon[][\period] \in \{1, 2\}$, set $\capacityshortfall = 1 / (2\subhorizon[][\period])$. Nonnegativity of the baseline mass in state $\hardstateB$ gives
  \[
    3\excursionmass + \nullmass[\hardstateC] + 2\nullmass[\hardstateB] \le 1.
  \]
  Together with \eqref{eq:unique-degenerate-feasibility}, this implies
  \begin{align*}
    1 - \stationaryreward
    & = \frac23\left(\frac12\excursionmass + \nullmass[\hardstateC] + \frac32\nullmass[\hardstateA] + \frac12\nullmass[\hardstateB]\right)
    -\frac1{12}\excursionmass + \frac13\nullmass[\hardstateC] + \frac23\nullmass[\hardstateB]\\
    & \ge \frac23\capacityshortfall - \frac1{36}
    +\frac{13}{36}\nullmass[\hardstateC] + \frac{13}{18}\nullmass[\hardstateB].
  \end{align*}
  Equality is attained uniquely by
  \[
    \excursionmass = \frac13, \qquad
    \nullmass[\hardstateC] = \nullmass[\hardstateB] = 0, \qquad
    \nullmass[\hardstateA] = \frac23\left(\capacityshortfall - \frac16\right).
  \]
  The last mass equals $1 / 18$ when $\subhorizon[][\period] = 2$ and $2 / 9$ when $\subhorizon[][\period] = 1$, so both solutions are feasible.  In each case $\stateoccupation[\hardstateB] = \excursionmass = 1 / 3$, and FR chooses $\excursionact$ with probability one.  The endpoint $\subhorizon[][\period] = 3$ also has excursion probability one by \eqref{eq:unique-degenerate-round-excursion-probability}.

  Actions $\act$ and $\excursionact$ both consume $1 / 2$.  Therefore, after an active $\hardstateB$-round, $\inventorydeviation[][\period + 1] = 0$ in either case.  If $\act$ is selected, the state returns to $\hardstateA$ and the same two-round cycle begins again.  If $\excursionact$ is selected and a round remains, the next state is $\hardstateC$ and the next ratio is exactly one.  For even $\horizon \ge 4$, a trajectory that has not made an earlier excursion must do so at the $\hardstateB$-round with three rounds remaining; for odd $\horizon \ge 3$, it must do so with two rounds remaining.  Thus a $\hardstateB$-round with only one round remaining occurs only when $\horizon = 2$; its $\excursionact$ action exhausts the remaining inventory $1 / 2$ and is terminal.

  At ratio one the unique optimizer in \eqref{eq:unique-degenerate-nominal-solution} assigns zero stationary probability to state $\hardstateC$.  By the FR convention, when the selected stationary LP solution assigns zero probability to the current state, the algorithm chooses the null action.  Suppose there are $\subhorizon \ge 1$ rounds and $\subhorizon$ units of inventory when $\hardstateC$ is entered.  After $0 \le \prefixidx < \subhorizon$ null rounds, before the next active round, the inventory remains $\subhorizon$, the remaining horizon is $\subhorizon - \prefixidx$, and hence
  \[
    \capacityratio = \frac{\subhorizon}{\subhorizon - \prefixidx} \ge 1.
  \]
  By \eqref{eq:unique-degenerate-c-absent-above}, every newly solved optimizer still assigns zero stationary probability to $\hardstateC$.  Therefore, FR continues choosing the null action until the horizon ends.  Thus entry into $\hardstateC$ is a permanent trap without any analytical truncation.

  \noindent\textbf{The trapping event has constant probability.}
  Let $\batchsize=\lfloor\horizon/4\rfloor$ and take $\horizon\ge8$.
  Consider the first $\batchsize$ visits to state $\hardstateB$. Conditional on
  no earlier excursion, visit $\poststep$ occurs with remaining horizon
  \[
    \subhorizon[][\poststep] = \horizon - 2\poststep + 1,
  \]
  and by \eqref{eq:unique-degenerate-round-excursion-probability}, its conditional excursion probability is $2 / (\horizon - 2\poststep)$.
  The action randomizations are fresh from round to round. Hence
  \begin{align*}
    \prob(\text{no excursion in the first $\batchsize$ visits})
    & = \prod_{\poststep = 1}^{\batchsize}\left(1 - \frac{2}{\horizon - 2\poststep}\right)\notag\\
    & = \prod_{\poststep = 1}^{\batchsize}\frac{\horizon - 2\poststep - 2}{\horizon - 2\poststep}
    = \frac{\horizon-2\batchsize-2}{\horizon-2}.
  \end{align*}
  Therefore
  \[
    \prob(\text{an excursion in the first $\batchsize$ visits})
    =\frac{2\batchsize}{\horizon-2}
    \ge\frac{\horizon/2-2}{\horizon-2}
    \ge\frac13.
  \]
  The $\batchsize$-th visit occurs in round
  $2\batchsize\le\horizon/2$.  Hence, on this event, FR enters
  $\hardstateC$ with at least
  $\horizon-2\batchsize\ge\horizon/2$ rounds remaining.

  \noindent\textbf{A linearly better admissible policy.}
  Define $\frequentcomparisonpolicy$ to follow FR until the first time action $\excursionact$ is selected. If state $\hardstateC$ is reached with an active round remaining, $\frequentcomparisonpolicy$ selects $\recoveryact$ in that round and thereafter repeatedly selects the baseline action $\act$ in states $\hardstateA$ and $\hardstateB$ until the process terminates.

  Suppose the excursion occurs by round $\horizon / 2$, and let $\subhorizon \ge \horizon / 2$ be the number of rounds remaining upon entering $\hardstateC$. Since $\inventorydeviation = 0$ after the excursion, the remaining inventory is exactly $\subhorizon$. Policy $\frequentcomparisonpolicy$ first takes $\recoveryact$, earning $3 / 4$, consuming $1 / 2$, and returning to $\hardstateA$. It then uses action $\act$ in each of the remaining $\subhorizon - 1$ rounds. Starting from $\hardstateA$, the successive baseline costs are
  \[
    \frac32, \frac12, \frac32, \frac12, \ldots.
  \]
  The sum of the first $\subhorizon - 1$ baseline costs is $\subhorizon - 1$ when $\subhorizon - 1$ is even and $\subhorizon - 1 / 2$ when $\subhorizon - 1$ is odd. Including the $1 / 2$ cost of $\recoveryact$, the total suffix consumption is at most $\subhorizon$. Thus $\frequentcomparisonpolicy$ is admissible and acts in every remaining round. Its suffix reward is
  \begin{equation}
    \frac34 + (\subhorizon - 1) = \subhorizon - \frac14
    \label{eq:unique-degenerate-recovery-reward}
  \end{equation}
  over a suffix on which FR earns zero.  The same calculation applies after
  every nonterminal excursion, while after a terminal excursion the two
  policies coincide; hence $\frequentcomparisonpolicy$ never earns less than
  FR on the complementary event.  Combining the probability bound above with
  \eqref{eq:unique-degenerate-recovery-reward},
  \begin{equation}
    \valuefn[\horizon][\frequentcomparisonpolicy]
      - \valuefn[\horizon][\frequentalg]
    \ge \frac13\left(\frac \horizon2 - \frac14\right)
    = \frac \horizon6 - \frac1{12}
    = \bigOmega(\horizon).
    \label{eq:unique-degenerate-comparison-linear}
  \end{equation}
  This proves \eqref{eq:unique-degenerate-comparison-linear}. Since $\valuefn[\horizon][*] \ge \valuefn[\horizon][\frequentcomparisonpolicy]$ and $\transientfluid[\horizon] \ge \valuefn[\horizon][*]$, \eqref{eq:unique-degenerate-gmfr-linear} follows.

  Finally, before the trapping event,
  \begin{equation*}
    |\capacityratio[][\period] - \nominalratio|
    = \begin{cases}
      0, & \stateproc[][\period] = \hardstateA, \\[1mm]
      \dfrac1{2\subhorizon[][\period]}, & \stateproc[][\period] = \hardstateB,
    \end{cases}
    \le \frac1{2\subhorizon[][\period]}.
  \end{equation*}
  Thus even a much stronger estimate than $\expect|\capacityratio[][\period] - \nominalratio| = \bigO((\subhorizon[][\period])^{-1 / 2})$ would not prevent the linear loss.  The obstruction is not a large capacity-ratio deviation; it is the disappearance, at the next resolving point, of a state that was reached under a nearby optimal policy.
\end{proof}

\subsection{Proof of Proposition~\ref{prop:linear-gap-no-irreducibility}}
\label{app:proof:prop:linear-gap-no-irreducibility}

\begin{proof}
  There is one request type, one resource, two states $\states = \{1, 2\}$, and two actions: the null action $\act[][0]$ and a non-null action $\act$.  Both states are absorbing under both actions.  The non-null reward and cost are
  \[
    \begin{array}{c|cc}
      & \reward(\state, \act) & \consumption(\state, \act)\\ \hline
      \state = 1 & 3 / 2 & 1\\
      \state = 2 & 2 & 2.
    \end{array}
  \]
  The null action has zero reward and cost.  Let $\nominalratio = 3 / 2$ and $\state[][1] = 1$.

  Let $\flowmass[\state]$ be the stationary mass of the non-null action in state $\state$.  Eliminating null-action and state variables, the stationary LP is
  \begin{equation*}
    \max\left\{\frac32\flowmass[1] + 2\flowmass[2]:
    \flowmass[1] + \flowmass[2] \le 1,
    \ \flowmass[1] + 2\flowmass[2] \le \frac32,
    \ \flowmass[1], \flowmass[2] \ge 0\right\}.
  \end{equation*}
  Its relevant vertices are
  \[
    (1, 0), \qquad (1 / 2, 1 / 2), \qquad (0, 3 / 4),
  \]
  with respective values $3 / 2, 7 / 4, 3 / 2$.  Hence its unique optimum is
  \[
    \flowmass[1][*] = \flowmass[2][*] = \frac12,
    \qquad
    \stationaryfluid(\nominalratio) = \frac74.
  \]

  For completeness, add null-action masses $\nullmass[1], \nullmass[2]$, stationary state probabilities $\stateoccupation[1], \stateoccupation[2]$, and the resource slack $\resourceslack$.  A full-row-rank standard form is
  \[
    \begin{aligned}
      \flowmass[1] + \nullmass[1] - \stateoccupation[1] & = 0, \\
      \flowmass[2] + \nullmass[2] - \stateoccupation[2] & = 0, \\
      \stateoccupation[1] + \stateoccupation[2] & = 1, \\
      \flowmass[1] + 2\flowmass[2] + \resourceslack & = \frac32.
    \end{aligned}
  \]
  At the optimum the variables $\flowmass[1], \flowmass[2], \stateoccupation[1], \stateoccupation[2]$ all equal $1 / 2$, while $\nullmass[1], \nullmass[2], \resourceslack$ are zero.  The basis matrix formed by the columns of $\flowmass[1], \flowmass[2], \stateoccupation[1], \stateoccupation[2]$ is
  \[
    \begin{pmatrix}
      1 & 0 & -1 & 0\\
      0 & 1 & 0 & -1\\
      0 & 0 & 1 & 1\\
      1 & 2 & 0 & 0
    \end{pmatrix},
    \qquad
    \det = 1.
  \]
  Every basic variable is strictly positive.  The strict objective comparisons above also show uniqueness.  Assumptions~\ref{ass:nominal-uniqueness} and~\ref{ass:nominal-nondegeneracy} therefore hold.

  The optimal induced kernel is
  \[
    \inducedkernel[][*] = \begin{pmatrix}1 & 0\\0 & 1\end{pmatrix},
  \]
  which is reducible on $\recstates = \{1, 2\}$.  Consequently,
  Assumption~\ref{ass:support-irreducibility} fails.  In contrast, entrance into
  $\recstates$ takes zero time and consumes no resources, so
  Assumption~\ref{ass:constant-entrance} holds.  Since the actual process starts
  in state $1$ and never leaves it, no policy can earn more than $3 / 2$ in a
  round.  The capacity $3\horizon / 2$ is sufficient to take action $\act$ in all
  $\horizon$ rounds, so in fact
  \[
    \valuefn[\horizon][*](\horizon\nominalratio, 1) = \frac32\horizon.
  \]
  Since every policy has value at most $\valuefn[\horizon][*]$, combining
  this with
  $\horizon\stationaryfluid(\nominalratio)=7\horizon/4$ proves the
  stationary-benchmark inequality in \eqref{eq:linear-stationary-gap} for
  every policy.  The time-aggregated transient LP has the same linear
  obstruction.  For $\horizon \ge 2$, set
  \[
    \nonterminalocc[1, \act] = \horizon / 2 - 1, \qquad \terminalocc[1, \act] = 1, \qquad
    \nonterminalocc[2, \act] = \horizon / 2, \qquad \terminalocc[2, \act] = 0,
  \]
  and $\transientstateocc[1] = \transientstateocc[2] = \horizon / 2$,
  with every unspecified transient variable set to zero.  These variables
  satisfy the transient request, state-flow, and horizon constraints.  The
  charged resource use is
  \[
    \nonterminalocc[1, \act] + 2\nonterminalocc[2, \act] = 3\horizon / 2 - 1 \le 3\horizon / 2,
  \]
  while the objective is
  \[
    \frac32(\nonterminalocc[1, \act] + \terminalocc[1, \act]) + 2(\nonterminalocc[2, \act] + \terminalocc[2, \act]) = \frac74\horizon.
  \]
  Thus
  \begin{equation*}
    \transientfluid[\horizon](\horizon\nominalratio, 1) - \valuefn[\horizon][*](\horizon\nominalratio, 1) \ge \horizon / 4.
  \end{equation*}
  Since
  $\valuefn[\horizon][\policy](\horizon\nominalratio,1)
    \le\valuefn[\horizon][*](\horizon\nominalratio,1)$
  for every nonanticipating policy $\policy$, this proves the transient-fluid
  inequality in \eqref{eq:linear-stationary-gap} as well.
  The transient relaxation admits a source-free stationary circulation in the unreachable absorbing class because its state-flow constraints are aggregated over time.
\end{proof}

%% file: appendices/08_1_unknown_request_prior_proofs.tex

\section{Analysis of the unknown request prior}
\label{app:unknown-prior-proofs}

This appendix gives the supporting results and proofs for
Section~\ref{sec:unknown-prior}.  Throughout the positive analysis,
Assumptions~\ref{ass:support-irreducibility} and~\ref{ass:constant-entrance} hold, and
UIR-UP is supplied with the true optimal target state class $\recstates$ and the
entrance policy $\entrancepolicy$ fixed under
Assumption~\ref{ass:constant-entrance};
Assumption~\ref{ass:nominal-nondegeneracy} is imposed only where stated.
Conditional on the realized entrance history, for the analysis only, write
$\horizon = \entrancerest$ and abbreviate
$\inventory[][\entrancelast + \completedrounds]$,
$\stateproc[][\entrancelast + \completedrounds]$,
$\subhorizon[][\entrancelast + \completedrounds]$, and
$\filtration[][\entrancelast + \completedrounds]$ by
$\inventory[][\completedrounds]$, $\stateproc[][\completedrounds]$,
$\subhorizon[][\completedrounds]$, and $\filtration[][\completedrounds]$,
respectively.  Here $\completedrounds$ counts completed post-entry rounds; it
does not replace the original round index in the algorithm.  This convention is
used only in the possibly degenerate case.  Under
Assumption~\ref{ass:nominal-nondegeneracy},
Proposition~\ref{prop:nondegenerate-full-state-support} gives
$\recstates=\states$; in that case $\horizon$ retains its original meaning and
the completed-round clock starts at $0$.

\subsection{Proof of Theorem~\ref{thm:unknown-universal}}

Throughout the proof, we suppose that Assumption~\ref{ass:nominal-uniqueness} holds and do not mention it again.

We start with several preliminary properties of estimation concentration and the construction of the phase-2 re-solving schedule.

\subsubsection{Estimation concentration}
\label{app:unknown-preliminaries}

\begin{proposition}
  \label{prop:unknown-prior-concentration}
  There are constants $C_{\mathrm{est}},c_{\mathrm{est}} > 0$, depending
  only on $|\requests|$, such that, conditional on any realized entrance
  history, the estimator in \eqref{eq:unknown-empirical-prior} satisfies, for
  every integer $\completedrounds \ge 1$,
  \begin{align}
    \expect\|\empiricalprior[][\completedrounds] - \prior\|_1
    & \le C_{\mathrm{est}}\completedrounds[][-1 / 2],
    \label{eq:unknown-mean-prior-error}\\
    \prob\bigl(\|\empiricalprior[][\completedrounds] - \prior\|_1 \ge \deviationlevel\bigr)
    & \le C_{\mathrm{est}}\exp(-c_{\mathrm{est}}
      \completedrounds \deviationlevel[][2]),
    \qquad 0 < \deviationlevel \le 1.
    \label{eq:unknown-prior-tail}
  \end{align}
\end{proposition}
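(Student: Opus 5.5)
The plan is to reduce both claims to standard concentration for multinomial counts, plus a deterministic bound on the Laplace smoothing bias. First, the realized entrance history does not affect the post-entry requests. Requests are i.i.d.\ and independent of everything strictly before round $\entrancelast$, and $\entrancelast$ is a stopping time determined by the pre-request history. The strong Markov property for i.i.d.\ sequences then gives that $\reqproc[][\entrancelast], \reqproc[][\entrancelast+1], \ldots$ are, conditionally on $\filtration[][\entrancelast]$, i.i.d.\ with law $\prior$. Consequently $\requestcount[\req](\completedrounds)\sim\mathrm{Binomial}(\completedrounds,\prior[\req])$, and the joint count vector is multinomial; this holds uniformly over entrance histories.

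Next, write $\bar p_{\completedrounds}(\req)\coloneqq \requestcount[\req](\completedrounds)/\completedrounds$ and decompose
\[
  \empiricalprior[][\completedrounds](\req)-\prior[\req]
  =\frac{\completedrounds}{\completedrounds+|\requests|}\bigl(\bar p_{\completedrounds}(\req)-\prior[\req]\bigr)
  +\frac{1-|\requests|\prior[\req]}{\completedrounds+|\requests|}.
\]
Summing over $\req$, the bias term contributes at most $2|\requests|/\completedrounds$ in $\ell_1$. The first term is bounded by $\|\bar p_{\completedrounds}-\prior\|_1$.

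For \eqref{eq:unknown-mean-prior-error}, apply Jensen to each coordinate: $\expect|\bar p_{\completedrounds}(\req)-\prior[\req]|\le\sqrt{\prior[\req](1-\prior[\req])/\completedrounds}\le (2\sqrt{\completedrounds})^{-1}$. Summing gives $|\requests|/(2\sqrt{\completedrounds})$, and adding the bias $2|\requests|/\completedrounds\le 2|\requests|\completedrounds^{-1/2}$ gives the claim with $C_{\mathrm{est}}=3|\requests|$.

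For \eqref{eq:unknown-prior-tail}, split into two cases.

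In the first case, $2|\requests|/\completedrounds\le \deviationlevel/2$. Then the event $\|\empiricalprior[][\completedrounds]-\prior\|_1\ge\deviationlevel$ forces $\|\bar p_{\completedrounds}-\prior\|_1\ge\deviationlevel/2$, which in turn forces some coordinate to deviate by at least $\deviationlevel/(2|\requests|)$. Hoeffding's inequality for each binomial count and a union bound give
\[
  2|\requests|\exp\bigl(-\completedrounds\deviationlevel^2/(2|\requests|^2)\bigr).
\]

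In the second case, $\completedrounds\deviationlevel<4|\requests|$. Since $\deviationlevel\le1$, this gives $\completedrounds\deviationlevel^2<4|\requests|$, so the right-hand side $C\exp(-c\,\completedrounds\deviationlevel^2)$ is at least $C e^{-4c|\requests|}$. Choosing $C_{\mathrm{est}}\ge e^{4c_{\mathrm{est}}|\requests|}$ makes the bound trivially at least one.

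Taking $c_{\mathrm{est}}=1/(2|\requests|^2)$ and $C_{\mathrm{est}}$ as the maximum of the constants from the three bounds completes the proof; all constants depend only on $|\requests|$.

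The only delicate point is the conditioning on the entrance history. I expect it to be handled by noting that $\entrancelast$ is a stopping time for the filtration generated before each request is drawn, so the requests from $\entrancelast$ onward are fresh. The rest is routine bookkeeping of constants.
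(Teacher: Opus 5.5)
Your proof is correct and has the same overall structure as the paper's. The paper also writes the Laplace-smoothed estimator as $\frac{n}{n+d}\widetilde p_n+\frac{d}{n+d}u$ (with $d=|\mathcal J|$ and $u$ uniform), which is your decomposition and gives the same deterministic bias of $\ell_1$-size at most $2d/(n+d)$. It bounds the mean by per-coordinate Jensen on the binomial variances. It splits the tail according to whether the bias is below $x/2$, and uses the trivial bound $1$ when $nx^2=O(d)$.

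The one real difference is the concentration inequality in the large-$n$ case:
\begin{itemize}
\item The paper applies the Bretagnolle--Huber--Carol inequality $\mathbb{P}(\|\widetilde p_n-p\|_1>x)\le 2^d e^{-nx^2/2}$ directly to the $\ell_1$ norm. This gives the dimension-free exponent $c_{\mathrm{est}}=1/8$, but the prefactor $\max\{2^d,e^{d/2},\sqrt d+2d\}$ is exponential in $d$.
\item You reduce the $\ell_1$ event to one coordinate deviating by at least $x/(2d)$, then use scalar Hoeffding and a union bound. This is more elementary and keeps the prefactor linear in $d$, but weakens the exponent to $c_{\mathrm{est}}=1/(2d^2)$.
\end{itemize}
The proposition only asks for constants depending on $|\mathcal J|$, and every later use needs only some fixed positive exponent inside an $\exp(-\Omega(\cdot))$ bound, so the two versions are interchangeable.

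Your stopping-time argument for the conditioning is a useful addition to a step the paper only asserts. Entrance is decided before $J_{\tau^{\mathrm{ent}}}$ is drawn, so the post-entry requests are fresh i.i.d.\ draws independent of the entrance history.
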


\begin{proof}
  Conditional on the realized entrance history, the requests counted by
  $\requestcount[\req](\completedrounds)$ remain i.i.d.\ with distribution
  $\prior$.  Write $d \coloneqq |\requests|$,
  and let $\widetilde p_n$ be their ordinary empirical distribution.  If $u$
  denotes the uniform distribution on $\requests$, then the Laplace-smoothed
  estimator satisfies
  \begin{equation*}
    \empiricalprior[][n]
    = \frac{n}{n+d}\widetilde p_n + \frac{d}{n+d}u.
  \end{equation*}
  Since $\|u-\prior\|_1 \le 2$, it follows that
  \begin{equation}
    \|\empiricalprior[][n]-\prior\|_1
    \le \|\widetilde p_n-\prior\|_1 + \frac{2d}{n+d}.
    \label{eq:unknown-smoothed-empirical-bias}
  \end{equation}

  For the mean bound, the variance of the $\req$-th coordinate of
  $\widetilde p_n$ is $\prior[\req](1-\prior[\req])/n$.  Hence, by
  Cauchy--Schwarz and Jensen's inequality,
  \[
    \expect\|\widetilde p_n-\prior\|_1
    \le \sum_{\req\in\requests}
      \sqrt{\frac{\prior[\req](1-\prior[\req])}{n}}
    \le \sqrt{\frac{d}{n}}.
  \]
  Combining this with \eqref{eq:unknown-smoothed-empirical-bias} and
  $n\ge1$ gives
  \[
    \expect\|\empiricalprior[][n]-\prior\|_1
    \le (\sqrt d+2d)n^{-1/2}.
  \]

  For the tail bound, the Bretagnolle--Huber--Carol inequality
  \cite[Proposition~A.6.6]{vdVW96} gives, for every $x>0$,
  \[
    \prob\bigl(\|\widetilde p_n-\prior\|_1>x\bigr)
    \le 2^d\exp(-nx^2/2).
  \]
  If $\deviationlevel>4d/(n+d)$, then
  \eqref{eq:unknown-smoothed-empirical-bias} and the preceding inequality,
  with $x=\deviationlevel/2$, yield
  \[
    \prob\bigl(\|\empiricalprior[][n]-\prior\|_1>\deviationlevel\bigr)
    \le 2^d\exp(-n\deviationlevel^2/8).
  \]
  If instead $\deviationlevel\le 4d/(n+d)$, then
  \[
    n\deviationlevel^2
    \le \frac{16d^2n}{(n+d)^2}
    \le 4d,
  \]
  so the trivial probability bound $1$ is at most
  $\exp(d/2)\exp(-n\deviationlevel^2/8)$.  Thus the two asserted bounds hold,
  for example, with
  \[
    c_{\mathrm{est}}=\frac18,
    \qquad
    C_{\mathrm{est}}
    =\max\{\sqrt d+2d,\,2^d,\,\exp(d/2)\},
  \]
  which depend only on $d=|\requests|$.
\end{proof}

\subsubsection{Sparse-balanced scheduling}
\label{app:proof:lem:sparse-balanced-schedule}

Under a unique but degenerate optimum, a frozen empirical policy can have first-order error $\bigO(\|\empiricalprior[][\completedrounds] - \prior\|_1)$ per round. Therefore, if an epoch starts after $\completedrounds$ observations and lasts $\blocklength$ rounds, its statistical contribution can be of order $\blocklength / \sqrt {\completedrounds}$. The purpose of the sparse-balanced schedule is to keep the sum of these terms of order $\bigO(\sqrt {\horizon})$ with only $\bigO(\log\log \horizon)$ re-solves.

\begin{lemma}
  \label{lem:sparse-balanced-schedule}
  Let $1 \le \completedrounds < \scheduleendpoint$ be integers.  For every integer $\midSchedule$ satisfying
  \[
    \min\left\{\scheduleendpoint - \completedrounds,
    \max\left\{1, \left\lceil\log_2\log
    \frac{\scheduleendpoint}{\completedrounds}\right\rceil + 2\right\}\right\}
    \le \midSchedule \le \scheduleendpoint - \completedrounds,
  \]
  there is a deterministic integer partition $\completedrounds = \period[0] < \cdots < \period[\midSchedule] = \scheduleendpoint$ with
  \begin{equation*}
    \sum_{\epoch = 0}^{\midSchedule - 1}
    \frac{\period[\epoch + 1] - \period[\epoch]}{\sqrt{\period[\epoch]}}
    \le 9\sqrt{\scheduleendpoint}.
  \end{equation*}
  In particular, the minimizing schedule in \eqref{eq:sparse-scheduling-cost}, with the clipped choice \eqref{eq:sparse-mid-count}, satisfies this bound.  Every resolving time is a deterministic function of $(\completedrounds, \scheduleendpoint, \midSchedule)$.
\end{lemma}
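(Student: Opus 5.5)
We construct the partition explicitly from a doubly-exponential real-valued schedule and control the rounding afterwards. Write $n = \completedrounds$, $N = \scheduleendpoint$, and $K = \midSchedule$.

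\textbf{Easy case.} If $N/n$ is bounded, say $N \le 16 n$, any partition works. The cost is at most $\sum_k (\period[\epoch+1]-\period[\epoch])/\sqrt{n} = (N-n)/\sqrt n \le 4\sqrt{N}$. In particular, this covers the case where the lower bound on $K$ is clipped to $N-n$, because then every increment equals one.

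\textbf{Main case.} Otherwise, let $L = \log(N/n) \ge 1$ and $K_0 = \lceil \log_2 L\rceil + 2 \le K$. We use the real-valued times
\[
  s_k = N\exp\bigl(-L\,2^{-k}\bigr), \qquad k = 0, 1, \ldots, K_0 - 1,
\]
so that $s_0 = n$ and $s_k^2 = N s_{k-1}$. We then set the last time to $N$ and round up. The key identity is
\[
  \frac{s_{k+1}}{\sqrt{s_k}} = \sqrt{N}\exp\bigl(-L 2^{-k-1}+L2^{-k-1}\bigr)\cdot\frac{\sqrt{s_{k+1}}}{\sqrt N}\le \sqrt N\,e^{-L2^{-k-2}},
\]
which follows from $s_{k+1}/\sqrt{s_k} = \sqrt{N}\,\sqrt{s_{k+1}/N}$. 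Hence the term for step $k$ is at most $\sqrt N\, e^{-L 2^{-k-2}}$ for $k \le K_0 - 2$.

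The final step into $N$ costs at most $N/\sqrt{s_{K_0-1}} = \sqrt N\exp(L2^{-K_0})$. Since $2^{K_0} \ge 4L$, this is at most $e^{1/4}\sqrt N$.

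We sum the remaining terms in reverse order. The exponents $L2^{-k-2}$ for $k = K_0-2, K_0-3, \ldots$ are at least $1/8, 1/4, 1/2, 1, 2, \ldots$ and double each time. The sum $\sum_j e^{-2^{j}/8}$ is an absolute constant, which we bound numerically to keep the total at most $9\sqrt N$.

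\textbf{Rounding and extra points.} Rounding up to integers, removing repeats, and inserting extra points changes the count. Inserting points only refines the partition. Refining can only decrease the cost, because
\[
  \frac{a-b}{\sqrt b} + \frac{c-a}{\sqrt a} \le \frac{c-b}{\sqrt b} \qquad\text{for } b < a.
\]
Since $K \le N-n$, we can always add integer points to reach exactly $K$ epochs. Rounding up moves each left endpoint up, so each denominator only increases. The numerator of any single step grows by at most one, and each such unit contributes at most $1/\sqrt n$ per step. Over $O(\log L)$ steps this extra cost is absorbed into the slack left by the constant estimate above.

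\textbf{Minimizer and determinism.} The minimizer of \eqref{eq:sparse-scheduling-cost} with the prescribed $K$ costs no more than our construction, so it satisfies the same bound. Determinism holds because the construction, and a fixed tie-break for the minimizer, depend only on $(n,N,K)$.

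\textbf{Main obstacle.} The hard part is the constant bookkeeping: verifying that the geometric-type tail sum, the final step, and the rounding losses together stay at most $9$. For the rounding losses in particular, I would check that the extra $O(\log\log(N/n))/\sqrt n$ is at most $O(\sqrt N)$, which holds since $n\ge1$ and $\log\log(N/n)\le\sqrt N$.
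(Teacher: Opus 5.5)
There is a genuine gap in the main case: your ``key identity'' is false. From $s_k = N\exp(-L2^{-k})$ you have $\sqrt{s_k} = \sqrt N\exp(-L2^{-k-1})$ and $s_{k+1} = N\exp(-L2^{-k-1})$. So $s_{k+1}/\sqrt{s_k} = \sqrt N$ \emph{exactly}. The extra factor $\sqrt{s_{k+1}/N}$ has no source; note that $\sqrt N\cdot\sqrt{s_{k+1}/N} = \sqrt{s_{k+1}}\neq s_{k+1}/\sqrt{s_k}$. Consequently each step of the pure geometric-mean schedule $s_{k+1} = \sqrt{N s_k}$ costs
\[
  \frac{s_{k+1}-s_k}{\sqrt{s_k}} = \sqrt N - \sqrt{s_k} = \sqrt N\bigl(1 - e^{-L2^{-k-1}}\bigr),
\]
which is at least $(1-e^{-1})\sqrt N$ for every $k \le \log_2 L - 1$. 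Your schedule therefore costs $\Theta(\sqrt N\log\log(N/n))$, not $O(\sqrt N)$. Already for $L = 2^9$ the real-valued schedule costs about $9.4\sqrt N$, and its ratio to $\sqrt N$ is unbounded as $L\to\infty$. Because $K$ is allowed to equal $K_0 = \lceil\log_2 L\rceil + 2$, you cannot repair this by inserting points. In the application, it would add an extra $\log\log T$ factor to the regret in Theorem~\ref{thm:unknown-universal}.

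The missing idea is to \emph{unbalance} the geometric-mean recursion, so that the per-step costs form a geometric series dominated by the last step. The paper takes $K_0 = \lfloor\log_2 L\rfloor + 2 \le K$ and real times $t_0 = n$, $t_{k+1} = \omega\,2^{k-K_0+1}\sqrt{N t_k}$. Then step $k$ costs at most $t_{k+1}/\sqrt{t_k} = \omega\,2^{k-K_0+1}\sqrt N$, so the whole schedule costs at most $2\omega\sqrt N$. The constant $\omega = 2^{1-K_0/(2^{K_0}-1)}(N/n)^{1/(2^{K_0+1}-2)}$ comes from unrolling the recursion for $\log(t_k/N)$ so that $t_{K_0} = N$ exactly, and $L < 2^{K_0-1}$ gives $\omega\le 2e^{1/2}$. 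You must also check that the $t_k$ are strictly increasing. The first step uses $L \ge 2^{K_0-2}$, and increase then propagates by comparing coefficients. The rest of your argument goes through once the real-valued schedule is replaced, and it matches the paper:
\begin{itemize}
  \item the small-ratio case;
  \item rounding up at a cost of at most $1/\sqrt n$ per step, with total extra cost at most $2\sqrt N$;
  \item monotonicity of the cost under refinement to reach exactly $K$ intervals;
  \item the comparison with the minimizer.
\end{itemize}
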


\begin{proof}
  If $\midSchedule = \scheduleendpoint - \completedrounds$, use all integer times.  Its cost is
  \[
    \sum_{\period = \completedrounds}^{\scheduleendpoint - 1}\frac1{\sqrt{\period}}
    \le \int_{\completedrounds - 1}^{\scheduleendpoint - 1}\frac{d\genericvalue}{\sqrt{\genericvalue}}
    \le 2\sqrt{\scheduleendpoint}.
  \]
  Otherwise the clipping at $\scheduleendpoint - \completedrounds$ is inactive, so
  $\midSchedule \ge \max\{1, \lceil\log_2\log(\scheduleendpoint / \completedrounds)\rceil + 2\}$.
  Write $\horizonratio \coloneqq \scheduleendpoint / \completedrounds$.  If $\horizonratio < \exp(1)$, every feasible integer partition has cost at most
  \[
    \frac{\scheduleendpoint - \completedrounds}{\sqrt{\completedrounds}}
    = \frac{\horizonratio - 1}{\sqrt{\horizonratio}}\sqrt{\scheduleendpoint}
    < 2\sqrt{\scheduleendpoint}.
  \]
  Thus it remains to treat $\log\horizonratio \ge 1$.

  Set
  \[
    \midSchedule[0] \coloneqq \lfloor\log_2\log\horizonratio\rfloor + 2,
    \qquad
    \schedulecoef \coloneqq
    2^{\,1 - \midSchedule[0] / (2^{\midSchedule[0]} - 1)}
    \horizonratio[][1 / (2^{\midSchedule[0] + 1} - 2)].
  \]
  Then $2 \le \midSchedule[0] \le \midSchedule$.  Define real times by
  \begin{equation}
    \period[0][\mathrm c] \coloneqq \completedrounds,
    \qquad
    \period[\epoch + 1][\mathrm c] \coloneqq \schedulecoef\,2^{\epoch - \midSchedule[0] + 1}
    \sqrt{\scheduleendpoint\,\period[\epoch][\mathrm c]},
    \quad 0 \le \epoch < \midSchedule[0].
    \label{eq:sparse-real-construction}
  \end{equation}
  Unrolling the logarithmic recurrence, using
  \[
    \sum_{\epoch = 0}^{\midSchedule[0] - 1}2^{-\epoch}
    = 2(1 - 2^{-\midSchedule[0]}),
    \qquad
    \sum_{\epoch = 0}^{\midSchedule[0] - 1}\epoch2^{-\epoch}
    = 2 - (\midSchedule[0] + 1)2^{1 - \midSchedule[0]},
  \]
  gives
  \[
    \log\frac{\period[\midSchedule[0]][\mathrm c]}{\scheduleendpoint}
    = -2^{-\midSchedule[0]}\log\horizonratio
    +2(1 - 2^{-\midSchedule[0]})\log\schedulecoef
    -\bigl[2 - (\midSchedule[0] + 1)2^{1 - \midSchedule[0]}\bigr]\log2
    = 0.
  \]
  Hence the endpoint is exactly $\period[\midSchedule[0]][\mathrm c] = \scheduleendpoint$.

  All the real times are strictly increasing.  For the first step, direct substitution gives
  \[
    \log\frac{\period[1][\mathrm c]}{\period[0][\mathrm c]}
    = \frac{2^{\midSchedule[0] - 1}}{2^{\midSchedule[0]} - 1}
    \left(\log\horizonratio -
    [2(\midSchedule[0] - 2) + 2^{2 - \midSchedule[0]}]\log2\right) > 0.
  \]
  Indeed, $\log\horizonratio \ge 2^{\midSchedule[0] - 2}$; when $\midSchedule[0] = 2$ the asserted positivity follows from $1 > \log2$, and when $\midSchedule[0] \ge 3$ it follows from
  \[
    2(\midSchedule[0] - 2) + 2^{2 - \midSchedule[0]}
    \le \frac54\,2^{\midSchedule[0] - 2},
    \qquad \frac54\log2 < 1.
  \]
  The first inequality uses $2\resolveindex \le 2^{\resolveindex}$ and
  $2^{-\resolveindex} \le 2^{\resolveindex} / 4$ for integers $\resolveindex \ge 1$.
  To propagate strict increase, compare the coefficients in the recurrence: if
  $\period[\epoch + 1][\mathrm c] > \period[\epoch][\mathrm c]$, then
  \[
    \sqrt{\period[\epoch + 1][\mathrm c] / \scheduleendpoint}
    < \schedulecoef\,2^{\epoch - \midSchedule[0] + 1}
    < \schedulecoef\,2^{\epoch - \midSchedule[0] + 2},
  \]
  so \eqref{eq:sparse-real-construction} yields
  $\period[\epoch + 2][\mathrm c] > \period[\epoch + 1][\mathrm c]$.
  Consequently every interior real time lies strictly between
  $\completedrounds$ and $\scheduleendpoint$.

  Since $\log\horizonratio < 2^{\midSchedule[0] - 1}$, the coefficient satisfies
  $\schedulecoef \le 2\exp(1 / 2)$.  Thus the real schedule costs at most
  \[
    \sum_{\epoch = 0}^{\midSchedule[0] - 1}
    \frac{\period[\epoch + 1][\mathrm c] - \period[\epoch][\mathrm c]}
    {\sqrt{\period[\epoch][\mathrm c]}}
    \le \schedulecoef\sqrt{\scheduleendpoint}
    \sum_{\epoch = 0}^{\midSchedule[0] - 1}2^{\epoch - \midSchedule[0] + 1}
    \le 4\exp(1 / 2)\sqrt{\scheduleendpoint}.
  \]
  Round every real time upward,
  $\period[\epoch][\mathrm r] \coloneqq \lceil\period[\epoch][\mathrm c]\rceil$.
  The endpoints are unchanged, the resulting sequence is nondecreasing, and
  \[
    \frac{\period[\epoch + 1][\mathrm r] - \period[\epoch][\mathrm r]}
    {\sqrt{\period[\epoch][\mathrm r]}}
    \le
    \frac{\period[\epoch + 1][\mathrm c] - \period[\epoch][\mathrm c]}
    {\sqrt{\period[\epoch][\mathrm c]}}
    +\frac1{\sqrt{\period[\epoch][\mathrm c]}}.
  \]
  The added cost is at most $\midSchedule[0] / \sqrt{\completedrounds} \le 2\sqrt{\scheduleendpoint}$:
  for $\log\horizonratio \ge 1$,
  $\midSchedule[0] \le \log_2\log\horizonratio + 2 \le \log\horizonratio + 2 \le 2\sqrt{\horizonratio}$.
  Deleting repeated rounded times removes only zero-length intervals and does not change the cost.  There are then at most $\midSchedule[0] \le \midSchedule$ intervals.

  If necessary, insert the smallest unused interior integer times until exactly $\midSchedule$ intervals remain.  This is possible because $\midSchedule \le \scheduleendpoint - \completedrounds$.  Inserting a point cannot increase cost: for $\period < \period[][\prime] < \period[][\prime\prime]$,
  \[
    \frac{\period[][\prime] - \period}{\sqrt{\period}}
    +\frac{\period[][\prime\prime] - \period[][\prime]}{\sqrt{\period[][\prime]}}
    \le \frac{\period[][\prime\prime] - \period}{\sqrt{\period}}.
  \]
  The constructed integer partition therefore costs at most
  $(4\exp(1 / 2) + 2)\sqrt{\scheduleendpoint} < 9\sqrt{\scheduleendpoint}$.
  The exact minimizing partition has no greater cost, proving the claim.
\end{proof}

\subsubsection{Local sensitivity with an estimated request prior}
\label{subsec:unknown-sensitivity}

Use the nominal notation from \eqref{eq:nominal-supports} and \eqref{eq:one-sided-capacity-distance}.  For a capacity ratio $\capacityratio$ and an estimated prior $\prior[][\prime]$, define the joint perturbation size
\begin{equation*}
  \jointdistance(\capacityratio, \prior[][\prime])
  \coloneqq \capacitydistance(\capacityratio) + \|\prior[][\prime] - \prior\|_1.
\end{equation*}
Upward changes in an inactive capacity are omitted for the same reason as in \eqref{eq:one-sided-capacity-distance}: they relax a constraint that is strictly slack at $\selectedsolution[][*]$.
For a stationary LP vector $\selectedsolution = (\occupation, \stateoccupation)$, write $\reward[][\top]\selectedsolution \coloneqq \sum_{\state, \req, \act}\reward(\state, \req, \act)\occupation[\state\req\act]$ for its stationary reward.

\paragraph{Joint Lipschitz sensitivity.}
\label{app:proof:lem:unknown-joint-sensitivity}

For any empirical LP solution
$\selectedsolution(\capacityratio, \prior[][\prime])$ whose core projection is
well defined, let
$\inducedkernel[\capacityratio, \prior[][\prime]][\mathrm{proj}]$ denote the
transition kernel generated by that projected policy under the true request
prior $\prior$.  Thus, for $\state,\state[][\prime]\in\recstates$,
\[
  \inducedkernel[\capacityratio, \prior[][\prime]][\mathrm{proj}]
    (\state,\state[][\prime])
  \coloneqq
  \sum_{\req}\prior[\req]\sum_{\act}
  \coreprojection(\selectedsolution(\capacityratio,\prior[][\prime]))
    (\act\mid\state,\req)
  \transition(\state[][\prime]\mid\state,\act).
\]

\begin{lemma}
  \label{lem:unknown-joint-sensitivity}
  There is a radius $\localradius > 0$ such that, whenever $\jointdistance(\capacityratio, \prior[][\prime]) < \localradius$, every optimal solution $\selectedsolution(\capacityratio, \prior[][\prime])$ of the full empirical LP $\stationaryfluid[][\prior[][\prime]](\capacityratio)$ satisfies
  \begin{equation}
    \|\selectedsolution(\capacityratio, \prior[][\prime]) - \selectedsolution[][*]\|_1
    = \bigO(\jointdistance(\capacityratio, \prior[][\prime])).
    \label{eq:unknown-joint-sensitivity}
  \end{equation}
  In particular, there is $\positivitymargin>0$ such that $\stateoccupation[\state](\capacityratio, \prior[][\prime]) \ge \positivitymargin$ for every $\state \in \recstates$, and the core projection \eqref{eq:unknown-core-projection} is well defined. Moreover,
  \begin{align}
    \|\coreprojection(\selectedsolution(\capacityratio, \prior[][\prime]))(\cdot \mid \state, \req) - \policy[\nominalratio](\cdot \mid \state, \req)\|_1
    & = \bigO(\jointdistance(\capacityratio, \prior[][\prime])),
    \qquad \state \in \recstates,\ \req \in \requests,
    \label{eq:unknown-projected-policy-close}\\
    \|\inducedkernel[\capacityratio, \prior[][\prime]][\mathrm{proj}](\state, \cdot) - \inducedkernel[][*](\state, \cdot)\|_1
    & = \bigO(\jointdistance(\capacityratio, \prior[][\prime])),
    \qquad \state \in \recstates.
    \label{eq:unknown-projected-kernel-close}
  \end{align}
\end{lemma}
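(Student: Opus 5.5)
The plan is to adapt the proof of Lemma~\ref{lem:deg-gimfr-polyhedral-sensitivity} so that the prior may also move. The prior enters only the request-balance rows \textup{(SF1)}, where it multiplies $\stateoccupation[\state]$; the fixed comparison polytope $\bindingpolytope$ there was built with the true prior. As before, Step~1 gives that $\selectedsolution[][*]$ is the unique reward maximizer on $\bindingpolytope$ and the sharpness bound \eqref{eq:deg-gimfr-weak-sharp}:
\[
  \reward^\top\selectedsolution[][*]-\reward^\top\comparatorsolution\ge\sharpnessconstant\|\comparatorsolution-\selectedsolution[][*]\|_1,
  \qquad \comparatorsolution\in\bindingpolytope .
\]
The new ingredient is a transport map. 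It sends a stationary flow feasible under $\prior[][\prime]$ to one feasible under $\prior$ at $\ell_1$ cost $\bigO(\|\prior[][\prime]-\prior\|_1)$, and vice versa, while keeping $\stateoccupation$ fixed. For each state $\state$ we rescale every request slice $\occupation[\state\req\cdot]$ by $\prior[\req]/\prior[\req][\prime]$. Here $\prior[][\prime]$ is bounded away from zero because $\|\prior[][\prime]-\prior\|_1<\localradius\le\min_\req\prior[\req]/2$. This rescaling preserves \textup{(SF3)} and makes \textup{(SF1)} hold with $\prior$. It perturbs \textup{(SF2)} and the resource rows by $\bigO(\|\prior[][\prime]-\prior\|_1)$, because each slice has mass $\prior[\req]\stateoccupation[\state]\le1$.

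To restore \textup{(SF2)} exactly, I would not rescale per request. Instead, I would keep the per-state action kernel $\policy(\cdot\mid\state,\req)$ and note that the flow violates stationarity only through the change of request weights. I would then correct it by mixing with $\nullsolution$ and a Hoffman-type error bound for the fixed linear system \textup{(SF1)--(SF3)} with the true prior. Hoffman's lemma gives, for any nonnegative point with residual $\epsilon$, a feasible point of the true-prior system within $\bigO(\epsilon)$ in $\ell_1$. The same argument with the roles of $\prior$ and $\prior[][\prime]$ swapped needs a Hoffman constant uniform over $\prior[][\prime]$ near $\prior$. I would obtain this by noting that the constraint matrix is affine in $\prior[][\prime]$. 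If uniformity fails there, I would use the explicit construction for the reverse direction: take $\selectedsolution[][*]$, keep its per-state policy, and recompute the stationary distribution of the kernel induced under $\prior[][\prime]$ on $\recstates$. That distribution is Lipschitz by irreducibility (Assumption~\ref{ass:support-irreducibility}) and the fundamental-matrix argument in Lemma~\ref{lem:poisson-bounds}.

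With the two transports in hand, the argument repeats Step~2 of Lemma~\ref{lem:deg-gimfr-polyhedral-sensitivity}. First, transport $\selectedsolution[][*]$ to a $\prior[][\prime]$-feasible flow and mix it with the null flow, using weight $\mixtureweight=\bigO(\jointdistance)$ to absorb both the capacity shift and the $\bigO(\|\prior[][\prime]-\prior\|_1)$ resource error. This gives a lower bound on the empirical optimal value:
\[
  \reward^\top\selectedsolution(\capacityratio,\prior[][\prime])\ge\stationaryreward[][*]-\bigO(\jointdistance).
\]
Next, transport an arbitrary empirical optimizer back to a true-prior flow and mix it with the null flow, which puts the result in $\bigindingpolytope$. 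Its reward is at least $\stationaryreward[][*]-\bigO(\jointdistance)$, since rewards change by $\bigO(\|\cdot\|_1)$ under the transport. The sharpness bound then gives a distance $\bigO(\jointdistance)$ to $\selectedsolution[][*]$, and the triangle inequality gives \eqref{eq:unknown-joint-sensitivity}.

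The remaining claims follow as in Step~3 of Lemma~\ref{lem:deg-gimfr-polyhedral-sensitivity}. The lower bound $\stateoccupation[\state]\ge\positivitymargin$ on $\recstates$ follows after shrinking $\localradius$. For the core projection, the numerator $\sum_{\act\in\actions[\recstates](\state)}\occupation[\state\req\act]$ is within $\bigO(\jointdistance)$ of $\prior[\req]\stateoccupation[\state][*]\ge p_{\min}q_{\min}^*$. This holds because every nominal active action at $\state\in\recstates$ lies in $\actions[\recstates](\state)$, by the closure argument of Proposition~\ref{prop:support-regularity}(i), which uses only $\stateoccupation[][*](\outsidestates)=0$. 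So the denominator is bounded below and the projection is well defined. The ratio calculation from Step~3 then gives \eqref{eq:unknown-projected-policy-close}, with the discarded mass counted in the $\bigO(\jointdistance)$ numerator error. Averaging this bound over $\prior$ and $\transition$ gives \eqref{eq:unknown-projected-kernel-close}.

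I expect the main obstacle to be the transport step, that is, the exact feasibility repair of \textup{(SF2)} with a constant uniform in $\prior[][\prime]$. I would try the Hoffman bound for the fixed true-prior system first, and fall back on the irreducible-kernel stationary-distribution construction for the reverse direction.
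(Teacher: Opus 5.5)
Your proposal is correct and is essentially the paper's proof: weak sharpness on the fixed true-prior comparison polytope, then a $p'$-feasible comparator built by keeping the nominal policy on the optimal target state class and recomputing its Lipschitz stationary law under $p'$, then null-flow dilution, and finally a Hoffman transfer of the empirical optimizer back to the fixed true-prior system. The comparator is your fallback, which is exactly the paper's Step~2 and the one you should commit to, since affine dependence on $p'$ alone does not give a uniform Hoffman constant; the only other difference is cosmetic, in that you apply Hoffman to \textup{(SF1)--(SF3)} and then dilute with the null flow to restore the binding resource rows, whereas the paper applies Hoffman directly to the polytope that already contains those rows, and both give the same $O(\Delta)$ bound.
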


\begin{proof}
  The proof mirrors the proof of Lemma~\ref{lem:deg-gimfr-polyhedral-sensitivity}.  Fix $(\capacityratio, \prior[][\prime])$ in the stated neighborhood and abbreviate
  \[
    \perturbation \coloneqq \jointdistance(\capacityratio, \prior[][\prime]).
  \]

  \noindent\textbf{Step 1: one fixed comparison polytope.}
  Let $\bindingpolytope$ be the stationary flow polytope under the true prior $\prior$ in which the resource constraints are retained only for $\res \in \bindingset$; inactive resource constraints are dropped.  The set is compact because $\sum_{\state} \stateoccupation[\state] = 1$ and $\sum_{\state, \req, \act}\occupation[\state\req\act] = 1$.  We claim that $\selectedsolution[][*]$ is the unique maximizer of the reward over $\bindingpolytope$.  Indeed, if $\comparatorsolution \in \bindingpolytope$ had reward at least $\stationaryfluid[][\prior](\nominalratio)$, then for sufficiently small $\mixtureweight > 0$ the convex combination
  \[
    (1 - \mixtureweight)\selectedsolution[][*] + \mixtureweight \comparatorsolution
  \]
  would satisfy every inactive resource constraint at $\nominalratio$, because $\selectedsolution[][*]$ has the strict slack $\resourceslack[\res][*] > 0$ there.  The binding constraints are already satisfied by both points.  A strict reward improvement would contradict optimality at $\nominalratio$, and equal reward with $\comparatorsolution\ne \selectedsolution[][*]$ would contradict uniqueness.  Hence $\selectedsolution[][*]$ is the unique maximizer on $\bindingpolytope$.

  A linear objective over a fixed compact polytope with a unique maximizer is weakly sharp. When $\bindingpolytope\ne\{\selectedsolution[][*]\}$, set
  \[
    \sharpnessconstant
    \coloneqq \min_{\substack{\vertex\text{ a vertex of }\bindingpolytope\\\vertex\ne\selectedsolution[][*]}}
    \frac{\stationaryfluid[][\prior](\nominalratio) - \reward[][\top]\vertex}{\|\vertex - \selectedsolution[][*]\|_1} > 0.
  \]
  Every point of $\bindingpolytope$ is a convex combination of its vertices, so linearity of the objective and the triangle inequality give
  \begin{equation}
    \stationaryfluid[][\prior](\nominalratio) - \reward[][\top]\comparatorsolution
    \ge \sharpnessconstant\|\comparatorsolution - \selectedsolution[][*]\|_1,
    \qquad \comparatorsolution \in \bindingpolytope.
    \label{eq:unknown-weak-sharp}
  \end{equation}

  \noindent\textbf{Step 2: a perturbed feasible point within $\bigO(\jointdistance)$ of $\selectedsolution[][*]$.}
  Keep the nominal policy $\policy[\nominalratio]$ on $\recstates$.  Under a nearby prior $\prior[][\prime]$, its induced kernel
  \[
    \inducedkernel[\prior[][\prime]][*](\state, \state[][\prime])
    \coloneqq \sum_{\req}\prior[\req][\prime]\sum_{\act}\policy[\nominalratio](\act \mid \state, \req)\transition(\state[][\prime] \mid \state, \act)
  \]
  remains irreducible on $\recstates$, because the positive graph of $\inducedkernel[][*]$ persists when $\prior[][\prime]$ is close to $\prior$.  Its stationary distribution $\stateoccupation[][*](\prior[][\prime])$ is Lipschitz in $\prior[][\prime]$ on the finite state space $\recstates$; this follows, for example, from the same fundamental-matrix representation used in Lemma~\ref{lem:poisson-bounds}.  Define
  \[
    \occupation[\state\req\act][0](\prior[][\prime]) \coloneqq \prior[\req][\prime]\stateoccupation[\state][*](\prior[][\prime])\policy[\nominalratio](\act \mid \state, \req)
  \]
  on $\recstates$ and zero outside.  Then $\priortransport(\prior[][\prime]) = (\occupation[][0](\prior[][\prime]), \stateoccupation[][*](\prior[][\prime]))$ is a stationary flow solution for prior $\prior[][\prime]$ and
  \begin{equation*}
    \|\priortransport(\prior[][\prime]) - \selectedsolution[][*]\|_1 = \bigO(\|\prior[][\prime] - \prior\|_1).
  \end{equation*}
  Its resource rates differ from the nominal rates by $\bigO(\|\prior[][\prime] - \prior\|_1)$.

  Let $\nullsolution(\prior[][\prime])$ be any stationary flow solution that always uses the null action; it has zero reward and zero resource use. Because $\nominalratio[\res] > 0$ for every binding resource, one can, for sufficiently small $\perturbation$, choose $\mixtureweight = \bigO(\perturbation)$ so that
  \[
    \perturbedcomparator \coloneqq (1 - \mixtureweight)\priortransport(\prior[][\prime]) + \mixtureweight \nullsolution(\prior[][\prime])
  \]
  is feasible for $\stationaryfluid[][\prior[][\prime]](\capacityratio)$.  Binding-resource deficits of size at most $\capacitydistance(\capacityratio)$ are absorbed by the $\bigO(\perturbation)$ dilution, while for $\res \notin \bindingset$ the nominal strict slack $\resourceslack[\res][*]$ absorbs every permitted small downward perturbation.  Therefore
  \begin{equation}
    \|\perturbedcomparator - \selectedsolution[][*]\|_1 = \bigO(\perturbation),
    \qquad
    \reward[][\top]\perturbedcomparator \ge \stationaryfluid[][\prior](\nominalratio) - \bigO(\perturbation).
    \label{eq:unknown-near-feasible-comparator}
  \end{equation}
  If $\selectedsolution = \selectedsolution(\capacityratio, \prior[][\prime])$ is any perturbed optimum, optimality gives $\reward[][\top]\selectedsolution \ge \reward[][\top]\perturbedcomparator$.

  \noindent\textbf{Step 3: transfer the perturbed optimizer to the fixed nominal polytope.}
  The vector $\selectedsolution$ satisfies the nominal state-flow and normalization equations exactly.  Its request-balance residual under the true prior is
  \[
    \sum_{\act} \occupation[\state\req\act] - \prior[\req]\stateoccupation[\state] = (\prior[\req][\prime] - \prior[\req])\stateoccupation[\state],
  \]
  so its total equality violation is $\bigO(\|\prior[][\prime] - \prior\|_1)$.  For $\res \in \bindingset$ its resource violation relative to $\nominalratio[\res]$ is at most $\capacitydistance(\capacityratio)$, while inactive resources are not constrained in $\bindingpolytope$.  The ordinary Hoffman bound for the \emph{fixed} polytope $\bindingpolytope$ therefore gives a point $\comparatorsolution \in \bindingpolytope$ with
  \begin{equation}
    \|\selectedsolution - \comparatorsolution\|_1 = \bigO(\perturbation).
    \label{eq:unknown-hoffman-transfer}
  \end{equation}
  If $\bindingpolytope = \{\selectedsolution[][*]\}$, then $\comparatorsolution = \selectedsolution[][*]$ and \eqref{eq:unknown-hoffman-transfer} already proves \eqref{eq:unknown-joint-sensitivity}. Otherwise, using \eqref{eq:unknown-near-feasible-comparator} and \eqref{eq:unknown-hoffman-transfer},
  \[
    \stationaryfluid[][\prior](\nominalratio) - \reward[][\top]\comparatorsolution = \bigO(\perturbation).
  \]
  The weak-sharpness inequality \eqref{eq:unknown-weak-sharp} implies $\|\comparatorsolution - \selectedsolution[][*]\|_1 = \bigO(\perturbation)$, and the triangle inequality proves \eqref{eq:unknown-joint-sensitivity}.

  Finally, $\stateoccupation[\state][*] > 0$ on $\recstates$ and $\prior[\req] > 0$.  Every nominal positive action stays in $\recstates$, so
  \[
    \sum_{\act \in \actions[\recstates](\state)}\occupation[\state\req\act][*] = \prior[\req]\stateoccupation[\state][*] > 0.
  \]
  The sensitivity bound therefore makes every denominator in \eqref{eq:unknown-core-projection} uniformly positive locally.  Normalizing the safe state-request-action masses gives \eqref{eq:unknown-projected-policy-close}, and the kernel estimate \eqref{eq:unknown-projected-kernel-close} follows by linearity and the fixed true prior $\prior$.
\end{proof}

\paragraph{Basis stability under uniqueness and nondegeneracy.}
\label{app:proof:lem:unknown-joint-basis}

For any capacity ratio $\capacityratio$, prior $\prior[][\prime]$, and empirical
stationary LP solution
$\selectedsolution(\capacityratio,\prior[][\prime])
=(\occupation(\capacityratio,\prior[][\prime]),
\stateoccupation(\capacityratio,\prior[][\prime]))$,
define its stationary resource-consumption rates by
\[
  \stationaryconsumption[\res][\prior[][\prime]](\capacityratio)
  \coloneqq
  \sum_{\state,\req,\act}\consumption[\res](\state,\req,\act)
  \occupation[\state\req\act](\capacityratio,\prior[][\prime]).
\]

\begin{lemma}
  \label{lem:unknown-joint-basis}
  Under Assumption~\ref{ass:nominal-nondegeneracy}, there is a $\localradius[\mathrm{nd}] > 0$ such that, whenever $\jointdistance(\capacityratio, \prior[][\prime]) < \localradius[\mathrm{nd}]$, the full empirical $\stationaryfluid[][\prior[][\prime]](\capacityratio)$ has the nominal basis as its unique optimal basis. Its positive $\occupation$- and $\stateoccupation$-supports are exactly $\actsupport$ and $\recstates$, respectively. The core projection does nothing, every used action is nominal-dual tight, and for some $\positivitymargin[\mathrm{nd}] > 0$,
  \begin{align}
    \stationaryconsumption[\res][\prior[][\prime]](\capacityratio) & = \capacityratio[\res], & & \res \in \bindingset,
    \label{eq:unknown-empirical-binding-rate}\\
    \stationaryconsumption[\res][\prior[][\prime]](\capacityratio) & \le \capacityratio[\res] - \positivitymargin[\mathrm{nd}], & & \res \notin \bindingset.
    \label{eq:unknown-empirical-inactive-rate}
  \end{align}
  The same statement holds for the restricted empirical LP used in the protective phase.
\end{lemma}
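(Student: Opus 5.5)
The plan is to rerun the fixed-basis argument from the proof of Proposition~\ref{prop:local-lp-stability}, now treating the prior $\prior[][\prime]$ as a perturbation of the constraint matrix rather than only of the right-hand side. In the standard form \eqref{eq:standard-parametric-lp}, the prior appears only in the request-balance rows \textup{(SF1)} as the coefficients $-\prior[\req]$ of the $\stateoccupation[\state]$ columns. The capacity ratio appears only in the resource entries of $\rhsvector$.

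Write $\constraintmatrix(\prior[][\prime])$ and $\rhsvector(\capacityratio)$ for the perturbed data. One subtlety is the redundant-row deletion. At the true prior, one \textup{(SF1)}--\textup{(SF3)} row is redundant, because summing \textup{(SF1)} over $\req$ and comparing with \textup{(SF2)} and \textup{(SF3)} yields a dependency. For any prior summing to one, the same dependency persists with identical structure. Hence the same fixed row deletion produces an equivalent full-row-rank system for all $\prior[][\prime]$ in the simplex. I would verify this first, since the empirical $\empiricalprior[][\completedrounds]$ always sums to one.

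With this in place, let $\basis(\prior[][\prime])$ be the nominal basis columns evaluated at $\prior[][\prime]$. By continuity of the determinant and of matrix inversion, $\basis(\prior[][\prime])$ stays nonsingular and $\basis(\prior[][\prime])^{-1}$ is Lipschitz in $\prior[][\prime]$ near $\prior$. The basic solution $\basis(\prior[][\prime])^{-1}\rhsvector(\capacityratio)$ is jointly continuous. Its non-slack basic coordinates equal $\selectedsolution[][*]$ plus $\bigO(\jointdistance)$ terms, using the computation from the proof of Proposition~\ref{prop:local-lp-stability}. That computation shows the inactive-capacity directions move only the inactive slacks, and those slacks are at least $\resourceslack[\res][*]$ minus the permitted downward movement. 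The reduced costs $\reducedcost[\nonbasiccols](\prior[][\prime]) = \objectivevector[\nonbasiccols] - \objectivevector[\basis][\top]\basis(\prior[][\prime])^{-1}\nonbasiccols(\prior[][\prime])$ are continuous in $\prior[][\prime]$ and do not depend on $\capacityratio$. They are strictly negative at $\prior$ by Proposition~\ref{prop:reduced-costs}, so they stay strictly negative for small $\|\prior[][\prime]-\prior\|_1$.

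Choosing $\localradius[\mathrm{nd}]$ small enough makes all basic variables at least some $\positivitymargin[\mathrm{nd}]$ and all reduced costs negative. The second bullet of Proposition~\ref{prop:reduced-costs} then gives the unique nondegenerate optimum with the nominal basis. The supports therefore coincide with $\actsupport$ and $\recstates$, and the binding slacks remain nonbasic and hence zero, which gives \eqref{eq:unknown-empirical-binding-rate}. The inactive slacks remain basic and at least $\positivitymargin[\mathrm{nd}]$, which gives \eqref{eq:unknown-empirical-inactive-rate}.

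The consequences then follow from the supports. Dual tightness follows because every used triple lies in $\actsupport$, where nominal complementary slackness \eqref{eq:nominal-cs-action} holds. The core projection does nothing: $\recstates=\states$ by Proposition~\ref{prop:nondegenerate-full-state-support}, so $\actions[\recstates](\state)=\actions$.

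For the restricted LP $\stationaryfluid[][\prior[][\prime],\recstates]$, the added constraints only fix to zero variables that are already nonbasic and zero. Since $\recstates=\states$, these restrictions are in fact vacuous, and the statement transfers verbatim.

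I expect the main obstacle to be the bookkeeping that justifies one fixed row deletion valid uniformly in $\prior[][\prime]$, together with the assertion that the nominal basis matrix built from the perturbed coefficients is still a basis of the reduced system. Once that is settled, everything else is continuity plus Proposition~\ref{prop:reduced-costs}.
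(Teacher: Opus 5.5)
Your proposal is correct and follows essentially the same route as the paper: you check that one fixed row deletion stays valid for every prior in the simplex, then rerun the fixed-basis argument of Proposition~\ref{prop:local-lp-stability} with the perturbed basis matrix, using that inactive-capacity increases move only their own basic slacks and that the reduced costs depend on $\prior[][\prime]$ but not on $\capacityratio$. One step to tighten is the row deletion: the paper computes the entire row-dependency space (request-row coefficients equal to state-flow coefficients $v$, normalization coefficient zero, $v$ harmonic under every action), which is independent of $\prior[][\prime]$ without further assumptions, whereas your ``exactly one redundant row'' holds here only because irreducibility of $\inducedkernel[][*]$ on $\recstates=\states$ forces such $v$ to be constant.
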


\begin{proof}
  First verify that the row deletion fixed in Section~\ref{sec:regularity} remains valid as the prior changes. Write all variables on the left-hand sides of \textup{(SF1)--(SF3)}, and add the resource slacks to \textup{(SF4)}. In a linear dependence among these full equality rows, let $\requestrowcoefficient[\state\req]$, $\stateflowrowcoefficient[\state]$, and $w$ be the coefficients of the request, state-flow, and normalization rows. Every resource-row coefficient is zero because its slack variable occurs in that row alone. The $\occupation$- and $\stateoccupation$-columns therefore give, respectively,
  \[
    \requestrowcoefficient[\state\req] = \sum_{\state[][\prime]}\transition(\state[][\prime] \mid \state, \act)\stateflowrowcoefficient[\state[][\prime]]
    \quad\text{for every }(\state, \req, \act),
    \qquad
    -\sum_{\req}\prior[\req][\prime]\requestrowcoefficient[\state\req] + \stateflowrowcoefficient[\state] + w = 0.
  \]
  The request-independent transition kernel makes $\requestrowcoefficient[\state\req]$ a common value $\requestrowcoefficient[\state]$ for all requests at a state.  Since $\sum_{\req}\prior[\req][\prime] = 1$, the second equality becomes $\requestrowcoefficient[\state] = \stateflowrowcoefficient[\state] + w$.  Using the first equality for the null action and averaging against any stationary distribution of its finite-state transition matrix gives $w = 0$.  Consequently the full row-dependency space is exactly
  \[
    \requestrowcoefficient[\state\req] = \stateflowrowcoefficient[\state],
    \qquad
    \sum_{\state[][\prime]}\transition(\state[][\prime] \mid \state, \act)\stateflowrowcoefficient[\state[][\prime]] = \stateflowrowcoefficient[\state]
    \quad\text{for every }(\state, \act),
    \qquad w = 0,
  \]
  with zero resource-row coefficients. These conditions are independent of $\prior[][\prime]$, and every such dependence also annihilates the right-hand side: the balance rows have zero right-hand sides, while the normalization and resource coefficients vanish. Thus the same retained rows form an equivalent full-row-rank system for every probability vector $\prior[][\prime]$ and every $\capacityratio$.

  In that fixed representation, the entries of the nominal basis matrix and of the nonbasic columns depend continuously on $\prior[][\prime]$, while the resource right-hand sides depend continuously on $\capacityratio$.  The nominal basis matrix is nonsingular, all nominal basic variables are strictly positive, and all nominal nonbasic reduced costs are strictly negative by Proposition~\ref{prop:local-lp-stability}.  Hence these properties persist for sufficiently small two-sided perturbations of $(\nominalratio, \prior)$.  For an inactive resource, its positive nominal slack remains basic, and its column is the same resource-row unit vector for every $\prior[][\prime]$.  Increasing that resource's right-hand side therefore increases only its own basic slack, leaving all other basic variables and all reduced costs unchanged.  Applying the two-sided neighborhood result first with each inactive capacity replaced by $\min\{\capacityratio[\res], \nominalratio[\res]\}$, and then restoring all upward increases, proves the result throughout the stated one-sided joint neighborhood.  The binding and uniform inactive-slack conclusions follow from the continued basic solution, after shrinking the neighborhood if necessary.  Thus the basis remains uniquely optimal with exactly the nominal positive support, so every used action is nominal-dual tight and the core projection is unchanged.  Deleting variables outside $\recstates$ leaves the same nominal basis feasible and uniquely optimal in the restricted LP\@.
\end{proof}

\paragraph{Frozen core-projected epochs.}
\label{app:proof:lem:unknown-frozen-epoch}

For the next lemma, take a pair $(\capacityratio,\prior[][\prime])$ for which
the core projection is well defined, a positive integer $\blocklength$, and an
initial state in $\recstates$.  Run the frozen core-projected policy
$\coreprojection(\selectedsolution(\capacityratio,\prior[][\prime]))$ under
the true request prior $\prior$, and number the epoch's pre-request instants by
$\period=1,\ldots,\blocklength$.  Let $\filtration[][\period]$ denote the full
history before request $\period$.  Fix a bounded functional of the current
state, fresh request, selected action, and next state, and denote its value in
round $\period$ by $\oneperiodquantity[][\period]$.  Under the frozen projected
policy, define its statewise conditional mean to be the one-step kernel
expectation from state $\state$:
\[
  \conditionalmean[\capacityratio,\prior[][\prime]](\state)
  \coloneqq
  \expect[\oneperiodquantity[][\period]\mid\stateproc[][\period]=\state].
\]
Thus
$\expect[\oneperiodquantity[][\period]\mid\filtration[][\period]]
=\conditionalmean[\capacityratio,\prior[][\prime]](\stateproc[][\period])$.
Under the nominal policy $\policy[\nominalratio]$, let
$\conditionalmean[][*](\state)$ denote the corresponding statewise mean and
let $\nominalmean$ denote its stationary mean.  Rewards and resource
consumptions are examples of such functionals.

Under Assumption~\ref{ass:nominal-nondegeneracy}, for pairs in the joint-basis neighborhood
of Lemma~\ref{lem:unknown-joint-basis}, write
$\policy[\capacityratio,\prior[][\prime]]$ for the local empirical policy,
which equals its core projection, and let
$\trueresourcemean[\res](\capacityratio,\prior[][\prime])$ denote its stationary
rate of resource-$\res$ consumption when it is evaluated under the true
request prior.

\begin{lemma}
  \label{lem:unknown-frozen-epoch}
  If $\jointdistance(\capacityratio, \prior[][\prime]) < \localradius$, then
  \begin{equation}
    \expect\left[
    \max_{1 \le \completedrounds \le \blocklength}
    \left|\sum_{\period = 1}^{\completedrounds}(\oneperiodquantity[][\period] - \nominalmean)\right|
    \right]
    = \bigO\bigl(\sqrt {\blocklength} + \blocklength\jointdistance(\capacityratio, \prior[][\prime]) + 1\bigr),
    \label{eq:unknown-frozen-bound}
  \end{equation}
  and, for every $\deviationlevel > 0$,
  \begin{equation}
    \prob\left(
    \max_{1 \le \completedrounds \le \blocklength}
    \left|\sum_{\period = 1}^{\completedrounds}(\oneperiodquantity[][\period] - \nominalmean)\right|
    > \bigTheta(\blocklength\jointdistance(\capacityratio, \prior[][\prime]) + 1) + \deviationlevel
    \right)
    \le \bigO\left(\exp\left(-\bigOmega\!\left(\frac{\deviationlevel[][2]}{\blocklength}\right)\right)\right).
    \label{eq:unknown-frozen-tail}
  \end{equation}
  All constants are uniform over the local neighborhood.

  Under Assumption~\ref{ass:nominal-nondegeneracy}, the true-prior stationary resource rates satisfy
  \begin{align}
    |\trueresourcemean[\res](\capacityratio, \prior[][\prime]) - \capacityratio[\res]| & = \bigO(\|\prior[][\prime] - \prior\|_1),
    & & \res \in \bindingset,
    \label{eq:unknown-true-binding-bias}\\
    \trueresourcemean[\res](\capacityratio, \prior[][\prime]) & \le \capacityratio[\res] - \frac12\positivitymargin[\mathrm{nd}],
    & & \res \notin \bindingset,
    \label{eq:unknown-true-inactive-slack}
  \end{align}
  whenever the neighborhood is small enough, and
  \begin{equation}
    \prob\left(
    \max_{1 \le \completedrounds \le \blocklength}
    \left|\sum_{\period = 1}^{\completedrounds}(\consumptionproc[\res][\period] - \trueresourcemean[\res](\capacityratio, \prior[][\prime]))\right| > \deviationlevel
    \right)
    \le \bigO\left(\exp\left(-\bigOmega\!\left(\frac{\deviationlevel[][2]}{\blocklength}\right)\right)\right)
    \label{eq:unknown-centered-resource-tail}
  \end{equation}
  for $\deviationlevel$ above a fixed constant.
\end{lemma}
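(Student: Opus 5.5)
I would follow the template of Lemma~\ref{lem:deg-gimfr-frozen-additive}. The nominal Poisson equation supplies a martingale, Lemma~\ref{lem:unknown-joint-sensitivity} controls how far the frozen projected chain is from the nominal one, and, because the projected policy keeps the chain in $\recstates$, there is no outside-state term.

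\textbf{Maximal bounds \eqref{eq:unknown-frozen-bound} and \eqref{eq:unknown-frozen-tail}.} Fix $(\capacityratio,\prior[][\prime])$ with $\jointdistance<\localradius$. Let $\poissoncorrector$ be the bounded centered solution on $\recstates$ of $(\identitymatrix-\inducedkernel[][*])\poissoncorrector=\conditionalmean[][*]-\nominalmean\onesvector$; it exists by Assumption~\ref{ass:support-irreducibility}. Since every projected action lies in $\actions[\recstates](\state)$, the chain never leaves $\recstates$. Next define
\[
\martingale[][\period+1]\coloneqq\oneperiodquantity[][\period]-\conditionalmean[\capacityratio,\prior[][\prime]](\stateproc[][\period])+\poissoncorrector(\stateproc[][\period+1])-(\inducedkernel[\capacityratio,\prior[][\prime]][\mathrm{proj}]\poissoncorrector)(\stateproc[][\period]).
\]
Telescoping gives
\[
\sum_{\period\le\completedrounds}(\oneperiodquantity[][\period]-\nominalmean)=\sum_{\period\le\completedrounds}\martingale[][\period+1]+\poissoncorrector(\stateproc[][1])-\poissoncorrector(\stateproc[][\completedrounds+1])+\sum_{\period\le\completedrounds}\approxerror(\stateproc[][\period]),
\]
where $\approxerror=(\conditionalmean[\capacityratio,\prior[][\prime]]-\conditionalmean[][*])+(\inducedkernel[\capacityratio,\prior[][\prime]][\mathrm{proj}]-\inducedkernel[][*])\poissoncorrector$.

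The functional depends on (state, request, action, next state). Its one-step mean under the projected policy and the true prior therefore differs from the nominal mean by at most a constant times $\max_\req\|\coreprojection(\cdot\mid\state,\req)-\policy[\nominalratio](\cdot\mid\state,\req)\|_1$. By \eqref{eq:unknown-projected-policy-close} and \eqref{eq:unknown-projected-kernel-close}, this gives $|\approxerror|=\bigO(\jointdistance)$ uniformly.

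Doob's $L^2$ maximal inequality then yields \eqref{eq:unknown-frozen-bound}. Fact~\ref{fact:maximal-azuma} applied to the bounded martingale yields \eqref{eq:unknown-frozen-tail}, with the deterministic part $\blocklength\bigO(\jointdistance)+2\|\poissoncorrector\|_\infty$ absorbed into the $\bigTheta(\cdot)$ term.

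\textbf{Nondegenerate rate statements.} Assume Assumption~\ref{ass:nominal-nondegeneracy} and stay in the neighborhood of Lemma~\ref{lem:unknown-joint-basis}. There the empirical solution has support $\actsupport$ and $\recstates$, the projection is trivial, and the policy is $\occupation/(\prior[\req][\prime]\stateoccupation)$ with $\stateoccupation$ bounded below on $\recstates$. The true-prior stationary law of the induced kernel is Lipschitz in the kernel by the fundamental-matrix argument of Lemma~\ref{lem:poisson-bounds}, since irreducibility persists.

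Next, compare with the empirical flow. Evaluate the same policy under $\prior[][\prime]$ instead of $\prior$: this changes the kernel and the per-state consumption by $\bigO(\|\prior[][\prime]-\prior\|_1)$. Under $\prior[][\prime]$ the stationary law is exactly $\stateoccupation(\capacityratio,\prior[][\prime])$, with rate $\stationaryconsumption[\res][\prior[][\prime]](\capacityratio)$. Hence $|\trueresourcemean[\res]-\stationaryconsumption[\res][\prior[][\prime]](\capacityratio)|=\bigO(\|\prior[][\prime]-\prior\|_1)$.

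Combining this with \eqref{eq:unknown-empirical-binding-rate} gives \eqref{eq:unknown-true-binding-bias}. Combining it with \eqref{eq:unknown-empirical-inactive-rate}, after shrinking the neighborhood so that the $\bigO(\cdot)$ term is at most $\positivitymargin[\mathrm{nd}]/2$, gives \eqref{eq:unknown-true-inactive-slack}.

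\textbf{Centered tail \eqref{eq:unknown-centered-resource-tail}.} I would repeat the argument of Lemma~\ref{lem:frozen-concentration}, but with the Poisson corrector of the actual frozen kernel (true prior) and the centering $\trueresourcemean[\res]$. The remainder is then exactly zero, and only the boundary terms $2\poissonbound$ remain; these are absorbed once $\deviationlevel\ge4\poissonbound$.

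\textbf{Main obstacle.} The delicate step is the uniformity of the constants, in particular a uniform bound on this corrector over all pairs in the neighborhood. This requires the true-prior projected kernels to stay irreducible with a uniformly bounded fundamental matrix. I would obtain this from \eqref{eq:unknown-projected-kernel-close}: the positive graph of $\inducedkernel[][*]$ persists when perturbations are small, and a continuity-plus-compactness argument on a closed neighborhood, as in \eqref{eq:uniform-fundamental-bound}, gives the uniform bound. Shrinking $\localradius$ if necessary handles the boundary of the neighborhood.
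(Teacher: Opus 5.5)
Your proposal is correct and follows essentially the same route as the paper's proof. A fixed nominal Poisson corrector, a bounded-increment martingale and an $O(\Delta)$ per-step remainder (with no outside-state term, since the core projection keeps the chain in $\mathcal R^*$) give the two maximal bounds via Doob and maximal Azuma--Hoeffding. Under nondegeneracy, swapping $p'$ for $p$ against the exact empirical stationary flow gives the rate bounds, and the frozen-kernel Poisson decomposition of Lemma~\ref{lem:frozen-concentration} gives the centered tail, with uniform corrector bounds coming from persistent irreducibility and compactness. Your version is, if anything, more explicit than the paper's about why the conditional means differ by $O(\Delta)$ and why $q(\beta,p')$ is exactly stationary for the policy evaluated under $p'$.
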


\begin{proof}
  The proof partially mirrors the proof of Lemma~\ref{lem:deg-gimfr-frozen-additive}. 
  Abbreviate $\perturbation \coloneqq \jointdistance(\capacityratio, \prior[][\prime])$.  Because the core projection uses only actions in $\actions[\recstates](\state)$, the state remains in $\recstates$.  By \eqref{eq:unknown-projected-policy-close} and \eqref{eq:unknown-projected-kernel-close}, both $\conditionalmean[\capacityratio, \prior[][\prime]]$ and the true kernel $\inducedkernel[\capacityratio, \prior[][\prime]][\mathrm{proj}]$ differ by $\bigO(\perturbation)$ from their nominal counterparts on $\recstates$.  Let $\poissoncorrector$ be the bounded centered solution of the nominal Poisson equation
  \[
    (\identitymatrix - \inducedkernel[][*])\poissoncorrector = \conditionalmean[][*] - \nominalmean\onesvector.
  \]
  With
  \[
    \martingale[][\period + 1] \coloneqq \oneperiodquantity[][\period] - \conditionalmean[\capacityratio, \prior[][\prime]](\stateproc[][\period])
    +\poissoncorrector(\stateproc[][\period + 1]) - (\inducedkernel[\capacityratio, \prior[][\prime]][\mathrm{proj}]\poissoncorrector)(\stateproc[][\period]),
  \]
  we have $\expect[\martingale[][\period + 1] \mid \filtration[][\period]] = 0$, and the martingale has uniformly bounded increments. The pathwise decomposition is
  \begin{align*}
    \sum_{\period = 1}^{\completedrounds}(\oneperiodquantity[][\period] - \nominalmean)
    = {} & \sum_{\period = 1}^{\completedrounds}\martingale[][\period + 1] + \poissoncorrector(\stateproc[][1]) - \poissoncorrector(\stateproc[][\completedrounds + 1])\\
    & +\sum_{\period = 1}^{\completedrounds}
    \bigl[\conditionalmean[\capacityratio, \prior[][\prime]](\stateproc[][\period]) - \nominalmean - \poissoncorrector(\stateproc[][\period])
    +(\inducedkernel[\capacityratio, \prior[][\prime]][\mathrm{proj}]\poissoncorrector)(\stateproc[][\period])\bigr].
  \end{align*}
  The last summand is uniformly $\bigO(\perturbation)$. Doob's square-integrable maximal inequality proves \eqref{eq:unknown-frozen-bound}, and the maximal Azuma--Hoeffding inequality proves \eqref{eq:unknown-frozen-tail}.

  Under nominal uniqueness and nondegeneracy, the empirical policy has the fixed support $\actsupport$. Its kernel under either $\prior[][\prime]$ or $\prior$ contains the same uniformly positive strongly connected graph from Proposition~\ref{prop:support-regularity}. Hence both kernels are uniformly irreducible and their stationary distributions are Lipschitz in the request weights. Replacing $\prior[][\prime]$ by $\prior$ while holding the policy fixed changes the stationary resource rate by $\bigO(\|\prior[][\prime] - \prior\|_1)$. Combining this fact with \eqref{eq:unknown-empirical-binding-rate} and \eqref{eq:unknown-empirical-inactive-rate} gives \eqref{eq:unknown-true-binding-bias} and \eqref{eq:unknown-true-inactive-slack}. Uniform Poisson bounds for this compact family of true kernels, followed by the same martingale argument as in Lemma~\ref{lem:frozen-concentration}, give \eqref{eq:unknown-centered-resource-tail}.
\end{proof}

\subsubsection{Inventory control for the universal upper bounds}
\label{app:unknown-inventory-control}

We start with a general analysis. 

At every empirical resolving point $\period[\epoch]$, define
\begin{equation}
  \begin{aligned}
    \subhorizon[\epoch] & \coloneqq \subhorizon[][\period[\epoch]],
    & \inventorydeviation[][\epoch] & \coloneqq \inventory[][\period[\epoch]] - \subhorizon[\epoch]\nominalratio,
    & \onesideddeviation[\epoch] & \coloneqq \inventorydeviationnorm(\inventorydeviation[][\epoch]), \\
    \capacityratio[][\epoch] & \coloneqq \frac{\inventory[][\period[\epoch]]}{\subhorizon[\epoch]},
    & \priorerror[\epoch] & \coloneqq \|\empiricalprior[][\period[\epoch]] - \prior\|_1,
    & \jointdistance[\epoch] & \coloneqq \jointdistance\!\left(\frac{\inventory[][\period[\epoch]]}{\subhorizon[][\period[\epoch]]}, \empiricalprior[][\period[\epoch]]\right).
  \end{aligned}
  \label{eq:unknown-epoch-errors}
\end{equation}
The definitions give
\begin{equation}
  \subhorizon[\epoch]\capacitydistance(\inventory[][\period[\epoch]] / \subhorizon[\epoch]) \le \onesideddeviation[\epoch],
  \qquad
  \jointdistance[\epoch] \le \frac{\onesideddeviation[\epoch]}{\subhorizon[\epoch]} + \priorerror[\epoch].
  \label{eq:unknown-D-controls-ratio}
\end{equation}

This appendix uses four of the five check types in the analytical-check
convention of Appendix~\ref{app:algorithm-proofs}: locality, resource
near-depletion, epoch movement, and prior error.  Deterministic midpoint,
square-root, and fixed-cutoff endpoints are stopping boundaries rather than
checks.  We first define the two pathwise checks whose thresholds do not
depend on a particular local neighborhood.  The
\emph{resource near-depletion check} is applied before each action and fires at
completed-round time $\completedrounds$ if
\begin{equation}
  \min_{\res\in[\numres]}
  \inventory[\res][\completedrounds]\le\maxconsumption.
  \label{eq:unknown-resource-near-depletion}
\end{equation}
The stopped comparison then terminates before taking that action.  This is not
the actual resource-depletion rule \eqref{eq:stopping}.  Since each coordinate
of one-period consumption is at most $\maxconsumption$, actual depletion cannot
occur before this check fires.  This is also the resource near-depletion check
used in Appendix~\ref{app:boundary-proofs}.

Given a positive buffer $\epochbuffer[\epoch]$, the \emph{epoch-movement check} at
a reached endpoint $\period[\epoch+1]$ fails if
\begin{equation}
  \left|\capacityratio[\res][\epoch+1]
    -\capacityratio[\res][\epoch]\right|>\epochbuffer[\epoch]
  \quad\text{for some }\res\in\bindingset,
  \quad\text{or}\quad
  \capacityratio[\res][\epoch+1]
    <\capacityratio[\res][\epoch]-\epochbuffer[\epoch]
  \quad\text{for some }\res\notin\bindingset.
  \label{eq:unknown-epoch-movement-check}
\end{equation}
This is the separate epoch-movement check used in
Lemma~\ref{lem:one-epoch-bounds}.
Only the late nondegenerate comparison uses this second check.

The next lemma gives stopped square-root control through the midpoint.

\paragraph{Control through the midpoint.}
\label{app:proof:lem:unknown-first-half-control}

The \emph{locality check} is applied at a resolving point before an action.  It
passes precisely when the resolving pair is \emph{local}, meaning that
$\jointdistance(\capacityratio,\empiricalprior)<\localradius$ and its core
projection is defined; a nonlocal pair is a failure of this check.  For a
post-entry horizon $\horizon$, set
\[
  \protectlength\coloneqq\lceil(\log\horizon)^2\rceil,
  \qquad
  \scheduleendpoint\coloneqq\lfloor\horizon/2\rfloor.
\]
Given an entrance history, start the sparse-balanced phase at completed-round
time $\protectlength$.  Let $\analyticalstop$ be the first nonlocal resolving
point strictly before $\scheduleendpoint$, the first activation of the resource
near-depletion check \eqref{eq:unknown-resource-near-depletion} before
$\scheduleendpoint$, or $\scheduleendpoint$, whichever occurs first.  The
locality and resource near-depletion checks at the midpoint belong to the
late-phase analysis.
For this stopped path, define
\[
  \onesideddeviation[][*]
  \coloneqq
  \max_{\protectlength\le\completedrounds\le\analyticalstop}
  \inventorydeviationnorm\bigl(
    \inventory[][\completedrounds]
    -\subhorizon[][\completedrounds]\nominalratio
  \bigr).
\]
The maximum includes the first failed resolving point or resource
near-depletion prefix and, if reached, the midpoint; the path is not continued
beyond a resource near-depletion stop.

Index rewards, actions, and consumptions by completed-round time as well.
Before analytical truncation, define the one-period nominal-dual loss
$\dualloss[][\period]$ by
\begin{equation}
  \expect\left[
    \rewardproc[][\period]
    +\potential(\subhorizon[][\period]-1,
      \inventory[][\period+1],\stateproc[][\period+1])
    \ \middle|\ \filtration[][\period]\right]
  =\potential(\subhorizon[][\period],\inventory[][\period],
    \stateproc[][\period])-\dualloss[][\period].
  \label{eq:unknown-dual-loss-identity}
\end{equation}
The nominal dual inequalities give $\dualloss[][\period]\ge0$.  In a local
projected epoch, Lemma~\ref{lem:unknown-joint-sensitivity} gives
$\dualloss[][\period]=\bigO(\jointdistance[\epoch])$; if
Lemma~\ref{lem:unknown-joint-basis} applies, every used action is nominally
tight and $\dualloss[][\period]=0$.  For the stopped first-half path, set
\begin{equation*}
  \mathcal L_{\mathrm{sp}}
  \coloneqq
  \sum_{\period=\protectlength}^{\analyticalstop-1}
    \dualloss[][\period].
\end{equation*}
Because the locality and resource near-depletion checks are applied before an
action, $\mathcal L_{\mathrm{sp}}$ is exactly the nominal-dual loss over the
executed rounds of reached local sparse-balanced epochs, including the
executed prefix of an epoch interrupted by resource near-depletion.

\begin{lemma}
  \label{lem:unknown-first-half-control}
  Fix an entrance history and take the post-entry horizon sufficiently large
  that $\completedrounds[0] < \protectlength < \scheduleendpoint$. Suppose the
  initial inventory is large enough, relative to the bounded one-period costs,
  to preclude activation of the resource near-depletion check through
  $\protectlength$, regardless of the protective actions. Then the following
  four bounds hold for the stopped path:
  \begin{align}
    \expect\!\left[
      \onesideddeviation[][*]
      \ \middle|\ \filtration[][0]\right]
    &= \bigO\!\left(\sqrt {\horizon}
      + \inventorydeviationnorm(\entranceinventorydeviation)\right),
    \label{eq:unknown-first-half-Dmax}\\
    \expect\!\left[
      \subhorizon[][\analyticalstop]
      \indicator\{\analyticalstop < \scheduleendpoint\}
      \ \middle|\ \filtration[][0]\right]
    &= \bigO\!\left(\sqrt {\horizon}
      + \inventorydeviationnorm(\entranceinventorydeviation)\right), \notag\\
    \expect\!\left[
      \potential(\subhorizon[][\analyticalstop],
        \inventory[][\analyticalstop],
        \stateproc[][\analyticalstop])^+
      \indicator\{\analyticalstop < \scheduleendpoint\}
      \ \middle|\ \filtration[][0]\right]
    &= \bigO\!\left(\sqrt {\horizon}
      + \inventorydeviationnorm(\entranceinventorydeviation)\right), \notag\\
    \expect\!\left[
      \mathcal L_{\mathrm{sp}}
      \ \middle|\ \filtration[][0]\right]
    &= \bigO\!\left(\sqrt {\horizon}
      + \inventorydeviationnorm(\entranceinventorydeviation)\right). \notag
  \end{align}
  These conditional bounds have constants uniform over entrance histories
  satisfying the hypotheses.  When they are averaged over the regular-entry
  histories on which the lemma is invoked,
  \eqref{eq:entrance-imbalance-bound} makes the contribution of the entrance
  imbalance $\bigO(1)$.
\end{lemma}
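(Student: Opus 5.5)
The plan is to mirror the proof of Lemma~\ref{lem:deg-gimfr-epoch-control}, with the empirical prior error added to the per-epoch error budget. Throughout, we work on the stopped path through the sparse-balanced schedule $\protectlength=\period[0]<\cdots<\period[\midSchedule]=\scheduleendpoint$.

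\textbf{Step 1: the deviation at $\protectlength$.} During the protective phase, the safe and early-phase policies keep the state in $\recstates$ but may consume arbitrarily. Since one-period costs are bounded by $\maxconsumption$, the deviation at $\protectlength$ satisfies
\[
\inventorydeviationnorm\bigl(\inventory[][\protectlength]-\subhorizon[][\protectlength]\nominalratio\bigr)\le \inventorydeviationnorm(\entranceinventorydeviation)+(\maxconsumption\numres+\|\nominalratio\|_1)\protectlength .
\]
The last term is $\bigO(\log^2\horizon)=\bigO(\sqrt\horizon)$. The hypothesis rules out near-depletion before $\protectlength$.

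\textbf{Step 2: one-epoch propagation.} At each reached local resolving point $\period[\epoch]$, apply Lemma~\ref{lem:unknown-frozen-epoch} to resource consumption over the epoch of length $\blocklength[\epoch]=\period[\epoch+1]-\period[\epoch]$. As in Step~3 of Lemma~\ref{lem:deg-gimfr-epoch-control}, for binding resources center at $\nominalratio[\res]$. For nonbinding resources, drop the favorable slack drift. This gives
\[
\expect\Bigl[\max_{\poststep\le\blocklength[\epoch]}\text{(one-sided deviation)}\Bigm|\filtration[][\period[\epoch]]\Bigr]\le \onesideddeviation[\epoch]+\bigO\bigl(\sqrt{\blocklength[\epoch]}+\blocklength[\epoch]\jointdistance[\epoch]+1\bigr).
\]
Using \eqref{eq:unknown-D-controls-ratio}, $\blocklength[\epoch]\jointdistance[\epoch]\le(\blocklength[\epoch]/\subhorizon[\epoch])\onesideddeviation[\epoch]+\blocklength[\epoch]\priorerror[\epoch]$. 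Here $\blocklength[\epoch]/\subhorizon[\epoch]\le 1$ is not small, which is the point of difficulty addressed in Step~3.

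The prior error is conditionally independent of the future and measurable at $\period[\epoch]$. By Proposition~\ref{prop:unknown-prior-concentration}, $\expect\priorerror[\epoch]\le C\period[\epoch]^{-1/2}$, so the statistical contributions sum to
\[
\sum_\epoch \blocklength[\epoch]\period[\epoch]^{-1/2}\le 9\sqrt{\scheduleendpoint}
\]
by Lemma~\ref{lem:sparse-balanced-schedule}. The martingale parts sum to at most $\sqrt{\midSchedule\,\scheduleendpoint}$ by Cauchy--Schwarz.

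\textbf{Step 3: closing the recursion (main obstacle).} The multiplicative factor $(1+\bigO(\blocklength[\epoch]/\subhorizon[\epoch]))$ iterated over $\midSchedule=\bigO(\log\log\horizon)$ epochs would blow up by a polylog factor. To avoid this, I would not route the drift through $\capacitydistance$. In the midpoint phase, $\subhorizon[\epoch]\ge\horizon/2$, so $\capacitydistance(\capacityratio[][\epoch])\le 2\onesideddeviation[\epoch]/\horizon$. The capacity-induced drift over the whole first half then totals at most
\[
\sum_\epoch \blocklength[\epoch]\cdot\frac{2\onesideddeviation[][*]}{\horizon}\le \onesideddeviation[][*].
\]
That is still a factor of $1$, not small. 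The remedy is to exploit that the capacity part of the policy perturbation is self-correcting. Lemma~\ref{lem:unknown-joint-sensitivity} only gives closeness in magnitude, but the LP at $\capacityratio$ is feasible, so its empirical-prior consumption rate is at most $\capacityratio[\res]$. Changing to the true prior costs only $\bigO(\priorerror[\epoch])$ (cf.\ \eqref{eq:unknown-true-binding-bias}). Hence the drift of $\capacityratio$ is $\bigO(\priorerror)$ plus martingale terms, and the one-sided deviation for nonbinding resources has no bad capacity drift.

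For binding resources, the upper side (excess inventory) is harmless to feasibility but enters $\inventorydeviationnorm$. I would bound it as a separate two-sided maximal deviation of the same martingale-plus-$\bigO(\priorerror)$ process, using Doob's inequality over the whole first half. Together these give \eqref{eq:unknown-first-half-Dmax} without iteration.

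\textbf{Step 4: the remaining three bounds.} The stopping horizon $\subhorizon[][\analyticalstop]$ on an early stop is at most $\bigO(\onesideddeviation[][*])$ at a near-depletion event, because remaining inventory $\approx\subhorizon\nominalratio$ minus the deviation. At a nonlocal resolving point, either $\jointdistance\ge\localradius$ forces
\[
\subhorizon\le 2\onesideddeviation[][*]/\localradius,
\]
or $\priorerror\ge\localradius/2$ has probability $\exp(-\Omega(\protectlength))=o(\horizon^{-1})$. On this last event I would use the trivial bound $\horizon$, giving a contribution of $o(1)$.

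The positive terminal potential is then bounded by $\bigO(\subhorizon[][\analyticalstop]+\onesideddeviation[][*])$ since $\resourceprice[\res][*]=0$ off $\bindingset$. Finally,
\[
\mathcal L_{\mathrm{sp}}\le\sum_\epoch\blocklength[\epoch]\jointdistance[\epoch]\le \onesideddeviation[][*]+\sum_\epoch\blocklength[\epoch]\priorerror[\epoch],
\]
whose expectation is $\bigO(\sqrt\horizon+\inventorydeviationnorm(\entranceinventorydeviation))$ by Lemma~\ref{lem:sparse-balanced-schedule} and the previous step.
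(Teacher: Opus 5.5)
Step~3 is where the argument breaks. The self-correction you invoke does not hold for the policy UIR-UP actually runs. The algorithm executes the core projection $\Pi_{\mathcal R^*}$ of the empirical LP solution, not the LP's own policy. In the degenerate case the LP may route $\Theta(\Delta)$ stationary mass through states outside $\mathcal R^*$ precisely in order to meet its resource constraints. The instance of Proposition~\ref{prop:unique-degenerate-gmfr-linear} shows this, with a single request type, so $e_k=0$. At $\beta=1-d$ the LP satisfies its constraint only via the excursion to $C$. Its core projection on $\{A,B\}$ consumes at rate $1=\beta+d$, so the ratio drifts down at rate $\Theta(\Delta(\beta)/H)$, contradicting your claim that the drift is $O(e)$ plus martingale terms. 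In general, Lemma~\ref{lem:unknown-joint-sensitivity} only places the implemented true-prior rate within $O(\Delta_k)=O(\Delta(\beta_k)+e_k)$ of the nominal rate, with no sign. On the upper side of binding coordinates, LP feasibility gives no lower bound on consumption at all. So neither side of $\mathcal D$ escapes the capacity term $\Delta(\beta_k)\le D_k/H_k$, and your separate ``martingale-plus-$O(e)$'' treatment of the two sides is unjustified.

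The obstacle you were trying to avoid is not real, and the paper closes the recursion directly. The capacity feedback is causal: at a reached resolving point only earlier deviations enter. Pathwise, $D_k\le A+\sum_{k'<k}a_{k'}D_{k'}$ with $a_{k'}=\Theta(\Lambda_{k'}/H_{k'})$ and $A=\Theta\bigl(1+\mathcal D(B^{\tau^{\mathrm{lrn}}}-H^{\tau^{\mathrm{lrn}}}\rho)+Z+\sum_k\Lambda_k e_k\bigr)$. Discrete Gr\"onwall gives $D_k\le A\prod_{k'<k}(1+a_{k'})\le A\exp\bigl(\sum_{k'}a_{k'}\bigr)$. Since $H_k\ge T/2$ and $\sum_k\Lambda_k\le T/2$ in this phase, $\sum_k a_k=O(1)$, so the amplification is a constant independent of $K$. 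Your polylog came from bounding each factor by a constant and raising it to the power $K$. Your ``factor of $1$'' came from replacing every $D_k$ by $D^*$ on both sides, which discards the causal ordering. Substituting the resolving-point bounds back into the pathwise inequality bounds every stopped prefix, so $D^*=O(A)$.

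For this to give $O(\sqrt T)$ you must also repair Step~2. Summing per-epoch martingale maxima by Cauchy--Schwarz yields only $\sqrt{K\tau^{\mathrm{hal}}}=O(\sqrt{T\log\log T})$. In-expectation per-epoch bounds also do not control the maximum over all prefixes. Instead, use the single nominal corrector $v_i^*$ in every epoch. Then the martingale differences concatenate across resolving times and the Poisson boundary terms telescope globally. One application of Doob's inequality to the stopped global martingale gives $\mathbb E Z=O(\sqrt T)$, and Lemma~\ref{lem:sparse-balanced-schedule} bounds $\mathbb E\sum_k\Lambda_k e_k$. With $D^*$ controlled this way, your Steps~1 and~4 go through as written and match the paper.
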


\begin{proof}
  During the first $\protectlength$ post-entry rounds, all costs are bounded, so deterministically
  \begin{equation}
    \inventorydeviationnorm(\inventory[][\protectlength] - \subhorizon[][\protectlength]\nominalratio)
    = \bigO(\protectlength) + \inventorydeviationnorm(\entranceinventorydeviation).
    \label{eq:unknown-protective-deviation}
  \end{equation}
  Here the entrance imbalance is fixed under the entrance conditioning.

  Consider the sparse-balanced epochs before the analytical stop. All epoch sums below are restricted to epochs reached with a local resolving pair. In these sums, use the full scheduled epoch lengths as upper bounds where convenient. For estimates involving the prior, define every scheduled empirical prior from an infinite i.i.d.\ request sequence, including observations after any analytical stop; dropping the reach indicator then only enlarges a nonnegative bound. For resource $\res$, apply the proof of Lemma~\ref{lem:unknown-frozen-epoch} with the nominal resource Poisson solution $\poissoncorrector[\res][*]$. Because the same nominal corrector is used in every epoch, the martingale differences concatenate across resolving times and the Poisson boundary term telescopes globally. Set these martingales to zero through time $\protectlength$, stop them at $\analyticalstop$, and extend them constantly thereafter. Thus, there exist a bounded-increment martingale $\martingale[\res]$ and a remainder $\globalremainder[\res]$ such that, for every $\protectlength \le \completedrounds \le \analyticalstop$,
  \begin{equation*}
    \begin{aligned}
      \sum_{\period = \protectlength}^{\completedrounds - 1}
      (\consumptionproc[\res][\period] - \stationaryconsumption[\res](\nominalratio))
      & = \martingale[\res](\completedrounds)
      +\poissoncorrector[\res][*](\stateproc[][\protectlength])
      -\poissoncorrector[\res][*](\stateproc[][\completedrounds])
      +\globalremainder[\res](\completedrounds), \\
      |\globalremainder[\res](\completedrounds)|
      & \le \bigO\left(\sum_{\epoch:\period[\epoch] < \completedrounds}
      \bigl(\min\{\completedrounds, \period[\epoch + 1]\} - \period[\epoch]\bigr)
      \jointdistance[\epoch]\right).
    \end{aligned}
  \end{equation*}
  Here $\stationaryconsumption[\res](\nominalratio) = \nominalratio[\res]$ for $\res \in \bindingset$, whereas
  \[
    \stationaryconsumption[\res](\nominalratio) = \nominalratio[\res] - \resourceslack[\res][*]\qquad(\res \notin \bindingset).
  \]
  Consequently, with $\maxmartingale \coloneqq \max_{\res}\max_{\protectlength \le \completedrounds \le \scheduleendpoint}|\martingale[\res](\completedrounds)|$, every such prefix satisfies
  \begin{equation}
    \begin{aligned}
      \inventorydeviationnorm(\inventory[][\completedrounds] - \subhorizon[][\completedrounds]\nominalratio)
      \le {} & \bigO\left(
      1 + \inventorydeviationnorm(\inventory[][\protectlength] - \subhorizon[][\protectlength]\nominalratio) + \maxmartingale\right)\\
      & +\bigO\left(\sum_{\epoch:\period[\epoch] < \completedrounds}\blocklength[\epoch]\left(\frac{\onesideddeviation[\epoch]}{\subhorizon[\epoch]} + \priorerror[\epoch]\right)\right),
    \end{aligned}
    \label{eq:unknown-pathwise-recursion}
  \end{equation}
  where $\onesideddeviation[\epoch]$, $\priorerror[\epoch]$, and $\jointdistance[\epoch]$ are defined in \eqref{eq:unknown-epoch-errors} and $\onesideddeviation[\epoch] \le \onesideddeviation[][*]$. Since $\subhorizon[\epoch] \ge \horizon / 2$ in this phase and the scheduled blocks occupy at most the first half of the horizon,
  \[
    \sum_{\epoch}\frac{\blocklength[\epoch]}{\subhorizon[\epoch]}
    \le \frac{2}{\horizon}\sum_{\epoch}\blocklength[\epoch]
    \le 1.
  \]
  Set
  \[
    A\coloneqq\bigTheta\left(
      1+\inventorydeviationnorm(\inventory[][\protectlength]
        -\subhorizon[][\protectlength]\nominalratio)
      +\maxmartingale
      +\sum_{\epoch}\blocklength[\epoch]\priorerror[\epoch]
    \right),
    \qquad
    a_{\epoch}\coloneqq
      \bigTheta\left(\frac{\blocklength[\epoch]}{\subhorizon[\epoch]}\right).
  \]
  At a reached resolving point $\period[\epoch]$,
  \eqref{eq:unknown-pathwise-recursion} contains only deviations from earlier
  resolving points, and hence
  \[
    \onesideddeviation[\epoch]
    \le A+\sum_{\epoch'<\epoch}a_{\epoch'}
      \onesideddeviation[\epoch'].
  \]
  Direct induction therefore gives, also at a first failed resolving point,
  \[
    \onesideddeviation[\epoch]
    \le A\prod_{\epoch'<\epoch}(1+a_{\epoch'}).
  \]
  Indeed, the induction step is exactly the identity
  \[
    1+\sum_{\epoch'<\epoch}a_{\epoch'}
      \prod_{\ell<\epoch'}(1+a_{\ell})
    =\prod_{\epoch'<\epoch}(1+a_{\epoch'}).
  \]
  Moreover, the total amplification is bounded because
  \[
    \prod_{\epoch}(1+a_{\epoch})
    \le \exp\left(\sum_{\epoch}a_{\epoch}\right)
    =\bigO(1).
  \]
  Therefore, substituting the resolving-point bounds back into
  \eqref{eq:unknown-pathwise-recursion} gives, for every stopped prefix
  $\completedrounds$, including a final epoch interrupted by resource
  near-depletion,
  \[
    \inventorydeviationnorm(\inventory[][\completedrounds]
      -\subhorizon[][\completedrounds]\nominalratio)
    \le A\prod_{\epoch:\period[\epoch]<\completedrounds}(1+a_{\epoch})
    =\bigO(A).
  \]
  Taking the maximum over $\completedrounds$ yields
  \begin{equation}
    \onesideddeviation[][*]
    \le \bigO\left(
    1 + \inventorydeviationnorm(\inventory[][\protectlength] - \subhorizon[][\protectlength]\nominalratio) + \maxmartingale + \sum_{\epoch}\blocklength[\epoch]\priorerror[\epoch]\right).
    \label{eq:unknown-deviation-bound}
  \end{equation}
  Doob's square-integrable maximal inequality gives
  \[
    \expect \maxmartingale = \bigO(\sqrt {\horizon}).
  \]
  By \eqref{eq:unknown-mean-prior-error} and Lemma~\ref{lem:sparse-balanced-schedule},
  \begin{equation*}
    \expect\sum_{\epoch}\blocklength[\epoch]\priorerror[\epoch] \le \bigO\left(\sum_{\epoch}\frac{\blocklength[\epoch]}{\sqrt{\period[\epoch]}}\right)
    = \bigO(\sqrt {\horizon}).
  \end{equation*}
  Together with \eqref{eq:unknown-protective-deviation}, this proves \eqref{eq:unknown-first-half-Dmax}.

  We next charge the analytical stop.  A prior-estimation failure $\priorerror[\epoch] \ge \localradius / 2$ at some sparse resolving point has probability at most
  \[
    \midSchedule\exp(-\bigOmega(\localradius[][2]\protectlength)) = \exp(-\bigOmega((\log \horizon)^2)),
  \]
  so its remaining-horizon contribution is $\bigO(1)$.  If instead the capacity ratio is the first nonlocal component, \eqref{eq:unknown-D-controls-ratio} gives
  \[
    \subhorizon[\epoch] \le \frac{2\onesideddeviation[\epoch]}{\localradius} \le \frac{2\onesideddeviation[][*]}{\localradius},
  \]
  and its expected remaining horizon is $\bigO(\sqrt {\horizon} + \inventorydeviationnorm(\entranceinventorydeviation))$.  Finally, at a resource near-depletion stop, some coordinate satisfies $\inventory[\res][\analyticalstop] \le \maxconsumption$.  The definition of the prefix maximum therefore gives pathwise
  \[
    \subhorizon[][\analyticalstop]\min_{\res}\nominalratio[\res] \le \maxconsumption + \onesideddeviation[][*].
  \]
  Its expectation is therefore $\bigO(\sqrt {\horizon} + \inventorydeviationnorm(\entranceinventorydeviation))$ as well.

  The same bounds control the positive terminal potential. At every stop, including a resource near-depletion prefix, each binding inventory satisfies $\inventory[\res][\analyticalstop] \le \subhorizon[][\analyticalstop]\nominalratio[\res] + \onesideddeviation[][*]$. Since $\resourceprice[\res][*] = 0$ for inactive resources, direct substitution into $\potential$ gives, for the first-half analytical stopping time $\analyticalstop$,
  \begin{equation*}
    \expect\!\left[
    \potential(\subhorizon[][\analyticalstop], \inventory[][\analyticalstop], \stateproc[][\analyticalstop])^+
    \indicator\{\analyticalstop < \scheduleendpoint\}\right]
    = \bigO\bigl(\sqrt {\horizon} + \inventorydeviationnorm(\entranceinventorydeviation)\bigr).
  \end{equation*}
  At a prior-estimation stop, the envelope is $\bigO(1 + \horizon + \onesideddeviation[][*])$; its horizon term has exponentially small probability, and the deviation term is covered by \eqref{eq:unknown-first-half-Dmax}. On successful midpoint paths, the potential is carried into the next phase and is not charged here.

  For reward comparison, fix any nominal optimal dual solution and define its nonnegative action slacks as in the proof of Theorem~\ref{thm:deg-gimfr-sqrt}.  On $\recstates$ the state slack is zero.  By \eqref{eq:unknown-projected-policy-close}, a local projected policy places all but $\bigO(\jointdistance[\epoch])$ probability on nominally tight actions, so \eqref{eq:unknown-dual-loss-identity} gives $\dualloss[][\period]=\bigO(\jointdistance[\epoch])$ in epoch $\epoch$.  Hence
  \[
    \expect[\mathcal L_{\mathrm{sp}}]
    \le \bigO\left(\expect\sum_{\epoch}\blocklength[\epoch]\jointdistance[\epoch]\right)
    \le \bigO\left(\expect\left[\onesideddeviation[][*]\sum_{\epoch}\frac{\blocklength[\epoch]}{\subhorizon[\epoch]}\right]
    +\expect\sum_{\epoch}\blocklength[\epoch]\priorerror[\epoch]\right) = \bigO\bigl(\sqrt {\horizon} + \inventorydeviationnorm(\entranceinventorydeviation)\bigr),
  \]
  which proves the final claim.
\end{proof}

\paragraph{Late-half control.}
\label{app:proof:lem:unknown-late-sqrt}

Fix an entrance history and write $\horizon$ for its post-entry horizon.  Let
$\reachedevent[0]$ be the event that the first-half comparison reaches the
midpoint without an earlier permanent stop and that
$\stateproc[][\scheduleendpoint]\in\recstates$.  Define the midpoint inventory
deviation on this event by
\[
  \onesideddeviation[0]
  \coloneqq
  \inventorydeviationnorm\bigl(
    \inventory[][\scheduleendpoint]
    -\subhorizon[][\scheduleendpoint]\nominalratio
  \bigr),
  \qquad\text{on }\reachedevent[0],
\]
and set $\onesideddeviation[0]=0$ on $\reachedevent[0]^{\mathrm c}$.  Define
\[
  \sqrtepoch
  \coloneqq
  \min\{\epoch\ge0:\schedulepower[][\epoch]\le1/2\}.
\]
For $0\le\epoch\le\sqrtepoch$, define the late epoch boundaries by
\begin{equation*}
  \period[\epoch]
  \coloneqq
  \horizon-\epochhorizon[\epoch],
  \qquad
  \period[0]=\scheduleendpoint,
\end{equation*}
and, for $0\le\epoch<\sqrtepoch$, define
\begin{equation*}
  \blocklength[\epoch]
  \coloneqq
  \period[\epoch+1]-\period[\epoch]
  =\epochhorizon[\epoch]-\epochhorizon[\epoch+1].
\end{equation*}
For the proof-only late comparison starting on $\reachedevent[0]$, let
$\analyticalstop$ be the first nonlocal resolving instant, the first activation
of the resource near-depletion check
\eqref{eq:unknown-resource-near-depletion}, or the completed-round instant
$\period[\sqrtepoch]$, whichever occurs first.  This stopping
rule includes the locality and resource near-depletion checks at the midpoint
itself.  Set $\analyticalstop=\scheduleendpoint$ on
$\reachedevent[0]^{\mathrm c}$, and define the stopped late-phase nominal-dual
loss by
\begin{equation*}
  \mathcal L_{\mathrm{late}}
  \coloneqq
  \indicator\{\reachedevent[0]\}
  \sum_{\period=\scheduleendpoint}^{\analyticalstop-1}
    \dualloss[][\period].
\end{equation*}
Because every check is applied before an action, this sum contains exactly the
executed late-comparison rounds.  It is empty if a check fails at the midpoint
and includes the executed prefix of an epoch interrupted by resource
near-depletion.

\begin{lemma}
  \label{lem:unknown-late-sqrt}
  Suppose $\horizon$ is sufficiently large that the late schedule has at least
  $\sqrtepoch$ epochs. Then the following three bounds hold for the comparison and stopping rule
  defined above:
  \begin{align*}
    \expect\!\left[
      \subhorizon[][\analyticalstop]
      \indicator\{\reachedevent[0]\}
      \ \middle|\ \filtration[][0]\right]
    &=\bigO\left(\sqrt{\horizon}
    +\expect\!\left[
      \onesideddeviation[0]\indicator\{\reachedevent[0]\}
      \ \middle|\ \filtration[][0]\right]\right),\\
    \expect\!\left[
      \potential(\subhorizon[][\analyticalstop],
        \inventory[][\analyticalstop],
        \stateproc[][\analyticalstop])^+
      \indicator\{\reachedevent[0]\}
      \ \middle|\ \filtration[][0]\right]
    &=\bigO\left(\sqrt{\horizon}
    +\expect\!\left[
      \onesideddeviation[0]\indicator\{\reachedevent[0]\}
      \ \middle|\ \filtration[][0]\right]\right),\\
    \expect\!\left[
      \mathcal L_{\mathrm{late}}
      \ \middle|\ \filtration[][0]\right]
    &=\bigO\left(\sqrt{\horizon}
    +\expect\!\left[
      \onesideddeviation[0]\indicator\{\reachedevent[0]\}
      \ \middle|\ \filtration[][0]\right]\right).
  \end{align*}
\end{lemma}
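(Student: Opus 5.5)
The plan is to repeat the epoch-propagation argument of Lemma~\ref{lem:deg-gimfr-epoch-control}, now starting at the midpoint, and to add the prior error to every epoch estimate. Because the empirical prior at each late resolving point uses at least $\horizon/2$ observations, Proposition~\ref{prop:unknown-prior-concentration} gives $\expect[\priorerror[\epoch]]=\bigO(\horizon^{-1/2})$. Every late epoch has $\blocklength[\epoch]\le\horizon$, so the statistical contribution $\blocklength[\epoch]\priorerror[\epoch]$ is $\bigO(\sqrt{\horizon})$ in expectation per epoch. There are only $\sqrtepoch=\bigO(1)$ epochs, since $\sqrtepoch$ depends only on $\schedulepower$. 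By the clipped power recursion (Step~1 of Lemma~\ref{lem:deg-gimfr-epoch-control}), $\epochhorizon[\sqrtepoch]=\bigO(\sqrt{\horizon})$.

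\textbf{Step 1: propagate deviations.} Work on $\reachedevent[0]$. For each reached local epoch, apply Lemma~\ref{lem:unknown-frozen-epoch} to the consumption of each resource, using the full frozen continuation. This gives
\[
\expect\Bigl[\max_{\poststep\le\blocklength[\epoch]}\bigl|\consumptionproc[\epoch,\res](\poststep)-\poststep\stationaryconsumption[\res](\nominalratio)\bigr|\ \Bigm|\ \filtration[][\period[\epoch]]\Bigr]
=\bigO\bigl(\sqrt{\epochhorizon[\epoch]}+\onesideddeviation[\epoch]+\blocklength[\epoch]\priorerror[\epoch]+1\bigr),
\]
using \eqref{eq:unknown-D-controls-ratio} to bound $\blocklength[\epoch]\jointdistance[\epoch]$. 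Inactive resources have favourable drift $\resourceslack[\res][*]$, so, exactly as in Step~3 of Lemma~\ref{lem:deg-gimfr-epoch-control}, the one-sided norm satisfies
\[
\expect[\onesideddeviation[\epoch+1]\indicator\{\reachedevent[\epoch+1]\}]\le \bigO\bigl(\expect[\onesideddeviation[\epoch]\indicator\{\reachedevent[\epoch]\}]+\sqrt{\horizon}\bigr).
\]
The $\priorerror[\epoch]$ term is bounded by taking the tower expectation over an infinite i.i.d.\ request sequence, as in Lemma~\ref{lem:unknown-first-half-control}. Iterating over the fixed range $\epoch<\sqrtepoch$ yields $\expect[\onesideddeviation[\epoch]\indicator\{\reachedevent[\epoch]\}]=\bigO(\sqrt{\horizon}+\expect[\onesideddeviation[0]\indicator\{\reachedevent[0]\}])$ for every $\epoch$.

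\textbf{Step 2: charge the stops.} There are three kinds of stop.
\begin{itemize}
\item A stop through a nonlocal capacity ratio has $\epochhorizon[\epoch]\le 2\onesideddeviation[\epoch]/\localradius$.
\item A stop through a nonlocal prior has probability $\exp(-\bigOmega(\horizon))$, which is negligible even after multiplying by $\horizon$.
\item A near-depletion stop satisfies the pathwise inequality \eqref{eq:deg-gimfr-resource-near-depletion-pathwise}, with the maximal term handled as in Step~1.
\end{itemize}
No target-state check is needed, because the core projection keeps the state in $\recstates$. A scheduled stop has $\subhorizon[][\analyticalstop]=\epochhorizon[\sqrtepoch]=\bigO(\sqrt{\horizon})$. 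Together these give the horizon bound. The positive potential is bounded by $\sum_{\res\in\bindingset}\resourceprice[\res][*]\inventory[\res][\analyticalstop]+|\averagegain[][*]|\subhorizon[][\analyticalstop]+\|\statebias[][*]\|_\infty$, using binding inventories at most $\subhorizon\nominalratio[\res]+\onesideddeviation+{}$(maximal deviation); this yields the potential bound.

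\textbf{Step 3: dual loss.} As in Lemma~\ref{lem:unknown-first-half-control}, the per-round loss in a local epoch is $\dualloss[][\period]=\bigO(\jointdistance[\epoch])$. Hence
\[
\expect\mathcal L_{\mathrm{late}}\le\bigO\Bigl(\sum_\epoch \expect\bigl[\blocklength[\epoch](\onesideddeviation[\epoch]/\epochhorizon[\epoch]+\priorerror[\epoch])\bigr]\Bigr),
\]
which is of the stated order.

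The main obstacle is Step~1. Here the perturbed policy need not keep the true-prior stationary rate equal to $\capacityratio$, so the binding drift is no longer zero but of size $\bigO(\jointdistance)$. I would absorb this drift into the remainder of the frozen additive bound centred at the \emph{nominal} mean. This is exactly what \eqref{eq:unknown-frozen-bound} provides, and it avoids any need for a fixed basis.
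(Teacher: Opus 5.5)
Your proposal is correct and follows essentially the same route as the paper's proof. You propagate the one-sided deviation through the fixed number $\sqrtepoch$ of late epochs with Lemma~\ref{lem:unknown-frozen-epoch} centred at the nominal mean, keeping the realized $\priorerror[\epoch]$ inside the conditional bound and averaging it only afterwards via $\expect[\priorerror[\epoch]]=\bigO(\horizon^{-1/2})$. You then charge the nonlocal-ratio, prior-failure, near-depletion and scheduled stops, together with the $\bigO(\jointdistance[\epoch])$ per-round dual loss, to $\expect[\onesideddeviation[\epoch]\indicator\{\reachedevent[\epoch]\}]$ and $\epochhorizon[\epoch]/\sqrt{\horizon}$, exactly as the paper does. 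The only difference is cosmetic: the paper packages the within-epoch maximal term as the prefix-maximum deviation $\onesideddeviation[\epoch,\mathrm{hyp}][*]$ of an infinite-resource frozen copy, whereas you carry $\onesideddeviation[\epoch]$ plus the nominal-centred maximal consumption deviation as in Lemma~\ref{lem:deg-gimfr-epoch-control}, and the two formulations are equivalent.
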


\begin{proof}
  All expectations and probabilities below are conditional on
  $\filtration[][0]$ unless a later filtration is displayed explicitly.  The
  number $\sqrtepoch$ of relevant late epochs depends only on
  $\schedulepower$ and is therefore fixed.  At every late resolving time at
  least $\horizon / 2 - \bigO(1)$ requests have already been observed, so, for
  $0\le\epoch<\sqrtepoch$,
  \begin{equation}
    \expect[\priorerror[\epoch]] = \bigO(\horizon[][-1 / 2]),
    \qquad
    \prob(\priorerror[\epoch] > \localradius / 2)
    \le \exp(-\bigOmega(\horizon)).
    \label{eq:unknown-late-prior}
  \end{equation}
  Retain the event $\reachedevent[0]$ defined above.  For
  $1\le\epoch<\sqrtepoch$, let $\reachedevent[\epoch]$ be the event that epoch
  $\epoch$ is reached before its locality and resource near-depletion checks.
  For $0\le\epoch<\sqrtepoch$, let
  $\goodevent[\epoch]\subseteq\reachedevent[\epoch]$ be the event that those
  checks pass.  Finally, let $\reachedevent[\sqrtepoch]$ be the event that the
  deterministic cutoff $\period[\sqrtepoch]$ is reached; no new checks are
  assigned there.  Each event defined at $\period[\epoch]$ is
  $\filtration[][\period[\epoch]]$-measurable, and
  $\reachedevent[\epoch+1]\subseteq\goodevent[\epoch]$ for
  $0\le\epoch<\sqrtepoch$.

  For $0\le\epoch\le\sqrtepoch$, set
  $\onesideddeviation[\epoch]=0$ off $\reachedevent[\epoch]$.  For
  $0\le\epoch<\sqrtepoch$, define $\priorerror[\epoch]$ from the full i.i.d.\
  request sequence even on stopped paths.  On $\goodevent[\epoch]$, freeze the selected policy on an
  infinite-resource copy and let
  $\onesideddeviation[\epoch, \mathrm{hyp}][*]$ be the maximum of its one-sided
  inventory deviation over all prefixes of this epoch, including its initial
  and terminal points.  Write
  $\onesideddeviation[\epoch + 1, \mathrm{hyp}]$ for the deviation at that
  terminal point, and set both hypothetical deviations to zero off
  $\goodevent[\epoch]$.  For $0\le\epoch<\sqrtepoch$,
  Lemma~\ref{lem:unknown-frozen-epoch}, with the nominal drift
  retained for inactive resources, gives
  \begin{equation*}
    \expect[\onesideddeviation[\epoch + 1, \mathrm{hyp}] \mid \filtration[][\period[\epoch]]]
    \le \expect[\onesideddeviation[\epoch, \mathrm{hyp}][*] \mid \filtration[][\period[\epoch]]]
    \le \bigO\left(
    \onesideddeviation[\epoch] + \sqrt{\epochhorizon[\epoch]}
    +\epochhorizon[\epoch]\priorerror[\epoch] + 1\right).
  \end{equation*}
  The realized $\priorerror[\epoch]$ must remain in this conditional bound.
  Multiply by $\indicator\{\goodevent[\epoch]\}$ and then take expectations.
  Only at this point use \eqref{eq:unknown-late-prior}, obtaining
  \[
    \expect[\onesideddeviation[\epoch + 1]\indicator\{\reachedevent[\epoch + 1]\}]
    \le \bigO\left(
    \expect[\onesideddeviation[\epoch]\indicator\{\reachedevent[\epoch]\}]
    +\sqrt{\epochhorizon[\epoch]}
    +\frac{\epochhorizon[\epoch]}{\sqrt{\horizon}} + 1\right).
  \]
  Because $\sqrtepoch$ is fixed, induction bounds all event-weighted endpoint deviations and prefix maxima:
  \[
    \expect[\onesideddeviation[\epoch]\indicator\{\reachedevent[\epoch]\}]
    +
    \expect[\onesideddeviation[\epoch, \mathrm{hyp}][*]\indicator\{\goodevent[\epoch]\}]
    = \bigO(\sqrt{\horizon} + \expect[\onesideddeviation[0]\indicator\{\reachedevent[0]\}]).
  \]

  If the capacity ratio is the first component to become nonlocal while $\priorerror[\epoch] < \localradius / 2$, \eqref{eq:unknown-D-controls-ratio} gives $\subhorizon[][\analyticalstop] \le 2\onesideddeviation[\epoch] / \localradius$. A prior nonlocality has exponentially small probability by \eqref{eq:unknown-late-prior}, which is used without conditioning on $\reachedevent[\epoch]$. At a resource near-depletion stop inside a local epoch, some inventory is at most $\maxconsumption$, so
  \[
    \subhorizon[][\analyticalstop]\min_{\res}\nominalratio[\res] \le \maxconsumption + \onesideddeviation[\epoch, \mathrm{hyp}][*].
  \]
  The same estimate when the resource near-depletion check fires at an epoch start uses $\onesideddeviation[\epoch]$.  Finally, the clipped power recursion gives $\epochhorizon[\sqrtepoch] = \bigO(\sqrt{\horizon})$.  These observations give the explicit remaining-horizon bound
  \[
    \expect\!\left[
      \subhorizon[][\analyticalstop]
      \indicator\{\reachedevent[0]\}\right]
    =\bigO\left(\sqrt{\horizon}
    +\expect\!\left[
      \onesideddeviation[0]\indicator\{\reachedevent[0]\}
      \ \middle|\ \filtration[][0]\right]\right).
  \]

  For binding resources the stopped inventory satisfies
  $\inventory[\res][\analyticalstop] \le \subhorizon[][\analyticalstop]\nominalratio[\res] + \onesideddeviation[\epoch, \mathrm{hyp}][*]$
  at a within-epoch stop, and the corresponding bound with $\onesideddeviation[\epoch]$ at a resolving stop.  Since inactive resource prices vanish, direct substitution in $\potential$ gives the same bound on expected positive terminal potential.  In the rare prior-failure case, bound the positive terminal potential by $\bigO(1 + \horizon + \onesideddeviation[\epoch])$ using the preceding event-weighted estimates.  Thus
  \[
    \expect\!\left[
      \potential(\subhorizon[][\analyticalstop],
        \inventory[][\analyticalstop],
        \stateproc[][\analyticalstop])^+
      \indicator\{\reachedevent[0]\}\right]
    =\bigO\left(\sqrt{\horizon}
    +\expect\!\left[
      \onesideddeviation[0]\indicator\{\reachedevent[0]\}
      \ \middle|\ \filtration[][0]\right]\right).
  \]

  Finally, every pre-$\analyticalstop$ action in a late epoch $\epoch$ is
  taken only on $\goodevent[\epoch]$, and its one-period dual loss is
  $\bigO(\jointdistance[\epoch])$.  Using the full scheduled block length to
  upper-bound a possibly interrupted terminal epoch gives
  \begin{align*}
    \expect[\mathcal L_{\mathrm{late}}]
    \le \bigO\left(\expect\sum_{\epoch < \sqrtepoch}
      \indicator\{\goodevent[\epoch]\}
      \blocklength[\epoch]\jointdistance[\epoch]\right)
    &\le \bigO\left(\sum_{\epoch < \sqrtepoch}
    \left(\expect[\onesideddeviation[\epoch]\indicator\{\reachedevent[\epoch]\}]
    +\frac{\epochhorizon[\epoch]}{\sqrt{\horizon}}\right)\right) \\
    &= \bigO\left(\sqrt{\horizon}
    +\expect\!\left[
      \onesideddeviation[0]\indicator\{\reachedevent[0]\}
      \ \middle|\ \filtration[][0]\right]\right).
  \end{align*}
  This proves the lemma.
\end{proof}

The unique, nondegenerate case needs a stronger, horizon-weighted stability statement.  Fix $\concentrationpower$ as in \eqref{eq:zeta-kappa}, so
\[
  0 < \concentrationpower < 1 - \frac1{2\schedulepower},
  \qquad
  \concentrationexponent = 2\schedulepower(1 - \concentrationpower) - 1 > 0.
\]

\paragraph{Stabilization through the midpoint under uniqueness and nondegeneracy.}
\label{app:proof:lem:unknown-nd-midpoint}

By Proposition~\ref{prop:nondegenerate-full-state-support},
$\recstates=\states$.  We therefore use completed-round notation on the
original horizon, starting from
$\inventory[][0]=\horizon\nominalratio$ at time $0$.

\begin{lemma}
  \label{lem:unknown-nd-midpoint}
  Under Assumptions~\ref{ass:nominal-nondegeneracy}
  and~\ref{ass:support-irreducibility},
  \begin{equation}
    \expect\!\left[
      \sum_{\period=0}^{\protectlength-1}\dualloss[][\period]
    \right]=\bigO(1).
    \label{eq:unknown-nd-protective-loss}
  \end{equation}
  Moreover, for every prescribed $\tailpower>0$, the
  sparse-balanced phase reaches the midpoint without activating the resource
  near-depletion check and satisfies
  \begin{equation}
    \capacitydistance(\inventory[][\scheduleendpoint] / \subhorizon[][\scheduleendpoint]) < \localradius[\mathrm{nd}] / 4,
    \label{eq:unknown-nd-midpoint-local}
  \end{equation}
  except with probability $\bigO(\horizon[][-\tailpower])$, for all
  sufficiently large $\horizon$.
\end{lemma}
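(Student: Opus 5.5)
The plan is to split the first half of the horizon into the protective phase (rounds $0,\ldots,\protectlength-1$) and the sparse-balanced phase (rounds $\protectlength,\ldots,\scheduleendpoint$). Throughout, $\recstates=\states$ by Proposition~\ref{prop:nondegenerate-full-state-support}, so the target-state check is vacuous. The main tool is Lemma~\ref{lem:unknown-joint-basis}: whenever the resolving pair satisfies $\jointdistance<\localradius[\mathrm{nd}]$, every used action is nominally dual-tight, so $\dualloss[][\period]=0$. Outside that neighborhood, $\dualloss[][\period]$ is bounded by a constant.

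\textbf{Protective loss \eqref{eq:unknown-nd-protective-loss}.}
The initial $\completedrounds[0]$ safe rounds contribute a constant. Consider the epoch starting at $\learningresolve[\resolveindex]$.
\begin{itemize}
\item Since at most $\protectlength=\bigO(\log^2\horizon)$ rounds have elapsed, the inventory deviation is deterministically $\bigO(\log^2\horizon)$.
\item Hence the capacity part of $\jointdistance$ is $\bigO(\log^2\horizon/\horizon)<\localradius[\mathrm{nd}]/2$ for large $\horizon$.
\item Locality can therefore fail only through the prior error. By \eqref{eq:unknown-prior-tail}, $\|\empiricalprior[][\learningresolve[\resolveindex]]-\prior\|_1\ge\localradius[\mathrm{nd}]/2$ has probability at most $C_{\mathrm{est}}\exp(-c\,\learningresolve[\resolveindex])$.
\end{itemize}
On the complementary event the dual loss of the whole block is zero. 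Otherwise it is at most the block length $\learningresolve[\resolveindex+1]-\learningresolve[\resolveindex]\le\learningresolve[\resolveindex][2]$. The expected protective loss is thus $\bigO\bigl(1+\sum_{\resolveindex}\learningresolve[\resolveindex][2]e^{-c\learningresolve[\resolveindex]}\bigr)$. The sequence $\learningresolve[\resolveindex]$ is strictly increasing (it is the squaring schedule), so this sum is dominated by $\sum_{n\ge1}n^2e^{-cn}<\infty$, uniformly in $\horizon$.

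\textbf{Midpoint stabilization \eqref{eq:unknown-nd-midpoint-local}.}
First I would intersect the following good events, each failing with probability $\bigO(\horizon[][-\tailpower-1])$.
\begin{itemize}
\item At every sparse resolving time $\period[\epoch]\ge\protectlength$, the prior error satisfies $\priorerror[\epoch]\le A\sqrt{\log\horizon/\period[\epoch]}$ for a large constant $A=A(\tailpower)$. This follows from \eqref{eq:unknown-prior-tail}, with a union bound over the $\midSchedule=\bigO(\log\log\horizon)$ times. Note that this bound is at most $A/\sqrt{\log\horizon}\to0$.
\item In each sparse epoch the centered resource sums obey $\max_{\prefixidx}\bigl|\sum(\consumptionproc[\res][\period]-\trueresourcemean[\res])\bigr|\le\sqrt{\horizon}\log\horizon$. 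This follows from \eqref{eq:unknown-centered-resource-tail}, with failure probability $\exp(-\bigOmega(\log^2\horizon))$.
\end{itemize}

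On these events I would run an induction over epochs for the stopped process, stopped at the first nonlocal resolving pair. The inductive hypothesis is that $\capacitydistance(\capacityratio[][\epoch])\le\localradius[\mathrm{nd}]/8$. Under it, Lemma~\ref{lem:unknown-frozen-epoch} applies and gives the following.
\begin{itemize}
\item \textbf{Binding resources.} By \eqref{eq:unknown-true-binding-bias}, the true stationary rate differs from $\capacityratio[\res][\epoch]$ by $\bigO(\priorerror[\epoch])$. The inventory deviation $\inventory[\res]-\subhorizon\nominalratio[\res]$ therefore changes over epoch $\epoch$ by at most $\blocklength[\epoch]\bigO(\priorerror[\epoch])+\sqrt{\horizon}\log\horizon$.
\item \textbf{Inactive resources.} By \eqref{eq:unknown-true-inactive-slack}, they have favorable drift, so their negative deviation changes by at most the martingale term.
\end{itemize}
Summing over epochs and applying Lemma~\ref{lem:sparse-balanced-schedule} gives
\[
\textstyle\sum_{\epoch}\blocklength[\epoch]\priorerror[\epoch]\le A\sqrt{\log\horizon}\cdot9\sqrt{\horizon}.
\]
The total one-sided deviation is then $\bigO(\log^2\horizon+\sqrt{\horizon}\log\horizon\cdot\log\log\horizon)$. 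Dividing by $\subhorizon\ge\horizon/2$ gives $\capacitydistance=o(1)$, which closes the induction and yields \eqref{eq:unknown-nd-midpoint-local} at the midpoint.

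The same bound rules out the resource near-depletion check. Every inventory stays at least $(\horizon/2)\min_{\res}(\nominalratio[\res]-\localradius[\mathrm{nd}])-o(\horizon)>\maxconsumption$ for large $\horizon$.

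\textbf{Main obstacle.} The hardest step is the circularity between locality and concentration: the frozen-epoch bounds of Lemma~\ref{lem:unknown-frozen-epoch} need locality, and locality needs controlled deviations. I would resolve it by defining the hypothetical frozen continuation of each epoch, as in Appendix~\ref{app:infrequent-proofs}. The concentration events are then stated for that continuation, without conditioning on survival. The deterministic induction is then run on the intersection of these events.
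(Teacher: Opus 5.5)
Your protective-phase argument is essentially the paper's. The ratio drifts only $O(\log^2 T/T)$ deterministically, so only the empirical prior can leave the joint-basis cell, and a failed block costs $O(r_k^2)$ against probability $e^{-c r_k}$.

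For the midpoint claim you take a genuinely different route. The paper does not argue epoch by epoch. It reuses the single global resource martingale from the proof of Lemma~\ref{lem:unknown-first-half-control}, built with the \emph{nominal} Poisson corrector, so that increments concatenate and boundary terms telescope across all sparse epochs. It then plugs one maximal Azuma bound at level $A\sqrt{T\log T}$, together with the prior event, into the general pathwise recursion \eqref{eq:unknown-deviation-bound}. There the capacity part of the joint distance enters as a bias $\Lambda_k D_k/H_k$, which is absorbed by the bounded product $\prod_k(1+a_k)$; no nondegenerate rate identity is needed at that step.

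You instead center each frozen epoch at the true-prior stationary rate of its own policy. You invoke \eqref{eq:unknown-true-binding-bias} and \eqref{eq:unknown-true-inactive-slack}, apply the per-epoch maximal tail \eqref{eq:unknown-centered-resource-tail} at level $\sqrt{T}\log T$ with a union bound over the $O(\log\log T)$ epochs, and let the self-correction of re-solving at the current ratio replace the amplification argument. The paper's route recycles machinery that is valid without nondegeneracy and needs only one resource-concentration event. Yours is self-contained and cleaner under nondegeneracy. The price is per-epoch hypothetical-continuation bookkeeping and a cruder $O(\sqrt{T}\log T\log\log T)$ envelope, which is still $o(T)$ and suffices.

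One step should be restated. Over epoch $k$, the signed binding deviation $X^k_i = B_i - H_k\rho_i$ does not change by at most $\Lambda_k O(e_k) + \sqrt{T}\log T$. It also moves by $-\Lambda_k(\beta^k_i-\rho_i) = -(\Lambda_k/H_k)X^k_i$, which the error terms do not control. What holds, and what your summation actually uses, is $|X^{k+1}_i| \le (H_{k+1}/H_k)|X^k_i| + O(\Lambda_k e_k) + \sqrt{T}\log T$ for binding $i$, with the analogous bound for $(-X_i)_+$ on inactive $i$. This is true precisely because the epoch policy consumes at rate approximately $\beta^k_i$ rather than $\rho_i$. With that formulation the induction closes as you describe.
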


\begin{proof}
  Shrink $\localradius[\mathrm{nd}]$ in advance so that
  $\localradius[\mathrm{nd}]<\frac14\min_{\res}\nominalratio[\res]$.
  Since $\inventory[][0]/\horizon=\nominalratio$, bounded one-period costs and
  the ratio recursion give, for every $\completedrounds\le\protectlength$,
  \begin{align*}
    \left|\frac{\inventory[\res][\completedrounds]}{\subhorizon[][\completedrounds]} - \nominalratio[\res]\right|
    & = \bigO\!\left(\frac{\protectlength}{\horizon - \protectlength}\right), & & \res \in \bindingset, \\
    \frac{\inventory[\res][\completedrounds]}{\subhorizon[][\completedrounds]}
    & \ge \nominalratio[\res] - \bigO\!\left(\frac{\protectlength}{\horizon - \protectlength}\right), & & \res \notin \bindingset.
  \end{align*}
  Thus, for all sufficiently large $\horizon$, every protective resolving ratio
  has $\capacitydistance(\inventory[][\completedrounds] /
  \subhorizon[][\completedrounds])<\localradius[\mathrm{nd}]/8$.  If, in
  addition,
  $\|\empiricalprior[][\learningresolve[\resolveindex]] - \prior\|_1 < \localradius[\mathrm{nd}] / 2$, Lemma~\ref{lem:unknown-joint-basis} shows that the restricted empirical solve uses exactly the nominal support and therefore satisfies the exact nominal Bellman-potential identity. If this prior event fails, charge the entire following protective epoch. Its length is $\bigO(\learningresolve[\resolveindex][2])$, and every one-period nominal dual slack is uniformly bounded in the finite model. Hence, \eqref{eq:unknown-prior-tail} gives
  \begin{equation*}
    \sum_{\resolveindex} \learningresolve[\resolveindex][2]\prob\!\left(
    \|\empiricalprior[][\learningresolve[\resolveindex]] - \prior\|_1 \ge \localradius[\mathrm{nd}] / 2\right)
    \le \bigO\left(\sum_{\resolveindex} \learningresolve[\resolveindex][2]\exp(-\bigOmega(\learningresolve[\resolveindex]))\right) = \bigO(1).
  \end{equation*}
  The initial inventory equals $\horizon\nominalratio$, and the first
  $\protectlength = o(\horizon)$ rounds consume at most
  $\maxconsumption\protectlength$ of each resource.  Thus the resource
  near-depletion check cannot fire during these phases for sufficiently large
  $\horizon$, even when some
  protective empirical estimates are bad. The comparison continues through the resulting
  bad epoch and resumes at the next resolving point; it does not discard the remaining
  horizon. The fixed warm start costs only $\bigO(1)$. This proves the
  protective-phase assertion. Moreover,
  \begin{equation}
    \inventorydeviationnorm(\inventory[][\protectlength] - \subhorizon[][\protectlength]\nominalratio)
    = \bigO(\protectlength)
    \label{eq:unknown-nd-protective-deviation}
  \end{equation}
  whether or not a protective empirical solve was good, because all protective
  and safe actions have bounded costs.

  For the sparse phase, apply the basis-locality check at every sparse
  resolving point before acting: it fails when
  \[
    \jointdistance\!\left(
      \frac{\inventory[][\period[\epoch]]}{\subhorizon[][\period[\epoch]]},
      \empiricalprior[][\period[\epoch]]
    \right)\ge\localradius[\mathrm{nd}].
  \]
  Stop analytically at its first failure or at the first activation of the
  resource near-depletion check.  The midpoint locality and resource
  near-depletion checks are assigned to the late comparison.  On this stopped
  process, use the global resource martingales from the proof of
  Lemma~\ref{lem:unknown-first-half-control}. For $\highprobparameter > 0$, let $\highprobevent[\horizon](\highprobparameter)$ be the event that the following simultaneously hold:
  \begin{align}
    \max_{\res}\max_{\period \le \scheduleendpoint}|\martingale[\res](\period)|
    & \le \highprobparameter\sqrt{\horizon\log \horizon},
    \notag\\
    \|\empiricalprior[][\period[\epoch]] - \prior\|_1
    & \le \highprobparameter\sqrt{\frac{\log \horizon}{\period[\epoch]}}
    \quad\text{at every sparse resolving point }\period[\epoch].
    \label{eq:unknown-nd-sparse-prior-event}
  \end{align}
  The maximal Azuma--Hoeffding inequality for the stopped martingales and
  \eqref{eq:unknown-prior-tail}, followed by a union bound over
  $\bigO(\log\log \horizon)$ re-solves, give
  \begin{equation}
    \prob\bigl(\highprobevent[\horizon](\highprobparameter)^{\mathrm c}\bigr)
    = \bigO(\horizon[][-\tailpower]),
    \qquad\text{for each fixed }\tailpower > 0
    \text{ and sufficiently large }\highprobparameter.
    \label{eq:unknown-highprob-firsthalf}
  \end{equation}
  Thus the exponent can be made larger than any prescribed $\tailpower$.

  On $\highprobevent[\horizon](\highprobparameter)$, the pathwise deviation
  bound \eqref{eq:unknown-deviation-bound} and
  \eqref{eq:unknown-nd-protective-deviation} give
  \begin{align*}
    \onesideddeviation[][*]
    & = \bigO\left(
    1 + \protectlength
    +\highprobparameter\sqrt{\horizon\log \horizon}
    +\highprobparameter\sqrt{\log \horizon}\sum_{\epoch}\frac{\blocklength[\epoch]}{\sqrt{\period[\epoch]}}
    \right)\\
    & = \bigO\bigl(\highprobparameter\sqrt{\horizon\log \horizon}\bigr).
  \end{align*}
  For sufficiently large $\horizon$, the right-hand side is smaller than
  $\localradius[\mathrm{nd}]\horizon/8$. Since
  $\subhorizon[\epoch]\ge\horizon/2$ throughout the sparse phase,
  \begin{equation*}
    \capacitydistance(\inventory[][\period[\epoch]] / \subhorizon[][\period[\epoch]])
    \le \frac{\onesideddeviation[][*]}{\subhorizon[\epoch]} < \localradius[\mathrm{nd}] / 4.
  \end{equation*}
  Also, $\period[\epoch] \ge \protectlength \ge (\log \horizon)^2$, so the bound in \eqref{eq:unknown-nd-sparse-prior-event} is $\littleo(1)$ uniformly over sparse resolving points and is eventually smaller than $\localradius[\mathrm{nd}] / 4$. Hence every stopped resolving pair actually satisfies
  $\jointdistance(\inventory[][\period[\epoch]] / \subhorizon[][\period[\epoch]], \empiricalprior[][\period[\epoch]]) < \localradius[\mathrm{nd}] / 2$, contradicting a joint-basis exit. Since $\onesideddeviation[][*]$ includes every stopped prefix and $\localradius[\mathrm{nd}] < \frac14\min_{\res}\nominalratio[\res]$, at every such instant and for every resource,
  \[
    \inventory[\res][\completedrounds] \ge \subhorizon[][\completedrounds]\nominalratio[\res] - \onesideddeviation[][*]
    > \frac\horizon2\min_{\res}\nominalratio[\res] - \frac{\localradius[\mathrm{nd}]\horizon}{8}.
  \]
  For sufficiently large $\horizon$, this exceeds $\maxconsumption$, excluding
  a first activation of the resource near-depletion check as well. Therefore,
  \eqref{eq:unknown-nd-midpoint-local} holds on
  $\highprobevent[\horizon](\highprobparameter)$. Combining this fact with
  \eqref{eq:unknown-highprob-firsthalf} proves the final assertion. The finitely
  many small values of $\horizon$ are absorbed into the constants.
\end{proof}

\paragraph{Summable late exits.}
\label{app:proof:lem:unknown-nd-late}

For the late comparison, set
\[
  \scheduleendpoint\coloneqq\lfloor\horizon/2\rfloor,
  \qquad
  \epochhorizon[0]\coloneqq\lceil\horizon/2\rceil.
\]
Let $\goodevent[0]$ be the event that the comparison reaches the midpoint and
that
\[
  \frac{\inventory[][\scheduleendpoint]}{\epochhorizon[0]}
  \in\capacityneighborhood[\localradius[\mathrm{nd}]/4].
\]
In this late comparison, $\cutrest$ denotes a fixed, instance-dependent
integer that may be enlarged below but never depends on $\horizon$.
For every nonfinal late epoch $\epoch$, define
\[
  \period[\epoch]\coloneqq\horizon-\epochhorizon[\epoch],
  \qquad
  \epochhorizon[\epoch+1]
    \coloneqq\left\lceil\epochhorizon[\epoch][\schedulepower]\right\rceil,
  \qquad
  \blocklength[\epoch]
    \coloneqq\epochhorizon[\epoch]-\epochhorizon[\epoch+1],
\]
and define its endpoint-movement buffer by
\[
  \epochbuffer[\epoch]
  \coloneqq\epochhorizon[\epoch+1][-\concentrationpower].
\]
Fix $\thresholdfraction>0$, to be chosen sufficiently small, and define the
epoch-$\epoch$ prior-error tolerance, for every epoch opened above the fixed
cutoff, by
\[
  \priortolerance[\epoch]
  \coloneqq
  \thresholdfraction
  \frac{\epochbuffer[\epoch]\epochhorizon[\epoch+1]}
       {\blocklength[\epoch]}.
\]
The \emph{prior-error check} at the start of such an epoch is the pre-action
test
\[
  \priorerror[\epoch]
  =\|\empiricalprior[][\period[\epoch]]-\prior\|_1
  \le\priortolerance[\epoch];
\]
it fails when the displayed inequality is violated.
At a reached endpoint $\period[\epoch+1]$, the checks are applied in the
following order.  First apply the epoch-movement check for epoch $\epoch$.
If it passes but the fixed cutoff has been reached, stop without opening a new
epoch.  Otherwise, regard the same time as the start of epoch $\epoch+1$ and
apply the prior-error check and then the resource near-depletion check for that
new epoch.  Thus an epoch-movement failure at $\period[\epoch+1]$ belongs to
epoch $\epoch$, whereas either pre-action check failure there belongs to epoch
$\epoch+1$.  At a pre-action time that is not a reached epoch endpoint, the
fixed cutoff is tested before resource near-depletion.

On $\goodevent[0]$, let $\analyticalstop$ be the first of the following times,
using the priority just specified when several triggers share the same
completed-round time:
\begin{enumerate}[label=\textup{(\alph*)},leftmargin=2.2em]
  \item a reached endpoint $\period[\epoch+1]$ at which the epoch-movement
  check \eqref{eq:unknown-epoch-movement-check} fails;
  \item the first pre-action time at which the remaining horizon is at most
  the fixed cutoff $\cutrest$;
  \item a resolving time $\period[\epoch]$ at which the prior-error check
  $\priorerror[\epoch]\le\priortolerance[\epoch]$ fails, with the check applied
  before acting in epoch $\epoch$;
  \item a pre-action time
  $\period\in\{\period[\epoch],\ldots,\period[\epoch+1]-1\}$ at which the
  resource near-depletion check \eqref{eq:unknown-resource-near-depletion}
  fires; this failure belongs to epoch $\epoch$.
\end{enumerate}
A trigger in items \textup{(a)}, \textup{(c)}, or \textup{(d)} during epoch $\epoch$ is called an
\emph{analytical late failure in epoch $\epoch$}; reaching the fixed cutoff is
not a failure.  The union of items \textup{(a)} and \textup{(d)} is called a
\emph{dynamic failure}; thus this term includes both an epoch-movement failure
and activation of the resource near-depletion check, but not a prior-error
failure.

\begin{lemma}
  \label{lem:unknown-nd-late}
  Under Assumption~\ref{ass:nominal-nondegeneracy}, the fixed cutoff $\cutrest$ can be enlarged and $\thresholdfraction$ can be chosen sufficiently small so that, for every late epoch $\epoch$,
  \begin{equation}
    \prob\bigl(
    \goodevent[0], \ \text{the first analytical late failure occurs in epoch }\epoch
    \bigr)
    \le \exp(-\bigOmega(\epochhorizon[\epoch][\concentrationexponent])).
    \label{eq:unknown-nd-late-bad-tail}
  \end{equation}
  Consequently,
  \begin{equation}
    \expect[\subhorizon[][\analyticalstop]\indicator\{\goodevent[0]\}] = \bigO(1),
    \qquad
    \expect[\potential(\subhorizon[][\analyticalstop], \inventory[][\analyticalstop], \stateproc[][\analyticalstop])^+\indicator\{\goodevent[0]\}] = \bigO(1).
    \label{eq:unknown-nd-late-weighted-residual}
  \end{equation}
  Before $\analyticalstop$, every empirical LP remains in the nominal joint basis cell and its core projection is the identity.
\end{lemma}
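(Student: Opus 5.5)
The plan mirrors the infrequent re-solving analysis of Lemmas~\ref{lem:one-epoch-bounds} and~\ref{lem:cumulative-stability}, now with the empirical prior added. We work on $\goodevent[0]$ and go epoch by epoch. The key claim is an inductive invariant. Suppose that at the start $\period[\epoch]$ of a late epoch no analytical failure has occurred so far. Then the resolving ratio $\capacityratio[][\epoch]$ lies in $\capacityneighborhood[\localradius[\mathrm{nd}]/2]$, and the prior error satisfies $\priorerror[\epoch]\le\priortolerance[\epoch]$. The first fact follows from the starting condition $\capacityneighborhood[\localradius[\mathrm{nd}]/4]$ together with the epoch-movement buffers: by the argument of Lemma~\ref{lem:cumulative-stability}, $\cutrest$ is enlarged so that $\sum_\epoch\epochbuffer[\epoch]<\localradius[\mathrm{nd}]/4$. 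The second fact holds because $\priortolerance[\epoch]\le\thresholdfraction\epochbuffer[\epoch]\epochhorizon[\epoch+1]/\blocklength[\epoch]\le\thresholdfraction\epochbuffer[\epoch]$ is small once $\thresholdfraction$ is small. Together these give $\jointdistance<\localradius[\mathrm{nd}]$. Lemma~\ref{lem:unknown-joint-basis} then shows that the empirical LP stays in the nominal basis cell and that the core projection is the identity. This proves the last sentence of the statement and makes every used action nominal-dual tight.

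Next we bound the probability that the first failure falls in epoch $\epoch$, one failure type at one time. \emph{Prior-error failure.} The estimate at $\period[\epoch]$ uses at least $\horizon/2$ samples. We have $\priortolerance[\epoch]\gtrsim\epochhorizon[\epoch+1][1-\concentrationpower]/\epochhorizon[\epoch]\ge\epochhorizon[\epoch][\schedulepower(1-\concentrationpower)-1]$. Hence \eqref{eq:unknown-prior-tail} gives a bound $\exp(-\bigOmega(\horizon\,\epochhorizon[\epoch][2\schedulepower(1-\concentrationpower)-2]))$. Since $\horizon\ge\epochhorizon[\epoch]$, this is at most $\exp(-\bigOmega(\epochhorizon[\epoch][\concentrationexponent]))$.

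\emph{Epoch-movement failure.} Write $\hypotheticalratio[][\epoch+1]-\capacityratio[][\epoch]$ as in \eqref{eq:epoch-ratio-recursion} and center the consumption at the true-prior stationary rate $\trueresourcemean[\res]$. For binding resources, \eqref{eq:unknown-true-binding-bias} contributes a bias of $\bigO(\blocklength[\epoch]\priorerror[\epoch]/\epochhorizon[\epoch+1])\le\bigO(\thresholdfraction\epochbuffer[\epoch])$. With $\thresholdfraction$ small this is at most $\epochbuffer[\epoch]/2$. The remaining fluctuation must then exceed $\epochbuffer[\epoch]\epochhorizon[\epoch+1]/2$. By \eqref{eq:unknown-centered-resource-tail}, this has probability $\exp(-\bigOmega(\epochhorizon[\epoch+1][2(1-\concentrationpower)]/\blocklength[\epoch]))\le\exp(-\bigOmega(\epochhorizon[\epoch][\concentrationexponent]))$. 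For inactive resources, the slack in \eqref{eq:unknown-true-inactive-slack} gives a favorable drift, so only the same fluctuation bound is needed.

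\emph{Near-depletion failure.} We repeat the reference-inventory argument from Lemma~\ref{lem:one-epoch-bounds}. For a binding resource, the inventory at prefix $\prefixidx$ is at least $\epochhorizon[\epoch+1]\minimumcapacityratio$ minus a bias of $\bigO(\thresholdfraction\epochbuffer[\epoch]\epochhorizon[\epoch+1])$ minus a centered fluctuation. The near-depletion check therefore requires a fluctuation of order $\epochhorizon[\epoch+1]$, which has probability $\exp(-\bigOmega(\epochhorizon[\epoch][2\schedulepower-1]))$. All of these estimates are conditional on $\filtration[][\period[\epoch]]$ and use a full frozen continuation coupled to the actual path. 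A union bound over resources and failure types gives \eqref{eq:unknown-nd-late-bad-tail}.

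For \eqref{eq:unknown-nd-late-weighted-residual}, consider first the paths with no failure. The comparison then reaches the fixed cutoff with $\subhorizon\le\cutrest$ and ratio in $\capacityneighborhood[\localradius[\mathrm{nd}]]$. So $\subhorizon[][\analyticalstop]$ and the positive potential are $\bigO(1)$, because inactive prices vanish and binding inventories are at most $\subhorizon(\nominalratio[\res]+\localradius[\mathrm{nd}])$. On a first failure in epoch $\epoch$, we have $\subhorizon[][\analyticalstop]\le\epochhorizon[\epoch]$. Binding inventories are at most $\epochhorizon[\epoch](\nominalratio[\res]+\localradius[\mathrm{nd}])$ at the epoch start and can only decrease, so the positive potential is at most $\genericconstant(1+\epochhorizon[\epoch])$. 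Summing over epochs, whose horizons are distinct integers greater than $\cutrest$, bounds the total by $\sum_{h>\cutrest}(1+h)\exp(-c h^{\concentrationexponent})<\infty$, uniformly in $\horizon$.

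The main obstacle is the prior-induced bias in the movement step. In this regime it is first order rather than $\bigO(1/\epochhorizon)$ as in the known-prior case. The tolerance $\priortolerance[\epoch]$ is designed to absorb it into half the buffer, and we must check carefully that this tolerance is still attainable with exponentially high probability from $\gtrsim\horizon$ samples at every epoch, including the earliest late epochs where $\blocklength[\epoch]\approx\epochhorizon[\epoch]$.
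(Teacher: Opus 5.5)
Your proposal is correct and follows essentially the same route as the paper's proof. The shared steps are: enlarging $\cutrest$ and shrinking $\thresholdfraction$ so that $\priortolerance[\epoch]\le\localradius[\mathrm{nd}]/4$ and the cumulative buffers stay below $\localradius[\mathrm{nd}]/4$, which gives the joint-basis invariant; the unconditional prior tail with $\horizon\priortolerance[\epoch][2]\gtrsim\epochhorizon[\epoch][\concentrationexponent]$; absorbing the first-order prior bias into a fraction of $\epochbuffer[\epoch]\epochhorizon[\epoch+1]$ before invoking the centered resource tail; the reference-inventory near-depletion bound; and the $\bigO(1+\epochhorizon[\epoch])$ residual envelope summed over epochs, where your sum over distinct integer horizons is a slightly cleaner substitute for the paper's backward-growth argument. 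One wording fix: $\{\priorerror[\epoch]>\priortolerance[\epoch]\}$ is $\filtration[][\period[\epoch]]$-measurable, so only the movement and near-depletion bounds may be taken conditionally on $\filtration[][\period[\epoch]]$, and the prior-error bound must enter the union bound unconditionally, which is exactly the split the paper makes.
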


\begin{proof}
  Enlarge $\cutrest$ and reduce $\thresholdfraction$ so that $\priortolerance[\epoch] \le \min\{1, \localradius[\mathrm{nd}] / 4\}$ for every nonfinal late epoch and so that the cumulative movement bound from Lemma~\ref{lem:cumulative-stability} is smaller than $\localradius[\mathrm{nd}] / 4$. Every late resolving point occurs after at least $\scheduleendpoint = \lfloor \horizon / 2\rfloor$ requests have been completed. Therefore \eqref{eq:unknown-prior-tail} gives, without conditioning on the epoch-start history,
  \begin{align*}
    \prob\bigl(\|\empiricalprior[][\period[\epoch]] - \prior\|_1 > \priortolerance[\epoch]\bigr)
    & \le \exp(-\bigOmega(\horizon\priortolerance[\epoch][2]))\\
    & \le \exp\bigl(-\bigOmega(\epochhorizon[\epoch][2\schedulepower(1 - \concentrationpower) - 1])\bigr)
    = \exp(-\bigOmega(\epochhorizon[\epoch][\concentrationexponent])).
  \end{align*}
  The second inequality uses $\blocklength[\epoch] \le \epochhorizon[\epoch]$, $\epochhorizon[\epoch + 1] \ge \epochhorizon[\epoch][\schedulepower]$, and $\horizon \ge \epochhorizon[\epoch]$.

  Now fix an epoch-start history on which all preceding analytical checks passed and
  $\priorerror[\epoch] \coloneqq \|\empiricalprior[][\period[\epoch]] - \prior\|_1 \le \priortolerance[\epoch]$. The cumulative movement bound keeps the capacity ratio in the joint basis neighborhood, while $\priortolerance[\epoch] \le \localradius[\mathrm{nd}] / 4$ keeps the prior there. Lemma~\ref{lem:unknown-joint-basis} therefore applies, the core projection is the identity, and \eqref{eq:unknown-true-binding-bias} gives, for every binding resource,
  \[
    \blocklength[\epoch]\bigl|\trueresourcemean[\res](\capacityratio[][\epoch], \empiricalprior[][\period[\epoch]]) - \capacityratio[\res][\epoch]\bigr|
    = \bigO(\blocklength[\epoch]\priorerror[\epoch])
    \le \frac14\epochbuffer[\epoch]\epochhorizon[\epoch + 1]\]
  after reducing $\thresholdfraction$.  Hence a binding endpoint movement larger than $\epochbuffer[\epoch]$ requires a centered resource fluctuation of order $\epochbuffer[\epoch]\epochhorizon[\epoch + 1]$.  By \eqref{eq:unknown-centered-resource-tail}, uniformly over the good history,
  \begin{equation*}
    \prob\left(
    \text{binding endpoint movement} > \epochbuffer[\epoch] \mid \filtration[][\period[\epoch]]\right)
    \le \exp\left(-\bigOmega\left(\frac{\epochbuffer[\epoch][2]\epochhorizon[\epoch + 1][2]}{\blocklength[\epoch]}\right)\right)
    \le \exp(-\bigOmega(\epochhorizon[\epoch][\concentrationexponent])).
  \end{equation*}
  For inactive resources, \eqref{eq:unknown-true-inactive-slack} supplies a fixed favorable drift, so the same one-sided estimate controls a downward endpoint movement larger than $\epochbuffer[\epoch]$.

  For the resource near-depletion check, abbreviate the frozen true resource rate by $\frozenresourcerate[\res] \coloneqq \trueresourcemean[\res](\capacityratio[][\epoch], \empiricalprior[][\period[\epoch]])$.  The preceding binding-bias bound and the favorable inactive drift give, for every resource and every prefix $0 \le \prefixidx \le \blocklength[\epoch]$,
  \begin{align*}
    \epochhorizon[\epoch]\capacityratio[\res][\epoch] - \prefixidx\frozenresourcerate[\res]
    & \ge \epochhorizon[\epoch + 1]\capacityratio[\res][\epoch] - \blocklength[\epoch](\frozenresourcerate[\res] - \capacityratio[\res][\epoch])_+\\
    & \ge \epochhorizon[\epoch + 1](\nominalratio[\res] - \localradius[\mathrm{nd}] - \epochbuffer[\epoch] / 4)
    \ge \frac12\epochhorizon[\epoch + 1]\nominalratio[\res],
  \end{align*}
  after enlarging the fixed cutoff so that $\epochbuffer[\epoch] \le \min_{\res}\nominalratio[\res]$, using $\localradius[\mathrm{nd}] < \frac14\min_{\res}\nominalratio[\res]$.  The actual inventory at a prefix before or at the first activation of the resource near-depletion check equals this mean-reference inventory minus the centered cumulative cost of the same prefix of the hypothetical frozen continuation.  Thus an inventory level of at most $\maxconsumption$ requires a centered prefix deviation of order $\epochhorizon[\epoch + 1]$.  By \eqref{eq:unknown-centered-resource-tail}, resource near-depletion therefore has conditional probability at most
  $\exp(-\bigOmega(\epochhorizon[\epoch][2\schedulepower - 1]))$, which is smaller because $\concentrationexponent < 2\schedulepower - 1$.

  Let $\mathcal G_{\epoch}^{\mathrm{pre}}$ denote the event that
  $\goodevent[0]$ holds and no analytical late failure occurred before epoch
  $\epoch$; for $\epoch=0$, the latter condition is vacuous.  Splitting the
  empirical event from the conditional dynamic event gives
  \begin{align*}
    & \prob\bigl(\mathcal G_{\epoch}^{\mathrm{pre}}, \ \text{first analytical late failure occurs in epoch }\epoch\bigr)\\
    & \qquad \le
    \prob(\priorerror[\epoch] > \priortolerance[\epoch])
    +\expect\!\left[
    \indicator\{\mathcal G_{\epoch}^{\mathrm{pre}}, \priorerror[\epoch] \le \priortolerance[\epoch]\}
    \prob(\text{dynamic failure in epoch }\epoch \mid \filtration[][\period[\epoch]])
    \right]\\
    & \qquad \le \exp(-\bigOmega(\epochhorizon[\epoch][\concentrationexponent])),
  \end{align*}
  which proves \eqref{eq:unknown-nd-late-bad-tail}. Note that empirical concentration is used only in the first, unconditional term; after conditioning on $\filtration[][\period[\epoch]]$, the empirical error is treated as a fixed quantity. If no failure occurs in epoch $\epoch$, the endpoint movement is at most $\epochbuffer[\epoch]$. Lemma~\ref{lem:cumulative-stability} and the choice of $\cutrest$ then keep every later capacity ratio inside the joint basis cell. At the beginning of the next epoch, the new empirical error is checked afresh before any action is taken.

  On $\mathcal G_{\epoch}^{\mathrm{pre}}$, the binding inventories at the beginning of epoch $\epoch$ are $\bigO(\epochhorizon[\epoch])$, and costs are nonnegative.  Thus, whether the first failure is an estimation stop, an epoch-movement failure, or a resource near-depletion stop,
  \begin{equation}
    \subhorizon[][\analyticalstop] + \potential(\subhorizon[][\analyticalstop], \inventory[][\analyticalstop], \stateproc[][\analyticalstop])^+
    = \bigO(1 + \epochhorizon[\epoch])
    \label{eq:unknown-nd-late-residual-envelope}
  \end{equation}
  on that first-failure event; inactive inventories do not enter the dual potential because $\resourceprice[\res][*] = 0$ there.  Finally,
  \begin{equation}
    \sum_{\epoch}(1 + \epochhorizon[\epoch])\exp(-\bigOmega(\epochhorizon[\epoch][\concentrationexponent])) = \bigO(1).
    \label{eq:unknown-nd-late-summability}
  \end{equation}
  This bound is uniform in $\horizon$: reading the power schedule backward from the fixed cutoff, as in the proof of Lemma~\ref{lem:cumulative-stability}, makes $\epochhorizon[\epoch]$ grow faster than geometrically, whereas the exponential factor decays super-polynomially.  The final cutoff has bounded horizon.  Combining \eqref{eq:unknown-nd-late-bad-tail}, \eqref{eq:unknown-nd-late-residual-envelope} and \eqref{eq:unknown-nd-late-summability} proves \eqref{eq:unknown-nd-late-weighted-residual} and the lemma.
\end{proof}

\subsubsection{Finishing the proof}
\label{app:proof:thm:unknown-universal}

\begin{proof}
  Fix the true nominal dual solution and, in each case, compare UIR-UP with the
  proof-only policy $\unknowninfrequentalg[][\mathrm g]$ that takes null actions
  after its permanent analytical stop.  We use
  \eqref{eq:unknown-dual-loss-identity} only before that stop.

  \noindent\textbf{Unique, possibly degenerate case.}
  Fix a sufficiently small one-sided entrance neighborhood, below
  $\frac12\min_{\res}\nominalratio[\res]$, and stop the comparison immediately
  after any successful entrance outside that neighborhood.  On this irregular
  entry event,
  \[
    \entrancerest=\bigO(\inventorydeviationnorm(\entranceinventorydeviation)),
    \qquad
    \potential(\entrancerest,\entranceinventory,\entrancestate)^+
    =\bigO(1+\entrancerest+
      \inventorydeviationnorm(\entranceinventorydeviation)).
  \]
  Lemma~\ref{lem:entrance-bookkeeping} therefore charges entrance failure,
  irregular entry, and the entrance potential discrepancy by $\bigO(1)$ in
  expectation.  On the remaining entrance histories, condition on the entrance
  history and use the post-entry notation.  The regular-entry radius gives every
  initial inventory coordinate at least
  $\horizon\min_{\res}\nominalratio[\res]/2$.  Hence, for every sufficiently
  large post-entry horizon, bounded one-period costs preclude resource
  near-depletion through the protective phase and
  $\completedrounds[0]<\protectlength<\scheduleendpoint$.  The algorithm's safe
  fallback handles the remaining bounded post-entry horizons, whose entire
  contribution is $\bigO(1)$.  Finally, the warm start and protective phase have
  length $\bigO((\log\horizon)^2)=\littleo(\sqrt{\horizon})$, so their bounded
  one-period dual losses are $\littleo(\sqrt{\horizon})$.

  Concatenate the stopped comparisons from
  Lemmas~\ref{lem:unknown-first-half-control}
  and~\ref{lem:unknown-late-sqrt}.  Those lemmas bound the accumulated local
  dual loss and the positive terminal potential by
  $\bigO(\sqrt{\horizon}+
  \inventorydeviationnorm(\entranceinventorydeviation))$.  Telescoping
  \eqref{eq:unknown-dual-loss-identity} from entrance to the permanent stop,
  then averaging over entrance histories and using
  $\sqrt{\entrancerest}\le\sqrt{\horizon}$, gives
  \[
    \horizon\stationaryfluid[][\prior](\nominalratio) - \valuefn[\horizon][\unknowninfrequentalg[][\mathrm g]] = \bigO(\sqrt {\horizon}).
  \]
  Pathwise dominance and the true-prior boundary comparison \eqref{eq:boundary-comparison} give \eqref{eq:unknown-sqrt-result} for the guard-free UIR-UP policy.

  \noindent\textbf{Unique, nondegenerate case.}
  By Proposition~\ref{prop:nondegenerate-full-state-support},
  $\recstates=\states$.  Thus the analysis starts at the first round with
  $\inventory[][0]=\horizon\nominalratio$.  Let $\analyticalstop$ be the
  permanent stop obtained by concatenating the midpoint and late comparisons,
  including the fixed cutoff.  Choose $\tailpower>2$ in
  Lemma~\ref{lem:unknown-nd-midpoint}.  A first-half failure then has
  probability $\bigO(\horizon[][-\tailpower])$.  Since binding inventories never
  exceed $\bigO(\horizon)$ and inactive resource prices vanish, the positive
  terminal potential on this event is at most
  $\bigO(1+\horizon)$; hence its expected contribution is
  $\bigO(1)$.  On midpoint-success paths,
  Lemma~\ref{lem:unknown-nd-late} gives an $\bigO(1)$ expected positive terminal
  potential, including late failures and the cutoff.

  On every good protective, sparse, and late epoch,
  Lemma~\ref{lem:unknown-joint-basis} gives the nominal action support, so the
  loss in \eqref{eq:unknown-dual-loss-identity} is zero.  The fixed warm start
  and bad protective epochs have total expected loss $\bigO(1)$ by
  \eqref{eq:unknown-nd-protective-loss}.
  Therefore \eqref{eq:initial-potential} and finite-horizon telescoping from the
  first round give
  \[
    \begin{aligned}
      &\horizon\stationaryfluid[][\prior](\nominalratio)
      -\valuefn[\horizon][\unknowninfrequentalg[][\mathrm g]] \\
      &\qquad\le
      \expect\!\left[
        \potential(\subhorizon[][\analyticalstop],
          \inventory[][\analyticalstop],
          \stateproc[][\analyticalstop])^+
        +\sum_{\period=0}^{\analyticalstop-1}\dualloss[][\period]
      \right]
      =\bigO(1).
    \end{aligned}
  \]
  Applying pathwise dominance and \eqref{eq:boundary-comparison} proves \eqref{eq:unknown-constant-result} for the guard-free UIR-UP policy.
\end{proof}


%% file: appendices/08_2_unknown_request_prior_proofs.tex
\subsection{Proof of Proposition~\ref{prop:unknown-R-linear}}
\label{app:proof:prop:unknown-R-linear}

\begin{proof}
  There are three states $\{0, \hardstateA, \hardstateB\}$, two request types $\highreq, \lowreq$, and one resource with $\nominalratio = 1$; the initial state is $0$. Consider the two priors
  \begin{equation*}
    \prior[][+](\highreq) = \frac34,
    \qquad
    \prior[][-](\highreq) = \frac14,
  \end{equation*}
  with complementary probabilities on $\lowreq$.

  At state $0$, there are two actions, $\act[\hardstateA]$ and $\act[\hardstateB]$, each with reward zero and cost one; their deterministic transitions are $0\to \hardstateA$ and $0\to \hardstateB$, respectively. The null action also has zero reward and cost and moves to $\hardstateA$ or $\hardstateB$ with probability $1 / 2$ each. Thus, no action can remain at state $0$ to collect additional request samples before committing to one of the two classes. Once $\hardstateA$ or $\hardstateB$ is reached, it is absorbing under all available actions. Every unspecified non-null state-action pair has zero reward, cost one, and the same next-state law as the null action in that state, so it is dominated by the null action and does not affect the stationary calculation.

  In state $\hardstateA$, there is a productive action $\act[\hardstateA]$ of cost one with reward
  \[
    \reward(\hardstateA, \highreq, \act[\hardstateA]) = 1,
    \qquad
    \reward(\hardstateA, \lowreq, \act[\hardstateA]) = \lowerreward.
  \]
  In state $\hardstateB$, there is a productive action $\act[\hardstateB]$ of cost one with
  \[
    \reward(\hardstateB, \lowreq, \act[\hardstateB]) = 1,
    \qquad
    \reward(\hardstateB, \highreq, \act[\hardstateB]) = \lowerreward.
  \]
  The null action has zero reward and cost and leaves $\hardstateA$ and
  $\hardstateB$ absorbing.  The productive action has strictly larger reward
  for both request types, and at the nominal ratio $\nominalratio=1$, using it
  on every request consumes exactly one unit of resource per period.

  No transition enters state $0$, so stationarity forces
  $\stateoccupation[0]=0$ and
  $\stateoccupation[\hardstateA]+\stateoccupation[\hardstateB]=1$.
  For any feasible stationary flow with such a state distribution, replacing
  every null or unspecified action in the two absorbing classes by the corresponding
  productive action is feasible: after all replacements, total resource use
  is exactly one.  Since all productive rewards are strictly positive, every
  optimum must therefore use the productive action for both requests in every
  state having positive stationary mass.

  Under $\prior[][+]$, the resulting per-period rewards in classes $\hardstateA$ and $\hardstateB$ are, respectively,
  \[
    \classreward[\hardstateA][+] = \frac34 + \frac14\lowerreward,
    \qquad
    \classreward[\hardstateB][+] = \frac14 + \frac34\lowerreward,
  \]
  and $\classreward[\hardstateA][+] - \classreward[\hardstateB][+] = (1 - \lowerreward) / 2 > 0$.  The objective under $\prior[][+]$ is therefore
  \[
    \stateoccupation[\hardstateA]\classreward[\hardstateA][+]
    +\stateoccupation[\hardstateB]\classreward[\hardstateB][+],
  \]
  so its unique maximizer has
  $\stateoccupation[\hardstateA]=1$ and
  $\stateoccupation[\hardstateB]=0$.  Thus the unique stationary fluid
  optimum under $\prior[][+]$ takes $\act[\hardstateA]$ for both request
  types.  Symmetrically, the unique optimum under $\prior[][-]$ assigns all
  stationary state probability to $\hardstateB$ and takes
  $\act[\hardstateB]$ for both request types.  The respective optimal target
  state classes are therefore $\{\hardstateA\}$ and $\{\hardstateB\}$.
  The optimal induced kernel on each singleton is irreducible, so
  Assumption~\ref{ass:support-irreducibility} holds separately for both priors.
  The correct entrance action reaches that singleton in one round with unit
  resource cost, so Assumption~\ref{ass:constant-entrance} also holds separately
  for both priors.

  The nominal optimum is also degenerate.  Indeed, the unique optimum uses the resource at rate exactly one, so the resource is binding.  If, say, under $\prior[][+]$, the capacity ratio is decreased to $\capacityratio = 1 - \capacityshortfall$ for arbitrarily small $\capacityshortfall > 0$, the nominal positive $\occupation$-support cannot remain feasible: request balance still requires total stationary state probability one, while taking the productive action on every request would consume one unit per period.  An optimizer must therefore put positive mass on the null action.  Thus the positive $\occupation$-support changes under arbitrarily small downward right-hand-side perturbations.  By Proposition~\ref{prop:local-lp-stability}, a unique nondegenerate nominal basic optimum would have locally constant positive support.  The unique nominal optimum is therefore degenerate.

  Fix a horizon $\horizon \ge 1$ and any policy $\policy[\horizon]$ that does not know which of the two priors governs requests.  The policy may depend on $\horizon$.  All value functions below are evaluated from initial inventory $\horizon\nominalratio$ and initial state $0$.  The policy sees $\reqproc[][1]$ before choosing its first action, but after that action the recurrent class is fixed forever.  Let
  \[
    \classAprob[\highreq] \coloneqq \prob(\text{eventual class is }\hardstateA \mid \reqproc[][1] = \highreq),
    \qquad
    \classAprob[\lowreq] \coloneqq \prob(\text{eventual class is }\hardstateA \mid \reqproc[][1] = \lowreq),
  \]
  where the probabilities include the algorithm's internal randomization and also cover use of the null action.  These two conditional class-selection probabilities are the same under both priors, because the first observed request and all other supplied information coincide under that conditioning.  They may depend on $\horizon$.  The wrong-class probabilities are
  \begin{align*}
    \wrongclassprob[+] & = \frac34(1 - \classAprob[\highreq]) + \frac14(1 - \classAprob[\lowreq]), \\
    \wrongclassprob[-] & = \frac14\classAprob[\highreq] + \frac34\classAprob[\lowreq].
  \end{align*}
  Therefore
  \[
    \wrongclassprob[+] + \wrongclassprob[-] = 1 - \frac12\classAprob[\highreq] + \frac12\classAprob[\lowreq] \ge \frac12.
  \]
  At least one prior has wrong-class probability at least $1 / 4$.  Put
  \[
    R_{\mathrm{good}}
      \coloneqq\frac34+\frac14\lowerreward,
    \qquad
    R_{\mathrm{wrong}}
      \coloneqq\frac14+\frac34\lowerreward.
  \]
  Under either prior, the good and wrong recurrent classes differ in
  per-round expected productive reward by exactly
  \[
    R_{\mathrm{good}}-R_{\mathrm{wrong}}
    = \frac{1 - \lowerreward}{2}.
  \]
  The optimal known-prior policy chooses the correct entrance action in round one and
  then takes the productive action in every remaining round.  This is feasible
  with $\nominalratio = 1$: the entrance action consumes one unit, leaving
  $\horizon - 1$ units for the remaining $\horizon - 1$ productive rounds,
  with the terminal round allowed to bring the inventory to zero.  No policy
  in a fixed class can exceed its per-period productive reward, and future
  requests are independent of the first-round commitment.  Consequently, for
  either sign $\sigma\in\{+,-\}$,
  \[
    \valuefn[\horizon][*,\prior[][\sigma]]
      =(\horizon-1)R_{\mathrm{good}},
    \qquad
    \valuefn[\horizon][\policy[\horizon],\prior[][\sigma]]
      \le(\horizon-1)
        \left((1-\wrongclassprob[\sigma])R_{\mathrm{good}}
          +\wrongclassprob[\sigma]R_{\mathrm{wrong}}\right).
  \]
  Choosing a sign with $\wrongclassprob[\sigma]\ge1/4$ gives
  \[
    \max_{\prior[][\prime] \in \{\prior[][+], \prior[][-]\}}
    \left\{\valuefn[\horizon][*, \prior[][\prime]] - \valuefn[\horizon][\policy[\horizon], \prior[][\prime]]\right\}
    \ge \frac14 \cdot \frac{1 - \lowerreward}{2}(\horizon - 1)
    = \frac{1 - \lowerreward}{8}(\horizon - 1).
  \]
  Here the superscript $\prior[][\prime]$ records the true law used to evaluate
  a policy.  For each prior separately, the transient fluid upper bound gives
  \[
    \transientfluid[\horizon][\prior[][\prime]]
      (\horizon\nominalratio,0)
      -\valuefn[\horizon][\policy[\horizon],\prior[][\prime]]
    \ge
    \valuefn[\horizon][*,\prior[][\prime]]
      -\valuefn[\horizon][\policy[\horizon],\prior[][\prime]].
  \]
  Hence the maximum transient-fluid gap obeys the same lower bound.  This
  proves the proposition for every $\horizon$ and $\policy[\horizon]$.  The
  maximizing member of the fixed pair may depend on the horizon and policy;
  the argument does not require a single prior to be worst for every horizon
  in a horizon-dependent policy family.
\end{proof}

%% file: references.bib
@book{bertsimas1997introduction,
  title={Introduction to linear optimization},
  author={Bertsimas, Dimitris and Tsitsiklis, John N.},
  volume={6},
  year={1997},
  publisher={Athena Scientific},
  address={Belmont, MA}
}

@article{Freedman75,
  author  = {Freedman, David A.},
  title   = {On tail probabilities for martingales},
  journal = {The Annals of Probability},
  volume  = {3},
  number  = {1},
  pages   = {100--118},
  year    = {1975},
  doi     = {10.1214/aop/1176996452},
  url     = {https://doi.org/10.1214/aop/1176996452}
}

@book{Roch24,
  author    = {Roch, S{\'e}bastien},
  title     = {Modern Discrete Probability: An Essential Toolkit},
  publisher = {Cambridge University Press},
  address   = {Cambridge},
  year      = {2024},
  doi       = {10.1017/9781009305129},
  url       = {https://doi.org/10.1017/9781009305129}
}

@book{vdVW96,
  author    = {van der Vaart, Aad W. and Wellner, Jon A.},
  title     = {Weak Convergence and Empirical Processes: With Applications to Statistics},
  series    = {Springer Series in Statistics},
  publisher = {Springer-Verlag},
  address   = {New York},
  year      = {1996},
  doi       = {10.1007/978-1-4757-2545-2},
  url       = {https://doi.org/10.1007/978-1-4757-2545-2}
}

@article{BW20,
  author  = {Bumpensanti, Pornpawee and Wang, He},
  title   = {A re-solving heuristic with uniformly bounded loss for network revenue management},
  journal = {Management Science},
  volume  = {66},
  number  = {7},
  pages   = {2993--3009},
  year    = {2020},
  url     = {https://doi.org/10.1287/mnsc.2019.3365},
  note    = {Online appendix: \url{https://pubsonline.informs.org/doi/suppl/10.1287/mnsc.2019.3365/suppl_file/mnsc.2019.3365.sm1.pdf}}
}

@article{CLY24,
  author  = {Chen, Guanting and Li, Xiaocheng and Ye, Yinyu},
  title   = {{An} improved analysis of {LP}-based control for revenue management},
  journal = {Operations Research},
  volume  = {72},
  number  = {3},
  pages   = {1124--1138},
  year    = {2024},
  url     = {https://doi.org/10.1287/opre.2022.2358}
}

@inproceedings{CDMK24,
  author    = {Chen, Zhaohua and Ai, Rui and Yang, Mingwei and Pan, Yuqi and Wang, Chang and Deng, Xiaotie},
  title     = {Contextual decision-making with knapsacks beyond the worst case},
  booktitle = {Advances in Neural Information Processing Systems 37},
  year      = {2024},
  url       = {https://doi.org/10.52202/079017-2798}
}

@unpublished{FV07,
  author = {Farias, Vivek F. and Van Roy, Benjamin},
  title  = {An approximate dynamic programming approach to network revenue management},
  note   = {Working paper, 23 April},
  year   = {2007},
  url    = {https://web.mit.edu/vivekf/www/papers/ADP-rm-07-03.pdf}
}

@article{Jasin15,
  author  = {Jasin, Stefanus},
  title   = {Performance of an {LP}-based control for revenue management with unknown demand parameters},
  journal = {Operations Research},
  volume  = {63},
  number  = {4},
  pages   = {909--915},
  year    = {2015},
  url     = {https://doi.org/10.1287/opre.2015.1390}
}

@article{JK12,
  author  = {Jasin, Stefanus and Kumar, Sunil},
  title   = {A re-solving heuristic with bounded revenue loss for network revenue management with customer choice},
  journal = {Mathematics of Operations Research},
  volume  = {37},
  number  = {2},
  pages   = {313--345},
  year    = {2012},
  url     = {https://doi.org/10.1287/moor.1120.0537}
}

@misc{LWZ25,
  author = {Li, Guokai and Wang, Zizhuo and Zhang, Jingwei},
  title  = {Infrequent resolving algorithm for online linear programming},
  year   = {2026},
  note   = {arXiv preprint arXiv:2408.00465v7, revised 1 September 2026},
  url    = {https://arxiv.org/abs/2408.00465v7}
}

@article{Secomandi08,
  author  = {Secomandi, Nicola},
  title   = {An analysis of the control-algorithm re-solving issue in inventory and revenue management},
  journal = {Manufacturing \& Service Operations Management},
  volume  = {10},
  number  = {3},
  pages   = {468--483},
  year    = {2008},
  url     = {https://doi.org/10.1287/msom.1070.0184}
}

@article{VB21,
  author  = {Vera, Alberto and Banerjee, Siddhartha},
  title   = {The {Bayesian} prophet: A low-regret framework for online decision making},
  journal = {Management Science},
  volume  = {67},
  number  = {3},
  pages   = {1368--1391},
  year    = {2021},
  url     = {https://doi.org/10.1287/mnsc.2020.3624}
}

@article{AgrawalWangYe14,
  author  = {Agrawal, Shipra and Wang, Zizhuo and Ye, Yinyu},
  title   = {A dynamic near-optimal algorithm for online linear programming},
  journal = {Operations Research},
  volume  = {62},
  number  = {4},
  pages   = {876--890},
  year    = {2014},
  url     = {https://doi.org/10.1287/opre.2014.1289}
}

@article{BalseiroLuMirrokni23,
  author  = {Balseiro, Santiago R. and Lu, Haihao and Mirrokni, Vahab},
  title   = {The best of many worlds: Dual mirror descent for online allocation problems},
  journal = {Operations Research},
  volume  = {71},
  number  = {1},
  pages   = {101--119},
  year    = {2023},
  url     = {https://doi.org/10.1287/opre.2021.2242}
}

@article{BrownZhang22,
  author  = {Brown, David B. and Zhang, Jingwei},
  title   = {Dynamic programs with shared resources and signals: Dynamic fluid policies and asymptotic optimality},
  journal = {Operations Research},
  volume  = {70},
  number  = {5},
  pages   = {3015--3033},
  year    = {2022},
  url     = {https://doi.org/10.1287/opre.2021.2181}
}

@article{BrownZhang25,
  author  = {Brown, David B. and Zhang, Jingwei},
  title   = {Fluid policies, reoptimization, and performance guarantees in dynamic resource allocation},
  journal = {Operations Research},
  volume  = {73},
  number  = {2},
  pages   = {1029--1045},
  year    = {2025},
  url     = {https://doi.org/10.1287/opre.2022.0601}
}

@article{Jasin14,
  author  = {Jasin, Stefanus},
  title   = {Reoptimization and self-adjusting price control for network revenue management},
  journal = {Operations Research},
  volume  = {62},
  number  = {5},
  pages   = {1168--1178},
  year    = {2014},
  url     = {https://doi.org/10.1287/opre.2014.1297}
}

@article{JK13,
  author  = {Jasin, Stefanus and Kumar, Sunil},
  title   = {Analysis of deterministic {LP}-based booking limit and bid price controls for revenue management},
  journal = {Operations Research},
  volume  = {61},
  number  = {6},
  pages   = {1312--1320},
  year    = {2013},
  url     = {https://doi.org/10.1287/opre.2013.1216}
}

@inproceedings{Jiang26,
  author    = {Jiang, Jiashuo},
  title     = {Constant Approximation for Network Revenue Management with {Markovian}-Correlated Customer Arrivals},
  booktitle = {Web and Internet Economics---21st International Conference, {WINE} 2025, Proceedings},
  series    = {Lecture Notes in Computer Science},
  volume    = {16266},
  pages     = {704--746},
  publisher = {Springer},
  year      = {2026},
  doi       = {10.1007/978-3-032-18660-7_38},
  url       = {https://doi.org/10.1007/978-3-032-18660-7_38}
}

@article{LiYe22,
  author  = {Li, Xiaocheng and Ye, Yinyu},
  title   = {Online linear programming: Dual convergence, new algorithms, and regret bounds},
  journal = {Operations Research},
  volume  = {70},
  number  = {5},
  pages   = {2948--2966},
  year    = {2022},
  url     = {https://doi.org/10.1287/opre.2021.2164}
}

@unpublished{LRTZ25,
  author = {Li, Weiyuan and Rusmevichientong, Paat and Topaloglu, Huseyin and Zhang, Jingwei},
  title  = {History-Dependent Fluid Approximations and Performance Guarantees for Revenue Management with {Markov}-Modulated Demands},
  year   = {2025},
  note   = {{SSRN} working paper 5107682, 21 January 2025},
  doi    = {10.2139/ssrn.5107682},
  url    = {https://doi.org/10.2139/ssrn.5107682}
}

@article{MaglarasMeissner06,
  author  = {Maglaras, Constantinos and Meissner, Joern},
  title   = {Dynamic pricing strategies for multiproduct revenue management problems},
  journal = {Manufacturing \& Service Operations Management},
  volume  = {8},
  number  = {2},
  pages   = {136--148},
  year    = {2006},
  url     = {https://doi.org/10.1287/msom.1060.0105}
}

@article{MaCaoTsangXia25,
  author  = {Ma, Wanteng and Cao, Ying and Tsang, Danny H. K. and Xia, Dong},
  title   = {Optimal regularized online allocation by adaptive re-solving},
  journal = {Operations Research},
  volume  = {73},
  number  = {4},
  pages   = {2079--2096},
  year    = {2025},
  url     = {https://doi.org/10.1287/opre.2022.0486}
}

@article{ReimanWang08,
  author  = {Reiman, Martin I. and Wang, Qiong},
  title   = {An asymptotically optimal policy for a quantity-based network revenue management problem},
  journal = {Mathematics of Operations Research},
  volume  = {33},
  number  = {2},
  pages   = {257--282},
  year    = {2008},
  url     = {https://doi.org/10.1287/moor.1070.0288}
}

@article{VBG21,
  author  = {Vera, Alberto and Banerjee, Siddhartha and Gurvich, Itai},
  title   = {Online allocation and pricing: Constant regret via {Bellman} inequalities},
  journal = {Operations Research},
  volume  = {69},
  number  = {3},
  pages   = {821--840},
  year    = {2021},
  url     = {https://doi.org/10.1287/opre.2020.2061}
}

@article{ArlottoGurvich19,
  author  = {Arlotto, Alessandro and Gurvich, Itai},
  title   = {Uniformly bounded regret in the multisecretary problem},
  journal = {Stochastic Systems},
  volume  = {9},
  number  = {3},
  pages   = {231--260},
  year    = {2019}
}

@article{ArlottoXie20,
  author  = {Arlotto, Alessandro and Xie, Xinchang},
  title   = {Logarithmic regret in the dynamic and stochastic knapsack problem with equal rewards},
  journal = {Stochastic Systems},
  volume  = {10},
  number  = {2},
  pages   = {170--191},
  year    = {2020}
}

@article{BalseiroBesbesPizarro24,
  author  = {Balseiro, Santiago R. and Besbes, Omar and Pizarro, Dana},
  title   = {Survey of dynamic resource-constrained reward collection problems: Unified model and analysis},
  journal = {Operations Research},
  volume  = {72},
  number  = {5},
  pages   = {2168--2189},
  year    = {2024},
  doi     = {10.1287/opre.2023.2441},
  url     = {https://doi.org/10.1287/opre.2023.2441}
}

@article{BesbesKanoriaKumar22,
  author  = {Besbes, Omar and Kanoria, Yash and Kumar, Akshit},
  title   = {Dynamic resource allocation: Algorithmic design principles and spectrum of achievable performances},
  journal = {Operations Research},
  volume  = {73},
  number  = {3},
  pages   = {1273--1288},
  year    = {2025},
  doi     = {10.1287/opre.2022.0504},
  url     = {https://doi.org/10.1287/opre.2022.0504}
}

@article{Bray24,
  author  = {Bray, Robert L.},
  title   = {Logarithmic regret in multisecretary and online linear programs with continuous valuations},
  journal = {Operations Research},
  volume  = {73},
  number  = {4},
  pages   = {2188--2203},
  year    = {2025},
  doi     = {10.1287/opre.2022.0036},
  url     = {https://doi.org/10.1287/opre.2022.0036}
}

@inproceedings{HanZengWangXiangZhang22,
  author    = {Han, Yuxuan and Zeng, Jialin and Wang, Yang and Xiang, Yang and Zhang, Jiheng},
  title     = {Optimal contextual bandits with knapsacks under realizability via regression oracles},
  booktitle = {Proceedings of the 26th International Conference on Artificial Intelligence and Statistics},
  series    = {Proceedings of Machine Learning Research},
  volume    = {206},
  pages     = {5011--5035},
  year      = {2023},
  url       = {https://proceedings.mlr.press/v206/han23b.html}
}

@misc{JiangZhang20,
  author = {Jiang, Jiashuo and Zhang, Jiawei},
  title  = {Online resource allocation with stochastic resource consumption},
  year   = {2020},
  note   = {arXiv preprint arXiv:2012.07933},
  url    = {https://arxiv.org/abs/2012.07933}
}

@misc{JiangLiZhang20,
  author = {Jiang, Jiashuo and Li, Xiaocheng and Zhang, Jiawei},
  title  = {Online stochastic optimization with {Wasserstein}-based non-stationarity},
  year   = {2020},
  note   = {arXiv preprint arXiv:2012.06961},
  url    = {https://arxiv.org/abs/2012.06961}
}

@article{JiangMaZhang25,
  author  = {Jiang, Jiashuo and Ma, Will and Zhang, Jiawei},
  title   = {Degeneracy is {OK}: Logarithmic regret for network revenue management with indiscrete distributions},
  journal = {Operations Research},
  volume  = {73},
  number  = {6},
  pages   = {3405--3420},
  year    = {2025},
  doi     = {10.1287/opre.2022.0641},
  url     = {https://doi.org/10.1287/opre.2022.0641}
}

@inproceedings{LiSunYe20,
  author    = {Li, Xiaocheng and Sun, Chunlin and Ye, Yinyu},
  title     = {Simple and fast algorithm for binary integer and online linear programming},
  booktitle = {Advances in Neural Information Processing Systems 33},
  pages     = {9412--9421},
  year      = {2020}
}

@article{MiaoWang21,
  author  = {Miao, Sentao and Wang, Yining},
  title   = {Network revenue management with nonparametric demand learning: $\sqrt{T}$-regret and polynomial dimension dependency},
  journal = {Mathematics of Operations Research},
  year    = {2025},
  note    = {Articles in Advance},
  doi     = {10.1287/moor.2022.0086},
  url     = {https://doi.org/10.1287/moor.2022.0086}
}

@article{SlivkinsZhouSankararamanFoster24,
  author  = {Slivkins, Aleksandrs and Zhou, Xingyu and Sankararaman, Karthik Abinav and Foster, Dylan J.},
  title   = {Contextual bandits with packing and covering constraints: A modular {Lagrangian} approach via regression},
  journal = {Journal of Machine Learning Research},
  volume  = {25},
  number  = {394},
  pages   = {1--37},
  year    = {2024},
  url     = {https://jmlr.org/papers/v25/24-1220.html}
}

@misc{SunWangZhou20,
  author = {Sun, Rui and Wang, Xinshang and Zhou, Zijie},
  title  = {Near-optimal primal-dual algorithms for quantity-based network revenue management},
  year   = {2020},
  note   = {arXiv preprint arXiv:2011.06327},
  url    = {https://arxiv.org/abs/2011.06327}
}

@article{VeraArlottoGurvichLevin25,
  author  = {Vera, Alberto and Arlotto, Alessandro and Gurvich, Itai and Levin, Eli},
  title   = {Dynamic resource allocation: The geometry and robustness of constant regret},
  journal = {Mathematics of Operations Research},
  volume  = {50},
  number  = {4},
  pages   = {2834--2872},
  year    = {2025},
  doi     = {10.1287/moor.2020.0334},
  url     = {https://doi.org/10.1287/moor.2020.0334}
}

@article{WangWang22,
  author  = {Wang, Yining and Wang, He},
  title   = {Constant regret resolving heuristics for price-based revenue management},
  journal = {Operations Research},
  volume  = {70},
  number  = {6},
  pages   = {3538--3557},
  year    = {2022},
  doi     = {10.1287/opre.2021.2219},
  url     = {https://doi.org/10.1287/opre.2021.2219}
}

@inproceedings{WuSrikantLiuJiang15,
  author    = {Wu, Huasen and Srikant, Rayadurgam and Liu, Xin and Jiang, Chong},
  title     = {Algorithms with logarithmic or sublinear regret for constrained contextual bandits},
  booktitle = {Advances in Neural Information Processing Systems 28},
  year      = {2015}
}

@article{GaoGeSunXueYe26,
  author  = {Gao, Wenzhi and Ge, Dongdong and Sun, Chunlin and Xue, Chenyu and Ye, Yinyu},
  title   = {Beyond {$\mathcal{O}(\sqrt{T})$} regret: Decoupling learning and decision making in online linear programming},
  journal = {Operations Research},
  volume  = {74},
  number  = {4},
  pages   = {1932--1944},
  year    = {2026},
  doi     = {10.1287/opre.2024.1575},
  url     = {https://doi.org/10.1287/opre.2024.1575}
}

@article{Gupta24,
  author  = {Gupta, Varun},
  title   = {Greedy algorithm for multiway matching with bounded regret},
  journal = {Operations Research},
  volume  = {72},
  number  = {3},
  pages   = {1139--1155},
  year    = {2024},
  doi     = {10.1287/opre.2022.2400},
  url     = {https://doi.org/10.1287/opre.2022.2400}
}

@unpublished{HeWeiXuYu25,
  author = {He, Siqi and Wei, Yehua and Xu, Jiaming and Yu, Sophie H.},
  title  = {Online resource allocation without re-solving: The effectiveness of primal-dual policies},
  year   = {2025},
  note   = {{SSRN} working paper 5133857, revised 11 September 2025},
  doi    = {10.2139/ssrn.5133857},
  url    = {https://doi.org/10.2139/ssrn.5133857}
}

@inproceedings{CastiglioniCelliKroer22,
  author    = {Castiglioni, Matteo and Celli, Andrea and Kroer, Christian},
  title     = {Online learning with knapsacks: the best of both worlds},
  booktitle = {Proceedings of the 39th International Conference on Machine Learning},
  series    = {Proceedings of Machine Learning Research},
  volume    = {162},
  pages     = {2767--2783},
  publisher = {PMLR},
  year      = {2022},
  url       = {https://proceedings.mlr.press/v162/castiglioni22a.html}
}

@article{BanerjeeFreund25,
  author  = {Banerjee, Siddhartha and Freund, Daniel},
  title   = {Good prophets know when the end is near},
  journal = {Management Science},
  volume  = {71},
  number  = {6},
  pages   = {4877--4894},
  year    = {2025},
  doi     = {10.1287/mnsc.2023.04307},
  url     = {https://doi.org/10.1287/mnsc.2023.04307}
}

@inproceedings{BanerjeeFreund20,
  author    = {Banerjee, Siddhartha and Freund, Daniel},
  title     = {Uniform loss algorithms for online stochastic decision-making with applications to bin packing},
  booktitle = {Abstracts of the 2020 {SIGMETRICS}/Performance Joint International Conference on Measurement and Modeling of Computer Systems},
  pages     = {1--2},
  publisher = {Association for Computing Machinery},
  year      = {2020},
  doi       = {10.1145/3393691.3394224},
  url       = {https://doi.org/10.1145/3393691.3394224}
}
